\def\BibTeX{{\rm B\kern-.05em{\sc i\kern-.025em b}\kern-.08em
    T\kern-.1667em\lower.7ex\hbox{E}\kern-.125emX}}
\def\eqref#1{equation~\ref{#1}}
\def\1{\bm{1}}
\def\vm{{\bm{m}}}
\def\mS{{\bm{S}}}
\DeclareMathAlphabet{\mathsfit}{\encodingdefault}{\sfdefault}{m}{sl}
\SetMathAlphabet{\mathsfit}{bold}{\encodingdefault}{\sfdefault}{bx}{n}
\newcommand{\E}{\mathbb{E}}
\newcommand{\R}{\mathbb{R}}
\begin{document}

\title{Can Stochastic Gradient Langevin Dynamics Provide
           Differential Privacy for Deep Learning?}

\author{\IEEEauthorblockN{Guy Heller}
\IEEEauthorblockA{\textit{University of Bar-Ilan} \\
Ramat Gan, Israel \\
guy.heller@biu.ac.il}
\and
\IEEEauthorblockN{Ethan Fetaya}
\IEEEauthorblockA{\textit{University of Bar-Ilan} \\
Ramat Gan, Israel \\
ethan.fetaya@biu.ac.il}
}

\newcommand{\citep}{\cite}
\newcommand{\citet}{\cite}
\newcommand\ef[1]{\textcolor{blue}{[EF:  #1]}}
\newcommand\gh[1]{\textcolor{cyan}{[GH:  #1]}}
\newtheorem{theorem}{Theorem}[section]
\newtheorem{proposition}[theorem]{Proposition}
\newtheorem{lemma}[theorem]{Lemma}
\newtheorem{corollary}[theorem]{Corollary}
\newtheorem{claim}[theorem]{Claim}
\newtheorem{definition}[theorem]{Definition}
\newtheorem{assumption}[theorem]{Assumption}
\IEEEpeerreviewmaketitle 
\maketitle

\begin{abstract}
Bayesian learning via Stochastic Gradient Langevin Dynamics (SGLD) has been suggested for differentially private learning. While previous research provides differential privacy bounds for SGLD at the initial steps of the algorithm or when close to convergence, the question of what differential privacy guarantees can be made in between remains unanswered. This interim region is of great importance, especially for Bayesian neural networks, as it is hard to guarantee convergence to the posterior. This paper shows that using SGLD might result in unbounded privacy loss for this interim region, even when sampling from the posterior is as differentially private as desired.
\end{abstract}

\begin{IEEEkeywords}
Differential Privacy, Stochastic Gradient Langevin Dynamics, Bayesian Inference, Deep Learning
\end{IEEEkeywords}

\section{Introduction}
\label{Introduction}
Machine learning models, specifically deep neural networks, achieve state-of-the-art results in various fields such as computer vision, natural language processing, and signal processing (e.g., \citet{DETR, devlin-etal-2019-bert, signal-processing}). Training these models requires data, which in some domains, e.g., healthcare and finance, can include {sensitive} information that should not be made public. Unfortunately, information from the training data can, in some cases, be extracted from the trained model \citet{ModelInversion, Carlini2021ExtractingTD}. One common approach to handle this issue is Differential Privacy (DP). DP framework ensures that the distribution of the training output would remain approximately the same {when} we switch one of the training examples, thus ensuring we cannot extract information specific to a unique individual. 

As privacy is usually obtained by adding random noise, it is natural to investigate whether Bayesian inference, which uses a distribution over models, can {yield} private predictions. Previous works have shown that sampling from the posterior is differentially private under certain mild conditions \citet{PrivacyForFree,FouldsGWC16,PosterioSamplingDimitrakakis}. The main disadvantage of this method is that sampling from the posterior can be challenging. The posterior generally does not have a closed-form solution, so iterative methods such as Markov Chain Monte Carlo (MCMC), whose sample distribution converges to the posterior, are commonly used. While theoretical bounds on the convergence of MCMC methods for non-convex problems exist \citep{SamplingIsFaster}, they usually require an infeasible number of steps to guarantee convergence in practice.

Stochastic Gradient Langevin Dynamics (SGLD)  \citep{Welling11} is a popular MCMC algorithm, as it avoids the accept-reject step. There are good reasons to believe that this specific sampling algorithm can provide private predictions. First, SGLD returns an approximate sample from the posterior, which can be private. Second, the SGLD process of stochastic gradient descent with Gaussian noise mirrors the common Gaussian mechanism in DP. 

Previous work \citet{PrivacyForFree} gives two separate privacy analyses related to SGLD: The first is based on the Gaussian mechanism and the Advanced Composition theorem \citep{AlgorithmicFundations}. Therefore, it only applies to a limited number of steps and is not connected to Bayesian sampling.

The second is for approximate sampling from the Bayesian posterior, which is only relevant when SGLD nearly converges. Neither of these results is suitable for deep learning and many other problems: one would limit the model's accuracy, and the other is unattainable in a reasonable time. Consequently, the privacy properties of SGLD in the interim region (between these two private sections) remain unknown even though they are of great interest.

\textbf{Our Contributions:} 
\begin{itemize}
    \item We provide a rigorous analysis of a counter-example based on a Bayesian linear regression problem, showing that approximate sampling using SGLD might result in unbounded loss of privacy in the interim region, even if sampling from the posterior is as private as desired.
    
    \item We further empirically show that SGLD can result in nonprivate models.
\end{itemize}
These results imply that special care should be given when using SGLD for private predictions, especially for problems for which it is infeasible to guarantee convergence.

\section{Related Work}
Several previous works investigate the connection between Bayesian inference and differential privacy \citep{PrivacyForFree,FouldsGWC16,ZhangRD16,PosterioSamplingDimitrakakis,GeumlekSC17,Ganesh2020FasterDP, ConnectMCMCtoDP}. None of these papers guarantees SGLD differential privacy in the interim region. However, the closest work to ours is \citet{PrivacyForFree}, which specifically investigates stochastic MCMC algorithms such as SGLD. As mentioned, its analysis only covers the initial phase and when approximate convergence is achieved.

In \cite{ChourasiaYS21}, the authors study the privacy guarantees of the noisy projected gradient descent algorithm. They consider a smooth and strongly convex loss function on a closed convex set with a finite gradient sensitivity and show an upper bound over the privacy loss, which converges exponentially fast in these settings. They also prove a lower bound on the R\'enyi-DP, which converges exponentially fast for smooth loss function on an unconstrained convex set with a finite total gradient sensitivity.

Several concurrent works study the DP guarantees of noisy stochastic gradient descent \cite{DBLP:journals/corr/abs-2203-05363} or projected noisy stochastic gradient descent \cite{https://doi.org/10.48550/arxiv.2205.13710, DBLP:journals/corr/abs-2201-11980} and show an upper bound over the privacy, which plateaus after a certain number of iterations. In \cite{DBLP:journals/corr/abs-2203-05363}, the authors show an upper bound over the DP for a strongly convex, smooth loss function with a gradient that has bounded $\ell_2$-sensitivity. In \cite{https://doi.org/10.48550/arxiv.2205.13710}, the authors study the DP guarantees under assumptions of convex, Lipschitz, and smooth loss function on a convex set with a bounded diameter. They also show the existence of a family of loss functions for which the bound is tight up to a constant factor. In \cite{DBLP:journals/corr/abs-2201-11980}, the authors study the DP guarantees under assumptions of convex, Lipschitz, and smooth loss function on a closed convex set.


When training machine learning models in a differentially private way via Stochastic Gradient Descent, a common practice is to apply the Gaussian Mechanism by clipping the gradients of the loss with respect to the weights and adding a matching noise (see \cite{Abadi}, for example). SGLD learning step resembles the resulting learning step but does not include gradients clipping. Reference \cite{ConnectMCMCtoDP} suggests incorporating gradient clipping in the SGLD step. However, clipping the gradients changes the algorithm properties, and it is not obvious if it converges to the posterior. As such, we do not consider it SGLD. Reference \cite{PrivacyForFree} circumvents this issue by assuming the log-likelihood of the model is Lipschitz continuous.

Another related work on the privacy of SGLD is \cite{SGLD_membership}, although they investigate a weaker type of privacy called membership privacy.

As many of the Bayesian methods' privacy bounds require sampling from the posterior, if SGLD is to be used, it requires non-asymptotic convergence bounds. Reference \citet{ConvexDalalyan} provides non-asymptotic bounds on the approximation error for a smooth and log-concave target distribution by Langevin Monte Carlo. Reference \citet{Cheng2018ConvergenceOL} studies the non-asymptotic bounds on the error of approximating a target density $p^*$ where $\log p^*$ is smooth and strongly convex.

For the non-convex setting, \citet{pmlr-v65-raginsky17a} shows non-asymptotic bounds on the 2-Wasserstein distance between SGLD and the invariant distribution solving It\^{o} stochastic differential equation. However, the 2-Wasserstein metric is ill-suited for differential privacy - it is easy to create two distributions with 2-Wasserstein distance as small as desired but with disjoint support. 

Total Variation (for details about Total Variation, see \citet{10.5555/1522486}) is a more suitable distance for working with differential privacy. Reference \citet{SamplingIsFaster} examines a target distribution $p^*$, which is strongly log-concave outside of a region of radius R, and where $-\ln{p^*}$ is $L$-Lipschitz. They provided a bound on the number of steps needed for the Total Variation distance between the distribution at the final step and $p^*$ to be smaller than $\epsilon$. This bound is proportional to $O(e^{32LR^2}\frac{d}{\epsilon^2})$, where $d$ is the model dimension. This result suggests that it is impractical to run SGLD until convergence is guaranteed in the non-convex setting.   

A conclusion from this work is that basing the differential privacy of SGLD on the proximity to the posterior is impractical for non-convex settings.

\section{Background}

\subsection{Differential Privacy}
Differential Privacy \citet{10.1007/11681878_14, eurocrypt-2006-2319,10.1145/1866739.1866758,AlgorithmicFundations} is a definition and a framework that enables performing data analysis on a dataset while reducing one's risk posed by disclosing its personal data to the dataset. In a nutshell, an algorithm is differentially private if it does not change its output distribution by much due to a single record change in its dataset. Approximate Differential Privacy, Definition \ref{Def-ADP}, is an extension of pure Differential Privacy, where pure differential privacy is Approximate Differential Privacy with $\delta=0$.

\begin{definition}\label{Def-ADP} 
Approximate Differential Privacy: A randomized algorithm $f:\mathcal{D}\rightarrow{Range}(f)$ is $(\epsilon, \delta)$-differentially private if $\forall S \subseteq Range(f)$ and $\forall D,\hat{D} \in \mathcal{D}: d(D,\hat{D}) \leq 1$ eq. \ref{DifferentialPrivacy} holds, where $d$ is the distance between $D$ and $\hat{D}$. $D, \hat{D}$ are called neighboring datasets, and while the metric can change per application, Hamming distance is typically used.
\begin{align}\label{DifferentialPrivacy}
p(f(D) \in S) \leq \exp(\epsilon)p(f(\hat{D}) \in S) + \delta
\end{align}
\end{definition}

R\'enyi Divergence \citep{renyi1961measures}, which generalizes the Kullback-Leibler divergence, is defined as follows:
\begin{definition}
R\'enyi Divergence: For two probability distributions $Z$ and $Q$, the R\'eyni divergence of order $\nu > 1$ is
\begin{align*}
\mathrm{D}_\nu(Z || Q) \overset{\Delta}{=} \frac1{\nu - 1}\log\mathbb{E}_{x \sim Q}\left[\left(\frac{Z(x)}{Q(x)}\right)^\nu\right].
\end{align*}
\end{definition}
Reference \citet{RenyiDP} suggested a relaxation of differential privacy based on the R\'enyi divergence, termed R\'enyi Differential Privacy: 
\begin{definition}\label{Def-Renyi-DP}
$(\nu, \epsilon)$-RDP: A randomized algorithm $f: \mathcal{D} \to Range(f)$ is said to have $\epsilon$-R\'enyi differential privacy of order $\nu$, or $(\nu, \epsilon)$-RDP in short, if for any neighbouring datasets $D, \hat{D} \in \mathcal{D}$ eq. \ref{Eq-Renyi-DP} holds, where $\mathrm{D}_{\nu}$ is R\'enyi divergence of order $\nu$.
\begin{align}\label{Eq-Renyi-DP}
\mathrm{D}_\nu\left(f(D)|| f(\hat{D})\right) \leq \epsilon
\end{align}
\end{definition}
In this paper, we utilize the fact that RDP has a closed-form solution when both $f(D)$ and $f(\hat{D})$ are Normal distributions (see \citet{RenyiDivergence} and the proof of Lemma \ref{PosteriorRenyiFull} in the appendix for details). 

By Proposition \ref{Lemma-RDP-to-ADP}, RDP guarantees can be translated into approximate differential privacy guarantees.

\begin{proposition}\label{Lemma-RDP-to-ADP}
From RDP to $(\epsilon, \delta)$-DP \citep{RenyiDP}: If f is $(\nu, \epsilon)$-RDP, it also satisfies $(\epsilon + \frac{\log\frac1\delta}{\nu - 1}, \delta)$-differential privacy for any $0 < \delta < 1$.
\end{proposition}

\subsection{Stochastic Gradient Langevin Dynamics}\label{SGLD-bkgrnd}
Stochastic Gradient Langevin Dynamics (SGLD) is an MCMC method commonly used for Bayesian Inference  \citep{Welling11}. Given a Bayesian model parameterized by $\theta$, a dataset $D = \{x_i, y_i\}_{i=1}^n$, a prior distribution $p(\theta)$, the likelihood function $p(y_i|\theta, x_i)$, and a batch size $b$, SGLD can be used for approximate sampling from the posterior $p(\theta|D)$. The update step of SGLD is shown in eq. \ref{Eq-SGLD-Update-Rule}, where $\theta_j$ is the parameter vector at step $j$, and $\eta_j$ is the step size at step $j$.
SGLD can be seen as a Stochastic Gradient Descent with Gaussian noise, where the variance of the noise is calibrated to the step size. 
\begin{align}\label{Eq-SGLD-Update-Rule}
\begin{split}
\theta_{j+1} &= \theta_j
\\&+ \frac{\eta_j}{2}\left(\nabla_{\theta_j}\ln{p(\theta_j)}+\frac{n}{b}\sum_{i=1}^{b}\nabla_{\theta_j}\ln{p(y_{i_j}|\theta_j, x_{i_j})}\right)
\\& + \sqrt{\eta_j}\xi_j 
\\i_j &\sim uniform\{1,...,n\} 
\\\xi_j &\sim \mathcal{N}(0,1)
\end{split}
\end{align}
A common practice in deep learning is to use \textit{cyclic} Stochastic Gradient Descent. This modification to SGD first randomly shuffles the dataset samples and then cyclically uses the samples in this order. For optimization, there is empirical evidence that it works as well or better than SGD with reshuffling, and it was conjectured that it converges at a faster rate \citep{YunSJ21}.
Cyclic-SGLD\footnote{Cyclic SGLD, which cycles through examples, should be distinguished from cSGLD \citep{DBLP:journals/corr/abs-1902-03932}, which uses a cyclic step size schedule.} is the analog of cyclic-SGD for SGLD, where the difference is the use of the SGLD step instead of the SGD step. For simplicity, we will consider cyclic-SGLD in this work. While this assumption simplifies the proof, we expect the general behavior to be equivalent.

\section{Theoretical Results}\label{Method}
Our goal is to prove that even when sampling from the posterior is as private as desired, approximate sampling using SGLD can be as nonprivate as desired in the interim region. This requires analysing the distribution of SGLD in the interim region, which is hard in the general case.
To circumvent this difficulty, we investigate the Bayesian linear regression problem, where the distributions are a mixture of Gaussians and thus have closed-form expressions. Our result is summarized in Theorem \ref{MainTheorem}.

\begin{theorem}
\label{MainTheorem}
$\forall\ 0 < \delta < 0.5$ and $\forall\epsilon, \epsilon' > 0$, there exists a number $T$, a domain, and a Bayesian inference problem for which a single sample from the posterior distribution is $(\epsilon, \delta)$ differentially private. However, performing approximate sampling by running SGLD for $T$ steps is not $(\epsilon', \delta)$ differentially private.
\end{theorem}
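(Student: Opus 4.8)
The plan is to reduce the claim to the one case where everything is exactly computable: one-dimensional Bayesian linear regression with a Gaussian prior $\theta\sim\mathcal{N}(0,\sigma_0^2)$ and Gaussian likelihood $p(y_i\mid\theta)=\mathcal{N}(\theta x_i,\sigma^2)$. Here the log-posterior gradient is affine, $\nabla_\theta\ln p(\theta\mid\train)=-A\theta+B$ with $A=\sigma_0^{-2}+\sigma^{-2}\sum_i x_i^2$ and $B=\sigma^{-2}\sum_i x_iy_i$, so the posterior is the Gaussian $\mathcal{N}(B/A,1/A)$. Substituting this gradient into the SGLD update (eq.~\ref{Eq-SGLD-Update-Rule}) turns each step into an affine map plus independent Gaussian noise; hence, started from a fixed $\theta_0$, the iterate $\theta_T$ is again Gaussian, $\mathcal{N}(\mu_T,v_T)$, with $\mu_{j+1}=(1-\tfrac{\eta}{2}A)\mu_j+\tfrac{\eta}{2}B$ and $v_{j+1}=(1-\tfrac{\eta}{2}A)^2 v_j+\eta$. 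Both the exact posterior and the length-$T$ SGLD output are therefore Gaussians with closed-form parameters, and the whole problem collapses to comparing two pairs of Gaussians.

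Since all four laws are Gaussian, I would phrase privacy through a discriminating event. Two Gaussians are $(\epsilon,\delta)$-indistinguishable exactly when their parameters are close; conversely, to \emph{refute} $(\epsilon'',\delta)$-privacy it suffices to exhibit one set $S$ with $\Pr[M(\train)\in S]-e^{\epsilon''}\Pr[M(\hat{\train})\in S]>\delta$. The cleanest such $S$ is the half-line cut at the median of the more-spread iterate: it carries mass exactly $1/2$ under one database and mass $\to 0$ under the other, so the left-hand side tends to $1/2$. This is precisely where the hypothesis $\delta<0.5$ is used, and it yields non-privacy once the two SGLD laws are driven far enough apart.

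The construction takes neighbouring databases $\train,\hat{\train}$ differing in a single record, chosen so that the changed point perturbs the curvature $A$ (hence the contraction factor $1-\tfrac{\eta}{2}A$ of the recursion) by an arbitrarily small amount. Making the perturbation small forces $A_{\train}\approx A_{\hat{\train}}$ and $B_{\train}/A_{\train}\approx B_{\hat{\train}}/A_{\hat{\train}}$, so the two \emph{posteriors} $\mathcal{N}(B/A,1/A)$ have arbitrarily close means and variances and are $(\epsilon,\delta)$-private for the prescribed $\epsilon$. The point is then to show that at a finite horizon $T$ the two SGLD Gaussians are nonetheless far apart. I would do this by decoupling signal relaxation from noise injection: since $A_{\train}\neq A_{\hat{\train}}$ the two chains contract at different rates, and a tiny rate gap can be amplified into a macroscopic gap either in the means, via a far-away start (the transient term $(1-\tfrac{\eta}{2}A)^T(\theta_0-B/A)$ scales with $|\theta_0|$ while $v_T$ stays bounded), or in the scales, by choosing $\eta$ to straddle the stability threshold $\eta A=4$, so that one variance stays bounded while the neighbour's grows like $(1+s)^{2T}\to\infty$. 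The quantifiers are ordered to avoid circularity: first fix the perturbation small enough to secure posterior $(\epsilon,\delta)$-privacy, which pins down the rate gap; then send $T$ (and, if needed, $\theta_0$) to infinity to make the SGLD laws arbitrarily distinguishable, refuting $(\epsilon'',\delta)$-privacy for all $\epsilon''<\epsilon'$.

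I expect the main obstacle to be exactly this decoupling. A naive attempt that perturbs only $B$ keeps $A$, and therefore the entire variance recursion, identical across the two chains; a short Cauchy--Schwarz computation then shows that the squared gap between the two means divided by their common variance is bounded by the analogous ratio for the posteriors, so SGLD would come out \emph{more} private than the posterior, the opposite of what is needed. Overcoming this forces the perturbation to act on the curvature $A$, and the delicate part is the simultaneous limit: the same small data change that makes the posteriors converge also shrinks the contraction-rate gap (or the distance past the stability threshold), so the finite-horizon separation must still be shown to diverge once $T$ is taken large relative to that gap. Verifying that these two requirements are compatible, namely posterior divergence $\to 0$ while the step-$T$ discriminating probability stays near $1/2$, is the crux; the remaining Gaussian tail estimates and the bookkeeping through Lemma~\ref{Lemma-RDP-to-ADP} are routine.
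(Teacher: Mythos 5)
Your sketch lands on the same construction the paper uses at every conceptual decision point: one--dimensional Gaussian--conjugate linear regression so that both the posterior and the iterates are explicit Gaussians; refuting privacy via the half-line $S=\{s>\mu_T\}$ at the median of the single-Gaussian chain (which is exactly where $\delta<0.5$ enters, cf.\ Lemma~\ref{Lemma-SGLD-3}); and, most importantly, the observation that the neighbouring record must perturb the curvature $\sum_i x_i^2$ and not only $\sum_i x_iy_i$ --- your Cauchy--Schwarz remark that a common contraction factor makes every iterate at least as private as the posterior is correct and is precisely why the databases of eq.~\ref{N-DBs} change $x_n$ from $x_h$ to $x_h/2$ while keeping the slope $c$ fixed. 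Your amplification mechanism~(a), the transient term $(1-\tfrac{\eta}{2}A)^T(\theta_0-B/A)$ multiplied by the contraction-rate gap, is also the paper's mechanism; the paper realizes the large ``distance to travel'' by making the posterior mean $c$ enormous rather than by moving $\theta_0$ (which in Bayesian SGLD is drawn from the prior and is not a free parameter). The one structural difference is that you use the full posterior gradient, i.e.\ full-batch Langevin, whereas the paper runs cyclic SGLD with batch size $1$, so the neighbouring chain is a mixture of $n$ Gaussians indexed by the position of the odd record and needs the Chernoff bound of Lemma~\ref{SGLD-0}; your version is cleaner but only covers the non-stochastic batch size $b=n$.

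Two points you flag or elide are where the actual work lies, and as written they are gaps. First, posterior $(\epsilon,\delta)$-privacy must hold uniformly over \emph{all} neighbouring pairs in the chosen domain, not just over the single pair you construct for the lower bound: ``making the perturbation small'' is not an available move, since a neighbour may replace a record by anything admissible. This is why the paper restricts to the domain of eq.~\ref{eq-Domain} (bounded $x$, slopes within $n^{\gamma_1}$ of $c$) and proves the uniform R\'enyi bound of Lemma~\ref{PosteriorRenyi}; your sketch never confronts this quantifier. Second, you correctly name the crux --- the limit that makes the posteriors indistinguishable also shrinks the rate gap --- but you do not resolve it, and the theorem is exactly this resolution. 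What is needed is a quantitative scale separation: the posterior's normalized divergence is $O(c^2/n^3)$ while the interim SGLD separation at the epoch $\dot{k}$ where the contracted prior variance and the injected noise balance is $\Omega(c^2/n^2)$; taking $c=n^{\gamma_2}$ with $1+\gamma_1<\gamma_2<\tfrac32$ and $n\to\infty$ then meets both requirements simultaneously and makes $\epsilon'=\Omega(n\epsilon)$ unbounded. Without exhibiting this factor-$n$ gap the conclusion does not follow. Finally, your alternative mechanism~(b) --- straddling the stability threshold $\eta A=4$ --- should be dropped: with any reasonable step size ($\eta A=\Theta(1/n)$ in the paper) the threshold is unreachable by a single-record change, and a chain that diverges for one neighbour never converges to the posterior, so it would not address the interim region the theorem is about.
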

The Bayesian inference problem, mentioned in Theorem \ref{MainTheorem}, refers to sampling from the posterior for a dataset and a model defined by likelihood and prior distributions. The specific model and dataset we will analyse in our work are defined in eq. \ref{LinearModel} and eq. \ref{eq-Domain}, respectively.

An example of the behavior described by the theorem is depicted in Fig. \ref{fig::sgld-figure}. In this case, given the model defined in eq. \ref{LinearModel} and the domain defined in eq. \ref{eq-Domain}, a single sample from the posterior is $(\epsilon = 0.5, \delta=0.001)$ DP; however, approximate sampling from the posterior by running SGLD for $48$ epochs ($T=48\cdot\text{ dataset size}$) is not $(\epsilon'=38, \delta=0.001)$ DP (further details for the figure are provided below).

As Theorem \ref{MainTheorem} allows $\epsilon'$ to be as big as desired and $\epsilon$ to be as small as desired, a corollary of Theorem \ref{MainTheorem} is that we could always find a problem for which the posterior is $(\epsilon, \delta)$ differentially private, but there will be a step in which SGLD will result in an unbounded loss of privacy. Therefore, SGLD alone can not provide any privacy guarantees in the interim region, even if the posterior is private.

Theorem \ref{MainTheorem} is presented and proved for a fixed and equal $\delta$ for both the posterior and the SGLD privacy analysis. This is done for simplicity; however, the proof could be augmented to prove a lower bound on SGLD privacy for all $\epsilon' > 0$ and $0 < \delta' < 0.25$ (i.e., approximately sampling via SGLD is not $(\epsilon', \delta')$-DP for all $\epsilon' > 0$ and $0 < \delta' < 0.25$).

To prove our theorem, we consider a Bayesian regression problem for a 1D linear model with Gaussian noise, as defined in eq. \ref{LinearModel}. 
\begin{align}\label{LinearModel}
\begin{split}
{}& y = \theta{x} + \xi\\
{}& \xi \sim \mathcal{N}(0,\beta^{-1})\\
{}& \theta \sim \mathcal{N}(0,\alpha^{-1})\\
{}& p(y|x,\theta) \sim \mathcal{N}(\theta x, \beta^{-1})
\end{split}
\end{align}
We assume our input domain is 
\begin{align}\label{eq-Domain}
\mathcal{D}(n, \gamma_1, x_h, x_l, c) =&\ 
\big\{(x_i, y_i)| |\frac{y_i}{x_i} - c| \leq n^{\gamma_1};\nonumber\\
& \ x_i, y_i, c, \gamma_1 \in \mathbb{R}_{>0};\nonumber
\\&\ n\in\mathbb{Z}_{>0}; x_l \leq x_i \leq x_h \big\}_{i=1}^n
\end{align}
, where $x_h^2\beta >3$ and $\gamma_1 < \frac12$. The constants $n, c, x_l, x_h$, and $\gamma_1$ are parameters of the problem ($c, x_l, x_h$, and $\gamma_1$ are used, together with the dataset size - $n$, to bound the dataset samples to a chosen region). For every $\epsilon$, $\epsilon'$, and $\delta$, we will show the existence of parameters $n, c, x_l, x_h, \gamma_1$ values that have the privacy properties required to prove Theorem \ref{MainTheorem}. The restrictions on the dataset simplify the proof but are a bit unnatural as it assumes we approximately know $c$, the parameter we are trying to estimate. Later we show in subsection \ref{ProposeReleaseTest} that they can be replaced with a Propose-Test-Release phase. 

For simplicity, we will address the problem of sampling (or approximately sampling via SGLD) from the posterior for the model described in eq. \ref{LinearModel} and a dataset from domain $\mathcal{D}(n, \gamma_1, x_h, x_l, c)$ as a \textit{Bayesian linear regression problem} on domain $\mathcal{D}(n, \gamma_1, x_h, x_l, c)$. This problem has a closed-form solution for both the posterior distribution and the distribution at each SGLD step, thus enabling us to get tight bounds on the differential privacy in each case.

In essence, our proof shows that for a big enough $n$, sampling from the posterior is $(\epsilon, \delta)$ differentially private, with $\epsilon \sim \mathcal{O}(\frac{c^2}{n^3})$. However, for the same problem instance, there exists an SGLD step in which releasing a sample will not be $(\epsilon', \delta)$ differentially private for $\epsilon' = \Omega(\frac{c^2}{n^2})$. Therefore, for problem instances where $c \sim \mathcal{O}(n^{\frac{3}{2}}\sqrt\epsilon)$ and $n$ is big enough, sampling from the posterior will be $(\epsilon, \delta)$ differentially private, while there will be an SGLD step in which releasing a sample will not be $(\epsilon', \delta)$ differentially private for $\epsilon' = \Omega(n\epsilon)$. We note that the bounds dependend on $\delta$, but since we are using a fixed and equal $\delta$ for both the posterior and SGLD privacy analysis, we omit it from the bounds for simplicity.

\begin{figure}[t!]
\centerline{\includegraphics[width=0.9\linewidth]{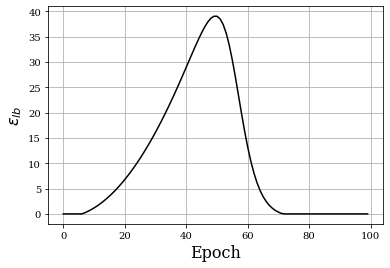}}
\caption{A lower bound over the DP of SGLD for the model defined in eq. \ref{LinearModel} and a dataset from domain $\mathcal{D}(n, \gamma_1, x_h, x_l, c)$ (defined in eq. \ref{eq-Domain}), given that $\delta = 0.001$. The domain parameters values are $n = 17864389, \alpha = 2, \beta = 1, \gamma_1 = 0.1, x_h = 1.8, x_l = 0.9$, which ensure $(0.5, 0.001)$-DP when sampling from the posterior.}
\label{fig::sgld-figure}
\end{figure}

Fig. \ref{fig::sgld-figure} depicts a lower bound over the DP of SGLD for the \textit{Bayesian Linear Regression Problem} on domain $\mathcal{D}(n, \gamma_1, x_h, x_l, c)$. The values of $n, \gamma_1, x_h, x_l, c$ ensure $(0.5, 0.001)$-DP when sampling from the posterior. However, we can see that sampling via SGLD in the interim region causes a significant privacy breach (sampling via SGLD at epoch $48$ is not $(38, 0.001)$-DP). For the derivation of the lower bound in Fig. \ref{fig::sgld-figure}, see subsection \ref{fig-deriv} in the appendix.

\subsection{Posterior Sampling Privacy}\label{PosteriorSamplingPrivacySection}
To prove Theorem \ref{MainTheorem}, we need to show that sampling from the posterior is private, while there is an SGLD sample that is not private at some intermediate step. In this section, we prove the first part - that a single sample from the posterior for the \textit{Bayesian linear regression problem} on domain $\mathcal{D}(n, \gamma_1, x_h, x_l, c)$ is differentially private.

We begin by using a well-known result for the closed-form solution of the posterior distribution for a Bayesian linear regression problem (see \citet{Bishop} for further details). By incorporating the parameters of our problem in this result, we get Lemma \ref{PosteriorDistribution}.

\begin{lemma}
\label{PosteriorDistribution}
The posterior distribution for the model defined in eq. \ref{LinearModel} on dataset $D=\{(y_i, x_i)\}_{i=1}^n$ is
\begin{align}\begin{split}
{}& p(\theta|D) = \mathcal{N}(\theta; \mu, \sigma^2);\\
{}& \mu = \frac{\sum_{i=1}^n x_i y_i\beta}{\alpha + \sum_{i=1}^n x_i^2\beta};
\sigma^2 = \frac{1}{\alpha + \sum_{i=1}^n x_i^2\beta}.
\end{split}\end{align}

\end{lemma}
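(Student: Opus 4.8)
The plan is to exploit the conjugacy between the Gaussian prior on $\theta$ and the Gaussian likelihood, which makes the posterior itself Gaussian with parameters that can be read off a completed square. By Bayes' rule the posterior satisfies $p(\theta \mid D) \propto p(\theta)\prod_{i=1}^n p(y_i \mid x_i, \theta)$, where the proportionality hides only the $\theta$-independent normalizing constant. Taking logarithms and substituting the model in eq. \ref{LinearModel}, I would write $\log p(\theta \mid D) = -\frac{\alpha}{2}\theta^2 - \frac{\beta}{2}\sum_{i=1}^n (y_i - \theta x_i)^2 + \mathrm{const}$, where the constant absorbs every term that does not depend on $\theta$ (including the $-\frac12\log(2\pi/\beta)$ pieces).

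First I would expand the squared residuals as $(y_i - \theta x_i)^2 = y_i^2 - 2\theta x_i y_i + \theta^2 x_i^2$ and regroup by powers of $\theta$. The quadratic coefficient becomes $-\frac12\big(\alpha + \beta\sum_{i=1}^n x_i^2\big)$ and the linear coefficient becomes $\beta\sum_{i=1}^n x_i y_i$, while all remaining terms fold into the constant. Since the log-posterior is thus a concave quadratic in $\theta$, the posterior is Gaussian, and it only remains to identify its mean and variance.

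Next I would complete the square, rewriting the expression as $-\frac12\big(\alpha + \beta\sum_{i=1}^n x_i^2\big)(\theta - \mu)^2 + \mathrm{const}$ with $\mu = \frac{\beta\sum_{i=1}^n x_i y_i}{\alpha + \beta\sum_{i=1}^n x_i^2}$. Matching this against the canonical Gaussian log-density $-\frac{1}{2\sigma^2}(\theta - \mu)^2$ then forces $\sigma^2 = \big(\alpha + \beta\sum_{i=1}^n x_i^2\big)^{-1}$, and these are exactly the claimed expressions after rewriting $\beta\sum_i x_i^2$ as $\sum_i x_i^2\beta$.

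There is no substantive obstacle here: this is the textbook one-dimensional Bayesian linear regression posterior (cf. \citet{Bishop}), and the only mild care required is to verify that the discarded terms are genuinely independent of $\theta$, so that the completed-square form is a properly normalized Gaussian rather than merely proportional to one. I would also note that the domain restrictions in eq. \ref{eq-Domain} are irrelevant to this lemma — they constrain the admissible data values $x_i, y_i$ and therefore matter only later, when these closed forms feed into the privacy bounds; the posterior formula itself holds for any dataset.
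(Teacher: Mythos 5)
Your proposal is correct and amounts to the same argument as the paper's: the paper simply cites the standard conjugate-prior posterior formula for Bayesian linear regression from Bishop and specializes it to one dimension with $\phi(x)=x$, $\vm_0=0$, $\mS_0=\alpha^{-1}$, while you re-derive that same formula from scratch by completing the square in the log-posterior. Your closing observation that the domain restrictions in eq.~\ref{eq-Domain} play no role in this lemma is also accurate.
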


As the posterior distribution is a Normal distribution, the R\'enyi divergence between every two posterior distributions has a closed-form solution. For two neighbouring datasets, $D, \hat{D} \in \mathcal{D}(n, \gamma_1, x_h, x_l, c)$, and matching posterior distributions $p(\theta|D) = \mathcal{N}(\theta;\mu, \sigma^2), p(\theta| \hat{D}) = \mathcal{N}(\theta;\hat\mu, \hat\sigma^2)$, the R\'enyi divergence of order $\nu$ is 
\begin{align*}\begin{split}
\mathrm{D}_\nu\left(p\left(\theta|D\right) || p(\theta|\hat{D})\right) & = \ln\frac{\sigma}{\hat\sigma} + \frac12(\nu - 1)\ln\frac{\hat\sigma^2}{(\sigma^2)^*_\nu} \\& + \frac12\frac{\nu(\mu - \hat\mu)^2}{(\sigma^2)^*_\nu}, \\
(\sigma^2)^*_\nu & = \nu\hat\sigma^2 + (1 - \nu)\sigma^2.
\end{split}\end{align*}

By bounding $\mathrm{D}_\nu(p(\theta|D) || p(\theta|\hat{D}))$ for every two neighbouring datasets, one can prove RDP. The first and second terms of 
$\mathrm{D}_\nu(p(\theta|D) || p(\theta|\hat{D}))$ can be bounded by $\mathcal{O}(\frac1n)$ using Taylor Theorem and the fact that the natural logarithm is monotonically increasing. By using direct computation, the third term can be bounded by $\mathcal{O}(\frac{1}{n^{1-2\gamma_1}}) + \mathcal{O}(\frac{c + n^{\gamma_1}}{n^{2 - \gamma_1}}) + \mathcal{O}(\frac{(c + n^{\gamma_1})^2}{n^3})$. This gives way to Lemma \ref{PosteriorRenyi}. For the full proof, see subsection \ref{subseq-PSP} in the appendix.

\begin{lemma}
\label{PosteriorRenyi}
For the Bayesian linear regression problem on domain $\mathcal{D}(n, \gamma_1, x_h, x_l, c)$, such that $n > \max\{1 +  10\frac{x_h^2}{x_l^2}\frac{\nu}{\beta}, 1 + \nu\frac{x_h^2}{x_l^2}\}$, one sample from the posterior is $(\nu, \epsilon_1)$-R\'enyi differentially private, and $\epsilon_1$ is
\begin{align}\label{eq-posterior-renyi}
\begin{split}
\epsilon_1 & =
\mathcal{O}\left(\frac1n\right) + \mathcal{O}\left(\frac{1}{n^{1-2\gamma_1}}\right) + \mathcal{O}\left(\frac{c + n^{\gamma_1}}{n^{2 - \gamma_1}}\right) 
\\&+
\mathcal{O}\left(\frac{(c + n^{\gamma_1})^2}{n^3}\right).
\end{split}
\end{align}
\end{lemma}

We can show that for $c >> n^{1 + \gamma_1}$, each of the terms in the right hand side of eq. \ref{eq-posterior-renyi} is bounded by $\mathcal{O}(\frac{c^2}{n^3})$. The first term is trivially bounded by $\mathcal{O}(\frac{1}{n})$. For the second term, noticing that $n^{2{\gamma_1} - 1} = \frac{n^{2(1 + \gamma_1)}}{n^3} < \frac{c^2}{n^3}$, we get that it is bounded by $\mathcal{O}(\frac{c^2}{n^3})$. As $c >> n^{\gamma_1}$, the third term is bounded by $\mathcal{O}(\frac{cn^{\gamma_1}}{n^2})$, and since $\frac{cn^{\gamma_1}}{n^{2}} = \frac{cn^{1 + \gamma_1}}{n^{3}} < \frac{c^2}{n^{3}}$, the term is bounded by $\mathcal{O}(\frac{c^2}{n^3})$. Lastly, since $c >> n^{\gamma_1}$ the last term is bounded by $\mathcal{O}(\frac{c^2}{n^3})$.

Translating the R\'enyi differential privacy guarantees of Lemma \ref{PosteriorRenyi} into approximate differential privacy terms can be done according to Lemma \ref{Lemma-RDP-to-ADP}, which gives Lemma \ref{PosteriorPrivacy}.

\begin{lemma}\label{PosteriorPrivacy}
With the conditions of Lemma \ref{PosteriorRenyi}, one sample from the posterior is $(\epsilon_1 + \frac{\ln(\frac1\delta)}{\nu - 1}, \delta)$ differentially private.
\end{lemma}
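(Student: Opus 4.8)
The statement is a direct corollary of the two results already in hand, so the plan is simply to chain them. Lemma~\ref{PosteriorRenyi} establishes that, under the hypothesis $n > \max\{1 + 10\frac{x_h^2}{x_l^2}\frac{\nu}{\beta},\, 1 + \nu\frac{x_h^2}{x_l^2}\}$, releasing a single draw from the posterior $p(\theta|D) = \mathcal{N}(\theta;\mu,\sigma^2)$ of Lemma~\ref{PosteriorDistribution} satisfies $(\nu, \epsilon_1)$-RDP for the explicit $\epsilon_1$ given there. Lemma~\ref{Lemma-RDP-to-ADP} is exactly a black-box converter from RDP to approximate DP. The only thing to do is feed the first into the second.

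Concretely, I would instantiate Lemma~\ref{Lemma-RDP-to-ADP} with the mechanism $f(D)$ equal to one sample from the posterior and with $\epsilon = \epsilon_1$. Since that mechanism is $(\nu, \epsilon_1)$-RDP by Lemma~\ref{PosteriorRenyi}, the conversion immediately yields that it is $(\epsilon_1 + \frac{\ln(1/\delta)}{\nu-1}, \delta)$ differentially private for every $0 < \delta < 1$, which is precisely the claim. The hypotheses of Lemma~\ref{PosteriorRenyi} on $n$ carry over verbatim, and the admissibility condition $0 < \delta < 1$ of the converter is compatible with the regime $\delta < 0.5$ that the final theorem will use.

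There is essentially no obstacle at this step: all the analytic work — computing the posterior in closed form and bounding the R\'enyi divergence between neighbouring databases to obtain the $\mathcal{O}(c^2/n^3)$ scaling of $\epsilon_1$ — has already been carried out in Lemmas~\ref{PosteriorDistribution} and~\ref{PosteriorRenyi}. The one point worth flagging is that $\nu$ remains a free parameter here, so the lemma really delivers a whole family of $(\epsilon,\delta)$ guarantees indexed by $\nu$; the later optimization of $\nu$, which trades the $\epsilon_1$ term against the additive $\frac{\ln(1/\delta)}{\nu-1}$ penalty, is what will ultimately pin down the $\epsilon \sim \mathcal{O}(c^2/n^3)$ privacy level feeding into Theorem~\ref{MainTheorem}, but that tuning is not needed to prove the present statement.
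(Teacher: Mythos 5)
Your proposal is correct and matches the paper's proof exactly: the paper likewise applies Lemma~\ref{PosteriorRenyi} to obtain $(\nu,\epsilon_1)$-RDP and then invokes Lemma~\ref{Lemma-RDP-to-ADP} to convert this to $(\epsilon_1 + \frac{\ln(1/\delta)}{\nu-1},\delta)$-DP. Nothing further is needed.
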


By choosing $\nu$ such that $\frac{\ln(\frac1\delta)}{\nu - 1} < \frac\epsilon2$ and then choosing $n$ big enough such that $\epsilon_1 < \frac{\epsilon}{2}$, we get that the posterior is $(\epsilon, \delta)$ differentially private.

\subsection{Stochastic Gradient Langevin Dynamics Privacy}\label{subseq-SGLD}
To complete the proof of Theorem \ref{MainTheorem}, we need to show that given a \textit{Bayesian linear regression problem} on domain $\mathcal{D}(n, \gamma_1, x_h, x_l, c)$, even if one sample from the posterior is ($\epsilon, \delta$) differentially private, it does not guarantee SGLD is private in the interim region. In order to do so, this section will first consider the loss of privacy when using SGLD for the \textit{Bayesian linear regression problem} on domain $\mathcal{D}(n, \gamma_1, x_h, x_l, c)$ and then, together with the results of section \ref{PosteriorSamplingPrivacySection}, will prove Theorem \ref{MainTheorem}.

In order to show that SGLD is not differentially private after initial steps and before convergence, it is enough to find two neighbouring datasets for which the loss in privacy is as big as desired after a certain number of steps. We define neighbouring datasets $D_1, D_2 \in \mathcal{D}(n, \gamma_1, x_h, x_l, c)$ in eq. \ref{N-DBs} and consider the \textit{Bayesian linear regression problem} on $D_1$ and $D_2$ with a learning rate: $\eta = \frac{2}{(\alpha + nx_h^2\beta)^2}$.

\begin{align} \label{N-DBs}
{}& D_1 = \{(x_i,y_i): x_i=x_h, y_i = c\cdot x_h\}_{i=1}^n\\
{}& D_2 = \{(x_i,y_i): x_i=x_h, y_i = c\cdot x_h\}_{i=1}^{n-1} \cup \{(\frac{x_h}2, c\cdot\frac{x_h}2)\}\nonumber
\end{align}

A closed-form solution for the distribution at each step enables us to get a tight lower bound over the differential privacy loss when approximately sampling via SGLD at each step. For dataset $D_1$, the solution is a Normal distribution. For dataset ${D}_2$, different shuffling of samples produces different Gaussian distributions, therefore giving a mixture of Gaussians.

We look at cyclic-SGLD with a batch size of $1$ and mark by $\theta_j, \hat\theta_j$ the samples on the $j$'th SGLD step when using datasets $D_1$ and ${D}_2$ accordingly. Since $D_1$ samples are all equal, the update step of the cyclic-SGLD is the same for every step (with different noise generated for each step). This update-step contains only multiplication by a scalar, addition of a scalar, and addition of Gaussian noise, therefore, together with a conjugate prior results in Normal distribution for $\theta_j$: $\mathcal{N}(\theta_j; \mu_j, \sigma_j^2)$, where $\mu_j, \sigma_j \in \R$.

For $D_2$, there is only one sample different from the rest. We mark by $r$ the index in which this sample is used in the cyclic-SGLD and call this order $r$-order. Note that there are only $n$ ($n$ is the dataset size, defined in eq. \ref{eq-Domain}) different values for $r$ and, as such, effectively only $n$ different samples orders. Since every order of samples is chosen with the same probability, $r$ is distributed uniformly in $\{1,..,n\}$. We mark by $\hat\theta_j^r$ the sample on the $j$'th SGLD step when using $r$-order. Since, for a given order, $\hat\theta_j^r$ is formed by a series of multiplications by a scalar, addition of scalar, and addition of Gaussian noise, and since the prior is also Gaussian, then $\hat\theta_j^r$ is distributed Normally, $\mathcal{N}(\hat\theta_j^r; \hat\mu_j^r, (\hat\sigma_j^r)^2)$, where $\hat\mu_j^r, \hat\sigma_j^r \in \R$. As $r$ is distributed uniformly, $\hat\theta_j$ distribution mass is distributed evenly between all $\hat\theta_j^r$, resulting in a mixture of Gaussians.

Intuitively what will happen is that each Gaussian component, $\hat{\theta}_j$ as well as $\theta_j$, will move towards a similar Gaussian posterior. However, at each epoch, $\hat{\theta}_j$ will drag a bit behind because a single gradient in one of the batches will be smaller. While this gap can be quite small, for large $n$, the Gaussians are very peaked with very small standard deviations; thus, they are separate enough that we can easily distinguish between the two distributions. 

According to the approximate differential privacy definition (Definition \ref{Def-ADP}), it is enough to find one set, $S$, such that $p(\theta_j \in S) > e^\epsilon p(\hat{\theta}_j \in S) + \delta$, to prove that releasing $\theta_j$ is not $(\epsilon, \delta)$ private. We choose $S = \{s| s > \mu_j\}$ at some step $j$ that we will define later on. 

To show that $p(\theta_j \in S) > e^\epsilon p(\hat{\theta}_j \in S) + \delta$, we first note that as the Gaussian $\theta_j$ is symmetric, it is clear that $p(\theta_j>\mu_j)=1/2$. Now we turn our focus to upper bounding $p(\hat{\theta}_j > \mu_j)$. This can be done using Chernoff bound, as stated in Lemma \ref{lemma:diff_of_means}.
\begin{lemma}\label{lemma:diff_of_means}
\label{SGLD-0}
$p(\hat{\theta}_j > \mu_j) \leq \frac1n\sum_{r=1}^n\exp(-\frac{(\mu_j - \hat{\mu}_j^r)^2}{2(\hat\sigma_j^r)^2})$.
\end{lemma}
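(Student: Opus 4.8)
The plan is to use the mixture structure of $\hat\theta_j$ established above. Since the order index $r$ is uniform on $\{1,\dots,n\}$ and, conditioned on $r$, the iterate $\hat\theta_j^r$ is exactly $\mathcal{N}(\hat\mu_j^r,(\hat\sigma_j^r)^2)$, the law of total probability gives
\[
p(\hat\theta_j > \mu_j) = \frac1n\sum_{r=1}^n p(\hat\theta_j^r > \mu_j).
\]
Thus it suffices to bound the right tail of each Gaussian component at the common threshold $\mu_j$ and then average over $r$; the final inequality follows by summing the per-component bounds.

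For a fixed $r$ I would bound $p(\hat\theta_j^r > \mu_j)$ with the standard Chernoff (exponential-moment) argument. Writing $\hat\theta_j^r = \hat\mu_j^r + \hat\sigma_j^r Z$ with $Z\sim\mathcal{N}(0,1)$, for any tilt $\lambda>0$ Markov's inequality applied to $e^{\lambda\hat\theta_j^r}$ gives $p(\hat\theta_j^r>\mu_j)\le \exp\!\big(-\lambda(\mu_j-\hat\mu_j^r)+\tfrac12\lambda^2(\hat\sigma_j^r)^2\big)$, using the Gaussian moment generating function $\E[e^{\lambda\hat\theta_j^r}]=\exp(\lambda\hat\mu_j^r+\tfrac12\lambda^2(\hat\sigma_j^r)^2)$. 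Optimizing over $\lambda$ (with $\lambda\propto(\mu_j-\hat\mu_j^r)/(\hat\sigma_j^r)^2$) collapses this to an exponential tail of the claimed form $\exp\!\big(-(\mu_j-\hat\mu_j^r)^2/(\hat\sigma_j^r)^2\big)$ (up to the constant in the exponent), and substituting this back into the averaged identity yields the lemma.

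The step I expect to be the main obstacle is ensuring the one-sided tail bound is applied in the regime where it is informative: the optimized Chernoff bound comes from a genuinely positive tilt only when $\mu_j\ge\hat\mu_j^r$, and if some component mean exceeded $\mu_j$ then $p(\hat\theta_j^r>\mu_j)$ could be near $1$ while the exponential is near $0$, breaking the bound. I would therefore first verify that every component mean lags the single-Gaussian mean, i.e. $\hat\mu_j^r\le\mu_j$ for all $r$, which is exactly the ``dragging behind'' effect: replacing one sample $(x_h,cx_h)$ by the weaker $(x_h/2,cx_h/2)$ in $D_2$ shrinks the SGLD drift toward the shared posterior mean, so each $\hat\mu_j^r$ stays below $\mu_j$. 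Proving this monotonicity rigorously needs the explicit affine recursions for $\mu_j$ and $\hat\mu_j^r$ produced by the Gaussian SGLD update, which I expect to be isolated in the lemmas that follow.
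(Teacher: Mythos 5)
Your proposal takes essentially the same route as the paper: the paper's proof likewise conditions on the uniformly distributed order index $r$ to split $p(\hat\theta_j>\mu_j)$ into the $n$ Gaussian components and then invokes the Chernoff bound on each one. The sign condition $\hat\mu_j^r\le\mu_j$ that you rightly flag is not discussed in the paper's one-line proof but is supplied for the steps actually used, $j=(k+1)n$, by the later lower bound $\mu_{(k+1)n}-\hat\mu^r_{(k+1)n}\ge\lambda^{n-1}\frac{ncx_h^2\beta}{\alpha+nx_h^2\beta}\lambda^{k(n-1)}(\hat\lambda^{k+1}-\lambda^{k+1})>0$ (Lemma \ref{Lem-App-SGLD-2}); also note that the optimized Chernoff exponent is $-(\mu_j-\hat\mu_j^r)^2/(2(\hat\sigma_j^r)^2)$, so the missing factor of $2$ you mention is a discrepancy already present in the paper's own statement, not something introduced by your argument.
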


To bound $p(\hat{\theta}_j > \mu_j)$ using Lemma \ref{SGLD-0}, we first need to lower bound $\frac{(\mu_j - \hat{\mu}_j^r)^2}{(\hat{\sigma}_j^r)^2}$ for a certain step. This is done in Lemma \ref{SGLD-1}.

\begin{lemma}
\label{SGLD-1}
$\exists k \in \mathbb{Z}_{>0}$ such that $\frac{(\mu_{(k+1)n} - \hat\mu_{(k+1)n}^r)^2}{(\hat{\sigma}^r_{(k + 1)n})^2} = \Omega(\frac{c^2}{n^2})$, for big enough $n$.
\end{lemma}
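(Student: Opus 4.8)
The plan is to derive closed-form recursions for the means and variances of $\theta_j$ and of each Gaussian component $\hat\theta_j^r$, track the gap between the means across epochs, and exhibit an epoch boundary $(k+1)n$ at which this gap has grown to $\Theta(c/n)$ while the variance is still only $O(1)$. First I would specialize the cyclic-SGLD update (eq.\ \ref{Eq-SGLD-Update-Rule}) with $b=1$ to the linear model: for a single sample $(x,y)$ the update is $\theta_{j+1}=(1-\tfrac{\eta}{2}(\alpha+n\beta x^2))\theta_j+\tfrac{\eta}{2}n\beta x y+\sqrt{\eta}\,\xi_j$. Writing $A=\alpha+n x_h^2\beta$ and using $\eta=2/A^2$, every step that uses the common sample $(x_h,c x_h)$ (all of $D_1$, and $n-1$ of each $D_2$ epoch) acts on the mean by $\mu\mapsto a\mu+m$ with $a=1-1/A$ and $m=n\beta c x_h^2/A^2$, while the single odd step of $D_2$ (sample $(x_h/2,c x_h/2)$) acts by $\mu\mapsto a'\mu+m'$ with $a'=a+\delta_a$, $\delta_a=\tfrac34 n\beta x_h^2/A^2$ and $m'=m/4$. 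Since each update is affine with independent Gaussian noise and the prior is Gaussian, $\theta_j$ and each $\hat\theta_j^r$ are Gaussian with $\mu_{j+1}=a\mu_j+m$, $\sigma_{j+1}^2=a^2\sigma_j^2+\eta$ (and the analogous $D_2$ recursion), started from $\mu_0=0$, $\sigma_0^2=1/\alpha$.

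Solving the $D_1$ recursion gives $\mu_j=\theta^*(1-a^j)$ with $\theta^*=mA=n\beta c x_h^2/A$, the posterior mean of Lemma \ref{PosteriorDistribution} on $D_1$. I would then track the gap $\Delta_j^r=\mu_j-\hat\mu_j^r$: on ordinary steps $\Delta_{j+1}^r=a\Delta_j^r$, and on the odd step (position $r$ of each epoch) $\Delta_{j+1}^r=a\Delta_j^r+S_j$ with source $S_j=\tfrac34 m-\delta_a\hat\mu_j^r$. The quantitative facts I would establish are: (i) $a^n\to\exp(-1/(x_h^2\beta))$, a constant in $(0,1)$, so the per-epoch contraction is $\Theta(1)$ and $a^{-r}=\Theta(1)$ uniformly for $r\le n$; (ii) a simple induction shows $\Delta_j^r\ge 0$, hence $\hat\mu_j^r\le\mu_j\le\theta^*<c$ for all $j$ (the crossover $S_j=0$ occurs exactly at $\hat\mu_j^r=c$), so $S_j$ stays positive; and (iii) throughout a constant number $k$ of initial epochs $\hat\mu_j^r$ is bounded away from $c$, whence $\tfrac34 m=\Theta(c/n)$ and $\delta_a\hat\mu_j^r=\Theta(c/n)$ give $S_j=\Theta(c/n)$. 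Summing the at most $k+1$ geometrically decaying kicks at step $(k+1)n$ then yields $\Delta_{(k+1)n}^r=\Theta(c/n)$ uniformly in $r$, so $(\mu_{(k+1)n}-\hat\mu_{(k+1)n}^r)^2=\Omega(c^2/n^2)$.

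For the denominator I would solve the variance recursion. From $1-a^2=\Theta(1/n)$ and $\eta=\Theta(1/n^2)$ the stationary variance is $\Theta(1/n)$, but at the interim step $j=(k+1)n$ the decaying prior term $a^{2j}/\alpha\to e^{-2(k+1)/(x_h^2\beta)}/\alpha$ is still $\Theta(1)$; and since $a\le a'<1$ the same geometric bound controls the $D_2$ variance, giving $(\hat\sigma_{(k+1)n}^r)^2=O(1)$ uniformly in $r$. Combining the numerator and denominator estimates gives $\frac{(\mu_{(k+1)n}-\hat\mu_{(k+1)n}^r)^2}{(\hat\sigma_{(k+1)n}^r)^2}=\Omega(c^2/n^2)$ for all $r$ once $n$ is large, which is the claim.

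The main obstacle is controlling the transient accumulation of the gap, and it has two coupled parts. First, the source $S_j$ depends on $\hat\mu_j^r$, which itself depends on the very gap being solved for, so I must close a self-consistent estimate: the gap must remain $O(c/n)$, i.e.\ lower order than $\hat\mu_j^r\approx c$, so that the approximation $S_j=\Theta(c/n)$ is not corrupted by feedback. Second, all bounds must be uniform in the epoch-position $r$, and this is exactly where fact (i) is essential, since $a^n,a^{-r}=\Theta(1)$ decouples the per-epoch dynamics and legitimizes a single fixed constant $k$. Identifying the $\Theta(1)$-width window in which the gap has already grown to $\Theta(c/n)$ while the variance has not yet contracted below $\Theta(1)$ is the crux that produces the $\Omega(c^2/n^2)$ rate, strictly larger than the posterior's $O(c^2/n^3)$.
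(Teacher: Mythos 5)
Your proposal is correct, and it follows the same overall strategy as the paper---derive the Gaussian affine recursions for the means and variances of $\theta_j$ and each component $\hat\theta_j^r$, lower-bound the mean gap at an epoch boundary, and upper-bound the variance there---but it executes both halves by a genuinely different and lighter route. For the numerator, the paper solves the non-homogeneous recurrences in closed form (Lemma \ref{Lem-App-SGLD-1}) and then lower-bounds $\mu_{(k+1)n}-\hat\mu^r_{(k+1)n}$ through a long telescoping computation (Lemma \ref{Lem-App-SGLD-2}, supported by Claims \ref{M1}, \ref{M5.e}, \ref{M6}); you instead track the gap $\Delta_j^r$ through its own one-dimensional recursion, rewrite the per-epoch source as $S_j=\delta_a(c-\hat\mu_j^r)$, and close the feedback loop by the sign-preservation induction $\Delta_j^r\ge 0\Rightarrow\hat\mu_j^r\le\mu_j\le\theta^*<c\Rightarrow S_j>0$, after which $\hat\mu_j^r\le\theta^*(1-a^{(k+1)n})$ gives $S_j=\Theta(c/n)$ with no explicit closed forms. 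For the denominator, the paper works at the epoch $\dot k=\frac{1}{2n}\log_\lambda(\cdot)-1$, which grows like $\log n$ and is chosen so that the prior and accumulated-noise contributions to the variance balance; this necessitates the bookkeeping of Lemmas \ref{Lemma-SGLD-2.2}--\ref{Lemma-SGLD-2.4}, where the decaying factor $\lambda^{2k(n-1)}$ must be cancelled between numerator and denominator. You take $k$ to be an absolute constant, where the variance is $O(1)$ simply because the prior term $a^{2j}/\alpha$ has not yet decayed while the injected-noise term is $O(1/n)$; since the gap is already $\Theta(c/n)$ after a single epoch, this yields the same $\Omega(c^2/n^2)$ rate with far less algebra (at the cost of a weaker, but still sufficient, variance bound). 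The one point to make fully explicit in a write-up is the uniformity in $r$ of the decay factor applied to each kick, which you correctly reduce to $a^{n}\to e^{-1/(x_h^2\beta)}$, i.e.\ the content of Claim \ref{G1.b}.
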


To prove Lemma \ref{SGLD-1}, we first find closed-form solutions for $\hat\theta_{(k+1)n}^r$, $\theta_{(k+1)n}$ distributions (Lemma \ref{Lem-App-SGLD-1}). Using the closed-form solutions, we find a lower bound over $(\mu_{(k+1)n} - \hat\mu_{(k+1)n}^r)^2$ as a function of $k$, which applies for all $k$ (Lemma \ref{Lem-App-SGLD-2}). To upper bound $(\hat{\sigma}^r_{(k+1)n})^2$, we find an approximation to the epoch in which the data and prior effect on the variance is approximately equal, marked $\dot{k}$. We choose $(\lceil\dot{k}\rceil + 1)n$ as the step in which we will consider the privacy loss and show that $(\hat{\sigma}^r_{(\lceil\dot{k}\rceil+1)n})^2$ is upper bounded at this step (Lemma \ref{Lemma-SGLD-2.3}). Using the lower bound on the difference in means and the upper bound on the variance, Lemma \ref{SGLD-1} is proved.

By using the lower bound from Lemma \ref{SGLD-1} in Lemma \ref{SGLD-0}, we get Lemma \ref{SGLD-2}.

\begin{lemma}
\label{SGLD-2}
For the \textit{Bayesian linear regression problem} over dataset $D_1$ and $n$ big enough, $\exists T \in \mathbb{Z}_{>0}$ such that approximate sampling by running SGLD for $T$ steps will not be $(\epsilon, \delta)$ private for $\epsilon = \Omega(\frac{c^2}{n^2}), \delta < 0.5$.
\end{lemma}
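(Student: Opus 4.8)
The plan is to instantiate the approximate-DP definition (Definition~\ref{Def-ADP}) at the single step $T=(k+1)n$ furnished by Lemma~\ref{SGLD-1}, using the half-line $S=\{s:s>\mu_T\}$ as the distinguishing event. Recall from the construction that $\theta_T\sim\mathcal{N}(\mu_T,\sigma_T^2)$ is a single Gaussian, so by symmetry $p(\theta_T\in S)=\tfrac12$ exactly; this is the ``large'' side of the inequality. The whole argument then reduces to showing that the ``small'' side, $p(\hat\theta_T\in S)$, is exponentially small in $c^2/n^2$.

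For that tail I would feed Lemma~\ref{SGLD-1} into the Chernoff estimate of Lemma~\ref{SGLD-0}. Since each mixture component $\hat\theta_T^r$ has mean $\hat\mu_T^r$ dragging behind $\mu_T$, Lemma~\ref{SGLD-0} gives $p(\hat\theta_T\in S)\le\frac1n\sum_{r=1}^n\exp\!\big(-(\mu_T-\hat\mu_T^r)^2/(\hat\sigma_T^r)^2\big)$, and Lemma~\ref{SGLD-1} lower-bounds each exponent by $\Omega(c^2/n^2)$. Provided that bound is uniform in $r$ (the point I would check against the sub-lemmas feeding Lemma~\ref{SGLD-1}), every summand is at most $\exp(-\Omega(c^2/n^2))$, so the average satisfies $p(\hat\theta_T\in S)\le\exp(-\Omega(c^2/n^2))$ as well.

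Finally I would read off the failure of privacy directly from Definition~\ref{Def-ADP}. Releasing $\theta_T$ is \emph{not} $(\epsilon,\delta)$ private as soon as $p(\theta_T\in S)>e^{\epsilon}\,p(\hat\theta_T\in S)+\delta$, i.e. $\tfrac12-\delta>e^{\epsilon}\exp(-\Omega(c^2/n^2))$. Because $\delta<0.5$ the quantity $\tfrac12-\delta$ is a strictly positive constant, so taking logarithms this holds for every $\epsilon<\Omega(c^2/n^2)+\ln(\tfrac12-\delta)$. The additive $\ln(\tfrac12-\delta)$ is a fixed negative constant, and since in our regime $c$ grows with $n$ so that $c^2/n^2\to\infty$, it is absorbed into the $\Omega$ for $n$ large enough. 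Hence there is a step $T$ at which SGLD fails to be $(\epsilon,\delta)$ private for all $\epsilon<\Omega(c^2/n^2)$ and all $\delta<0.5$, which is exactly the claim.

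I expect the only genuine subtlety—everything else being bookkeeping on top of Lemmas~\ref{SGLD-0} and~\ref{SGLD-1}—to lie in the two places where I implicitly need robustness: that the ratio bound of Lemma~\ref{SGLD-1} holds for \emph{all} $r$ (so the mixture average does not reintroduce a non-negligible tail through a few badly behaved orderings), and that the constant $\ln(\tfrac12-\delta)$ is genuinely dominated by $\Omega(c^2/n^2)$, which is precisely what forces the statement to be phrased ``for $n$ big enough'' in conjunction with the scaling $c\sim\mathcal{O}(n^{3/2}\sqrt{\epsilon})$ used elsewhere in the Method section.
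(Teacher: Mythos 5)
Your proposal is correct and follows essentially the same route as the paper: the paper's proof (via Lemma \ref{Lemma-SGLD-3} and Claim \ref{P1}) also takes $S=\{s>\mu_T\}$ at the step $T=(\lceil\dot{k}\rceil+1)n$ from Lemma \ref{SGLD-1}, uses symmetry to get $p(\theta_T\in S)=\tfrac12$, feeds the $r$-uniform lower bound of Lemma \ref{Lemma-SGLD-2.4} into the Chernoff estimate of Lemma \ref{SGLD-0}, and arrives at exactly the condition $\epsilon<\Omega(c^2/n^2)+\ln(0.5-\delta)$ with the additive constant absorbed because $c^2/n^2$ grows with $n$. The two subtleties you flag (uniformity in $r$ and domination of $\ln(\tfrac12-\delta)$) are precisely the points the paper's sub-lemmas address.
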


From Lemma \ref{PosteriorPrivacy}, we see that sampling from the posterior is $(\epsilon, \delta)$ differentially private for $\epsilon = \mathcal{O}(\frac{c^2}{n^3})$. From Lemma \ref{SGLD-2}, we see that for SGLD, there exists a step in which releasing a sample will not be $(\epsilon', \delta)$ differentially private for $\epsilon' = \Omega(\frac{c^2}{n^2})$. Therefore, for problem instances where $c = \mathcal{O}(n^\frac32\sqrt\epsilon)$, sampling from the posterior will be $(\epsilon, \delta)$ differentially private. However, there will be an SGLD step in which releasing a sample will not be $(\epsilon', \delta)$ differentially private for $\epsilon' = \Omega(n\epsilon)$. Since we can choose $n$ to be big as desired, we can make the lower bound over $\epsilon'$ as big as we desire it to be. This completes the proof of Theorem \ref{MainTheorem}.

\subsection{Propose Test Sample}\label{ProposeReleaseTest}
Our analysis of the posterior and SGLD is done on a restricted domain - $\mathcal{D}(n, \gamma_1, x_h, x_l, c)$. These restrictions over the dataset simplify the proof but are a bit unnatural as they assume we approximately know $c$, the parameter we are trying to estimate. This section shows that these restrictions could be replaced with a Propose-Test-Release phase \citep{10.1145/1536414.1536466} and common practices in data science.

When training a statistical model, it is common to first preprocess the data by restricting it to a bounded region and removing outliers. After the data is cleaned, the training process is performed. This is especially important in DP, as outliers can significantly increase the algorithm's sensitivity to a single data point and thus hamper privacy.

Informally, Algorithm \ref{A-PTR} starts by clipping the input to the accepted range. It then estimates a weighted average of the ratio $\frac{y_i}{x_i}$ (line 16) and throws away outliers that deviate too much from it. The actual implementation of this notion is a bit more complicated because of the requirement to do so privately. Once the dataset is cleaned, Algorithm \ref{A-PTR} privately verifies that the number of samples is big enough, so the sensitivity of $p(\theta|W)$ (where $W$ is the cleaned dataset) to a single change in the dataset will be small, therefore making sampling from $p(\theta |W)$ $(\epsilon, \delta)$ differentially private. This method is regarded as Propose-Test-Release, where we first propose a bound over the sensitivity, then test if the dataset holds this bound, and finally release the result if so.

In eq. \ref{alg-n_min} in the appendix, we define $n_{min}$ as the minimum size of $W$ for which the algorithm will sample from $p(\theta | W)$ with high probability. We will show later on that this limit ensures that sampling from $p(\theta | W)$ is $(\epsilon, \delta)$ differentially private.

\begin{algorithm}[tb]
   \caption{Propose Test Sample}
   \label{A-PTR}
\begin{algorithmic}[1]
   \STATE {\bfseries Input:} $D = \{x_i, y_i\}_{i=1}^{n_1}$
   \STATE {\bfseries Parameters:} $\epsilon, \delta < 0.5, x_l > 0, x_h > x_l, \alpha > 0, \beta \geq \frac3{x_h^2}, \rho_1 \in (1, \frac32), \rho_2 \in (0,\frac12), \gamma_1 \in (\rho_2, \frac12), n_1\in\mathbb{Z}_{>0}$
    \FOR {$i = 1,2,\ldots,N$}
	    \STATE $x_i \leftarrow \max\{x_i, x_l\}$
	    \STATE $x_i \leftarrow \min\{x_i, x_h\}$
	    \STATE $y_i \leftarrow \max\{y_i, 0\}$
	\ENDFOR
        \STATE $l_1 \leftarrow \text{sample from Laplace}(0, \frac1\epsilon)$
	\STATE $\breve{n}_1 \leftarrow n_1 - \frac1\epsilon\log\frac1{2\delta} + l_1$
	\STATE $V = \{x_i, y_i| \frac{y_i}{x_i} \leq \breve{n}_1^{\rho_1}\}$
        \STATE $l_2 \leftarrow \text{sample from Laplace}(0, \frac1\epsilon)$
	\STATE $n_2 \leftarrow |V| - \frac1\epsilon\log\frac1{2\delta} + l_2$
	\IF {$n_2 \leq 1$}
        \STATE return null
	\ENDIF
    \STATE $m \leftarrow \frac{\sum_{(x_i, y_i)\in V}x_iy_i}{\sum_{(x_i, y_i)\in V}x_i^2}$
    \STATE $l_3 \leftarrow \text{sample from Laplace}(0, \frac1\epsilon\breve{n}_1^{\rho_1}\frac{2(n_2 - 1)x_h^2x_l^2 + x_h^4}{n_2(n_2 - 1)x_l^4})$
    \STATE $\breve{m} \leftarrow m + l_3$
	\STATE $W \leftarrow \{(x_i, y_i) : |\frac{y_i}{x_i} - \breve{m}| \leq n_2^{\rho_2}\}$
        \STATE $l_4 \leftarrow \text{sample from Laplace}(0, \frac1\epsilon)$
	\STATE $n_W \leftarrow |W| - \frac1\epsilon\log(\frac1{2\delta}) + l_4$
	\IF {$n_W < n_{min}$}
	    \STATE return null
	\ENDIF
	\STATE return sample from $p(\theta | W)$
\end{algorithmic}
\end{algorithm}

We define $p(\theta|W)$ as the posterior of the 1D linear regression model defined in eq. \ref{LinearModel} over dataset $W$.
From Lemma \ref{PosteriorDistribution}, it follows that $p(\theta|W)$ has the form of
\begin{align*}\begin{split}
{}& p(\theta|W) = \mathcal{N}(\theta; \mu, \sigma^2);\ 
\\{}& \mu = \frac{\sum_{(x_i,y_i) \in W} x_i y_i\beta}{\alpha + \sum_{(x_i,y_i) \in W} x_i^2\beta};\ \sigma^2 = \frac{1}{\alpha + \sum_{(x_i,y_i) \in W} x_i^2\beta}.
\end{split}\end{align*} 

\begin{claim}\label{claim-A-PTR-Posterior-private}
Algorithm 1 is $(5\epsilon, 2\delta)$ differentially private.
\end{claim}
By Claim \ref{ro-5}, lines 8-18 are $(3\epsilon, \delta)$ differentially private. By Corollary \ref{cor-pts-4}, lines 19-25 are $(2\epsilon, \delta)$ differentially private given $\breve{m}$ and $n_2$. Therefore by the sequential composition theorem, the composition is $(5\epsilon, 2\delta)$ differentially private. The claim is proved by noticing that if lines 8-25 are private with respect to the updated dataset (after line 7), then they are also private for the original dataset.

\begin{claim}\label{claim-A-PTR-not-private}
When replacing line 25 with approximate sampling via SGLD with step size $\eta = \frac1{(\alpha + n_1x_h^2\beta)^2}$, there exists $T(n_1):\mathbb{Z}_{>0} \to \mathbb{Z}_{>0}$ such that the updated algorithm is not $(\epsilon, \delta)$ differentially private $\forall \epsilon \in \mathbb{R}_{>0}, \delta < \frac16$ if ran for $T(n_1)$ steps.
\end{claim}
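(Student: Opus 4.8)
The plan is to show that the neighbouring databases driving the interim privacy loss in Lemma~\ref{SGLD-2} pass through the Propose-Test phase of Algorithm~1 essentially unchanged, so that the SGLD non-privacy is inherited by the full procedure. I would take $D_1,D_2$ as in eq.~\ref{N-DBs} with $n$ replaced by the input size $n_1$; the key structural feature is that every point of both databases has constant slope $y_i/x_i=c$, and $D_2$ differs from $D_1$ only in a single point whose $x$-coordinate is $x_h/2$. Choosing the parameters so that $x_l\le x_h/2$, the clipping in lines 2--4 acts as the identity on both databases. Since every retained point has slope exactly $c$, the slope test in line 7 keeps all points whenever $c\le\breve n_1^{\rho_1}$, and the outlier test in line 14 keeps all points whenever $|\breve m-c|\le n_2^{\rho_2}$. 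I would set $c=\Theta(n_1^{\rho_1})$ with a leading constant below $1$; this is admissible because $\rho_1>1$, it keeps $c$ below the line-7 threshold, and through the SGLD analysis it forces the interim loss to be $\Omega(c^2/n_1^2)=\Omega(n_1^{2\rho_1-2})$, which diverges with $n_1$.

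The next step is to control the Laplace perturbations in lines 6, 8, 13 and 15 so that, with probability $1-\zeta(n_1)$ for some $\zeta(n_1)\to0$, both executions reach line 19 with the cleaned set equal to the original database, i.e.\ $W=D_1$ and $W=D_2$ respectively. This amounts to four high-probability events: $\breve n_1^{\rho_1}\ge c$ (line 7 discards nothing); $n_2>1$ and $n_W\ge n_{min}$ (neither null-return in lines 9--11 or 16--18 fires); and $|\breve m-c|\le n_2^{\rho_2}$ (line 14 discards nothing). Each event states that a centred Laplace variable stays inside a window whose width grows polynomially in $n_1$, while the relevant noise scales are comparatively small; here the feasible ranges $\rho_1\in(1,\tfrac32)$ and $\rho_2\in(0,\tfrac12)$ must be chosen so that the line-14 noise scale $\Theta(n_1^{\rho_1-1})$ is dominated by the threshold $\Theta(n_1^{\rho_2})$, i.e.\ $\rho_1-\rho_2<1$, which is easily arranged (e.g.\ $\rho_1=1.1$, $\rho_2=0.4$). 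A union bound then gives $\zeta(n_1)\to0$, and in particular $\zeta(n_1)<\tfrac13$ for $n_1$ large.

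Finally I would combine the two ingredients. Conditioned on reaching line 19 with $W=D_1$ (resp.\ $W=D_2$), the output is exactly an SGLD sample on that database run with step size $\eta=1/(\alpha+n_1x_h^2\beta)^2$; since this differs from the step size used in Lemma~\ref{SGLD-2} only by a bounded constant factor, the same argument (Lemmas~\ref{SGLD-1}--\ref{SGLD-2}) supplies a step $T$ and the half-line $S=\{s:s>\mu_T\}$ with $\Pr[\theta_T\in S\mid W=D_1]=\tfrac12$ and $\Pr[\hat\theta_T\in S\mid W=D_2]\le e^{-\Omega(n_1^{2\rho_1-2})}$. Writing $A_1,A_2$ for the events of reaching line 19, each of probability at least $1-\zeta(n_1)$, the output masses obey $\Pr[\mathrm{Alg}(D_1)\in S]\ge\tfrac12(1-\zeta(n_1))$ and $\Pr[\mathrm{Alg}(D_2)\in S]\le e^{-\Omega(n_1^{2\rho_1-2})}$. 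For any fixed target $\epsilon$ and any $\delta<\tfrac16$, taking $n_1$ large makes $\tfrac12(1-\zeta(n_1))\ge\tfrac13>\delta$ while $e^{\epsilon}e^{-\Omega(n_1^{2\rho_1-2})}\to0$, so $\Pr[\mathrm{Alg}(D_1)\in S]>e^{\epsilon}\Pr[\mathrm{Alg}(D_2)\in S]+\delta$, violating Definition~\ref{Def-ADP}. I expect the main obstacle to be the second step: verifying that all four Laplace-perturbed tests concentrate in the right direction simultaneously, which forces a joint choice of $\rho_1,\rho_2$ and $c$ and careful bookkeeping of the failure probabilities, rather than the SGLD estimate, which follows the lines of Lemma~\ref{SGLD-2}.
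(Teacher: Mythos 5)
Your proposal follows essentially the same route as the paper: choose constant-slope neighbouring databases with $c$ polynomially larger than $n_1$ (the paper uses $c=n_1^{\rho_3}$ with $1<\rho_3<\rho_1$ rather than your $c=\Theta(n_1^{\rho_1})$), show via the same Laplace-concentration conditions (the paper's $n_2^{0.9+\rho_2}>\breve n_1^{\rho_1}$ plays the role of your $\rho_1-\rho_2<1$) that both databases survive the Propose-Test phase intact with probability close to one (the paper quantifies this as $1-5\delta$), and then invoke Lemma~\ref{SGLD-2} conditionally on that event. The only slip is that your unconditional bound $\Pr[\mathrm{Alg}(D_2)\in S]\le e^{-\Omega(n_1^{2\rho_1-2})}$ should carry an extra additive term $\zeta(n_1)$ accounting for runs that do not reach line 19 with $W=D_2$, but since $e^{\epsilon}\zeta(n_1)\to 0$ for each fixed $\epsilon$ as $n_1\to\infty$, this does not affect the conclusion.
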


Proof sketch (See appendix for full proof). 
We analyze a run of Algorithm \ref{A-PTR} on the neighbouring datasets, $D_3$ and $D_4$, defined in eq. \ref{DB34}. First, note that when choosing $1 + \rho_2 > \rho_1$, the sensitivity of $\breve{m}$ grows slower than the bound over the distance $|\frac{y_i}{x_i} - \breve{m}|$ in $n_1$ for both of the datasets.
Therefore, with high probability, for $n_1$ big enough, $W$ will contain all the samples that meet the condition $\frac{y_i}{x_i} = m$. Consequently, with high probability, the algorithm will reach line 25, which, from our previous analysis over SGLD (see subsection \ref{subseq-SGLD}) will cause an unbounded loss of privacy.
\begin{equation}\label{DB34}
\begin{split}
\rho_1 & > \rho_3 > 1
\\ D_3 & = \big\{(x_i,y_i): x_i=x_h, y_i = n_1^{\rho_3}\cdot x_h\big\}_{i=1}^{n_1}
\\ D_4 & = \big\{(x_i,y_i): x_i=x_h, y_i = n_1^{\rho_3}\cdot x_h\big\}_{i=1}^{n_1-1} 
\\ & \cup \big\{(\frac{x_h}2, n_1^{\rho_3}\cdot\frac{x_h}2)\big\}
\end{split}
\end{equation}

\section{Empirical Evidence}\label{Empirical}
We augment our theoretical analysis with an empirical study on privacy loss when training a deep neural network via SGLD.
This study strengthens our claim that one should use SGLD with great care for private learning.

To empirically estimate SGLD's privacy, we attack it using a version of the \textit{adversary instantiation} method described in \citet{Nasr}, with some modifications to the method's details. In broad strokes, we train with SGLD a set of models on each of two neighbouring datasets, $D$ and $D'$. Then we try to predict for each model on which dataset it was trained. If the algorithm is DP, it will be hard to distinguish which dataset was used to train the model, and the accuracy will be low. Concretely, by analyzing the prediction's false positive and false negative rates, we can deduce a lower bound over the training DP parameters - $(\epsilon,\delta)$.

To create the neighboring dataset $D'$, we replace one of the samples from $D$ with a novel data point - $(x^*, y^*)$. To show SGLD is not private, we need a sample, $(x^*,y^*)$, such that the models that were not trained on it will misclassify it, but models trained on it will classify it correctly after a small number of epochs.

To create $x^*$, we first train $M$ models, $\{m\}_{i=1}^M$, on dataset $D$. Then, we search for a sample in $D$, marked $(x^0,y^0)$, such that $\{m\}_{i=1}^M$ agree on it's label: $\forall 1 \leq i,j \leq M: m_i(x^0) = m_j(x^0)$. We then use DeepFool \citep{Moosavi-Dezfooli_2016_CVPR} to alter the sample $x^0$ into $x^*$ such that all the models will misclassify it with regard to their original prediction: $\forall 1 \leq i \leq M: m_i(x^0) \neq m_i(x^*)$. We set $y^* = y^0$ and $D' = D\setminus\{(x^0,y^0)\}\cup\{(x^*, y^*)\}$.

Given $D$ and $D'$, we generate a dataset, $A_1$, of models trained on $D$ and $D'$ with equal probability. We represent a model with parameters $\theta$ by four features, $p(y=y^*|x^*,\theta)$, $p(y=y^0|x^*,\theta)$, $p(y=y^*|x^0,\theta)$, $p(y=y^0|x^0,\theta)$, and train a simple linear classifier. Finally, we create a second independent test set, $A_2$, of models trained on $D$ and $D'$ with equal probability and estimate our classifier's false negative (FN) and false positive (FP) rates using the examples from $A_2$.

\subsection{Deducing a lower bound over $\epsilon$}
To translate the attack results into DP parameters, we follow the analysis approach suggested by \citet{10.1145/3243734.3243818, DBLP:conf/nips/JagielskiUO20} and extended by \citet{Nasr}. Without loss of generality, we define false positive as predicting dataset $D'$, when dataset $D$ was used for training a model, and false negative as vice versa. The probability for FP and FN are marked as $P_{FP}$ and $P_{FN}$, respectively. According to \citet{pmlr-v37-kairouz15}, if an algorithm is $(\epsilon, \delta)$-DP, then the following inequalities hold:
\begin{equation}\label{Kairouz}
\begin{split}
{}& P_{FP} + e^\epsilon P_{FN} \geq 1 - \delta
\\{}& P_{FN} + e^\epsilon P_{FP} \geq 1 - \delta.
\end{split}
\end{equation}
These inequalities can easily be translated into a lower bound over $\epsilon$, 
\begin{equation}\label{Kairouz-eps}
\epsilon_{lb} \geq \max\left(\log\frac{1 - \delta - P_{FP}}{P_{FN}}, \log\frac{1 - \delta - P_{FN}}{P_{FP}}\right).
\end{equation}

Since we can only estimate $P_{FP}$ and $P_{FN}$ empirically, we use confidence intervals to upper-bound them. The confidence intervals are calculated using the Clopper-Pearson method \citep{Clopper1934THEUO} on the attack's false positive and false negative rates. The resulting upper bounds, $P_{FP}^{high}$ and $P_{FN}^{high}$, are then used to provide an empirical lower bound on $\epsilon$ with high probability:
\begin{equation}\label{Kairouz-lb}
\epsilon_{lb}^{emp} \geq \max\left(\log\frac{1 - \delta - P_{FP}^{high}}{P_{FN}^{high}}, \log\frac{1 - \delta - P_{FN}^{high}}{P_{FP}^{high}}\right).
\end{equation}
It is important to note that this method can only prove that a model is not private. A low value for $\epsilon_{lb}^{emp}$ does not show the model is private, only that our attack failed to prove a lack of privacy.

\subsection{Results}\label{subseq::Resulsts-LeNet}
We performed our attack on the SGLD-based training process of a LeNet5 \citep{LeNet5Article}, trained on the MNIST dataset \citep{LeNet5Article}. We tested a learning rate of $0.001$\footnote{Effective learning rate after multiplication by SGLD's normalization factor, i.e. $\eta\frac{n}{2b}$. See SGLD step in eq. \ref{Eq-SGLD-Update-Rule} for details.} with a batch size of 4. We trained a different classifier for each epoch to find a lower bound on the DP at each epoch.

When using Clopper-Pearson \citep{Clopper1934THEUO} confidence intervals, the resulting upper bounds ($P_{FP}^{high}$ and $P_{FN}^{high}$) are limited by the number of experiments conducted, which limits the maximum $\epsilon^{emp}_{lb}$. We used $500$ models to train the classifier (i.e., $|A_1|=500$) and evaluated the attack on $500$ models (i.e., $|A_2|=500$), which limits $\epsilon^{emp}_{lb}$ to a maximum of $4.89$.

Fig. \ref{fig::SGLD-empirical-epsilon-LeNet} depicts lower bounds over $\epsilon$ given $\delta = 10^{-5}$, with a confidence value of $90\%$, i.e., $P(\epsilon \geq \epsilon_{lb}^{emp}) \geq 0.9$, as well as the accuracy of the network, as a function of the number of epochs.

It should be emphasized that we show a lower bound over $\epsilon$. As such, even a small value is sufficient to 
show that the classifier can reliably infer which of the datasets was used to train the model. For example, a lower bound of $(\delta = 10^{-5}, \epsilon = 3)$ allows a classifier to identify on which dataset a model was trained with an accuracy of $95\%$ ($P_{FP}^{high} = P_{FN}^{high} \simeq 0.05$).
\begin{figure}[t]
\begin{center}
\includegraphics[width=0.9\linewidth]{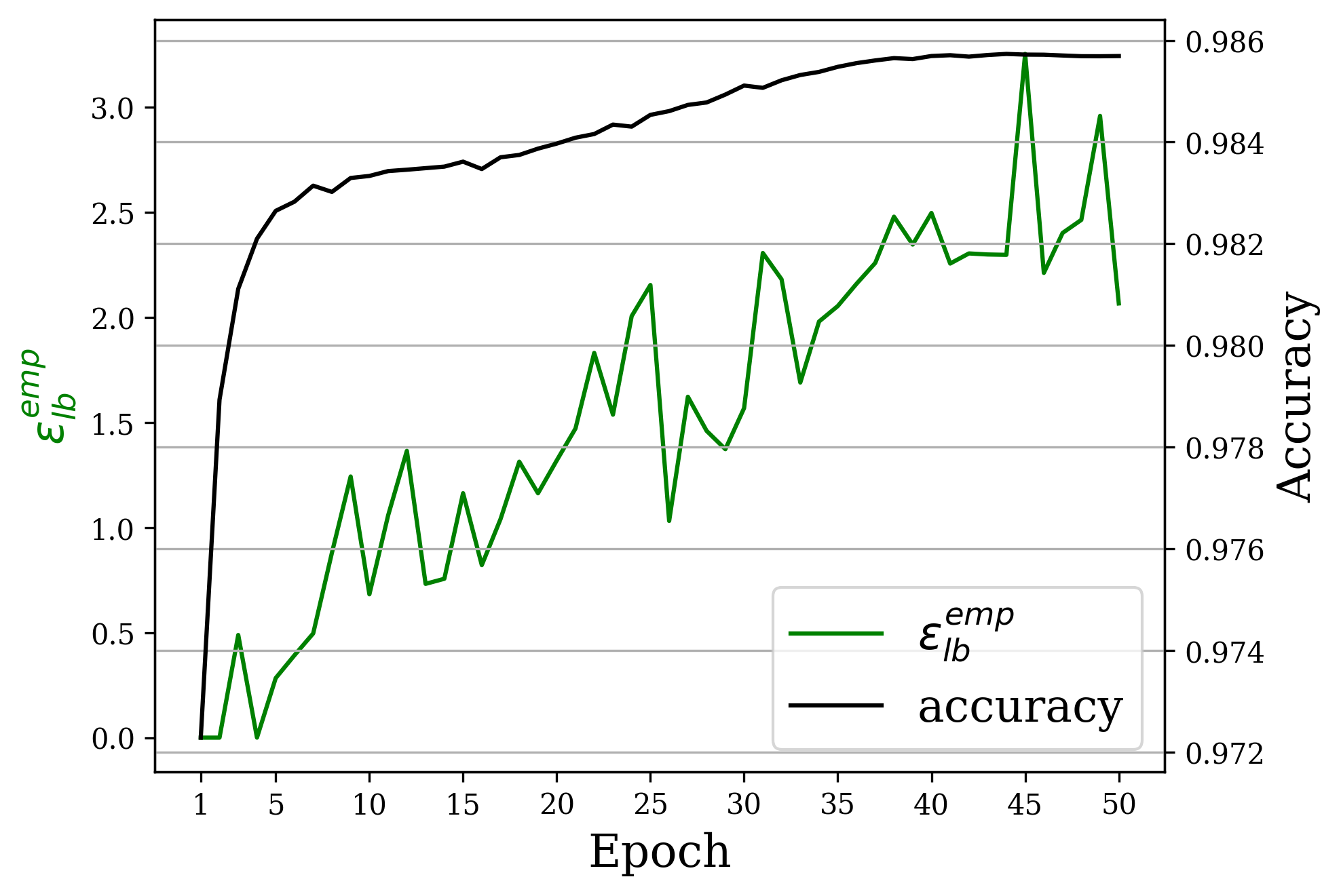}
\end{center}
\caption{A lower bound over the Differential Privacy of the LeNet5, SGLD-based training process over MNIST for a given $\delta=10^{-5}$, a learning rate of $0.001$, and a batch size of $4$.}
\label{fig::SGLD-empirical-epsilon-LeNet}
\end{figure}

In appendix \ref{sec::clipped_grads}, we show the results for SGLD with clipped gradients. We see that clipping the gradients protects the algorithm from our attack. As mentioned, an implementation that clips the gradients diverges from SGLD and, as such, has different sampling properties. Indeed, the results show that this version's accuracy degrades by $8\%$ compared to SGLD.

\section*{Acknowledgment}
We would like to express our gratitude to Dr. Or Sheffet for his help and advice throughout the development of this paper.

\bibliography{main.bib}{}
\bibliographystyle{IEEEtran}
\onecolumn
\appendices
\section{SGLD and Posterior Privacy}\label{app:SGLD}
Appendix \ref{app:SGLD} provides proofs for theorem \ref{MainTheorem} and the lemmas in subsections \ref{subseq-SGLD} and \ref{PosteriorSamplingPrivacySection}. As such, it uses the notations defined in section \ref{Method} and subsections \ref{subseq-SGLD} and \ref{PosteriorSamplingPrivacySection}. To ease the proof's reading, we repeat these notations here.

$\alpha, \beta$ and  $\theta$ are parameters of the linear model defined in eq. \ref{app:LinearModel} (originally defined in eq. \ref{LinearModel}), and $p(y|x)$ is the model likelihood.
\begin{align}\label{app:LinearModel}
\begin{split}
{}& y = \theta{x} + \xi\\
{}& \xi \sim \mathcal{N}(0,\beta^{-1})\\
{}& \theta \sim \mathcal{N}(0,\alpha^{-1})\\
{}& p(y|x) \sim \mathcal{N}(\theta x, \beta^{-1})
\end{split}
\end{align}
$x_h, x_l, c, n$, and $\gamma_1$ are defined as part of domain $\mathcal{D}(n, \gamma_1, x_h, x_l, c)$ definition (originally defined in eq. \ref{eq-Domain}): 
\begin{align}\label{app:eq-Domain}\begin{split}
\mathcal{D}(n, \gamma_1, x_h, x_l, c) & = 
\{(x_i, y_i)| |\frac{y_i}{x_i} - c| \leq n^{\gamma_1};\ x_i, y_i, c, \gamma_1 \in \mathbb{R}_{>0}; n\in\mathbb{Z}_{>0}; x_l \leq x_i \leq x_h \}_{i=1}^n
\end{split}
\end{align}
where $x_h^2\beta >3$ and $\gamma_1 < \frac12$. The datasets $D_1, D_2 \in \mathcal{D}(n, \gamma_1, x_h, x_l, c)$ (originally defined in eq. \ref{N-DBs}) are defined in eq. \ref{app:N-DBs}. 
\begin{align} \label{app:N-DBs}
\begin{split}
{}& D_1 = \{x_i,y_i: x_i=x_h, y_i = c\cdot x_h\}_{i=1}^n\\
{}& D_2 = \{x_i,y_i: x_i=x_h, y_i = c\cdot x_h\}_{i=1}^{n-1} \cup \{\frac{x_h}2, c\cdot\frac{x_h}2\}
\end{split}
\end{align}

The \textit{Bayesian Linear Regression Problem} (originally defined in section \ref{Method}) refers to the problem of sampling (or approximately sampling via SGLD) from the posterior for the model described in eq. \ref{app:LinearModel}.

We look at cyclic-SGLD with a batch size of $1$ and mark by $\theta_j, \hat\theta_j$ the samples on the $j$'th SGLD step when using datasets $D_1$ and ${D}_2$ accordingly. $\mu_j, \sigma_j^2 \in \R$ are the mean and variance of $\theta_j$. For $D_2$, there is only one sample different from the rest. We mark by $r$ the index in which this sample is used in the cyclic-SGLD and call this order $r$-order. We mark by $\hat\theta_j^r$ the sample on the $j$'th SGLD step when using dataset $D_2$ and $r$-order. $\hat\mu_j^r, (\hat\sigma_j^r)^2 \in \R$ are the mean and variance of $\hat\theta_j^r$. $\eta = \frac{2}{(\alpha + nx_h^2\beta)^2}$ is the SGLD learning rate. $\eta, \theta_j, \hat{\theta}_j, \mu_j, \hat{\mu}^r_j, \sigma_j, \hat{\sigma}^r_j$ were originally defined in subsection \ref{subseq-SGLD}.

\subsection{Theorem \ref{MainTheorem} Proof}
\begin{proof}[Proof of Theorem \ref{MainTheorem}]
We first define several parameters used to configure the \textit{Bayesian linear regression problem} on domain $\mathcal{D}(n, \gamma_1, x_h, x_l, c)$.
\begin{align*}
\frac12 & > \gamma_1 > 0;\ \frac32 > \gamma_2 > 1 + \gamma_1 ;\ x_l = \frac{x_h}2
\\ \nu_1 & = \frac{2\ln(\frac1\delta)}\epsilon + 1
\\ n_1 & = \max\left\{\frac{1}{2\alpha x_h^2\beta} - \frac{1}{x_h^2\beta}, \frac\alpha{x_h^2\beta},  \frac{\alpha}{x_h^2\beta}(e^{\frac2{x_h^2\beta}} - 2) + \frac1{2x^2\beta}\right\} + 1
\\ n_2 & = \max\Big\{1 + \frac{x_h^2}{x_l^2}\frac{8}{\epsilon},
1 + \nu\frac{x_h^2}{x_l^2}\left(1 + 8\frac{\left(\nu - 1\right)}{\epsilon}\right), 
\left(\frac{16\nu\beta x_h^4}{\frac9{10}\epsilon x_l^2}\right)^{\frac1{1 - 2\gamma_1}},
\\& \left(\frac{16\nu}{\epsilon}\cdot\frac{x_h^4\left(\alpha + x_h^2\beta\right)}{\frac9{10}x_l^4}\left(1 + \frac1{\left(1 + 10\frac{x_h^2}{x_l^2}\frac\nu\beta\right)^{\gamma_2 - \gamma_1}}\right)\right)^{\frac1{2 - \gamma_1-\gamma_2}}, 
\\& \left(\frac{4\nu}{\epsilon}\cdot\frac{\left(x_h^2\alpha + x_h^4\beta\right)^2}{\frac9{10}x_l^6\beta}\left(1 + \frac1{\left(1 + 10\frac{x_h^2}{x_l^2}\frac\nu\beta\right)^{\gamma_2 - \gamma_1}}\right)^2\right)^\frac1{3 - 2\gamma_2}\Big\}
\\ n_3 & = \max\left\{1 +  10\frac{x_h^2}{x_l^2}\frac{\nu_1}{\beta}, 1 + \nu_1\frac{x_h^2}{x_l^2}\right\} + 1
\\ n_p & = \max\left\{n_1, n_2, n_3, \left(\frac{2v_1}{\alpha}\left(\frac{32x_h^2\beta}{3}\right)^2\left(\epsilon' - \ln\left(0.5 - \delta\right)\right)e^{\frac2{x_h^2\beta}}\right)^{\frac1{2\left(\gamma_2 - 1\right)}}\right\}
\\ v_1 & = \max\left\{6, 1 + 2e^{\frac1{x_h^2\beta}}\right\}
\\ c_p & = n_p^{\gamma_2}.
\end{align*}

By looking at the \textit{Bayesian linear regression problem} on domain $\mathcal{D}(n_p, \gamma_1, x_h, x_l, c_p)$, we next show that sampling from the posterior is $(\epsilon, \delta)$ differentially private, although there is an SGLD step for which approximate sampling from the posterior using SGLD is not $(\epsilon', \delta)$ differentially private.

Given dataset $D_1 \in \mathcal{D}(n_p, \gamma_1, x_h, x_l, c_p)$ (as defined in eq. \ref{app:N-DBs}, with $n = n_p$ and $c=c_p$), as $n_p \geq n_1$, the problem holds the constraints of Lemma $\ref{Lemma-SGLD-3}$. Consequently, there exists an SGLD step that is not $(\epsilon'', \delta)$ private for all $\epsilon'' \leq e^{-\frac2{x_h^2\beta}}\frac{\alpha}{2v_1}(\frac{3}{32x_h^2\beta})^2(\frac{c_p}{n_p})^2 + \ln(0.5 - \delta)$. From eq. \ref{eq-theorem1-proof}, the choice of $n_p$ promises that $\epsilon' \leq e^{-\frac2{x_h^2\beta}}\frac{\alpha}{2v_1}(\frac{3}{32x_h^2\beta})^2(\frac{c_p}{n_p})^2 + \ln(0.5 - \delta)$. Therefore, approximate sampling from the posterior using SGLD is not $(\epsilon', \delta)$ differentially private.

Since $n_p \geq n_2$ and $n_p \geq n_3$, the problem holds the constraints of Claim \ref{Claim-Post-9}; therefore, one sample from the posterior is $(\epsilon, \delta)$ differentially private.

\begin{equation}\label{eq-theorem1-proof}
\begin{split}
{}& e^{-\frac2{x_h^2\beta}}\frac{\alpha}{2v_1}\left(\frac{3}{32x_h^2\beta}\right)^2\left(\frac{c_p}{n_p}\right)^2 + \ln\left(0.5 - \delta\right) \geq \epsilon'
\\{}& \left(\frac{c_p}{n_p}\right)^2 \geq \left(\epsilon' - \ln\left(0.5 - \delta\right)\right)e^{\frac2{x_h^2\beta}}\left(\frac{32x_h^2\beta}{3}\right)^2\frac{2v_1}{\alpha}
\\{}& n_p^{2\left(\gamma_2 - 1\right)} \geq \left(\epsilon' - \ln\left(0.5 - \delta\right)\right)e^{\frac2{x_h^2\beta}}\left(\frac{32x_h^2\beta}{3}\right)^2\frac{2v_1}{\alpha}
\\{}& n_p \geq \left(\left(\epsilon' - \ln\left(0.5 - \delta\right)\right)e^{\frac2{x_h^2\beta}}\left(\frac{32x_h^2\beta}{3}\right)^2\frac{2v_1}{\alpha}\right)^{\frac1{2\left(\gamma_2 - 1\right)}}
\end{split}
\end{equation}
\end{proof}

\subsection{Posterior Sampling Privacy}\label{subseq-PSP}
This subsection provides proofs for the lemmas provided in subsection \ref{PosteriorSamplingPrivacySection}, along with a supporting lemma.

\begin{proof}[Proof of Lemma \ref{PosteriorDistribution}]
Eq. \ref{Eq-App-BishopDistribution} is a known result for the Bayesian inference problem for a linear 1D model with Gaussian noise with a known precision parameter $(\beta)$ and a conjugate prior (see \citet{Bishop} - 3.49-4.51. for details). By choosing the basis function to be $\phi(x)=x$, working in one dimension, and choosing $\vm_0 = 0, \mS_0 = \alpha^{-1}$, we get the linear model defined in eq. \ref{LinearModel} and the matching posterior described in Lemma \ref{PosteriorDistribution}. 

\begin{equation}
\label{Eq-App-BishopDistribution}
p(\theta|D) = \mathcal{N}(\theta; m_N, S_N);\ m_N = S_N(S_0^{-1}m_0 + \beta\Phi^T t);\  S_N^{-1} = S_0^{-1} + \beta\Phi^T\Phi
\end{equation}
\end{proof}

\begin{lemma}
\label{PosteriorRenyiFull}
For a Bayesian linear regression problem on domain $\mathcal{D}(n, \gamma_1, x_h, x_l, c)$, such that $n > \max\{1 +  10\frac{x_h^2}{x_l^2}\frac{\nu}{\beta}, 1 + \nu\frac{x_h^2}{x_l^2}\}$, one sample from the posterior is $(\nu, \epsilon_1)$-R\'enyi differentially private, and $\epsilon_1$ is
\begin{align*}
\epsilon_1 & = \frac{x_h^2}{2(n - 1)x_l^2} + \frac{\left(\nu-1\right)\nu x_h^2}{2\left(\left(n - 1\right)x_l^2 - \nu x_h^2\right)}
+ \frac{20\nu\beta x_h^4}{9n^{1 - 2\gamma_1}x_l^2} \\ & +
\frac{20\nu x_h^4(\alpha + x_h^2\beta)}{9x_l^4}\cdot\frac{(c + n^{\gamma_1})}{n^{2-\gamma_1}}
+ \frac{5\nu x_h^4(\alpha + x_h^2\beta)^2}{9x_l^6\beta}\cdot\frac{(c + n^{\gamma_1})^2}{n^3}.
\end{align*}
\end{lemma}

\begin{proof}[Proof of Lemma \ref{PosteriorRenyiFull}]
By Definition \ref{Def-Renyi-DP}, for a single sample from the posterior to be $(\nu, \epsilon')$ RDP, the R\'enyi divergence of order $\nu$ between any adjacent datasets needs to be bounded. Therefore, we consider two adjacent datasets, $D, \hat{D} \in \mathcal{D}(n, \gamma_1, x_h, x_l, c)$, and w.l.o.g, define that they differ in the last sample (where it is also allowed to be $(0, 0)$ for one of them, which saves us the need to consider also a neighbouring dataset with a size smaller by 1). To ease the already complex and detailed calculations, we use definitions in eq. \ref{Eq-App-PosteriorAdjacentDBs}.

\begin{equation}
\label{Eq-App-PosteriorAdjacentDBs}
\begin{split}
{}& D = \{x_i,y_i\}_{i=1}^{n - 1}\cup\{x_n, y_n\},\ \hat{D} = \{x_i,y_i\}_{i=1}^{n-1}\cup\{\hat{x}_n, \hat{y}_n\}\\
{}& z = \sum_{i=1}^{n-1}x_i^2,\ q = \sum_{i=1}^{n-1}y_i x_i
\end{split}
\end{equation}

According to Lemma \ref{PosteriorDistribution} and with the definitions of eq. \ref{Eq-App-PosteriorAdjacentDBs}, the posterior distributions are

\begin{equation}
\label{Eq-App-PosteriorDis2}
\begin{split}
{}& p(\theta|D) = \mathcal{N}(\theta; \mu, \sigma^2);\ \mu = \frac{\beta(q + x_ny_n)}{\alpha + (z + x_n^2)\beta};\ \sigma^2 = \frac{1}{\alpha + (z + x_n)\beta}\\
{}& p(\theta|\hat{D}) = \mathcal{N}(\theta; \hat\mu, \hat\sigma^2);\ \hat\mu = \frac{\beta(q + \hat{x}_n\hat{y}_n)}{\alpha + (z + \hat{x}_n^2)\beta};\ \hat\sigma^2 = \frac{1}{\alpha + (z + \hat{x}_n)\beta}.
\end{split}
\end{equation}

Mark by $\mathrm{D}_\nu(f_1 || f_2)$ the R\'eyni divergence of order $\nu$ between $f_1$ and $f_2$ - uni-variate normal distributions with means $\mu_1, \mu_2$ and variances $\sigma_1, \sigma_2$ accordingly. By \citet{RenyiDivergence}, $\mathrm{D}_\nu(f_1 || f_2)$ is

\begin{equation*}
\begin{split}
{}& \mathrm{D}_\nu\left(f_1 || f_2\right) = \ln\frac{\sigma_1}{\sigma_2} + \frac12\left(\nu - 1\right)\ln\frac{\sigma_2^2}{\left(\sigma^2_{f_1,f_2}\right)^*_\nu} + \frac12\frac{\nu\left(\mu_1 - \mu_2\right)^2}{\left(\sigma^2\right)^*_\nu} \\
{}& \left(\sigma^2_{f_1,f_2}\right)^*_\nu = \nu\sigma_2^2 + \left(1 - \nu\right)\sigma_1^2 > 0.
\end{split}
\end{equation*}

Therefore, for $p(\theta|D)$ and $p(\theta|\hat{D})$, the R\'enyi divergence of order $\nu$ is given in eq. \ref{eq-renyi-div-post}, where we omit the subscript for $(\sigma^2)^*_\nu$ since it is clear from context to which distributions it applies. 
\begin{equation}\label{eq-renyi-div-post}\begin{split}
{}& \mathrm{D}_\nu\left(p\left(\theta|D\right) || p(\theta|\hat{D})\right) = \ln\frac{\sigma}{\hat\sigma} + \frac12\left(\nu - 1\right)\ln\frac{\hat\sigma^2}{\left(\sigma^2\right)^*_\nu} + \frac12\frac{\nu\left(\mu - \hat\mu\right)^2}{\left(\sigma^2\right)^*_\nu} \\
{}& \left(\sigma^2\right)^*_\nu = \nu\hat\sigma^2 + \left(1 - \nu\right)\sigma^2
\end{split}\end{equation}

According to Claim \ref{claim-Post-Div-2}, $(\sigma^2)^*_\nu > 0$; therefore, the value $\mathrm{D}_\nu(p(\theta|D), p(\theta|\hat{D}))$ exists. In order to prove R\'enyi differential privacy, each of the terms of $\mathrm{D}_\nu(p(\theta|D), p(\theta|\hat{D}))$ is bounded separately, so their sum will be equal to $\epsilon_1$. The bounds for each of the terms are proved at Claims \ref{claim-Post-Div-3}, \ref{claim-Post-Div-4}, and \ref{claim-Post-Div-5}.
\end{proof}

\begin{proof}[Proof of Lemma \ref{PosteriorRenyi}]
The Lemma is a direct corollary of Lemma \ref{PosteriorRenyiFull}
\end{proof}

\begin{proof}[Proof of Lemma \ref{PosteriorPrivacy}]
By Lemma \ref{PosteriorRenyiFull}, sampling from the posterior is $(\nu, \epsilon_1)$-RDP; therefore, by Lemma \ref{Lemma-RDP-to-ADP}, sampling from the posterior is also $(\epsilon_1 + \frac{\ln(\frac1\delta)}{\nu - 1}, \delta)$ differentially private.
\end{proof}

\subsection{Stochastic Gradient Langevin Dynamics Privacy}
This subsection provides proofs for the lemmas presented in subsection \ref{subseq-SGLD}. The proofs in this section rely heavily on the analysis of the SGLD behaviour for the \textit{Bayesian Linear Regression Problem} on datasets $D_1$, $D_2$. This analysis is provided in subsection \ref{subseq::sgld-detailed}.

\begin{proof}[Proof of Lemma \ref{SGLD-0}]
\begin{equation*}\label{proof::sgld-0}\begin{split}
p(\hat\theta_{j} > \mu_{j}|{D}_2) & =
\sum_{r=1}^{n}p(\hat\theta_{j}^r > \mu_{j}|{D}_2)p(\hat{\theta}_{j} = \hat{\theta}^{r}_{j}|{D}_2) = 
\sum_{r=1}^{n}p(\hat\theta_{j} - \hat{\mu}^{r}_{j} > \mu_{j} - \hat\mu^{r}_{j}|{D}_2)p(\hat{\theta}_{j} = \hat{\theta}^{r}_{j}|{D}_2)
\\& = \frac1n\sum_{r=1}^{n}p(\hat\theta_{j} - \hat\mu^{r}_{j} > \mu_{j} - \hat\mu^{r}_{j}|{D}_2) \leq
\frac1n\sum_{r=1}^{n}\exp\left({-\frac{(\mu_{j} - \hat\mu^{r}_{j})^2}{2(\hat{\sigma}^{r}_{j})^2}}\right)
\end{split}\end{equation*}
where the inequality holds due to Chernoff bound (For further details about Chernoff bound, see \citet{ChernoffBound}).
\end{proof}

\begin{proof}[Proof of Lemma \ref{SGLD-1}] 
By Lemma \ref{Lemma-SGLD-2.4}, for $n > \max\{\frac\alpha{x_h^2\beta}, \frac{\alpha}{x_h^2\beta}(e^{\frac2{x^2\beta}} - 2) + \frac1{2x_h^2\beta}, \frac{1}{2\alpha x_h^2\beta} - \frac{1}{x_h^2\beta}\}$ and $\dot{k} \in \R_{>0}$, eq. \ref{eq-proof-sgld-1} holds. We can see that the lower bound described in eq. \ref{eq-proof-sgld-1} is dominated by $\frac{c^2}{n^2}$, thus proving Lemma \ref{SGLD-1}.
\begin{equation}\label{eq-proof-sgld-1}\begin{split}
{}& \frac{(\mu_{(\lceil\dot{k}\rceil + 1)n} - \hat\mu_{(\lceil\dot{k}\rceil + 1)n}^r)^2}{(\hat{\sigma}^{r}_{(\lceil\dot{k}\rceil + 1)n})^2} \geq e^{-\frac2{x_h^2\beta}}\frac{\alpha}{v_1}(\frac{3}{32x_h^2\beta})^2(\frac{c}{n})^2
\\{}& v_1 = \max\{6, 1 + 2e^{\frac1{x_h^2\beta}}\}
\end{split}
\end{equation}
\end{proof}

\begin{proof}[Proof of Lemma \ref{SGLD-2}]
Define $\epsilon'$ as
\begin{equation*}\label{eq-sgld-2-proof}
\begin{split}
{}& \epsilon' = e^{-\frac2{x_h^2\beta}}\frac{\alpha}{2v_1}\left(\frac{3}{32x_h^2\beta}\right)^2\left(\frac{c}{n}\right)^2 + \ln\left(0.5 - \delta\right)
\\{}& v_1 = \max\{6, 1 + 2e^{\frac1{x_h^2\beta}}\}.
\end{split}
\end{equation*}
By Lemma \ref{Lemma-SGLD-3}, for $n > \max\{\frac\alpha{x_h^2\beta},  \frac{\alpha}{x_h^2\beta}(e^{\frac2{x_h^2\beta}} - 2) + \frac1{2x_h^2\beta}, \frac{1}{2\alpha x_h^2\beta} - \frac{1}{x_h^2\beta}\}$, 
there exists $T \in \mathbb{Z}_{>0}$ (marked in Lemma \ref{Lemma-SGLD-3} as $(\lceil\dot{k}\rceil + 1)n$) such that running SGLD for the \textit{Bayesian linear regression problem} over $D_1$ for $T$ steps will not be $(\epsilon, \delta)$ differentially private for $\epsilon < \epsilon'$. As $\epsilon'$ is dominated by $\frac{c^2}{n^2}$, Lemma \ref{SGLD-2} is proved.
\end{proof}
\newpage
\subsection{Stochastic Gradient Langevin Dynamics Detailed Analysis}\label{subseq::sgld-detailed}
This subsection provides an analysis of SGLD behaviour for the \textit{Bayesian Linear Regression Problem} on datasets $D_1$, $D_2$. We advise the reader to read the lemmas by order and provide here a summary of the analysis: Lemma \ref{Lem-App-SGLD-1} provides an expression for the sample at the $(k+1)n$ SGLD step when using datasets $D_1, D_2$. Lemmas \ref{Lem-App-SGLD-2} and \ref{Lemma-SGLD-2.2} use this expression to get a lower bound on the difference in means and an upper bounds on the variance, respectively. In turn, Lemma \ref{Lemma-SGLD-2.4} uses these lower and upper bounds to find a lower bound over $ {\left(\mu_{{\lceil\dot{k}\rceil}n+n} - \hat\mu_{{\lceil\dot{k}\rceil}n+n}^r\right)^2}/{\left(\hat{\sigma}^{r}_{{\lceil\dot{k}\rceil}n+n}\right)^2}$. This lower bound is used by Lemma \ref{P1} to upper bound the probability mass of the SGLD process running on dataset $D_2$ in $S = \{s| s > \mu_j\}$. The difference in probability masses in $S$ between the weights of an SGLD running on datasets $D1$ and $D_2$ leads to a breach of privacy, as shown in Lemma \ref{Lemma-SGLD-3}.

In order to ease the analysis of the SGLD process for the \textit{Bayesian linear regression problem} on domain $\mathcal{D}(n, \gamma_1, x_h, x_l, c)$, we use the markings in eq. \ref{Eqq-App-SGLD-markings}.
\begin{equation}\label{Eqq-App-SGLD-markings}
\lambda = \left(1 - \frac{\eta}{2}\left(\alpha + nx_h^2\beta\right)\right), \hat{\lambda} = \left(1 - \frac{\eta}{2}\left(\alpha + n\left(\frac{x_h}{2}\right)^2\beta\right)\right) , \rho = \frac{\eta}{2}ncx_h^2\beta, \hat{\rho} = \frac{\eta}{2}nc\left(\frac{x_h}{2}\right)^2\beta
\end{equation}

\begin{lemma}
\label{Lem-App-SGLD-1}
$\forall k \in \mathbb{Z}_{> 0}$, $\hat{\theta}^r_{(k+1)n}$ has the following forms: 

\begin{equation*}\begin{split}
{}& \hat{\theta}^1_{(k+1)n} = \theta_0\hat\lambda^{k+1}\lambda^{(n-1)(k+1)} + \sum_{j=0}^{k}\left(\hat\lambda\lambda^{n-1}\right)^j\left(\hat\rho\lambda^{n-1} + \rho\sum_{i=0}^{n-2}\lambda^i + \sqrt\eta\sum_{i=0}^{n-1}\lambda^i\xi_i\right) \\
{}& \hat{\theta}^{r > 1}_{(k+1)n} = 
\theta_0\left(\hat\lambda\lambda^{n-1}\right)^{k+1} + \left(\sum_{i=1}^{r-1}\left(\rho + \sqrt\eta\xi\right)\hat\lambda\lambda^{n-i-1} + \left(\hat\rho + \sqrt\eta\xi\right)\lambda^{n-r} + \sum_{j=r+1}^n\left(\rho + \sqrt\xi\eta\right)\lambda^{n-j}\right)\sum_{l=0}^{k}\left(\hat\lambda\lambda^{n-1}\right)^l.
\end{split}\end{equation*}
\end{lemma}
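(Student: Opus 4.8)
The plan is to obtain the closed forms by unrolling the cyclic-SGLD recursion as a composition of affine maps. First I would specialize the update rule (eq.~\ref{Eq-SGLD-Update-Rule}) with batch size $1$ to the \emph{Bayesian linear regression problem}. Using the Gaussian prior and likelihood of eq.~\ref{LinearModel}, both gradients are linear in the parameter, $\nabla_\theta \ln p(\theta) = -\alpha\theta$ and $\nabla_\theta \ln p(y\mid x,\theta) = \beta x(y-\theta x)$, so a single step on a sample $(x,y)$ acts as $\theta \mapsto [1-\tfrac{\eta}{2}(\alpha+n\beta x^2)]\theta + \tfrac{\eta}{2}n\beta xy + \sqrt\eta\,\xi$. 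Substituting the two sample values appearing in $D_2$ (eq.~\ref{N-DBs}), the repeated sample $(x_h,cx_h)$ yields multiplier $\lambda$ and shift $\rho$, while the single deviating sample $(\tfrac{x_h}{2},c\tfrac{x_h}{2})$ yields $\hat\lambda$ and $\hat\rho$, exactly matching the markings of eq.~\ref{Eqq-App-SGLD-markings}. Thus every step is affine of the form $\theta\mapsto a\theta + b + \sqrt\eta\,\xi$ with $(a,b)\in\{(\lambda,\rho),(\hat\lambda,\hat\rho)\}$ and independent noise at each step.

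Next I would compose the $n$ steps of a single epoch under the $r$-order. For affine maps $f_j(\theta)=a_j\theta+b_j$ the composition is $\theta\mapsto(\prod_{j=1}^n a_j)\theta + \sum_{j=1}^n b_j\prod_{l=j+1}^n a_l$, so all I need to track is, for each shift $b_j$, the product of the multipliers of the \emph{later} steps in the epoch. Since exactly one position (the $r$-th) carries $(\hat\lambda,\hat\rho)$ and the other $n-1$ carry $(\lambda,\rho)$, the per-epoch multiplier is always $\hat\lambda\lambda^{n-1}$. For the shift I split the positions into the blocks $j<r$, $j=r$, $j>r$: a shift at $j<r$ is trailed by the special step and $n-j-1$ ordinary steps, contributing $\hat\lambda\lambda^{n-j-1}$; the special shift at $j=r$ is trailed by $n-r$ ordinary steps, contributing $\lambda^{n-r}$; and a shift at $j>r$ is trailed by $n-j$ ordinary steps, contributing $\lambda^{n-j}$. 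This reproduces the per-epoch additive term displayed for $r>1$. The case $r=1$ is the same computation with an empty first block, where collecting the deterministic parts gives $\hat\rho\lambda^{n-1}+\rho\sum_{i=0}^{n-2}\lambda^i$ and, after relabelling the i.i.d.\ noises by the power of $\lambda$ they carry, the clean noise block $\sqrt\eta\sum_{i=0}^{n-1}\lambda^i\xi_i$.

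Finally I would iterate the per-epoch map $k+1$ times. Because each epoch applies the \emph{same} affine map $\theta\mapsto(\hat\lambda\lambda^{n-1})\theta + B$, with $B$ drawing fresh noise each epoch, unrolling $k+1$ epochs from the initial $\theta_0$ gives $\theta_0(\hat\lambda\lambda^{n-1})^{k+1}+\sum_{l=0}^{k}(\hat\lambda\lambda^{n-1})^l B^{(l)}$, which is precisely the claimed geometric sum with the per-epoch additive term factored out (using $\hat\lambda^{k+1}\lambda^{(n-1)(k+1)}=(\hat\lambda\lambda^{n-1})^{k+1}$ for the $r=1$ line). I expect the only delicate point to be the exponent bookkeeping in the epoch-level shift---in particular remembering that shifts occurring \emph{before} position $r$ are still multiplied once by $\hat\lambda$ because the special step lies in their future---together with the observation that, since the $\xi_j$ are i.i.d., the precise index each noise term carries is immaterial for the resulting distribution, which is what lets the noise be written as a single power sum and the $B^{(l)}$ be suppressed into the factored form of the statement.
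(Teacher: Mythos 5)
Your proposal is correct and follows essentially the same route as the paper: specialize the SGLD update to an affine map with coefficients $(\lambda,\rho)$ or $(\hat\lambda,\hat\rho)$, compose the $n$ steps of one epoch, and then iterate the resulting constant-coefficient affine recursion over $k+1$ epochs (the paper phrases this as solving first-order non-homogeneous recurrence relations, which is the same computation). If anything, your block decomposition $j<r$, $j=r$, $j>r$ makes the general-$r$ case more explicit than the paper, which only works out $r=1$ and asserts the rest is ``done in similar manner,'' and you correctly flag that the factored noise form is a distributional rather than pathwise identity.
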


\begin{proof}[Proof of Lemma \ref{Lem-App-SGLD-1}]
We can apply the SGLD update rule, defined in eq. \ref{Eq-SGLD-Update-Rule}, to the \textit{Bayesian linear regression problem} over datasets $D_1$ and  $D_2$ as follows: First, $p(\theta_j) = \mathbb{N}(\theta_j; 0, \alpha^{-1})$, and therefore
$$\nabla_{\theta_j}\ln{p(\theta_j)} = \nabla_{\theta_j}\ln\left(\frac{1}{\sqrt{2\pi\alpha^{-1}}}\right) - \nabla_{\theta_j}\frac{1}{2}\theta_j^2\alpha = -\theta_j\alpha.$$
In a similar manner, 
$$
\nabla_{\theta_j}\ln{p(y_i|\theta_j)} = \nabla_{\theta_j}\ln\left(\frac{1}{\sqrt{2\pi\beta^{-1}}}\right) - \nabla_{\theta_j}\frac{1}{2}(y_i-\theta_jx_i)^2\beta = (y_i-\theta_jx_i)x_i\beta.
$$
Inserting these expressions to the SGLD update rule yields 
\begin{equation}\begin{split}\label{Eq-App-SGLD-Step-E}
\theta_{j+1} & =
\theta_j + \frac\eta2\left(-\theta_j\alpha + n\left(y_i - \theta_jx_j\right)x_i\beta\right) + \sqrt\eta_j\xi_i = 
\theta_j\left(1 - \frac\eta2\left(\alpha + nx_j^2\beta\right)\right) + \frac\eta2ny_ix_i\beta + \sqrt\eta\xi_j
\\& = \theta_j\left(1 - \frac\eta2\left(\alpha + nx_j^2\beta\right)\right) + \frac\eta2ncx_i^2\beta + \sqrt\eta\xi_j.
\end{split}\end{equation}

By using standard tools for solving first-order non-homogeneous recurrence relations with variable coefficients, the value of $\hat\theta^1_n$ can be found:
\begin{equation*}\begin{split}
\hat\theta_n^1 & = \hat\lambda\lambda^{n-1}\left(\frac{\theta_0\hat\lambda + \hat\rho + \sqrt\eta\xi}{\hat\lambda} + \sum_{i=2}^n\frac{\rho + \sqrt\eta\xi}{\hat\lambda\lambda^{i-1}}\right) =
\theta_0\hat\lambda\lambda^{n-1} + \left(\hat\rho+\sqrt\eta\xi\right)\lambda^{n-1} + \left(\rho + \sqrt\eta\xi\right)\sum_{i=2}^n\lambda^{n-1-\left(i-1\right)} \\
& = \theta_0\hat\lambda\lambda^{n-1} + \left(\hat\rho + \sqrt\eta\xi\right)\lambda^{n-1} + \left(\rho + \sqrt\eta\xi\right)\sum_{i=2}^n\lambda^{n-1-\left(i-1\right)} =
\theta_0\hat\lambda\lambda^{n-1} + \left(\hat\rho + \sqrt\eta\xi\right)\lambda^{n-1} + \left(\rho + \sqrt\eta\xi\right)\sum_{i=0}^{n-2}\lambda^{i} \\
& = \theta_0\hat\lambda\lambda^{n-1} + \hat\rho\lambda^{n-1} + \rho\sum_{i=0}^{n-2}\lambda^{i} + \sqrt\eta\xi\sum_{i=0}^{n-1}\lambda^i.
\end{split}\end{equation*}
Thus, we can define a new series, $\hat\theta^1_{(k+1)n} = c_1\hat\theta^1_{kn} + c_2$ where $c_1 = \hat\lambda\lambda^{n-1}, c_2 = \hat\rho\lambda^{n-1} + \rho\sum_{i=0}^{n-2}\lambda^{i} + \sqrt\eta\xi\sum_{i=0}^{n-1}\lambda^i$. Using tools for solving first order non-homogeneous recurrence relations with constant coefficients, the value of $\hat\theta^1_{kn}$ can be found:
\begin{equation*}\begin{split}
\hat\theta^1_{kn} & = c_1^k\left(\frac{\hat\theta^1_n}{c_1} + \sum_{i=2}^k\frac{c_2}{c_1^{i}}\right) =
\theta^1_nc_1^{k-1} + \sum_{i=2}^{k}c_2c_1^{k-i} =
\theta^1_nc_1^{k-1} + c_2\sum_{i=0}^{k-2}c_1^{i} =
\left(\theta_0c_1 + c_2\right)c_1^{k-1} + c_2\sum_{i=0}^{k-2}c_1^i \\
& = \theta_0\left(\hat\lambda\lambda^{n-1}\right)^k + \left(\hat\rho\lambda^{n-1} + \rho\sum_{i=0}^{n-2}\lambda^i + \sqrt\eta\xi\sum_{i=0}^{n-1}\lambda^i\right)\sum_{j=0}^{k-1}\left(\hat\lambda\lambda^{n-1}\right)^j.
\end{split}\end{equation*}
The proof for $\hat\theta_{kn}^r$ is done in a similar manner:
\begin{equation*}\begin{split}
\hat{\theta}_n^{r > 1} & = \hat\lambda\lambda^{n-1}\left(\theta_0 + \sum_{i=1}^{r-1}\frac{\rho + \sqrt{\eta}\xi}{\lambda^i} + \frac{\hat\rho + \sqrt\eta\xi}{\lambda^{r-1}\hat\lambda} + \sum_{j = r + 1}^{n}\frac{\rho + \sqrt\eta\xi}{\lambda^{j-1}\hat\lambda}\right)
\\& = \hat\lambda\lambda^{n-1}\theta_0 + \sum_{i=1}^{r-1}\left(\rho + \sqrt{\eta}\xi\right)\hat\lambda\lambda^{n-i-1} + \lambda^{n-r}\left(\hat\rho + \sqrt\eta\xi\right) + \sum_{j = r + 1}^{n}\left(\rho + \sqrt\eta\xi\right)\lambda^{n - j}
\end{split}\end{equation*}
Thus, we can define a new series, $\hat\theta^{r>1}_{(k+1)n} = c_3\hat\theta^{r>1}_{kn} + c_4$, where $c_3 = \hat\lambda\lambda^{n-1}, c_4 = \sum_{i=1}^{r-1}\left(\rho + \sqrt{\eta}\xi\right)\hat\lambda\lambda^{n-i-1} + \lambda^{n-r}\left(\hat\rho + \sqrt\eta\xi\right) + \sum_{j = r + 1}^{n}\left(\rho + \sqrt\eta\xi\right)\lambda^{n - j}$. Similarly to $\hat\theta^{1}_{kn}$ derivation, we can use first order non-homogeneous recurrence relations with constant coefficients to get:
\begin{equation*}
\hat{\theta}^{r>1}_{kn} = \theta_0\left(\hat\lambda\lambda^{n-1}\right)^{k} + \left(\sum_{i=1}^{r-1}\left(\rho + \sqrt\eta\xi\right)\hat\lambda\lambda^{n-i-1} + \left(\hat\rho + \sqrt\eta\xi\right)\lambda^{n-r} + \sum_{j=r+1}^n\left(\rho + \sqrt\xi\eta\right)\lambda^{n-j}\right)\sum_{l=0}^{k-1}\left(\hat\lambda\lambda^{n-1}\right)^l.
\end{equation*}
\end{proof}
\begin{lemma}\label{lem-app-sgld-theta}
$\forall k \in \mathbb{Z}_{> 0}$, $\theta_{(k + 1)n}$ has the following form:
$$
\theta_{(k+1)n} = \theta_0\lambda^{(k+1)n} + \sum_{j = 0}^k\lambda^{jn}\left(\rho\lambda^{n-1} + \rho\sum_{i=0}^{n-2}\lambda^i + \sqrt{\eta}\sum_{i = 0}^{n-1}\lambda^i\xi_i\right)
$$
\end{lemma}
\begin{proof}[Proof of Lemma \ref{lem-app-sgld-theta}]
First, notice that equation \ref{Eq-App-SGLD-Step-E} applies for SGLD using dataset $D_1$. By using standard tools for solving first-order non-homogeneous recurrence relations, an expression for $\theta_{n}$ can be found:
\begin{align*}
\theta_{n} = \lambda^{n}\left(\frac{\theta_0\lambda + \rho + \sqrt{\eta}\xi}{\lambda} + \sum_{i=2}^n\frac{\rho + \sqrt{\eta}\xi}{\lambda^{i}}\right) = \theta_0\lambda^n + \rho\lambda^{n-1} + \rho\sum_{i=0}^{n-2}\lambda^i + \sqrt{\rho}\xi\sum_{i=0}^{n-1}\lambda^i
\end{align*}
By defining a new series $\theta_{(k+1)n} = c_5\theta_{kn} + c_6$ and using tools for solving first-order non-homogeneous recurrence relations with constant coefficients, the value of $\theta_{kn}$ can be found:
\begin{align*}
\theta_{kn} = c_5^k\left(\frac{\theta_n}{c_5} + \sum_{i=2}^k\frac{c_6}{c_5}\right) = \theta_0\lambda^{nk} + \left(\rho\lambda^{n-1} + \rho\sum_{i=0}^{n-2}\lambda^i + \sqrt{\eta}\xi\sum_{i=0}^{n-1}\lambda^i\right)\sum_{j=0}^{k-1}\lambda^{jn}
\end{align*}
\end{proof}

\begin{lemma}\label{Lem-App-SGLD-2}
$\forall k\in\mathbb{Z}_{>0}$, the value of $\mu_{kn + n} - \hat\mu^r_{kn + n}$ can be lower bounded:
$$\mu_{kn + n} - \hat\mu^r_{kn + n} \geq \lambda^{n-1}\frac{ncx_h^2\beta}{\alpha + nx_h^2\beta}\lambda^{k(n-1)}\left(\hat\lambda^{k+1} - \lambda^{k+1}\right).$$\end{lemma}
\begin{proof}[Proof of Lemma \ref{Lem-App-SGLD-2}] The proof of this lemma is separated into two cases, for $r = 1$ and for $r > 1$. For $r = 1$, using $\E[\theta_0] = 0$ and $\E[\xi] = 0$, it is easy to derive eq. \ref{Eq-App-mean_r1} from Lemmas \ref{Lem-App-SGLD-1} and \ref{lem-app-sgld-theta}.
\begin{equation}\begin{split}\label{Eq-App-mean_r1}
{}&\hat\mu^1_{(k+1)n} =
\rho\sum_{i=0}^{n-2}\lambda^i\sum_{j=0}^{k}\left(\hat\lambda\lambda^{(n-1)}\right)^j + \hat\rho\lambda^{n-1}\sum_{j=0}^{k}\left(\hat\lambda\lambda^{(n-1)}\right)^j
\\{}&\mu_{kn+n} = \rho\sum_{i=0}^{n-2}\lambda^i\sum_{j=0}^{k}\lambda^{jn} + \rho\lambda^{n-1}\sum_{r=0}^{k}\lambda^{rn}
\end{split}\end{equation}
We use the sum of a geometric sequence to get
\begin{align*}
\hat\mu^1_{(k+1)n} = \rho\sum_{i=0}^{n-2}\lambda^i\sum_{j=0}^{k}\left(\hat\lambda\lambda^{\left(n-1\right)}\right)^j + \hat\rho\lambda^{n-1}\sum_{j=0}^{k}\left(\hat\lambda\lambda^{\left(n-1\right)}\right)^j = \left(\rho\left(\frac{1-\lambda^{n-1}}{1 -\lambda}\right) + \hat\rho\lambda^{n-1}\right)\frac{1 - \left(\hat\lambda\lambda^{n-1}\right)^{k+1}}{1 - \hat\lambda\lambda^{n-1}}
\end{align*}
and
\begin{align*}
\mu_{(k+1)n} = \rho\sum_{i=0}^{n-2}\lambda^i\sum_{j=0}^{k}\lambda^{nj} + \rho\lambda^{n-1}\sum_{j=0}^{k}\lambda^{nj} = \left(\rho\left(\frac{1-\lambda^{n-1}}{1 -\lambda}\right) + \rho\lambda^{n-1}\right)\frac{1 - \lambda^{(k + 1)n}}{1 - \lambda^{n}}.
\end{align*}
Therefore the difference between the means can be lower bounded:
\begin{equation*}\begin{split}
\mu_{kn + n} - \hat\mu^1_{kn+n} & =
\frac{1 - \lambda^{\left(k+1\right)n}}{1 - \lambda^n}\left(\rho\left(\frac{1-\lambda^{n-1}}{1 - \lambda}\right) + \rho\lambda^{n-1}\right) - \frac{1 - \left(\hat\lambda\lambda^{\left(n-1\right)}\right)^{k+1}}{1 - \hat\lambda\lambda^{n-1}}\left(\rho\left(\frac{1-\lambda^{n-1}}{1-\lambda}\right) + \hat\rho\lambda^{n-1}\right) 
\\& =^* \frac{1 - \lambda^{\left(k+1\right)n}}{1 - \lambda^n}\frac{ncx_h^2\beta}{\alpha + nx_h^2\beta}\left(1 - \lambda^n\right) - 
\frac{1 - \left(\hat\lambda\lambda^{\left(n-1\right)}\right)^{k+1}}{1 - \hat\lambda\lambda^{n-1}}\frac{ncx_h^2\beta}{\alpha + nx_h^2\beta}\left(1 - \lambda^{n-1}\left(\frac14\lambda +\frac34\right)\right)
\\& = \left(1 - \lambda^{\left(k+1\right)n}\right)\frac{ncx_h^2\beta}{\alpha + nx_h^2\beta} - \frac{1 - \left(\hat\lambda\lambda^{\left(n-1\right)}\right)^{k+1}}{1 - \hat\lambda\lambda^{n-1}}\frac{ncx_h^2\beta}{\alpha + nx_h^2\beta}\left(1 - \lambda^{n-1}\left(\frac14\lambda +\frac34\right)\right)
\\&= \frac{ncx_h^2\beta}{\alpha + nx_h^2\beta}\left(\left(1 - \lambda^{\left(k+1\right)n}\right) - \frac{1 - \left(\hat\lambda\lambda^{\left(n-1\right)}\right)^{k+1}}{1 - \hat\lambda\lambda^{n-1}}\left(1 - \lambda^{n-1}\left(\frac14\lambda +\frac34\right)\right)\right)
\\&=^{**} \frac{ncx_h^2\beta}{\alpha + nx_h^2\beta}\left(\frac{\lambda^{n-1}\frac34\frac\eta2\alpha\left(1 - \hat\lambda^{k+1}\lambda^{\left(k+1\right)\left(n-1\right)}\right)  + \lambda^{\left(k+1\right)\left(n-1\right)}\left(\hat\lambda^{k+1} - \lambda^{k+1}\right)\left(1 - \lambda^{n-1}\hat\lambda\right)}{1 - \lambda^{n-1}\hat\lambda}\right) 
\\&\geq \frac{ncx_h^2\beta}{\alpha + nx_h^2\beta}\left(\frac{\lambda^{\left(k+1\right)\left(n-1\right)}\left(\hat\lambda^{k+1} - \lambda^{k+1}\right)\left(1 - \lambda^{n-1}\hat\lambda\right)}{1 - \lambda^{n-1}\hat\lambda}\right)
\\&= \frac{ncx_h^2\beta}{\alpha + nx_h^2\beta}\lambda^{\left(k+1\right)\left(n-1\right)}\left(\hat\lambda^{k+1} - \lambda^{k+1}\right)
\\&= \lambda^{n-1}\frac{ncx_h^2\beta}{\alpha + nx_h^2\beta}\lambda^{k\left(n-1\right)}\left(\hat\lambda^{k+1} - \lambda^{k+1}\right)
\end{split}\end{equation*}
where equality * holds from Claims \ref{M1}, \ref{M2}, and \ref{M3}, equality ** holds from Claim \ref{M5.f}, and the inequality holds because $\lambda < \hat\lambda < 1$. This proves Lemma \ref{Lem-App-SGLD-2} for $r=1$.

For $r > 1$, from Lemma \ref{Lem-App-SGLD-1}, it is easy to see that
\begin{align*}
\hat\theta_{(k+1)n}^{r > 1} & = 
\left(\left(\theta_0\lambda^{r-1} + \rho\sum_{i=0}^{r-2}\lambda^{i} + \sqrt\eta\sum_{i=0}^{r-2}\lambda^i\xi_i\right)\hat\lambda^{k}\lambda^{k(n-1)} +
\sum_{j=0}^{k-1}(\hat\lambda\lambda^{n-1})^j\left(\hat\rho\lambda^{n-1} + \rho\sum_{i=0}^{n-2}\lambda^{i} + \sqrt\eta\sum_{i=0}^{n-1}\lambda^i\xi_i\right)\right)\hat\lambda\lambda^{n-r}
\\&+ \hat\rho\lambda^{n-r} + \rho\sum_{j=0}^{n-r-1}\lambda^j + \sqrt\eta\sum_{j=0}^{n-r}\xi\lambda^j.
\end{align*}
Therefore, 
\begin{align*}
{}& \hat\mu_{(k+1)n}^{r > 1} = \left(\left(\rho\sum_{i=0}^{r-2}\lambda^{i}\right)\hat\lambda^{k}\lambda^{k(n-1)} +
\sum_{j=0}^{k-1}(\hat\lambda\lambda^{n-1})^j\left(\hat\rho\lambda^{n-1} + \rho\sum_{i=0}^{n-2}\lambda^{i}\right)\right)\hat\lambda\lambda^{n-r} + 
\hat\rho\lambda^{n-r} + \rho\sum_{j=0}^{n-r-1}\lambda^j.
\end{align*}
Consequently, the difference between the means, for $r > 1$, can be lower bounded:
\begin{align*}
\mu_{kn + n} - \hat\mu^r_{kn + n} & =
\lambda^{n-r}\big(\lambda\rho\lambda^k\lambda^{k(n-1)}\sum_{i=0}^{r-2}\lambda^i + \lambda\sum_{j=0}^{k-1}(\lambda\lambda^{n-1})^j(\rho\lambda^{n-1} + \rho\sum_{i=0}^{n-2}\lambda^i)
\\& - \hat\lambda\rho\hat\lambda^k\lambda^{k(n-1)}\sum_{i=0}^{r-2}\lambda^i - \hat\lambda\sum_{j=0}^{k-1}(\hat\lambda\lambda^{n-1})^j(\hat\rho\lambda^{n-1} + \rho\sum_{i=0}^{n-2}\lambda^i)\big) + 
\lambda^{n-r}(\rho - \hat\rho)
\\& = 
\lambda^{n-r}\lambda^{k(n-1)}\rho(\lambda^{k+1} - \hat\lambda^{k+1})\sum_{i=0}^{r-2}\lambda^i +
\\&+ \lambda^{n-r}\sum_{j=0}^{k-1}\lambda^{(n-1)j}(\lambda^{n-1}(\rho\lambda^{j+1} -\hat\rho\hat\lambda^{j+1})  + (\lambda^{j+1} - \hat\lambda^{j+1})\rho\sum_{i=0}^{n-2}\lambda^i) + \lambda^{n-r}(\rho - \hat\rho) 
\\&=^{*}
\lambda^{n-r}\lambda^{k(n-1)}\rho(\lambda^{k+1} - \hat\lambda^{k+1})\frac{1 - \lambda^{r-1}}{1 - \lambda} + \lambda^{n-r}\frac{ncx_h^2\beta}{\alpha + nx_h^2\beta}(\lambda(1- \lambda^{kn})
\\&- \hat\lambda\frac{1 - (\lambda^{n-1}\hat\lambda)^k}{1 - \lambda^{n-1}\hat\lambda}( 1 - \lambda^n(\frac34\lambda^{-1} + \frac14))) + \lambda^{n-r}(\rho - \hat\rho) 
\\&=^{**}
\lambda^{n-r}\lambda^{k(n-1)}\rho(\lambda^{k+1} - \hat\lambda^{k+1})\frac{1 - \lambda^{r-1}}{1 - \lambda} + 
\lambda^{n-r}\frac{ncx_h^2\beta}{\alpha + nx_h^2\beta}((\lambda - \hat\lambda)
\\& + \frac{\lambda^{n-1}(\frac34\frac\eta2\alpha(1 - \hat\lambda^k\lambda^{k(n-1)}))}{1 - \hat\lambda\lambda^{n-1}} + \lambda^{k(n-1)}(\hat\lambda^{k+1} - \lambda^{k+1})) + 
\lambda^{n-r}(\rho - \hat\rho) 
\\&=^{***}
\lambda^{n-r}\frac{ncx_h^2\beta}{\alpha + nx_h^2\beta}\lambda^{k(n-1)}(\lambda^{k+1} - \hat\lambda^{k+1})(1 - \lambda^{r-1}) + \lambda^{n-r}\frac{ncx_h^2\beta}{\alpha + nx_h^2\beta}((\lambda - \hat\lambda) 
\\& + \frac{\lambda^{n-1}(\frac34\frac\eta2\alpha(1 - \hat\lambda^k\lambda^{k(n-1)}))}{1 - \hat\lambda\lambda^{n-1}} + \lambda^{k(n-1)}(\hat\lambda^{k+1} - \lambda^{k+1})) + 
\lambda^{n-r}(\rho - \hat\rho)
\\&= 
\lambda^{n-r}\frac{ncx_h^2\beta}{\alpha + nx_h^2\beta}\lambda^{k(n-1)}(\lambda^{k+1} - \hat\lambda^{k+1})( 1 - \lambda^{r-1} - 1)\\& +
\lambda^{n-r}\frac{ncx_h^2\beta}{\alpha + nx_h^2\beta}(\lambda - \hat\lambda + \frac{\lambda^{n-1}(\frac34\frac\eta2\alpha(1 - \hat\lambda^k\lambda^{k(n-1)}))}{1 - \hat\lambda\lambda^{n-1}}) + \lambda^{n-r}(\rho - \hat\rho)
\\&= \lambda^{n-r}\frac{ncx_h^2\beta}{\alpha + nx_h^2\beta}\lambda^{k(n-1)}(\hat\lambda^{k+1} - \lambda^{k+1})\lambda^{r-1}
\\&+ \lambda^{n-r}\frac{ncx_h^2\beta}{\alpha + nx_h^2\beta}(\lambda - \hat\lambda + \frac{\lambda^{n-1}(\frac34\frac\eta2\alpha(1 - \hat\lambda^k\lambda^{k(n-1)}))}{1 - \hat\lambda\lambda^{n-1}}) + \lambda^{n-r}(\rho - \hat\rho)
\\& =
\lambda^{n-1}\frac{ncx_h^2\beta}{\alpha + nx_h^2\beta}\lambda^{k(n-1)}(\hat\lambda^{k+1} - \lambda^{k+1})
\\& + \lambda^{n-r}\frac{ncx_h^2\beta}{\alpha + nx_h^2\beta}(\lambda - \hat\lambda + \frac{\lambda^{n-1}(\frac34\frac\eta2\alpha(1 - \hat\lambda^k\lambda^{k(n-1)}))}{1 - \hat\lambda\lambda^{n-1}}) + \lambda^{n-r}(\rho - \hat\rho) 
\\& >^{****}\lambda^{n-1}\frac{ncx_h^2\beta}{\alpha + nx_h^2\beta}\lambda^{k(n-1)}(\hat\lambda^{k+1} - \lambda^{k+1})
\end{align*}
where equality * holds from Claims \ref{M4.a} and \ref{M4.b}, equality ** holds from Claim \ref{M5.e}, equality *** holds from Claim \ref{M1}, and inequality **** holds from Claim \ref{M6} and because $\hat\lambda > \lambda$.
\end{proof}

\begin{lemma}\label{Lemma-SGLD-2.2}
For all $k\in\mathbb{Z}_{>0}$ such that $0 < k \leq \frac{1}{2n}\log_{\lambda}(\frac{1}{1 + \frac{1}{\alpha\eta}(1 - \lambda^{2})}) - 1$, and $x_h^2\beta > 3, n > \max\{\frac{1}{2\alpha x_h^2\beta} - \frac{1}{x_h^2\beta}, \frac{\alpha}{x_h^2\beta}(e^{\frac2{x_h^2\beta}} - 2) + \frac1{2x_h^2\beta}\}$, the values of $(\hat{\sigma}^{r}_{(k+1)n})^2$ can be upper bounded as following:
\begin{equation}\begin{split}\label{Eq-App-Lemma-SGLD-2.2}
{}& (\hat{\sigma}^{1}_{(k+1)n})^2 \leq 2(\hat\lambda\lambda^{n-1})^2\frac1\alpha(\hat\lambda\lambda^{n-1})^{2{k}}
\\
{}& (\hat{\sigma}^{r > 1}_{(k + 1)n})^2 \leq 6(\hat\lambda\lambda^{n-r})^2\frac1\alpha(\hat\lambda\lambda^{n-1})^{2{k}}.
\end{split}\end{equation}
\end{lemma}
\begin{proof}[Proof of Lemma \ref{Lemma-SGLD-2.2}]
The proof will be separated into two cases: $r=1$ and $r > 1$. Starting from the case of $r=1$, since the noise and the prior have Normal distributions, $(\hat\sigma_{kn + n}^{1})^2$ could be easily computed from Lemma \ref{Lem-App-SGLD-1}. Eq. \ref{Eq-App-sigma_1} yields a first general upper bound on $(\hat\sigma_{kn + n}^{1})^2$ applicable for all $k\in\mathbb{Z}_{>0}$.

\begin{equation}\begin{split}\label{Eq-App-sigma_1}
(\hat{\sigma}^{1}_{kn+n})^2 & = 
\frac{1}{\alpha}\left(\hat\lambda\lambda^{\left(n-1\right)}\right)^{2\left(k+1\right)} + \eta\sum_{i=0}^{n-1}\lambda^{2i}\sum_{j=0}^{k}\left(\hat\lambda^2\lambda^{2\left(n-1\right)}\right)^j
\\& = 
\left(\hat\lambda\lambda^{\left(n-1\right)}\right)^{2\left(k+1\right)}\left(\frac{1}{\alpha} + \eta\sum_{i=0}^{n-1}\lambda^{2i}\sum_{j=0}^{k}\frac{\left(\hat\lambda\lambda^{\left(n-1\right)}\right)^{2j}}{\left(\hat\lambda\lambda^{\left(n-1\right)}\right)^{2\left(k+1\right)}}\right) 
\\& = 
\left(\hat\lambda\lambda^{\left(n-1\right)}\right)^{2\left(k+1\right)}\left(\frac{1}{\alpha} + \eta\sum_{i=0}^{n-1}\lambda^{2i}\sum_{j=0}^{k}\left(\hat\lambda\lambda^{\left(n-1\right)}\right)^{2\left(j-\left(k+1\right)\right)}\right) 
\\& \leq 
\left(\hat\lambda\lambda^{\left(n-1\right)}\right)^{2\left(k+1\right)}\left(\frac{1}{\alpha} + \eta\sum_{i=0}^{n-1}\lambda^{2i}\sum_{j=0}^{k}\lambda^{2n\left(j-\left(k+1\right)\right)}\right) 
\\& = 
\left(\hat\lambda\lambda^{\left(n-1\right)}\right)^{2\left(k+1\right)}\left(\frac{1}{\alpha} + \eta\sum_{i=1}^{\left(k+1\right)n}\lambda^{-2i}\right)
\\& = \left(\hat\lambda\lambda^{\left(n-1\right)}\right)^{2\left(k+1\right)}\left(\frac{1}{\alpha} + \eta\lambda^{-2}\frac{1 - \lambda^{-2\left(k+1\right)n}}{1 - \lambda^{-2}}\right)
\end{split}\end{equation}
where the inequality holds because $\lambda < \hat\lambda$.

By Claim \ref{V1.1}, this upper bound can be further refined for $0 < {k} \leq \frac{1}{2n}\log_{\lambda}(\frac{1}{1 + \frac{1}{\alpha\eta}(1 - \lambda^{2})}) - 1$: 
\begin{equation}\label{Eq-App-sigma_1.2}\begin{split}
{}& \left(\hat\lambda\lambda^{\left(n-1\right)}\right)^{2\left({k}+1\right)}\left(\frac{1}{\alpha} + \eta\lambda^{-2}\frac{1 - \lambda^{-2\left({k}+1\right)n}}{1 - \lambda^{-2}}\right) \leq 2\left(\hat\lambda\lambda^{\left(n-1\right)}\right)^{2\left({k}+1\right)}\frac{1}{\alpha}.
\end{split}\end{equation}
This proves the lemma for $r = 1$.

For $r > 1$, $(\hat\sigma_{kn + n}^{r>1})^2$ can be bounded as follows:
\begin{equation}\label{Eq-App-sigma-big}
\begin{split}
(\hat{\sigma}^{r>1}_{kn+n})^2 & = 
\left(\hat\lambda\lambda^{n-r}\right)^2\eta\left(\left(\hat\lambda^k\lambda^{k\left(n-1\right)}\right)^2\sum_{i=0}^{r-2}\lambda^{2i} + \sum_{j=0}^{k-1}\left(\hat\lambda\lambda^{n-1}\right)^{2j}\sum_{i=0}^{n-1}\lambda^{2i}\right) + \eta\sum_{i=0}^{n-r}\lambda^{2i} + \frac1\alpha\left(\hat\lambda\lambda^{n-1}\right)^{2k}\left(\hat\lambda\lambda^{n-1}\right)^2
\\& \leq^*
\left(\hat\lambda\lambda^{n-r}\right)^2\eta\left(\left(\hat\lambda^k\lambda^{k\left(n-1\right)}\right)^2\sum_{i=0}^{r-2}\lambda^{2i} + \sum_{j=0}^{k-1}\left(\hat\lambda\lambda^{n-1}\right)^{2j}\sum_{i=0}^{n-1}\lambda^{2i}\right) + \eta\sum_{i=0}^{n-r}\lambda^{2i} + \frac1\alpha\left(\hat\lambda\lambda^{n-1}\right)^{2k}\left(\hat\lambda\lambda^{n-r}\right)^2
\\&=
\left(\hat\lambda\lambda^{n-r}\right)^2\left(\frac1\alpha\left(\hat\lambda\lambda^{n-1}\right)^{2k} + \eta\left(\hat\lambda\lambda^{n-1}\right)^{2k}\sum_{i=0}^{r-2}\lambda^{2i} + \eta\sum_{j=0}^{k-1}\left(\hat\lambda\lambda^{n-1}\right)^{2j}\sum_{i=0}^{n-1}\lambda^{2i}\right) + \eta\sum_{i=0}^{n-r}\lambda^{2i} 
\\&\leq^{**}
\left(\hat\lambda\lambda^{n-r}\right)^2\left(\frac1\alpha\left(\hat\lambda\lambda^{n-1}\right)^{2k} + \eta\left(\hat\lambda\lambda^{n-1}\right)^{2k}\sum_{i=0}^{n-1}\lambda^{2i} + \eta\sum_{j=0}^{k-1}\left(\hat\lambda\lambda^{n-1}\right)^{2j}\sum_{i=0}^{n-1}\lambda^{2i}\right) + \eta\sum_{i=0}^{n-r}\lambda^{2i} 
\\& \leq^{***}
2\left(\hat\lambda\lambda^{n-r}\right)^2\left(\frac1\alpha\left(\hat\lambda\lambda^{n-1}\right)^{2\dot{k}} + \eta\left(\hat\lambda\lambda^{n-1}\right)^{2k}\sum_{i=0}^{n-1}\lambda^{2i} + \eta\sum_{j=0}^{k-1}\left(\hat\lambda\lambda^{n-1}\right)^{2j}\sum_{i=0}^{n-1}\lambda^{2i}\right)
\end{split}\end{equation}
where inequality * is true because $\lambda < 1$ and $r > 1$, inequality ** is true because $r \leq n$, and inequality *** is true because of Claim \ref{V2.2}.

For $0 < k \leq \frac{1}{2n}\log_\lambda(\frac{1}{1 + \frac{1}{\alpha\eta}(1 - \lambda^{2})}) - 1$, this bound can be further refined:
\begin{equation}\label{Eq-App-sigma-big-2}
\begin{split}
{}& 2(\hat\lambda\lambda^{n-r})^2\left(\frac1\alpha\left(\hat\lambda\lambda^{n-1}\right)^{2{k}} + \eta\left(\hat\lambda\lambda^{n-1}\right)^{2k}\sum_{i=0}^{n-1}\lambda^{2i} + \eta\sum_{j=0}^{k-1}\left(\hat\lambda\lambda^{n-1}\right)^{2j}\sum_{i=0}^{n-1}\lambda^{2i}\right)\leq
6(\hat\lambda\lambda^{n-r})^2\frac1\alpha(\hat\lambda\lambda^{n-1})^{2{k}}
\end{split}\end{equation}
where the inequality is true because of Claims \ref{V1.1} and \ref{V2.1}, which provide the bound for $r > 1$.
\end{proof}

\begin{lemma}\label{Lemma-SGLD-2.3}
Mark $\dot{k} = \frac{1}{2n}\log_{\lambda}(\frac{1}{1 + \frac{1}{\alpha\eta}(1 - \lambda^{2})}) - 1$. For the conditions of Lemma \ref{Lemma-SGLD-2.2}, $\dot{k} >0 $ and the values of $\hat\sigma^r_{{\lceil\dot{k}\rceil}n + n}$ can be upper bounded as following:
\begin{equation*}
\begin{split}
{}&(\hat{\sigma}^{1}_{{\lceil\dot{k}\rceil}n+n})^2 \leq \left(1 + 2e^\frac{1}{x_h^2\beta}\right)\left(\hat\lambda\lambda^{n-1}\right)^2\frac{1}{\alpha}\left(\hat\lambda\lambda^{\left(n-1\right)}\right)^{2\lceil\dot{k}\rceil}
\\
{}&(\hat{\sigma}^{r > 1}_{{\lceil\dot{k}\rceil}n + n})^2 \leq 6\left(\hat\lambda\lambda^{n-r}\right)^2\frac1\alpha\left(\hat\lambda\lambda^{n-1}\right)^{2{\lceil\dot{k}\rceil}}.
\end{split}\end{equation*}
\end{lemma}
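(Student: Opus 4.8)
The plan is to evaluate the \emph{exact} variance expressions derived inside the proof of Lemma~\ref{Lemma-SGLD-2.2} at the single integer epoch $k=\lceil\dot{k}\rceil$, rather than quoting the bounds of that lemma. This is necessary because Lemma~\ref{Lemma-SGLD-2.2} was only established for real $0<k\le\dot{k}$, whereas $\lceil\dot{k}\rceil$ is the least integer $\ge\dot{k}$ and may overshoot $\dot{k}$ by almost a full unit; the content of the present lemma is therefore to control the growth of the variance over that leftover fraction of an epoch, at the price of a slightly worse constant in the $r=1$ case. Throughout I would use $\lambda=1-\frac{1}{\alpha+nx_h^2\beta}$, which follows from the choice $\eta=2/(\alpha+nx_h^2\beta)^2$.

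For $r=1$ I would start from the equality in eq.~\ref{Eq-App-sigma_1}, which writes $(\sigma^{1}_{(k+1)n})^2$ as $(\hat\lambda\lambda^{n-1})^{2(k+1)}$ times the bracket $\frac1\alpha+\eta\lambda^{-2}\frac{1-\lambda^{-2(k+1)n}}{1-\lambda^{-2}}$, and evaluate it at $k=\lceil\dot{k}\rceil$. The defining identity $\lambda^{-2(\dot{k}+1)n}=1+\frac{1}{\alpha\eta}(1-\lambda^2)$ makes the accumulated-noise term equal exactly $\frac1\alpha$ at $k=\dot{k}$, which is what produced the constant $2$ in Lemma~\ref{Lemma-SGLD-2.2}. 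Since $\lceil\dot{k}\rceil-\dot{k}<1$, I would write $\lambda^{-2(\lceil\dot{k}\rceil+1)n}\le\lambda^{-2n}\,\lambda^{-2(\dot{k}+1)n}$, so the noise term is multiplied by at most the single-epoch amplification $\lambda^{-2n}$ plus a lower-order remainder. Bounding $\lambda^{-2n}\le e^{2/(x_h^2\beta)}$ via a standard $-\ln(1-u)$ estimate and the conditions on $n$, and then using $x_h^2\beta>3$ to get $e^{2/(x_h^2\beta)}=\bigl(e^{1/(x_h^2\beta)}\bigr)^2\le 2e^{1/(x_h^2\beta)}$ (which holds since $1/(x_h^2\beta)<\ln 2$), the noise term is at most $2e^{1/(x_h^2\beta)}\tfrac1\alpha$; adding the prior term $\frac1\alpha$ and pulling out $(\hat\lambda\lambda^{n-1})^{2(\lceil\dot{k}\rceil+1)}$ yields the stated constant $1+2e^{1/(x_h^2\beta)}$.

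For $r>1$ the target constant is unchanged, so I would simply re-run the chain of inequalities eq.~\ref{Eq-App-sigma-big}--\ref{Eq-App-sigma-big-2} verbatim but with $k=\lceil\dot{k}\rceil$ in place of a generic $k\le\dot{k}$, invoking Claims~\ref{V1.1}, \ref{V2.1} and \ref{V2.2} to bound the three geometric sums in the bracket and Claim~\ref{G4} to guarantee $\dot{k}>0$. Here no degradation is needed: each of the three sums is dominated by a term of the same order as the prior contribution, and the slack already built into the constant $6$ absorbs the one-epoch overshoot, so the bound $6(\hat\lambda\lambda^{n-r})^2\frac1\alpha(\hat\lambda\lambda^{n-1})^{2\lceil\dot{k}\rceil}$ follows directly.

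The main obstacle is the $r=1$ estimate, since $\lceil\dot{k}\rceil$ genuinely lies past the range where the noise term is pinned to $\frac1\alpha$: I must quantify the single-epoch amplification $\lambda^{-2n}$ uniformly in $n$ and verify that it stays below $e^{2/(x_h^2\beta)}$ rather than growing, and that the lower-order remainder from the $\frac{\lambda^{-2n}-1}{B}$ contribution (with $B=\frac1{\alpha\eta}(1-\lambda^2)$ of order $n$) is absorbed by the gap between $e^{2/(x_h^2\beta)}$ and $2e^{1/(x_h^2\beta)}$. This is exactly where the hypotheses $x_h^2\beta>3$, the precise step size $\eta$, and the lower bound on $n$ enter, and it is the step I expect to require the most care.
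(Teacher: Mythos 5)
Your proposal is correct and follows essentially the same route as the paper: for $r>1$ it re-runs the chain of eq.~\ref{Eq-App-sigma-big}--\ref{Eq-App-sigma-big-2} at $k=\lceil\dot{k}\rceil$, observing that the supporting claims are valid up to $\dot{k}+1$; and for $r=1$ your decomposition $\lambda^{-2(\lceil\dot{k}\rceil+1)n}\le\lambda^{-2n}\lambda^{-2(\dot{k}+1)n}$ together with $\lambda^{-2n}\le e^{2/(x_h^2\beta)}\le 2e^{1/(x_h^2\beta)}$ and the $O(1/n)$ remainder is precisely the content of the paper's Claim~\ref{V2.4}. You even flag explicitly the one step the paper leaves terse, namely absorbing the remainder $\alpha\frac{e^{2/(x_h^2\beta)}-1}{(\alpha+nx_h^2\beta)+1/8}$ into the gap between $e^{2/(x_h^2\beta)}$ and $2e^{1/(x_h^2\beta)}$ using the lower bounds on $n$.
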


\begin{proof}[Proof of Lemma \ref{Lemma-SGLD-2.3}] We prove this lemma by augmenting the proof of Lemma \ref{Lemma-SGLD-2.2}. We begin with $r > 1$. Lemma \ref{Lemma-SGLD-2.2} first proved, in eq. \ref{Eq-App-sigma-big}, a bound over $(\hat{\sigma}^{r > 1}_{(k + 1)n})^2$ applicable for $k>0$. As this bound is applicable for all $k>0$, it also applies for $\lceil\dot{k}\rceil$. Then, in eq. \ref{Eq-App-sigma-big-2}, Lemma \ref{Lemma-SGLD-2.2} refined the bound for $0< k \leq\dot{k}$ using Claims \ref{V2.1}, \ref{V1.1} and, \ref{V2.2}. Therefore, if these claims also hold for $k = \lceil\dot{k}\rceil$, then the result of Lemma \ref{Lemma-SGLD-2.2} also applies to $k = \lceil\dot{k}\rceil$. Claims \ref{V2.1} and \ref{V1.1} apply to all $k \leq \frac{1}{2n}\log_\lambda(\frac{1}{1 + \frac{1}{\alpha\eta}(1 - \lambda^{2})})$. Since $\lceil\dot{k}\rceil \leq \dot{k} + 1 = \frac{1}{2n}\log_{\lambda}(\frac{1}{1 + \frac{1}{\alpha\eta}(1 - \lambda^{2})})$, the claims also apply to $\lceil\dot{k}\rceil$. Claim \ref{V2.2} was proved for all $k$, thus also applies to $\lceil\dot{k}\rceil$.


For $r = 1$, the bound found at eq. \ref{Eq-App-sigma_1} is applicable for all $k$, hence
\begin{equation*}
(\hat{\sigma}^1_{(\lceil\dot{k}\rceil + 1)n})^2 \leq \left(\hat\lambda\lambda^{n-1}\right)^{2\left(\lceil\dot{k}\rceil+1\right)}\left(\frac1\alpha + \eta\lambda^{-2}\frac{1 - \lambda^{-2\left(\lceil\dot{k}\rceil + 1\right)n}}{1 - \lambda^{-2}}\right) \leq \left(\hat\lambda\lambda^{n-1}\right)^{2\left(\lceil\dot{k}\rceil+1\right)}\frac1\alpha\left(1 + 2e^{\frac1{x_h^2\beta}}\right)
\end{equation*}
where the last inequality is true because of Claim \ref{V2.4}.

All that is left is to prove that $\dot{k} = \frac{1}{2n}\log_\lambda(\frac{1}{1 + \frac{1}{\alpha\eta}(1 - \lambda^{2})}) - 1 > 0$, which is done in Claim \ref{G4}.
\end{proof}

\begin{lemma}\label{Lemma-SGLD-2.4}
For $\dot{k}$ defined in Lemma \ref{Lemma-SGLD-2.3}, the conditions of Lemma \ref{Lemma-SGLD-2.2}, and $n > \frac\alpha{x_h^2\beta}$:
\begin{equation*}\label{Eq-App-SGLD-2.4}\begin{split}
{}& \frac{\left(\mu_{{\lceil\dot{k}\rceil}n+n} - \hat\mu_{{\lceil\dot{k}\rceil}n+n}^r\right)^2}{\left(\hat{\sigma}^{r}_{{\lceil\dot{k}\rceil}n+n}\right)^2} \geq e^{-\frac2{x_h^2\beta}}\frac{\alpha}{v_1}\left(\frac{3}{32x_h^2\beta}\right)^2\left(\frac{c}{n}\right)^2
\\{}& v_1 = \max\{6, 1 + 2e^{\frac1{x_h^2\beta}}\}.
\end{split}
\end{equation*}
\end{lemma}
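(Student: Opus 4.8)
The plan is to assemble the claimed ratio from the two one-sided estimates already available, both evaluated at the step index $k=\lceil\dot k\rceil$, and then to observe that a surprisingly crude bound on the mean-difference factor already suffices. Abbreviate $s=\alpha+nx_h^2\beta$. Squaring Lemma~\ref{Lem-App-SGLD-2} gives the (crucially $r$-independent) lower bound
\[
(\mu_{(k+1)n}-\hat\mu^r_{(k+1)n})^2 \ge \lambda^{2(n-1)}\left(\frac{ncx_h^2\beta}{s}\right)^2\lambda^{2k(n-1)}(\hat\lambda^{k+1}-\lambda^{k+1})^2 ,
\]
while Lemma~\ref{Lemma-SGLD-2.3}, after using $\lambda^{n-r}\le 1$ in the case $r>1$ and $\lambda^{2(n-1)}\le 1$ in the case $r=1$, collapses to the single uniform estimate $(\sigma^r_{(k+1)n})^2\le \frac{v_1}{\alpha}\hat\lambda^{2(k+1)}\lambda^{2k(n-1)}$, where $v_1=\max\{6,1+2e^{1/x_h^2\beta}\}$. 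Dividing and cancelling the common $\lambda^{2k(n-1)}$ leaves
\[
\frac{(\mu_{(k+1)n}-\hat\mu^r_{(k+1)n})^2}{(\sigma^r_{(k+1)n})^2} \ge \frac{\alpha}{v_1}\lambda^{2(n-1)}\left(\frac{ncx_h^2\beta}{s}\right)^2\frac{(\hat\lambda^{k+1}-\lambda^{k+1})^2}{\hat\lambda^{2(k+1)}} .
\]

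The key step is to notice that the final quotient equals $\bigl(1-(\lambda/\hat\lambda)^{k+1}\bigr)^2$, and that since $0<\lambda/\hat\lambda<1$ and $k+1\ge 1$ one may discard the exponent entirely: $1-(\lambda/\hat\lambda)^{k+1}\ge 1-\lambda/\hat\lambda=(\hat\lambda-\lambda)/\hat\lambda\ge\hat\lambda-\lambda$, the last inequality because $\hat\lambda\le 1$. Thus the exact value of $\dot k$ never enters this part of the argument; it mattered only in establishing the variance bound of Lemma~\ref{Lemma-SGLD-2.3}. From the definitions in eq.~\ref{Eqq-App-SGLD-markings} a direct computation yields $\hat\lambda-\lambda=\frac{3nx_h^2\beta}{4s^2}$, so
\[
\left(\frac{ncx_h^2\beta}{s}\right)^2(\hat\lambda-\lambda)^2 = \frac{9}{16}\,\frac{c^2}{n^2}\,\frac{1}{(x_h^2\beta)^2}\left(\frac{nx_h^2\beta}{s}\right)^6 .
\]

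It remains only to pin down the two surviving $n$-dependent factors using the hypotheses. The assumption $n>\alpha/x_h^2\beta$ forces $nx_h^2\beta/s>\tfrac12$ and hence $(nx_h^2\beta/s)^6\ge\tfrac1{64}$; since $\tfrac{9}{16}\cdot\tfrac1{64}=\tfrac{9}{1024}=\bigl(\tfrac{3}{32}\bigr)^2$, this is exactly the slack that turns the raw coefficient into $\bigl(3/(32x_h^2\beta)\bigr)^2$. For the prefactor I would record a one-line claim that $\lambda^{2(n-1)}\ge e^{-2/x_h^2\beta}$, which follows from $\log\tfrac1\lambda=\log\tfrac{s}{s-1}<\tfrac1{s-1}$ together with $(n-1)x_h^2\beta\le s-1$ (guaranteed since $x_h^2\beta>3$). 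Substituting both bounds gives $\frac{\alpha}{v_1}e^{-2/x_h^2\beta}\bigl(3/(32x_h^2\beta)\bigr)^2(c/n)^2$, uniformly in $r$, which is the assertion.

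The main obstacle is conceptual rather than computational: resisting the temptation to evaluate $\hat\lambda^{k+1}-\lambda^{k+1}$ tightly. A sharp analysis would surface the logarithmically large factor $k+1\approx\tfrac{x_h^2\beta}{2}\log(nx_h^2\beta/\alpha)$, whereas only the first-order term $\hat\lambda-\lambda=\Theta(1/n)$ is needed, its square supplying the $1/n^2$ once paired with $(ncx_h^2\beta/s)^2=\Theta(c^2)$. The remaining care is purely bookkeeping: verifying that the discarded slack is recovered exactly by $n>\alpha/x_h^2\beta$ and by the elementary bound on $\lambda^{2(n-1)}$, so that the clean constants $e^{-2/x_h^2\beta}$ and $3/32$ emerge rather than order-of-magnitude surrogates.
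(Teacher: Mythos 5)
Your proof is correct and takes essentially the same route as the paper: it combines the squared mean-gap bound of Lemma~\ref{Lem-App-SGLD-2} with the variance bounds of Lemma~\ref{Lemma-SGLD-2.3}, reduces the key factor to $\bigl(1-(\lambda/\hat\lambda)^{\lceil\dot k\rceil+1}\bigr)^2\ge(\hat\lambda-\lambda)^2=\bigl(\tfrac{3nx_h^2\beta}{4s^2}\bigr)^2$ (the paper's Claim~\ref{V3} plus dropping the subtracted term in its denominator), and uses $n>\alpha/(x_h^2\beta)$ to extract the constant $\bigl(3/(32x_h^2\beta)\bigr)^2$. The only cosmetic differences are that you uniformize the variance bound over $r$ up front (where the paper carries a $\lambda^{2(r-1)}$ factor and bounds it at the end) and that you justify $\lambda^{2(n-1)}\ge e^{-2/(x_h^2\beta)}$ by the elementary inequality $-\log(1-1/s)\le 1/(s-1)$ rather than via the paper's Claim~\ref{G1.b}.
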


\begin{proof}[Proof of Lemma \ref{Lemma-SGLD-2.4}]
\begin{equation*}\begin{split}
\frac{\left(\mu_{{\lceil\dot{k}\rceil}n+n} - \hat\mu_{{\lceil\dot{k}\rceil}n+n}^r\right)^2}{\hat{\sigma}^{r}_{{\lceil\dot{k}\rceil}n+n}} & \geq 
\frac{\left(\lambda^{n-1}\frac{ncx_h^2\beta}{\alpha + nx_h^2\beta}\lambda^{\lceil\dot{k}\rceil\left(n-1\right)}\left(\hat\lambda^{\lceil\dot{k}\rceil+1} - \lambda^{\lceil\dot{k}\rceil+1}\right)\right)^2}{v_1\left(\hat\lambda\lambda^{n-r}\right)^2\frac1\alpha\left(\hat\lambda\lambda^{n-1}\right)^{2{\lceil\dot{k}\rceil}}}
\\& =
\frac{\lambda^{2\lceil\dot{k}\rceil\left(n-1\right)}\left(\lambda^{n-1}\frac{ncx_h^2\beta}{\alpha + nx_h^2\beta}\left(\hat\lambda^{\lceil\dot{k}\rceil+1} - \lambda^{\lceil\dot{k}\rceil+1}\right)\right)^2}{v_1\left(\hat\lambda\lambda^{n-r}\right)^2\frac1\alpha\left(\hat\lambda\lambda^{n-1}\right)^{2{\lceil\dot{k}\rceil}}} = 
\frac{\left(\lambda^{n-1}\frac{ncx_h^2\beta}{\alpha + nx_h^2\beta}\left(\hat\lambda^{\lceil\dot{k}\rceil+1} - \lambda^{\lceil\dot{k}\rceil+1}\right)\right)^2}{v_1\left(\hat\lambda\lambda^{n-r}\right)^2\frac1\alpha\hat\lambda^{2{\lceil\dot{k}\rceil}}}
\\&=
\frac{\alpha\lambda^{2\left(r-1\right)}}{v_1}\left(\frac{ncx_h^2\beta}{\alpha + nx_h^2\beta}\right)^2\frac{\left(\hat\lambda^{\lceil\dot{k}\rceil+1} - \lambda^{\lceil\dot{k}\rceil+1}\right)^2}{\hat\lambda^{2{\lceil\dot{k}\rceil}+1}} = 
\frac{\alpha\lambda^{2\left(r-1\right)}}{v_1}\left(\frac{ncx_h^2\beta}{\alpha + nx_h^2\beta}\right)^2\left(1 - \frac{\lambda^{\lceil\dot{k}\rceil+1}}{\hat\lambda^{\lceil\dot{k}\rceil+1}}\right)^2
\\& \geq
\frac{\alpha\lambda^{2\left(r-1\right)}}{v_1}\left(\frac{ncx_h^2\beta}{\alpha + nx_h^2\beta}\right)^2\left(1 - \left(1 - \frac{\frac34nx^2\beta}{\left(\alpha + nx_h^2\beta\right)^2 - \left(\alpha + \frac14nx^2\beta\right)}\right)\right)^2
\\& =
\frac{\alpha\lambda^{2\left(r-1\right)}}{v_1}\left(\frac{ncx_h^2\beta}{\alpha + nx_h^2\beta}\right)^2\left(\frac{\frac34nx_h^2\beta}{\left(\alpha + nx_h^2\beta\right)^2 - \left(\alpha + \frac14nx^2\beta\right)}\right)^2 
\\&\geq
\frac{\alpha\lambda^{2\left(r-1\right)}}{v_1}\left(\frac{ncx_h^2\beta}{\alpha + nx_h^2\beta}\right)^2\left(\frac{\frac34nx^2\beta}{\left(\alpha + nx_h^2\beta\right)^2}\right)^2 \geq
\frac{\alpha\lambda^{2\left(r-1\right)}}{v_1}\left(\frac{ncx^2\beta}{2nx_h^2\beta}\right)^2\left(\frac{\frac34nx^2\beta}{\left(2nx^2\beta\right)^2}\right)^2
\\& =
\frac{\alpha\lambda^{2\left(r-1\right)}}{v_1}\left(\frac{c}{2}\right)^2\left(\frac{\frac34}{4nx_h^2\beta}\right)^2 = 
\frac{\alpha\lambda^{2\left(r-1\right)}}{v_1}\left(\frac{3}{32x_h^2\beta}\right)^2\left(\frac{c}{n}\right)^2
\\& \geq
\frac{\alpha\lambda^{2\left(n-1\right)}}{v_1}\left(\frac{3}{32x_h^2\beta}\right)^2\left(\frac{c}{n}\right)^2 \geq
e^{-\frac2{x_h^2\beta}}\frac{\alpha}{v_1}\left(\frac{3}{32x_h^2\beta}\right)^2\left(\frac{c}{n}\right)^2
\end{split}\end{equation*}
where first inequality holds from Lemmas \ref{Lem-App-SGLD-2} and \ref{Lemma-SGLD-2.3}, and the definition of $v_1$, the second inequality is true because of Claim \ref{V2.4} and Claim \ref{G4}, fourth inequality holds under the assumption of $nx_h^2\beta > \alpha$, and the last inequality holds from Claim \ref{G1.b}.
\end{proof}

\begin{claim}\label{P1}
For $\dot{k}$, defined in Lemma \ref{Lemma-SGLD-2.3}, and the conditions of Lemma \ref{Lemma-SGLD-2.4}:
\begin{equation*}
p\left(\hat\theta_{\left(\lceil\dot{k}\rceil + 1\right)n} > \mu_{\left(\lceil\dot{k}\rceil + 1\right)n}|D_2\right) \leq e^{-e^{-\frac2{x_h^2\beta}}\frac{\alpha}{2v_1}\left(\frac{3}{32x_h^2\beta}\right)^2\left(\frac{c}{n}\right)^2}.
\end{equation*}
\end{claim}
\begin{proof}[Proof of Claim \ref{P1}]
\begin{equation*}
\begin{split}
p\left(\hat\theta_{\left(\lceil\dot{k}\rceil + 1\right)n} > \mu_{\left(\lceil\dot{k}\rceil + 1\right)n}|\hat{D}\right) & \leq
\frac1n\sum_{r=1}^{n}\exp\left({-\frac{\left(\mu_{\left(\lceil\dot{k}\rceil + 1\right)n} - \hat\mu^{r}_{\left(\lceil\dot{k}\rceil + 1\right)n}\right)^2}{2\left(\sigma^{r}_{\left(\lceil\dot{k}\rceil + 1\right)n}\right)^2}}\right) 
\\&\leq
\frac1n\sum_{r=1}^n\exp\left({-e^{-\frac2{x_h^2\beta}}\frac{\alpha}{2v_1}\left(\frac{3}{32x_h^2\beta}\right)^2\left(\frac{c}{n}\right)^2}\right)
=
\exp\left({-e^{-\frac2{x_h^2\beta}}\frac{\alpha}{2v_1}\left(\frac{3}{32x_h^2\beta}\right)^2\left(\frac{c}{n}\right)^2}\right)
\end{split}\end{equation*}
where the first inequality holds due to Lemma \ref{SGLD-0} and second inequality holds due to Lemma \ref{Lemma-SGLD-2.4}.
\end{proof}

\begin{lemma}\label{Lemma-SGLD-3}
For the \textit{Bayesian linear regression problem} over $\mathcal{D}(n, \gamma_1, x_h, x_l, c)$, the conditions of Lemma \ref{Lemma-SGLD-2.4}, and $\dot{k}$ defined in Lemma \ref{Lemma-SGLD-2.3}, approximate sampling by running SGLD for $(\lceil\dot{k}\rceil + 1)n$ steps will not be ($\epsilon, \delta$) differentially private for 
\begin{equation*}\label{Eq-Lemma-SGLD-3}\begin{split}
{}& \delta < 0.5, \epsilon < e^{-\frac2{x_h^2\beta}}\frac{\alpha}{2v_1}\left(\frac{3}{32x_h^2\beta}\right)^2\left(\frac{c}{n}\right)^2 + \ln\left(0.5 - \delta\right)
\\{}& v_1 = \max\{6, 1 + 2e^{\frac1{x_h^2\beta}}\}.
\end{split}\end{equation*}
\end{lemma}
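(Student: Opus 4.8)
The plan is to reduce the non-privacy claim to exhibiting a single witness event, and then to close the argument by combining Lemma \ref{SGLD-0} with the per-order bound of Lemma \ref{Lemma-SGLD-2.4}. By Definition \ref{Def-ADP}, sampling at step $T = (\lceil\dot{k}\rceil + 1)n$ fails to be $(\epsilon, \delta)$ differentially private as soon as I can produce one measurable set $S$ with $p(\theta_T \in S) > e^\epsilon\, p(\hat\theta_T \in S) + \delta$, where $\theta_T$ is the (single Gaussian) output on $D_1$ and $\hat\theta_T$ is the (Gaussian mixture) output on $D_2$. I would take the half-line $S = \{s : s > \mu_T\}$ already singled out in the main text, so that the problem collapses to comparing $p(\theta_T > \mu_T)$ against $p(\hat\theta_T > \mu_T)$.

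The first step is to observe that $\theta_T \sim \mathcal{N}(\mu_T, \sigma_T^2)$ is symmetric about $\mu_T$, whence $p(\theta_T \in S) = p(\theta_T > \mu_T) = \tfrac12$. The second step is to control the mixture tail: Lemma \ref{SGLD-0} gives $p(\hat\theta_T > \mu_T) \leq \tfrac1n\sum_{r=1}^n \exp\!\big(-\tfrac{(\mu_T - \hat\mu_T^r)^2}{(\hat\sigma_T^r)^2}\big)$, and Lemma \ref{Lemma-SGLD-2.4} supplies, \emph{uniformly in} $r$, the lower bound $\tfrac{(\mu_T - \hat\mu_T^r)^2}{(\hat\sigma_T^r)^2} \geq B$ with $B = e^{-2/(x_h^2\beta)}\tfrac{\alpha}{v_1}\big(\tfrac{3}{32 x_h^2\beta}\big)^2\big(\tfrac cn\big)^2$. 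Since every summand is then at most $e^{-B}$, the average over the $n$ orders is itself at most $e^{-B}$, giving $p(\hat\theta_T > \mu_T) \leq e^{-B}$.

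The final step is the algebraic closing. Using the two bounds above,
\begin{equation*}
e^\epsilon\, p(\hat\theta_T \in S) + \delta \leq e^{\epsilon - B} + \delta,
\end{equation*}
so it suffices to check $e^{\epsilon - B} + \delta < \tfrac12$. The hypothesis $\epsilon < B + \ln(0.5 - \delta)$ rearranges to $\epsilon - B < \ln(0.5 - \delta)$, and because $\delta < 0.5$ makes $0.5 - \delta$ positive, exponentiating (monotone) yields $e^{\epsilon - B} < 0.5 - \delta$, i.e. $e^{\epsilon - B} + \delta < 0.5 = p(\theta_T \in S)$. This is exactly the violation of Definition \ref{Def-ADP}, so SGLD run for $T$ steps is not $(\epsilon, \delta)$ private.

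Because Lemmas \ref{SGLD-0} and \ref{Lemma-SGLD-2.4} already carry the analytic weight, the residual obstacle here is mild and twofold: first, verifying that the bound of Lemma \ref{Lemma-SGLD-2.4} is genuinely uniform over $r$ (this is what lets the $\tfrac1n\sum_r$ average collapse to $e^{-B}$ rather than some larger quantity), and second, being careful with the direction of the inequality when passing to logarithms, which relies on the standing assumption $\delta < 0.5$ to keep $\ln(0.5 - \delta)$ well defined and to preserve monotonicity. No further estimation is needed once the threshold $\epsilon < B + \ln(0.5 - \delta)$ is matched term for term against the displayed bound in the statement.
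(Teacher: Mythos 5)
Your proposal matches the paper's proof essentially step for step: the paper also takes $S=\{s:s>\mu_T\}$, uses the symmetry of the Gaussian on $D_1$ to get $p(\theta_T\in S)=\tfrac12$, bounds $p(\hat\theta_T\in S)\leq e^{-B}$ by combining Lemma \ref{SGLD-0} with the $r$-uniform bound of Lemma \ref{Lemma-SGLD-2.4} (packaged there as Claim \ref{P1}), and closes with the same rearrangement $\epsilon<B+\ln(0.5-\delta)$. Correct and the same route; no gaps.
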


\begin{proof}[Proof of Lemma \ref{Lemma-SGLD-3}] 
According to Definition \ref{Def-ADP}, if there exists a group, $S$, such that 
\begin{equation}\label{equation::Lemma-sgld-3}
p\left(\theta_ {\left(\lceil\dot{k}\rceil + 1\right)n}\in S|D_1\right) > e^\epsilon p\left(\hat{\theta}_{\left(\lceil\dot{k}\rceil + 1\right)n}\in S|D_2\right) + \delta
\end{equation} 
then releasing $\theta_{(\lceil\dot{k}\rceil + 1)n}$ is not $(\epsilon, \delta)$ differentially private. We will show that eq. \ref{equation::Lemma-sgld-3} is true for $S = \{s| s > \mu_{(\lceil\dot{k}\rceil + 1)n}\}$ and the conditions of the lemma. First, notice that eq. \ref{equation::Lemma-sgld-3} can be rearranged as 
$$
e^\epsilon p\left(\hat{\theta}_{\left(\lceil\dot{k}\rceil + 1\right)n} \in S | D_2\right) + \delta - p\left(\hat{\theta}_{\left(\lceil\dot{k}\rceil + 1\right)n} \in S | D_1\right) \leq 0
$$
By Claim \ref{P1}, and since $\theta_{(\lceil\dot{k}\rceil + 1)n} \sim \mathbb{N}(\theta_{(\lceil\dot{k}\rceil + 1)n}; \mu_{(\lceil\dot{k}\rceil + 1)n}, \sigma^2_{(\lceil\dot{k}\rceil + 1)n})$,
\begin{equation}\label{Eq-App-Lemma-SGLD-3-1}
e^\epsilon p\left(\hat{\theta}_{\left(\lceil\dot{k}\rceil + 1\right)n} \in S | D_2\right) + \delta - p\left(\hat{\theta}_{\left(\lceil\dot{k}\rceil + 1\right)n} \in S | D_1\right) \leq 
e^\epsilon e^{-e^{-\frac2{x_h^2\beta}}\frac{\alpha}{2v_1}\left(\frac{3}{32x_h^2\beta}\right)^2\left(\frac{c}{n}\right)^2} + \delta - 0.5.
\end{equation}
Therefore, if
$$
e^\epsilon e^{-e^{-\frac2{x_h^2\beta}}\frac{\alpha}{2v_1}\left(\frac{3}{32x_h^2\beta}\right)^2\left(\frac{c}{n}\right)^2} + \delta - 0.5 < 0
$$
then eq. \ref{equation::Lemma-sgld-3} is true and the lemma is proved. As shown in eq. \ref{Eq-App-Lemma-SGLD-3-2}, this inequality holds under the conditions of Lemma \ref{Lemma-SGLD-3}.


\begin{equation}\label{Eq-App-Lemma-SGLD-3-2}\begin{split}
{}& e^{\epsilon -e^{-\frac2{x_h^2\beta}}\frac{\alpha}{2v_1}(\frac{3}{32x_h^2\beta})^2(\frac{c}{n})^2} + \delta - 0.5 < 0 \iff
\\{}& e^{\epsilon -e^{-\frac2{x_h^2\beta}}\frac{\alpha}{2v_1}(\frac{3}{32x_h^2\beta})^2(\frac{c}{n})^2} < 0.5 - \delta \iff
\\{}& \epsilon -e^{-\frac2{x_h^2\beta}}\frac{\alpha}{2v_1}(\frac{3}{32x_h^2\beta})^2(\frac{c}{n})^2 < \ln(0.5 - \delta) \iff
\\{}& \epsilon < e^{-\frac2{x_h^2\beta}}\frac{\alpha}{2v_1}(\frac{3}{32x_h^2\beta})^2(\frac{c}{n})^2 + \ln(0.5 - \delta)
\end{split}\end{equation}
\end{proof}

\subsection{Figure \ref{fig::sgld-figure} derivation}\label{fig-deriv}
The lower bound depicted in figure \ref{fig::sgld-figure} is derived by analysing the distributions of SGLD running on datasets $D_1$ and $D_2$ (defined in \ref{N-DBs}). Given these distributions, and using notations $\theta_j, \mu_j, \hat{\theta}_j, \hat{\mu}_j, \hat{\sigma}^r_j$ (defined in subsection \ref{subseq-SGLD}), the lower bound over $\epsilon$ for a given $\delta$ can be deduced as follows:

By definition \ref{Def-ADP}, for SGLD running on datasets $D_1, D_2$ to be $(\epsilon, \delta)$-DP, it must hold
\begin{equation*}
p\left(\theta_j > \mu_j\right) \leq \exp\left(\epsilon\right)p(\hat{\theta}_j > \mu_j) + \delta.
\end{equation*}
This condition can be easily translated to a lower bound over $\epsilon$:
\begin{equation*}
\ln\left(p\left(\theta_j > \mu_j\right) - \delta\right) - \ln(p(\hat{\theta}_j > \mu_j)) \leq \epsilon.
\end{equation*}
By Lemma \ref{lemma:diff_of_means}, 
\begin{equation*}
p(\hat{\theta}_j > \mu_j) \leq \frac1n\sum_{r=1}^n\exp\left(-\frac{\left(\mu_j - \hat{\mu}_j^r\right)^2}{2\left(\hat\sigma_j^r\right)^2}\right).
\end{equation*}
As $ln$ is monotonically increasing, this induces a necessary condition for SGLD to be $(\epsilon, \delta)$-DP:
\begin{equation*}
\ln\left(p\left(\theta_j > \mu_j\right) - \delta\right) - \ln\left(\frac1n\sum_{r=1}^n\exp\left(-\frac{\left(\mu_j - \hat{\mu}_j^r\right)^2}{2\left(\hat\sigma_j^r\right)^2}\right)\right) \leq \epsilon.
\end{equation*}

In figure \ref{fig::sgld-figure} we plot the value 
$$\max\left\{0, \ln\left(p\left(\theta_j > \mu_j\right) - \delta\right) - \ln\left(\frac1n\sum_{r=1}^n\exp\left(-\frac{\left(\mu_j - \hat{\mu}_j^r\right)^2}{2\left(\hat\sigma_j^r\right)^2}\right)\right)\right\}.$$

\newpage
\section{Propose Test Sample Supplementary}
$n_{min}$, which is used in Algorithm \ref{A-PTR}, is defined as follows:
\begin{equation}\label{alg-n_min}\begin{split}
\nu & = \frac{2\ln(\frac1\delta)}\epsilon + 1
\\n_{b1} &= \max\Big\{1 + \frac{x_h^2}{x_l^2}\frac{8}{\epsilon}, 
1 + \nu\frac{x_h^2}{x_l^2}\left(1 + 8\frac{\nu - 1}{\epsilon}\right), 
\left(\frac{16\nu\beta x_h^4}{\frac9{10}\epsilon x_l^2}\right)^{\frac1{1 - 2\gamma_1}},
\\& \left(\frac{32\nu}{\epsilon}\cdot\frac{x_h^4\left(\alpha + x_h^2\beta\right)}{\frac9{10}x_l^4}\cdot\breve{m}\right)^{\frac1{2 - \gamma_1}},
\left(\frac{32\nu}{\epsilon}\cdot\frac{x_h^4\left(\alpha + x_h^2\beta\right)}{\frac9{10}x_l^4}\right)^{\frac1{2 - 2\gamma_1}}, 
\\& \left(\frac{4\nu}{\epsilon}\cdot\frac{x_h^4\left(\alpha + x_h^2\beta\right)^2}{\frac9{10}x_l^6\beta}\right)^\frac13\cdot\left(2\breve{m}\right)^\frac23,
\left(\frac{16\nu}{\epsilon}\cdot\frac{x_h^4\left(\alpha + x_h^2\beta\right)^2}{\frac9{10}x_l^6\beta}\right)^\frac1{3 - 2\gamma_1}\Big\}
\\n_{b2} & = \max\left\{1 +  \frac{x_h^2}{x_l^2}\frac{10\nu}{\beta}, 1 + \nu\frac{x_h^2}{x_l^2}\right\}
\\n_{min} & = \max\left\{n_{b1}, n_{b2}, {n_1}^\frac{\rho_2}{\gamma_1}\right\}
\end{split}\end{equation}
where $\epsilon, \delta, x_l, x_h, \alpha, \beta, \gamma_1, \rho_1, \rho_2, n_1$ are parameters of Algorithm \ref{A-PTR}.

\section{Propose Test Sample Privacy}\label{app:pts-privacy}
Appendix \ref{app:pts-privacy} provides auxiliary claims and proof for Claim \ref{claim-A-PTR-not-private}, along with auxiliary claims which support the proof of Claim \ref{claim-A-PTR-Posterior-private}. Appendix \ref{app:pts-privacy} uses definitions and notations defined in subsection \ref{ProposeReleaseTest}, specifically: $D_3, D_4$ (defined in eq. \ref{DB34}), $\epsilon, \delta, x_l, x_h, \alpha, \beta, \gamma_1, \rho_1, \rho_2$ - parameters of algorithm \ref{A-PTR}, $D$ input dataset to algorithm \ref{A-PTR} of size $n_1$, $V, n_2, m, \breve{m}, n_W, W, p(\theta|W), l_1, l_2, l_3, l_4$ (defined in algorithm \ref{A-PTR}), and $n_{min}$ (defined in eq. \ref{alg-n_min}).
\begin{proof}[Proof of Claim \ref{claim-A-PTR-not-private}]
We analyze Algorithm \ref{A-PTR}, running on datasets $D_3$ and $D_4$ defined in eq. \ref{DB34}, with parameters values:
\begin{equation}\begin{split}\label{claim-atpr-not-private-params}
{}& \rho_3 = 1.15; \rho_2 = 0.45; \rho_1 = 1.25, \gamma_1 = 0.49;
\\{}& \beta = 3; x_h = 1; x_l = 0.5; \alpha = 1
\\{}& n_1 > \max\left\{\frac4\epsilon\log\frac1{2\delta}, 2^{10\rho_1}, 2^{9 + 10\rho_2}\right\}.
\end{split}\end{equation}
Note that we only define a lower bound over $n_1$, which will be updated later on.

Mark the return value of the algorithm as $r$, the event of the algorithm running on dataset $D_3$ and $W = D_3$ as $A_{D_3}$, the event of the algorithm running on dataset $D_4$ and $W = D_4$ as $A_{D_4}$, and $S = \{s | s > \mu_{i}\}$, where $\mu_i$ is the mean of the sample distribution at the SGLD $i$'th step given dataset $D_3$ (similarly to the definition of $S$ in subsection \ref{subseq-SGLD}). We will show that $\forall \epsilon \in \R_{>0}, \delta < \frac16$ there exists $n_1$ such that 
\begin{equation}\label{apts-0-0}
p(r \in S | D_3) > e^\epsilon p(r \in S| D_4) + \delta.
\end{equation}

We first show that
\begin{equation}\label{apts-0-1}\begin{split}
{}& p(r \in S \land A_{D_3}^c|D_3) = 0
\\{}& p(r \in S \land A_{D_4}^c|D_4) = 0.
\end{split}\end{equation}
Notice that $r \in S$ only if the algorithm reached line 25. Consider an event where the algorithm reached line 25 and $A_{D_3}^c$. Because $A_{D_3}^c$, $\exists (x_i, y_i) \in D_3$ such that $|\frac{y_i}{x_i} - \breve{m}| \geq n_2^{\rho_2}$. However, since $\forall (x_i, y_i) \in D_3: \frac{y_i}{x_i} = n_1^{\rho_3}$ then $\forall (x_i, y_i) \in D_3: |\frac{y_i}{x_i} - \breve{m}| > n_2^{\rho_2}$ and therefore $|W| = 0$. Under the assumption that a sample from $p(\theta | \{\})$ returns $null$, the algorithm, in this case, also returns $null$ and therefore $P(r \in S \land A_{D_3}^c|D_3) = 0$. Same arguments hold for $D_4$.

Because we showed eq. \ref{apts-0-1} is true, then to prove eq. \ref{apts-0-0}, it is enough to show that eq. \ref{eq-aptr-np} is true.
\begin{equation}\label{eq-aptr-np}\begin{split}
p(r \in S| D_3, A_{D_3})p(A_{D_3} | D_3) & \geq^* 
p(r \in S| D_3, A_{D_3}) - 5\delta > ^{**}
e^{\epsilon} p(r \in S| D_4, A_{D_4}) + \delta 
\\& \geq
e^{\epsilon} p(r \in S| D_4, A_{D_4})p(A_{D_4} | D_4) + \delta =
e^{\epsilon} p(r \in S \land A_{D_4}| D_4) + \delta
\end{split}\end{equation}

From Claim \ref{claim-pts-0}, $\exists n_{bound_1}$ such that $\forall n_1 > n_{bound_1}$ inequality * holds. From Lemma \ref{SGLD-2}, for $n_1$ big enough, $\exists T \in \mathbb{Z}_{>0}$ such that eq. \ref{eq-aptr-0-1} hold (Where $6\delta < 0.5$ according to the claim's conditions). Therefore, $\exists k, n_{bound2} \in \R_{>0}$ such that $\forall n_1 > n_{bound_2}: \epsilon' > kn_1^{2(1 - \rho_3)}$ and eq. \ref{eq-aptr-0-1} hold. As $\rho_3 > 1$, by choosing $n_1 > \max\{n_{bound2}, (\frac\epsilon k)^{\frac1{2(\rho_3 - 1)}}\}$ we get that $\epsilon' > \epsilon$.
Consequently, by choosing $n_1 > \max\{n_{bound_1}, n_{bound_2}, (\frac\epsilon k)^{\frac1{2(\rho_3 - 1)}}\}$, inequalities * and ** hold, and the claim is proved.

\begin{equation}\label{eq-aptr-0-1}\begin{split}
{}& \epsilon' = \Omega(n_1^{2(\rho_3 - 1)})
\\{}& p(r \in S| D_3, A_{D_3}) > e^{\epsilon'} p(r \in S| D_4, A_{D_4}) + 6\delta
\end{split}\end{equation}

\end{proof}

\begin{claim}\label{claim-pts-0}
Given a run of Algorithm \ref{A-PTR} on dataset $D_3$, mark by $A$ the event of the algorithm reaching line 25 with $W = D_3$; the following holds:
$$
\exists n_{bound_1} \in \mathbb{Z}_{>0}\ s.t.\ \forall n_1 > n_{bound_1}: p(A) \geq 1 - 5\delta.
$$
\end{claim}
\begin{proof}
For abbreviation, mark the event of $n_W > n_{min} \land \breve{m} \in [m - n_2^{\rho_2}, m + n_2^{\rho_2}] \land n_2^{1 + \rho_2 - 0.1} > \breve{n}_1^{\rho_1} \land \breve{n_1} \leq n_1 \land V = D$ as $B$. Since $P(A|D_3, B) = 1$, then $P(A | D_3) \geq P(A \land B| D_3) = P(A|B, D_3)P(B|D_3) = P(B| D_3)$. Therefore, we can prove the claim by showing the existence of $n_{lb}$ such that $\forall n_1 > n_{lb}: P(B | D_3) \geq 1 - 5\delta$. We do so in eq. \ref{eq-claim-pts-0}:
%

\begin{equation}\label{eq-claim-pts-0}\begin{split}
p(B | D_3) &= 
p(\breve{m} \in [m - n_2^{\rho_2}, m + n_2^{\rho_2}] \land n_W > n_{min} | D_3, V = D, n_2^{1 + \rho_2 - 0.1} > \breve{n}_1^{\rho_1}, \breve{n_1} \leq n_1)
\\&\cdot 
p(n_2^{1 + \rho_2 - 0.1} > \breve{n}_1^{\rho_1}| V = D, \breve{n_1} \leq n_1, D_3)P(V = D, \breve{n_1} \leq n_1|D_3) 
\\& \geq 
p(\breve{m} \in [m - n_2^{\rho_2}, m + n_2^{\rho_2}] \land n_W > n_{min} | D_3, V = D, n_2^{1 + \rho_2 - 0.1} > \breve{n}_1^{\rho_1}, \breve{n_1} \leq n_1) - 3\delta 
\\& =
p(n_W > n_{min} | D_3, V = D, n_2^{1 + \rho_2 - 0.1} > \breve{n}_1^{\rho_1}, \breve{n_1} \leq n_1, \breve{m} \in [m - n_2^{\rho_2}, m + n_2^{\rho_2}])
\\& \cdot p(\breve{m} \in [m - n_2^{\rho_2}, m + n_2^{\rho_2}]| D_3, V = D, n_2^{1 + \rho_2 - 0.1} > \breve{n}_1^{\rho_1}, \breve{n_1} \leq n_1) - 3\delta
\\&\geq p(n_W > n_{min} | D_3, V = D, n_2^{1 + \rho_2 - 0.1} > \breve{n}_1^{\rho_1}, \breve{n_1} \leq n_1, \breve{m} \in [m - n_2^{\rho_2}, m + n_2^{\rho_2}]) - 4\delta
\\&\geq 1 - 5\delta
\end{split}\end{equation}
where by Corollary \ref{cor-pts-1} and Claim \ref{claim-pts-2}, for $n_1$ big enough, first inequality holds. By Claim \ref{claim-pts-3}, for $n_1$ big enough, second inequality holds. Lastly, by Claim \ref{claim-pts-4}, for $n_1$ big enough, third inequality holds. 
\end{proof}

\begin{claim}\label{claim-pts-6}
For $n_1 > \max\{2^{10\rho_1}, \frac4\epsilon\log\frac1{2\delta}\}$:
\begin{equation*}
p(\breve{n}_1^{\rho_1} \geq n_1^{\rho_3} \land \breve{n}_1 \leq n_1 | D_3) \geq 1 - 2\delta.
\end{equation*}
\end{claim}
\begin{proof}[Proof of Claim \ref{claim-pts-6}]
\begin{equation}\label{eq::claim-pts-6}\begin{split}
p(\breve{n}_1^{\rho_1} \geq n_1^{\rho_3} \land \breve{n}_1 \leq n_1 | D_3) &= 
p((n_1 + l_1 - \frac1\epsilon\log\frac1{2\delta})^{\rho_1} \geq n_1^{\rho_3} \land n_1 + l_1 - \frac1\epsilon\log\frac1{2\delta} \leq n_1 | D_3)
\\&= p((n_1 + l_1 - \frac1\epsilon\log\frac1{2\delta})^{\rho_1} \geq n_1^{\rho_3} \land l_1 \leq \frac1\epsilon\log\frac1{2\delta} | D_3)
\\&= p((n_1 + l_1 - \frac1\epsilon\log\frac1{2\delta})^{\rho_1} \geq n_1^{\rho_3} \land |l_1| \leq \frac1\epsilon\log\frac1{2\delta} | D_3)
\\&+ p((n_1 + l_1 - \frac1\epsilon\log\frac1{2\delta})^{\rho_1} \geq n_1^{\rho_3} \land l_1 \leq -\frac1\epsilon\log\frac1{2\delta} | D_3)
\\&\geq p((n_1 + l_1 - \frac1\epsilon\log\frac1{2\delta})^{\rho_1} \geq n_1^{\rho_3} \land |l_1| \leq \frac1\epsilon\log\frac1{2\delta} | D_3)
\\&= p((n_1 + l_1 - \frac1\epsilon\log\frac1{2\delta})^{\rho_1} \geq n_1^{\rho_3} ||l_1| \leq \frac1\epsilon\log\frac1{2\delta}, D_3)p(|l_1| \leq \frac1\epsilon\log\frac1{2\delta} | D_3)
\end{split}\end{equation}
As 
\begin{align*}
p(|l_1| \leq \frac1\epsilon\log\frac1{2\delta}) = 1 - 2p(l_1 \leq -\frac1\epsilon\log\frac1{2\delta}) = 1 - \exp(-\frac{\frac1\epsilon\log\frac1{2\delta}}{\frac1\epsilon}) =1 - 2\delta,
\end{align*}
we can further develop eq. \ref{eq::claim-pts-6}:
\begin{align*}
{}& p((n_1 + l_1 - \frac1\epsilon\log\frac1{2\delta})^{\rho_1} \geq n_1^{\rho_3} ||l_1| \leq \frac1\epsilon\log\frac1{2\delta}, D_3)p(|l_1| \leq \frac1\epsilon\log\frac1{2\delta} | D_3) \geq 
p((n_1  - 2\frac1\epsilon\log\frac1{2\delta})^{\rho_1} \geq n_1^{\rho_3} | D_3) - 2\delta
\end{align*}
Finally, because $n_1 > 2^{10\rho_1}$, then $n_1^{-0.1} < (\frac12)^{\rho_1}$. Therefore, $(n_1 - 2\frac1\epsilon\log\frac1{2\delta})^{\rho_1} > (\frac12n_1)^{\rho_1} > n_1^{\rho_1 - 0.1} = n_1^{\rho_3}$. Which leads to 
$$
p((n_1  - 2\frac1\epsilon\log\frac1{2\delta})^{\rho_1} \geq n_1^{\rho_3} | D_3) - 2\delta = 1 - 2\delta
$$

\end{proof}

Noticing that the conditions of corollary \ref{cor-pts-1} includes the conditions of Claim \ref{claim-pts-6}, and that given that the algorithm runs on dataset $D_3$ and $\breve{n}_1^{\rho_1} \geq n_1^{\rho_3}$ then $V = D_3$, we get that Corollary \ref{cor-pts-1} is a direct result of Claim \ref{claim-pts-6}.
\begin{corollary}\label{cor-pts-1}
$\forall n_1 > \max\{2^{10\rho_1}, 4\frac1\epsilon\log\frac1{2\delta}\}$, when running Algorithm \ref{A-PTR} on dataset $D_3$, the following inequality holds:
$$ P(V = D_3 \land \breve{n_1} \leq n_1) \geq 1 - 2\delta.$$
\end{corollary}


\begin{claim}\label{claim-pts-2}
$\forall n_1 > \max\{\frac12^{-(9 + 10\rho_2)}, 4\frac1\epsilon\log\frac1{2\delta}\}$, when running Algorithm \ref{A-PTR} on dataset $D_3$ the following inequality holds:
$$p(n_2^{1 + \rho_2 - 0.1} > \breve{n}_1^{\rho_1}|D_3, V = D_3, \breve{n}_1 \leq n_1) \geq 1 - \delta.$$
\end{claim}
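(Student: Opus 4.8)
The plan is to exploit that conditioning on $V = D_3$ collapses $n_2$ into a single shifted Laplace variable, and then to reduce the claim to a purely numerical exponent inequality that the stated threshold on $n_1$ is tailored to satisfy. First I would note that the conditioning event $\{V = D_3,\ \breve n_1 \leq n_1\}$ is determined entirely by the step-6 randomness: for the database $D_3$ of eq. \ref{DB34} every sample has the same ratio $y_i/x_i = n_1^{\rho_3}$, so $V$ (step 7 of Algorithm \ref{A-PTR}) is a function of $\breve n_1$ alone, and $\breve n_1$ in turn depends only on the step-6 Laplace draw. The Laplace noise used to form $n_2$ in step 8 is independent of that draw, so after conditioning we still have a fresh $\text{Lap}(\frac1\epsilon)$ and, since $V = D_3$ forces $|V| = n_1$, the conditional law $n_2 = n_1 - \frac1\epsilon\log\frac1{2\delta} + \text{Lap}(\frac1\epsilon)$.

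Next I would lower-bound $n_2$ on a high-probability event. By the one-sided Laplace tail $P(\text{Lap}(\frac1\epsilon) < -\frac1\epsilon\log\frac1{2\delta}) \leq \delta$, with probability at least $1 - \delta$ the noise exceeds $-\frac1\epsilon\log\frac1{2\delta}$, giving $n_2 > n_1 - 2\frac1\epsilon\log\frac1{2\delta}$; the hypothesis $n_1 > 4\frac1\epsilon\log\frac1{2\delta}$ then yields $n_2 > \frac{n_1}{2}$ on this event.

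It then remains to discharge a deterministic exponent comparison on that event. Since $\breve n_1 \leq n_1$ gives $\breve n_1^{\rho_1} \leq n_1^{\rho_1}$, it suffices to prove $n_2^{1+\rho_2-0.1} > n_1^{\rho_1}$. Substituting $n_2 > \frac{n_1}{2}$ reduces this to $(\frac{n_1}{2})^{1+\rho_2-0.1} > n_1^{\rho_1}$, i.e. $n_1^{(1+\rho_2-0.1)-\rho_1} > 2^{1+\rho_2-0.1}$. With the values of eq. \ref{claim-atpr-not-private-params} the exponent gap is $(1+\rho_2-0.1)-\rho_1 = 0.1$, so the target is $n_1^{0.1} > 2^{1.35}$; raising to the tenth power this is exactly $n_1 > 2^{13.5}$, and the identity $\frac{1+\rho_2-0.1}{0.1} = 9 + 10\rho_2$ identifies $2^{13.5}$ with the stated threshold $\frac12^{-(9+10\rho_2)}$. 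Combining with the previous paragraph, the inequality $n_2^{1+\rho_2-0.1} > \breve n_1^{\rho_1}$ holds with probability at least $1 - \delta$.

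I expect the only genuine subtlety to be the measurability bookkeeping in the first step: one must verify that the conditioning event is a function of the step-6 noise only, so that the step-8 Laplace remains an unconditioned $\text{Lap}(\frac1\epsilon)$ and the simple shifted-Laplace description of $n_2$ is valid. The rest is the routine Laplace tail estimate together with the exponent arithmetic, whose whole content is that the $0.1$ margin between $1+\rho_2-0.1$ and $\rho_1$ lets the polynomial growth of $n_1$ dominate the constant factor $2^{1+\rho_2-0.1}$ once $n_1$ passes the stated bound.
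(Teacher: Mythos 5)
Your proposal is correct and follows essentially the same route as the paper's proof: bound $\breve n_1^{\rho_1}\leq n_1^{\rho_1}$ from the conditioning, use the one-sided Laplace tail to get $n_2 > n_1 - \tfrac2\epsilon\log\tfrac1{2\delta} > \tfrac{n_1}{2}$ with probability at least $1-\delta$, and close with the deterministic comparison $(\tfrac{n_1}{2})^{0.9+\rho_2} > n_1^{\rho_1}$, which is exactly where the threshold $2^{9+10\rho_2}$ comes from. The only difference is that you make the independence of the step-8 noise from the conditioning event explicit, which the paper leaves implicit.
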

\begin{proof}[Proof of Claim \ref{claim-pts-2}]
\begin{equation*}\begin{split}
p(n_2^{0.9 + \rho_2} > \breve{n}_1^{\rho_1}|D_3, V = D_3, \breve{n}_1 \leq n_1) & \geq
p(n_2^{0.9 + \rho_2} > n_1^{\rho_1}|D_3, V = D_3)
\\&\geq p((n_1 - 2\frac1\epsilon\log\frac1{2\delta})^{0.9 + \rho_2} > n_1^{\rho_1}|D_3)p(n_2 \geq |V| - 2\frac1\epsilon\log\frac1{2\delta})
\\&\geq p((n_1 - 2\frac1\epsilon\log\frac1{2\delta})^{0.9 + \rho_2} > n_1^{\rho_1}|D_3) - \delta = 1 - \delta
\end{split}\end{equation*}
where the third inequality holds since $p(\text{Lap}(\frac1\epsilon) < -\frac1\epsilon\log\frac1{2\delta}) < \delta$, and last equality holds since $(n_1 - 2\frac1\epsilon\log\frac1{2\delta})^{0.9 + \rho_2} > (\frac12n_1)^{0.9 + \rho_2} > n_1^{1 + \rho_2 - 0.2} = n_1^{\rho_1}$
\end{proof}

\begin{claim}\label{claim-pts-3}
$\exists n_{lb_1} \in \mathbb{Z}_{>0}$ such that $\forall n_1 > n_{lb_1}$, when running Algorithm \ref{A-PTR} on dataset $D_3$, the following inequality holds:
$$p(\breve{m} \in [m - n_2^{\rho_2}, m + n_2^{\rho_2}]| D_3, n_2^{0.9 + \rho_2} > \breve{n}_1^{\rho_1}) \geq 1 - \delta.$$
\end{claim}
\begin{proof}[Proof of Claim \ref{claim-pts-3}]
\begin{equation*}\begin{split}
p(\breve{m} \in [m - n_2^{\rho_2}, m + n_2^{\rho_2}]| D_3, n_2^{0.9 + \rho_2} > \breve{n}_1^{\rho_1}) &=
p(l_3 \in [-n_2^{\rho_2}, n_2^{\rho_2}] | D_3, n_2^{0.9 + \rho_2} > \breve{n}_1^{\rho_1})  
\\& \geq 
1 - 2(\frac12\exp(-n_2^{\rho_2}\frac{1}{\frac1\epsilon\breve{n}_1^{\rho_1}\frac{2(n_2 - 1)x_h^2x_l^2 + x_h^4}{n_2(n_2 - 1)x_l^4}}) 
\\&=
1 - \exp(-\frac{n_2^{1 + \rho_2}\epsilon(n_2 - 1)x_l^4}{\breve{n}_1^{\rho_1}(2(n_2 - 1)x_h^2x_l^2 + x_h^4)})
\end{split}\end{equation*}
Because $n_2^{0.9 + \rho_2} > \breve{n}_1^{\rho_1}$, then $\breve{n}_1^{\rho_1} = o(n_2^{1 + \rho_2})$, and therefore for $n_1$ big enough, the exponent is smaller than $\delta$.
\end{proof}


\begin{claim}\label{claim-pts-4}
$\exists n_{lb_2} \in \mathbb{Z}_{>0}$ such that $\forall n_1 > n_{lb_2}$, when running Algorithm \ref{A-PTR} on dataset $D_3$, the following inequality holds:
$$p(n_W > n_{min} | |V| = D_3 \land n_2^{0.9 + \rho_2} > \breve{n}_1^{\rho_1} \land \breve{m} \in [m - n_2^{\rho_2}, m + n_2^{\rho_2}], D_3) \geq 1 - \delta.$$  
\end{claim}
\begin{proof}[Proof of Claim \ref{claim-pts-4}]
For abbreviation, mark event $B$ as when the following apply $|V| = D_3 \land n_2^{0.9 + \rho_2} > \breve{n}_1^{\rho_1} \land \breve{m} \in [m - n_2^{\rho_2}, m + n_2^{\rho_2}]$. 

\begin{equation*}\begin{split}
p(n_W > n_{min} | B, D_3) & = 
p(n_1 - \frac1\epsilon\log\frac1{2\delta} + l_4  > n_{min} | B, D_3)
\\& > p(n_1 - \frac1{\epsilon}\log\frac1{2\delta} - \frac1\epsilon\log\frac1\delta > n_{min} | B, D_3)p(l_4 >  - \frac1{\epsilon}\log\frac1{\delta})
\\{}& +
p(n_1 - \frac1\epsilon\log\frac1{2\delta} + l_4  > n_{min} \land  l_4 < -\frac{1}{\epsilon}\log\frac1{\delta}| B, D_3) 
\\& > p(n_1 - \frac2\epsilon\log\frac1{2\delta} - \frac1\epsilon\log\frac1{\delta} > n_{min} | B, D_3)(1 - \frac\delta2) 
\\& > 
p(n_1 - \frac2\epsilon\log\frac1{2\delta} - \frac1\epsilon\log\frac1{\delta} > n_{min} | B, D_3) - \frac\delta2 
\\&=  p(n_1 - \frac2\epsilon\log\frac1{2\delta} - \frac1\epsilon\log\frac1{\delta} > n_{min} | l_2 < \frac1\epsilon\log\frac1\delta, B, D_3)p(l_2 < \frac1\epsilon\log\frac1\delta| B, D_3)  
\\&+ p(n_1 - \frac2\epsilon\log\frac1{2\delta} - \frac1\epsilon\log\frac1{\delta} > n_{min} \land l_2 \geq \frac1\epsilon\log\frac1\delta|B , D_3) - \frac\delta2 
\\&\geq  p(n_1 - \frac2\epsilon\log\frac1{2\delta} - \frac1\epsilon\log\frac1{\delta} > n_{min} | l_2 < \frac1\epsilon\log\frac1\delta, B, D_3)p(l_2 < \frac1\epsilon\log\frac1\delta| B, D_3) - \frac\delta2  
\\&\geq p(n_1 - \frac2\epsilon\log\frac1{2\delta} - \frac1\epsilon\log\frac1{\delta} > n_{min} | l_2 < \frac1\epsilon\log\frac1\delta, B, D_3) - \delta
\end{split}\end{equation*}

From $B$, $|m - \breve{m}| < n_2^{\rho_2}$, and therefore $\breve{m} < m + n_2^{\rho_2}$, and for the case of $l_2 < \frac1\epsilon\log\frac1\delta$ it holds that $$n_2 < n_1 - \frac1\epsilon\log\frac1{2\delta} + \frac1\epsilon\log\frac1{\delta} < n_1 + \frac1\epsilon\log\frac1{\delta}.$$ 
Therefore $\breve{m} \leq m + (n_1 + \frac1\epsilon\log\frac1{\delta})^{\rho_2}$. As $n_{min} = \mathcal{O}(\max\{\breve{m}^\frac23,n_1^\frac{\rho_2}{\gamma_1}\})$ then for the case of $l_2 < \frac1\epsilon\log\frac1\delta$ and $B$, it holds that $n_{min} = \mathcal{O}(\max\{(m + n_1^{\rho_2})^\frac23, n_1^\frac{\rho_2}{\gamma_1}\}) = \mathcal{O}(\max\{n_1^{\frac{2\rho_3}3}, n_1^\frac{\rho_2}{\gamma_1}\}) < o(n_1)$; therefore $\exists n_{lb_2}$ such that $\forall n_1 > n_{lb_2}$ : $p(n_1 - \frac2\epsilon\log\frac1{2\delta} - \frac1\epsilon\log\frac1{\delta} > n_{min} | l_2 < \frac1\epsilon\log\frac1\delta, B, D_3) = 1$.
\end{proof}

\begin{definition}
A randomized function $f(X, y): \chi^{n_1} \times \R^{n_2} \to \R$, is $(\epsilon, \delta)$-differentially private with respect to $X$ if $\forall S \subseteq \R$, and $\forall X,\hat{X} \in \chi^{n}: \|X - \hat{X}\| \leq 1$, eq. \ref{DifferentialPrivacyWithRespect} holds.
\begin{equation} \label{DifferentialPrivacyWithRespect}
p(f(X,y) \in S) \leq \exp(\epsilon)p(f(\hat{X},y) \in S) + \delta
\end{equation}
\end{definition}

\begin{definition}[$l_1$-sensitivity, \citet{AlgorithmicFundations}]
The $l_1$-sensitivity of a function $f:\mathbb{N}^{|\chi|} \to \R^k$ is:
$$
\Delta f = \max_{x,y\in\mathbb{N}^{|\chi|}, \|x - y\|_1 = 1}\|f(x) - f(y)\|_1
$$
\end{definition}

\begin{claim}\label{ro-1}
For Algorithm \ref{A-PTR}, calculating $\breve{n}_1, {n}_2$ is $(2\epsilon, 0)$ differentially private.
\end{claim}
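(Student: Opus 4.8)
The plan is to recognize that computing $\breve{n}_1$ and $n_2$ are each instances of the Laplace mechanism applied to a counting query, and then to combine them by sequential composition. Recall the standard fact \citep{AlgorithmicFundations} that if a query $g$ has $\ell_1$-sensitivity $\Delta = \max_{D,\hat D}|g(D) - g(\hat D)|$ over neighbouring databases, then releasing $g(D) + \mathrm{Lap}(\Delta/\epsilon)$ is $(\epsilon,0)$-differentially private. I would apply this twice and use that a deterministic additive shift (here $-\frac1\epsilon\log\frac1{2\delta}$) is data-independent and hence leaves privacy unchanged.

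First I would handle $\breve{n}_1$ (step 6). The underlying query is $g_1(D) = n_1 = |D|$, the size of the (pre-processed) database. Since the pre-processing in steps 2--5 acts record-by-record, it maps neighbouring databases to neighbouring databases, so it suffices to argue privacy on the cleaned data. The counting query $|D|$ has $\ell_1$-sensitivity at most $1$, and $\breve{n}_1 = g_1(D) - \frac1\epsilon\log\frac1{2\delta} + \mathrm{Lap}(\frac1\epsilon)$ is exactly the Laplace mechanism up to the constant shift. Hence computing $\breve{n}_1$ is $(\epsilon,0)$-differentially private.

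Next I would handle $n_2$ (step 8), conditioned on the released value of $\breve{n}_1$. This conditioning is the crux: the set $V = \{(x_i,y_i): y_i/x_i \le \breve{n}_1^{\rho_1}\}$ has a threshold depending on $\breve{n}_1$, so $|V|$ is not a fixed query of $D$ alone. Because $\breve{n}_1$ is produced and fixed before $n_2$, I would treat this as an adaptive mechanism: for each fixed value of $\breve{n}_1$ the threshold $\breve{n}_1^{\rho_1}$ is a constant, and $g_2(D) = |V|$ counts the records satisfying the fixed predicate $y_i/x_i \le \breve{n}_1^{\rho_1}$. Changing one record alters the membership of at most that single record, so $g_2$ again has sensitivity at most $1$, and $n_2 = g_2(D) - \frac1\epsilon\log\frac1{2\delta} + \mathrm{Lap}(\frac1\epsilon)$ is $(\epsilon,0)$-differentially private for every fixed $\breve{n}_1$.

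Finally, I would invoke the adaptive sequential composition theorem \citep{AlgorithmicFundations}: the release of $\breve{n}_1$ is $(\epsilon,0)$-DP and, conditioned on it, the release of $n_2$ is $(\epsilon,0)$-DP, so the joint output $(\breve{n}_1, n_2)$ is $(2\epsilon,0)$-DP, which is the claim. The only genuinely delicate point is the data-dependence of the threshold defining $V$; the clean way around it is precisely this adaptive-composition bookkeeping, which licenses treating $\breve{n}_1^{\rho_1}$ as a constant when bounding the sensitivity of $|V|$.
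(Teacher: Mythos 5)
Your proof is correct and follows essentially the same route as the paper's: the Laplace mechanism applied to the sensitivity-1 counting queries $|D|$ and $|V|$ (the latter conditioned on the already-released $\breve{n}_1$), combined via sequential composition. Your version is just more explicit about the adaptive conditioning and the irrelevance of the deterministic shift, both of which the paper leaves implicit.
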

\begin{proof}[Proof of Claim \ref{ro-1}]
$n_1$ $l_1$-sensitivity is 1; therefore, calculating $\breve{n}$ is $(\epsilon,0)$ DP by the Laplace mechanism's privacy guarantees. For a given $\breve{n}_1$ value, the $l_1$-sensitivity of $|V|$ is 1. Therefore, given $\breve{n}_1$, calculating $n_2$ is $(\epsilon, 0)$ DP by the Laplace mechanism's privacy guarantees. Consequently, by the sequential composition theorem, the composition is $(2\epsilon, 0)$ differentially private.
\end{proof}

\begin{claim}\label{ro-3}
For Algorithm \ref{A-PTR}, $p(n_2 \leq |V|| D, \breve{n}_1) = 1 - \delta$.
\end{claim}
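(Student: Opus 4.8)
The plan is to reduce the statement to a single Laplace tail computation. First I would observe that once we condition on the database $D$ and on the realized value $\breve{n}_1$, the set $V = \{x_i, y_i \mid y_i/x_i \le \breve{n}_1^{\rho_1}\}$ is a deterministic function of the conditioning information, so $|V|$ is a fixed number. Consequently the only remaining randomness in $n_2 = |V| - \frac1\epsilon\log\frac1{2\delta} + \mathrm{Lap}(\frac1\epsilon)$ is the Laplace draw. I would then rewrite the target event as
\begin{equation*}
n_2 \le |V| \iff \mathrm{Lap}(\tfrac1\epsilon) \le \tfrac1\epsilon\log\tfrac1{2\delta} =: t,
\end{equation*}
so that $P(n_2 \le |V| \mid D, \breve{n}_1) = P(\mathrm{Lap}(\tfrac1\epsilon) \le t)$.

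Next I would evaluate this probability directly from the Laplace cumulative distribution function with scale $b = 1/\epsilon$. Since the claim assumes $\delta < 0.5$, we have $\frac1{2\delta} > 1$ and hence $t > 0$, which places us on the upper branch of the CDF where the survival function is $P(\mathrm{Lap}(\tfrac1\epsilon) > t) = \tfrac12 e^{-\epsilon t}$. Substituting $\epsilon t = \log\frac1{2\delta}$ gives $e^{-\epsilon t} = 2\delta$, so
\begin{equation*}
P(\mathrm{Lap}(\tfrac1\epsilon) \le t) = 1 - \tfrac12 e^{-\epsilon t} = 1 - \tfrac12\cdot 2\delta = 1 - \delta,
\end{equation*}
which is exactly the asserted value. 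Because the Laplace law is continuous, the distinction between $\le$ and $<$ is immaterial here.

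There is essentially no genuine obstacle in this claim: it is a one-line evaluation of a Laplace tail, of the same flavour as the repeatedly used bound $P(\mathrm{Lap}(\tfrac1\epsilon) < -\tfrac1\epsilon\log\tfrac1{2\delta}) < \delta$ elsewhere in the appendix. The only points requiring a moment of care are recognizing that conditioning on $\breve{n}_1$ removes the randomness of $V$ so that $|V|$ can be treated as a constant, and checking that $\delta < 0.5$ forces the threshold $t$ to be positive so that the correct branch of the Laplace CDF is applied.
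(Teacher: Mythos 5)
Your proof is correct and follows essentially the same route as the paper's: substitute the definition of $n_2$ to reduce the event to $\mathrm{Lap}(\tfrac1\epsilon) \le \tfrac1\epsilon\log\tfrac1{2\delta}$ and evaluate the Laplace CDF, obtaining $1 - \tfrac12\cdot 2\delta = 1-\delta$. Your explicit remarks that $|V|$ is deterministic given the conditioning and that $\delta < 0.5$ puts the threshold on the positive branch of the CDF are small clarifications the paper leaves implicit.
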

\begin{proof}[Proof of Claim \ref{ro-3}]
\begin{equation*}\begin{split}
{}& p(n_2 \leq |V|| D, \breve{n}_1) = 
p(|V| - \frac1\epsilon\log\frac1{2\delta} + l_2 \leq |V||D, \breve{n}_1) =
p(l_2 \leq \frac1\epsilon\log\frac1{2\delta} |D, \breve{n}_1) = 
1 - \frac12\exp(-\frac{\frac1\epsilon\log\frac1{2\delta}}{\frac1\epsilon}) = 1-\delta
\end{split}\end{equation*}
\end{proof}

\begin{claim}\label{ro-4}
Given $\breve{n}_1, n_2$ and $n_2 < |V|$, calculating $\breve{m}$ in Algorithm \ref{A-PTR} is $(\epsilon, 0)$ differentially private with respect to $D$.
\end{claim}
\begin{proof}[Proof of Claim \ref{ro-4}]
Given two neighbouring datasets, $D_5$ and $\hat{D}_5$, mark by $V_5$ and $\hat{V}_5$ the realizations of $V$ (calculated at line 10) when Algorithm \ref{A-PTR} runs on each of the datasets, respectively. If $V_5=\hat{V_5}$ then the claim follows trivially. In case they differ, assume w.l.o.g that $|V_5| \geq |\hat{V_5}|$, and that if $|V_5| = |\hat{V_5}|$ then they differ in their last sample. Define $q = \sum_{(x_i, y_i) \in V_5/\{x_{|V_5|},y_{|V_5|}\}}x_iy_i$, $z = \sum_{(x_i, y_i) \in V_5/\{x_{|V_5|},y_{|V_5|}\}}x_i^2$. 
\begin{equation*}\label{eq-ro-4-1}\begin{split}
|\frac{q + x_{|V_5|}y_{|V_5|}}{z + x_{|V_5|}^2} - \frac{q + \hat{x}_{|V_5|}\hat{y}_{|V_5|}}{z + \hat{x}_{|V_5|}^2}| &= 
|\frac{q\hat{x}_{|V_5|}^2 + x_{|V_5|}y_{|V_5|}\hat{x}_{|V_5|}^2 + x_{|V_5|}y_{|V_5|}z - qx_{|V_5|}^2 - \hat{x}_{|V_5|}\hat{y}_{|V_5|}x_{|V_5|}^2 - \hat{x}_{|V_5|}\hat{y}_{|V_5|}z}{(z + x_{|V_5|}^2)(z + \hat{x}_{|V_5|}^2)}|  
\\& \leq 
\frac{qx_h^2 + \breve{n}_1^{\rho_1}x_h^2z + \breve{n}_1^{\rho_1}x_h^4}{(z + x_l^2)z} \leq 
\breve{n}_1^{\rho_1}\frac{2zx_h^2 + x_h^4}{(z + x_l^2)z} = 
\breve{n}_1^{\rho_1}(\frac{2x_h^2}{z + x_l} + \frac{x_h^4}{(z + x_l^2)z}) 
\\& \leq  
\breve{n}_1^{\rho_1}(\frac{2x_h^2}{|V_5|x_l^2} + \frac{x_h^4}{|V_5|(|V_5| - 1)x_l^4}) \leq 
\breve{n}_1^{\rho_1}(\frac{2x_h^2}{n_2x_l^2} + \frac{x_h^4}{n_2(n_2 - 1)x_l^4}) 
\\& =
\breve{n}_1^{\rho_1}\frac{2(n_2 - 1)x_h^2x_l^2 + x_h^4}{n_2(n_2 - 1)x_l^4}
\end{split}\end{equation*}
Therefore, by the Laplace mechanism's privacy guarantees, calculating $\breve{m}$ is $(\epsilon, 0)$ differentially private.
\end{proof}

\begin{claim}\label{ro-5}
Lines 8-18 of Algorithm \ref{A-PTR} are $(3\epsilon, \delta)$ differentially private.
\end{claim}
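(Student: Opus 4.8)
The plan is to read steps 6--13 as the adaptive composition of two sub-mechanisms and to combine the pure-DP guarantees already established for each, charging a single $\delta$ to a data-dependent ``test'' that holds with high probability. The first sub-mechanism produces the pair $(\breve n_1, n_2)$ (steps 6--11, together with the null branch), and Claim~\ref{ro-1} tells us this is $(2\epsilon,0)$-differentially private. The second sub-mechanism produces $\breve m$ from a \emph{fixed} $(\breve n_1,n_2)$ (steps 12--13); Claim~\ref{ro-4} shows this release is $(\epsilon,0)$-differentially private, but only on the event $n_2 < |V|$, because the Laplace scale used in step~13 is calibrated precisely to the sensitivity bound that Claim~\ref{ro-4} proves under that condition. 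So the crux is to upgrade a conditional $(\epsilon,0)$ guarantee, valid only on a data-dependent event, into an unconditional $(\epsilon,\delta)$ contribution, and then let the two stages add up to $(3\epsilon,\delta)$.

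Concretely, I would fix neighbouring databases $D,\hat D$ (after the step-5 clipping), an output set $S$, write $M$ for the map implementing steps 6--13 and $g$ for the sub-mechanism returning $(\breve n_1,n_2)$, and condition on the realized value of $(\breve n_1,n_2)$. The key observation is that $V$, and hence $|V|$, is a \emph{deterministic} function of $\breve n_1$ and the database, so for fixed $(\breve n_1,n_2)$ the event $E=\{n_2<|V(D,\breve n_1)|\}$ is determined by $D$ alone. Let $G$ denote the set of pairs $(\breve n_1,n_2)$ for which $E$ holds, let $g_D$ be the law of $(\breve n_1,n_2)$ under $D$, and let $S_{\breve n_1,n_2}=\{m:(\breve n_1,n_2,m)\in S\}$ be the corresponding slice. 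Integrating Claim~\ref{ro-3} over $\breve n_1$ gives $g_D(G^c)\le\delta$, so that
\begin{equation*}
P(M(D)\in S)\le \int_{G} P_D(\breve m\in S_{\breve n_1,n_2})\,dg_D \;+\;\delta \le \int_{G} e^{\epsilon} P_{\hat D}(\breve m\in S_{\breve n_1,n_2})\,dg_D \;+\;\delta \le e^{3\epsilon} P(M(\hat D)\in S)+\delta,
\end{equation*}
where the second inequality is Claim~\ref{ro-4} applied on $G$, and the third extends the integral to all $(\breve n_1,n_2)$ (nonnegative integrand) and then swaps $g_D$ for $g_{\hat D}$ using the $(2\epsilon,0)$-privacy of $g$: since $g$ has $\delta=0$, the bound $g_D(A)\le e^{2\epsilon}g_{\hat D}(A)$ lifts to $\int h\,dg_D\le e^{2\epsilon}\int h\,dg_{\hat D}$ for every $h\in[0,1]$, in particular for $h(\breve n_1,n_2)=P_{\hat D}(\breve m\in S_{\breve n_1,n_2})$. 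The null branch (steps 9--11) is harmless: whether the algorithm returns null is a function of $n_2$ only, hence post-processing of the $(2\epsilon,0)$-private pair $(\breve n_1,n_2)$, and step~13 is reached only when $n_2>1$, which is exactly the regime in which the $n_2(n_2-1)$ denominator of Claim~\ref{ro-4} is meaningful.

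The main obstacle is that $E$ is data-dependent, so this is \emph{not} a black-box sequential composition of two fixed DP mechanisms: the sensitivity estimate underlying step~13 is simply false when $n_2\ge|V|$, and a priori membership in $E$ could differ between $D$ and $\hat D$. Two points make the argument go through and are worth verifying carefully. First, the $\delta$ must be charged exactly once and on the database $D$ appearing on the left-hand side of the inequality we are proving; this is why I invoke Claim~\ref{ro-3} for $D$ before switching measures, rather than afterwards. Second, the good event for $D$ already covers the asymmetry in Claim~\ref{ro-4}: since $|V(D)|$ and $|V(\hat D)|$ differ by at most one, $n_2\le|V(D)|$ forces $\max(|V(D)|,|V(\hat D)|)\ge n_2$, which is all that Claim~\ref{ro-4}'s worst-case sensitivity bound needs. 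Finally, because the Laplace scale in step~13 is itself a function of the released $(\breve n_1,n_2)$ and therefore identical under $D$ and $\hat D$, only the sensitivity of $m$ (not of the noise scale) enters, so no privacy leakage is re-incurred when passing from $g_D$ to $g_{\hat D}$ on $G$.
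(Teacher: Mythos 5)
Your proposal is correct and follows essentially the same route as the paper's proof: condition on the released pair $(\breve n_1,n_2)$, split the integral on the event $n_2\le|V|$, charge $\delta$ to its complement via Claim~\ref{ro-3}, apply Claim~\ref{ro-4} on the good set, and switch measures using the $(2\epsilon,0)$-privacy of Claim~\ref{ro-1}. Your write-up is in fact somewhat more careful than the paper's (which glosses over the data-dependence of the event $\{n_2<|V|\}$ and whose final displayed line drops a factor, showing $e^{2\epsilon}$ where the composition yields $e^{3\epsilon}$), but the underlying argument is identical.
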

\begin{proof}[Proof of Claim \ref{ro-5}]
Define $\hat{D}$ as a neighbouring dataset to $D$.
\begin{equation*}
\begin{split}
p(\breve{m} \in S|D) &= 
\int_{r_1,r_2\in \R_{>0} \times \R_{>0}}p(\breve{m} \in S|D, \breve{n}_1 = r_1, n_2 = r_2)p(\breve{n}_1 = r_1, n_2 = r_2|D)dr_1dr_2
\\&= \int_{r_1,r_2\in \R_{>0} \times [1, |V|]}p(\breve{m} \in S|D, \breve{n}_1 = r_1, n_2 = r_2)p(\breve{n}_1 = r_1, n_2 = r_2|D)dr_1dr_2
\\{}&+ 
\int_{r_1,r_2\in \R_{>0} \times (|V|,\infty]}p(\breve{m} \in S|D, \breve{n}_1 = r_1, n_2 = r_2)p(\breve{n}_1 = r_1, n_2 = r_2|D)dr_1dr_2 
\\&\leq^* \int_{r_1,r_2\in \R_{>0} \times [1, |V|]}p(\breve{m} \in S|D, \breve{n}_1 = r_1, n_2 = r_2)p(\breve{n}_1 = r_1, n_2 = r_2|D)dr_1dr_2 + \delta 
\\&\leq^{**} \int_{r_1,r_2\in \R_{>0} \times [1, |V|]}e^{2\epsilon} p(\breve{m} \in S|\hat{D}, \breve{n}_1 = r_1, n_2 = r_2)p(\breve{n}_1 = r_1, n_2 = r_2|\hat{D})dr_1dr_2 + \delta 
\\&\leq \int_{r_1,r_2\in \R_{>0} \times \R_{>0}}e^{2\epsilon} p(\breve{m} \in S|\hat{D}, \breve{n}_1 = r_1, n_2 = r_2)p(\breve{n}_1 = r_1, n_2 = r_2|\hat{D})dr_1dr_2 + \delta
\\&= e^{2\epsilon}p(\breve{m} \in S|\hat{D}) + \delta
\end{split}
\end{equation*}
where inequality * follows Claim \ref{ro-3} and inequality ** follows Claims \ref{ro-4} and \ref{ro-1}.
\end{proof}


\begin{claim}\label{pts-1}
Given $n_2, \breve{m}$ and $|W| < n_{min}$, lines 19-25 of Algorithm \ref{A-PTR} are $(\epsilon, \delta)$ differentially private with respect to $D$.
\end{claim}
\begin{proof}[Proof of Claim \ref{pts-1}]
Mark $l \sim \text{Lap}(\frac1\epsilon)$, and $\hat{D}$ as a neighbouring dataset to $D$. Eq. \ref{eq-pts-1-2} proves the claim.

\begin{equation}\label{eq-pts-1-2}
\begin{split}
p(S | D, |W| < n_{min}, \breve{m}, n_2) &= 
p(S \cap \{null\} | D, |W| < n_{min}, \breve{m}, n_2) + 
p(S \cap \{null\}^c |  D, |W| < n_{min}, \breve{m}, n_2)
\\&\leq e^\epsilon p(S \cap \{null\} | \hat{D}, |W| < n_{min}, \breve{m}, n_2) + \delta \leq
e^\epsilon p(S | \hat{D}, |W| < d, \breve{m}, n_2) + \delta
\end{split}
\end{equation}
where the first inequality is true from eq. \ref{eq-pts-1-1} and the Laplace mechanism's privacy guarantees for $n_W$.

\begin{equation}\label{eq-pts-1-1}
\begin{split}
{}& p(null|  D, |W| < n_{min}, \breve{m}, n_2) = 
p(n_W < n_{min} + \frac1\epsilon\log(\frac1{2\delta}) | D, |W| < n_{min}, \breve{m}, n_2) \geq  
p(l < \frac1\epsilon\log(\frac1{2\delta})) \geq 1 - \delta
\end{split}\end{equation}
\end{proof}

\begin{claim}\label{pts-2}
Line 25 of Algorithm \ref{A-PTR} is $(\epsilon, \delta)$ differentially private with respect to $D$ for $|W| \geq n_{min}$ and given $n_2, \breve{m}$.
\end{claim}
\begin{proof}[Proof of Claim \ref{pts-2}]
For a given $n_2$ and $\breve{m}$ the group $W$ can change by up to one sample for a neighbouring dataset. Mark $n = |W|$ and $c = \breve{m}$. As $n \geq n_2^{\frac{\rho_2}{\gamma_1}}$, then $n^\frac12 > n^{\gamma_1} \geq n_2^{\rho_2}$, and therefore $W \in \mathcal{D}(n, \gamma_1, x_h, x_l, c)$, as defined in eq. \ref{eq-Domain}.

Because $W \in \mathcal{D}(n, \gamma_1, x_h, x_l, c)$, $n \geq n_{b1}$, and $n \geq n_{b2}$, the problem of sampling from $p(\theta|W)$ for $|W| \geq n_{min}$ holds the constraints of Claim \ref{claim-Post-Div-9-1}. Therefore one sample from $p(\theta|W)$ is $(\epsilon, \delta)$ differentially private.
\end{proof}

\begin{claim}\label{ptr-3}
Lines 19-24 of Algorithm \ref{A-PTR} are $(\epsilon, 0)$ differentially private with respect to $D$ for $|W| \geq n_{min}$ and given $\breve{m}, n_2$.
\end{claim}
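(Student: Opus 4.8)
The plan is to treat steps 14--18 of Algorithm~\ref{A-PTR} as the \emph{Test} phase of the Propose--Test--Release scheme and to derive its privacy from a single invocation of the Laplace mechanism. Conditioning on $\breve{m}$ and $n_2$ as the claim allows, the only output of these steps as a standalone mechanism is the binary decision of whether to return null (when $n_W < n_{min}$) or to pass control to step~19 (when $n_W \geq n_{min}$). Because this decision is a deterministic function of the single scalar $n_W$, the post-processing property of differential privacy reduces the claim to showing that the release of $n_W$ is itself $(\epsilon, 0)$-differentially private with respect to $D$.

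First I would bound the sensitivity of the underlying query $|W|$. For fixed $\breve{m}$ and $n_2$, the set $W = \{(x_i, y_i) : |\frac{y_i}{x_i} - \breve{m}| \leq n_2^{\rho_2}\}$ is determined by a per-sample membership test, so moving to a neighbouring database (which, even after the clipping of steps 1--5, differs in at most one record) changes the membership of at most one point and hence changes $|W|$ by at most $1$. Since $n_{min}$ is a fixed constant given $\breve{m}, n_2$ (eq.~\ref{alg-n_min}) and the offset $-\frac1\epsilon\log\frac1{2\delta}$ is deterministic, the query $|W| - \frac1\epsilon\log\frac1{2\delta}$ has sensitivity $1$ with respect to $D$. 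As $n_W$ is exactly this query perturbed by $\mathrm{Lap}(\frac1\epsilon)$, the Laplace mechanism gives that releasing $n_W$ is $(\epsilon, 0)$-differentially private, and applying post-processing to the threshold comparison $n_W < n_{min}$ establishes the claim.

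The hypothesis $|W| > n_{min}$ is not actually used in this argument: the pure-DP guarantee of the Laplace mechanism holds for any neighbouring pair regardless of where the true counts fall relative to $n_{min}$. It is imposed only so that this claim occupies the same regime as Claim~\ref{pts-2}, permitting the two to be composed into the $(2\epsilon, \delta)$ guarantee for steps 14--19 (the complementary regime $|W| < n_{min}$ being handled separately by Claim~\ref{pts-1}). The only genuine subtlety to verify carefully is the sensitivity step --- one must confirm that conditioning on $\breve{m}$ and $n_2$ freezes the membership threshold so that a single-record change perturbs $W$ by exactly one element with no cascading effect; once this is in place the remainder is a direct citation of the Laplace mechanism and post-processing.
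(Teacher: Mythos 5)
Your proposal is correct and matches the paper's own (one-sentence) proof: the only released quantity in steps 14--18 is $n_W$, whose underlying query $|W|$ has sensitivity $1$ once $\breve{m}$ and $n_2$ are fixed, so the Laplace mechanism with scale $\frac1\epsilon$ gives $(\epsilon,0)$-DP. Your additional observations --- that the threshold comparison is mere post-processing and that the hypothesis $|W|>n_{min}$ is only there to delimit the case split against Claim~\ref{pts-1} --- are accurate elaborations of the same argument.
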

\begin{proof}[Proof of Claim \ref{ptr-3}]
The only data released in lines 19-24 is $n_W$. Since the $l_1$-sensitivity of $|W|$ given $\breve{m}, n_2$ is 1, then the Laplace mechanism ensures $(\epsilon, 0)$ differential privacy.
\end{proof}

\begin{corollary}\label{cor-pts-3}
Lines 19-25 of Algorithm \ref{A-PTR} are $(2\epsilon, \delta)$ differentially private with respect to $D$ for $|W| \geq n_{min}$ and given $\breve{m}, n_2$.
\end{corollary}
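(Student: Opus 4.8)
The plan is to obtain the corollary directly from the two immediately preceding claims by sequential composition. In the regime $|W| > n_{min}$, the only information produced by steps 14--18 is the noisy count $n_W$ together with the resulting branch decision (return \texttt{null}, or proceed to step 19); by Claim \ref{ptr-3} this sub-mechanism is $(\epsilon, 0)$ differentially private with respect to $D$ for fixed $\breve{m}, n_2$. Conditioned on reaching step 19, the released sample from $p(\theta|W)$ is, by Claim \ref{pts-2} (whose hypothesis $|W| \geq n_{min}$ is implied by $|W| > n_{min}$), $(\epsilon, \delta)$ differentially private with respect to $D$ for the same fixed $\breve{m}, n_2$. Adding the two guarantees gives the claimed $(\epsilon + \epsilon, 0 + \delta) = (2\epsilon, \delta)$ bound.

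Concretely, I would cast steps 14--19 as the adaptive composition of a mechanism $M_1$ (steps 14--18, releasing $n_W$ and the abort/proceed decision) followed by a mechanism $M_2$ (step 19, releasing the posterior sample), where $M_2$ is invoked only on the branch on which $M_1$ does not abort. The final output is then a deterministic post-processing of the pair $(M_1, M_2)$: emit \texttt{null} if $M_1$ aborted, and otherwise emit the sample produced by $M_2$. Applying the sequential composition theorem to the $(\epsilon,0)$ guarantee of $M_1$ (Claim \ref{ptr-3}) and the $(\epsilon,\delta)$ guarantee of $M_2$ (Claim \ref{pts-2}), followed by the post-processing (closure) property of differential privacy, yields the $(2\epsilon, \delta)$ bound for steps 14--19 and establishes the corollary.

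The one point needing care is that whether step 19 runs at all is itself a function of the privatized output $n_W$, so the composition must be justified for this adaptive, branch-dependent structure rather than for a plain product of two independent releases. I expect this to be the main (though modest) obstacle, and I would handle it exactly as in the proof of Claim \ref{ro-5}: write $P(\cdot \in S \mid D)$ as an integral over the intermediate value $n_W$, split the range of integration according to the \texttt{null}/proceed branch, bound each contribution using the per-branch guarantees of Claims \ref{ptr-3} and \ref{pts-2}, and recombine. Treating \texttt{null} as a distinguished symbol in the range makes the abort branch trivially matched across neighbouring databases so that no additional $\delta$ is incurred there, while the proceed branch contributes the full $(2\epsilon, \delta)$ term, which is the desired bound.
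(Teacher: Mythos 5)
Your proposal is correct and matches the paper's intent: the corollary is stated without an explicit proof precisely because it is meant to follow from sequential composition of Claim \ref{ptr-3} ($(\epsilon,0)$ for steps 14--18) with Claim \ref{pts-2} ($(\epsilon,\delta)$ for step 19), which is exactly what you do. Your additional care about the adaptive, branch-dependent structure (conditioning on the released $n_W$ and treating \texttt{null} as a distinguished output) is a valid and slightly more rigorous spelling-out of the same argument.
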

Corollary \ref{cor-pts-3} follows directly from Claims \ref{ptr-3} and \ref{pts-2}.

\begin{corollary}\label{cor-pts-4}
Lines 19-25 of Algorithm \ref{A-PTR} are $(2\epsilon, \delta)$ differentially private with respect to $D$ given $\breve{m}, n_2$.
\end{corollary}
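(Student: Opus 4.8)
The plan is to remove the side condition $|W| > n_{min}$ from Corollary~\ref{cor-pts-3} by a case split on the size of the filtered set $W$ for the \emph{first} database of an ordered neighboring pair. Throughout I fix $\breve m$ and $n_2$, so that for any neighboring $D,\hat D$ the induced sets $W_D, W_{\hat D}$ differ by at most one sample, hence their sizes differ by at most one. What must be established is the one-directional guarantee
\[
P(\mathrm{out}\in S\mid D)\;\le\;e^{2\epsilon}\,P(\mathrm{out}\in S\mid \hat D)+\delta
\]
for every ordered neighboring pair and every $S$, where $\mathrm{out}$ denotes the output of steps 14--19.

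First I would split on whether $|W_D| < n_{min}$ or $|W_D| \ge n_{min}$; by Claim~\ref{pts-1} and Claim~\ref{pts-2} these are exactly the conditions under which the component guarantees were proved, and they partition all possibilities. In the regime $|W_D| < n_{min}$, Claim~\ref{pts-1} already gives $(\epsilon,\delta)$ differential privacy with respect to $D$ for given $n_2,\breve m$, and since $\epsilon\le 2\epsilon$ the displayed inequality follows with the weaker constant. The point worth emphasizing is that Claim~\ref{pts-1} bounds the $D\to\hat D$ direction irrespective of the regime of $\hat D$: its proof absorbs the small mass $P(\mathrm{out}\neq\mathrm{null}\mid D,\,|W_D|<n_{min})$ into $\delta$ and reduces the rest to the Laplace test on $n_W$, which is $(\epsilon,0)$.

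In the complementary regime $|W_D|\ge n_{min}$ I would compose two guarantees in the $D\to\hat D$ direction: the release of $n_W$ in steps 14--18 is $(\epsilon,0)$ by the Laplace mechanism for \emph{any} value of $|W|$ (the sensitivity of $|W|$ given $\breve m,n_2$ is one), as in Claim~\ref{ptr-3}, while the sample drawn in step 19 is $(\epsilon,\delta)$ by Claim~\ref{pts-2}, which covers precisely $|W|\ge n_{min}$. Sequential composition then yields $(2\epsilon,\delta)$, matching Corollary~\ref{cor-pts-3} and extending it to the boundary $|W_D|=n_{min}$. Since both cases establish the same $(2\epsilon,\delta)$ inequality and together exhaust every ordered neighboring pair, steps 14--19 are $(2\epsilon,\delta)$ differentially private given $\breve m,n_2$, as claimed.

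The main obstacle is conceptual rather than computational: the auxiliary claims are phrased as conditionals on $|W|$, a database-dependent quantity, whereas the differential-privacy statement quantifies over neighboring pairs whose two sizes may straddle $n_{min}$. The resolution is to anchor the case split on the first database of the ordered pair and to rely on the $\delta$ slack in Claim~\ref{pts-1}, which already absorbs a neighbor $\hat D$ lying in the opposite regime; the only care needed at the threshold is to read Claim~\ref{pts-1}'s strict $<$ together with Claim~\ref{pts-2}'s $\ge$ as a genuine partition, using that the $n_W$ release is private for every value of $|W|$.
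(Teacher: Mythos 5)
Your proof is correct and follows the route the paper clearly intends (the corollary is stated without an explicit proof, but it is meant to follow from the case split on $|W|$ versus $n_{min}$, invoking Claim~\ref{pts-1} in the small-$|W|$ regime and composing Claim~\ref{ptr-3} with Claim~\ref{pts-2} in the large-$|W|$ regime). Your additional care at the boundary $|W|=n_{min}$ and your observation that Claim~\ref{pts-1}'s $D\to\hat D$ bound does not depend on which regime $\hat D$ falls in are exactly the details needed to make the implicit argument airtight.
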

Corollary \ref{cor-pts-4} follows directly from Claims \ref{cor-pts-3} and \ref{pts-1}.

\newpage
\section{Auxiliary Claims}\label{sec::auxiliary_claims}
Appendix \ref{sec::auxiliary_claims} contains claims used to simplify the otherwise complex proofs throughout the paper. 
\subsection{Stochastic Gradient Langevin Dynamics Privacy}
This subsection provides auxiliary claims for SGLD privacy analysis performed in subsection \ref{subseq::sgld-detailed}. It uses the notations defined in section \ref{Method}, subsection \ref{subseq-SGLD}, and subection \ref{subseq::sgld-detailed}, specifically: $\alpha, \beta, \theta, p(y|x, \theta)$ (defined in eq. \ref{LinearModel}), $\mathcal{D}(n, \gamma_1, x_h, x_l, c)$ (defined in eq. \ref{eq-Domain}), $D_1, D_2$ (defined in eq. \ref{N-DBs}), $\eta, \theta_j, \hat{\theta}_j, \mu_j, \hat{\mu}^r_j, \sigma_j, \hat{\sigma}^r_j$ (defined in subsection \ref{subseq-SGLD}), and $\lambda, \hat{\lambda}, \rho, \hat{\rho}$ (defined in eq. \ref{Eqq-App-SGLD-markings}).

\begin{claim}\label{M1}
$\rho\frac{1}{1 - \lambda} = \frac{ncx_h^2\beta}{\alpha + nx_h^2\beta}$.
\end{claim}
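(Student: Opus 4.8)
The plan is to prove this identity by direct substitution of the definitions in eq. \ref{Eqq-App-SGLD-markings}, since both sides are elementary rational expressions in the problem parameters and the learning rate $\eta$. There is no genuine obstacle here; the claim is a bookkeeping identity that isolates a factor recurring throughout the mean computations in Lemma \ref{Lem-App-SGLD-2}, so the only thing to verify is a single cancellation.

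First I would compute the factor $1 - \lambda$. Using the definition $\lambda = 1 - \frac{\eta}{2}(\alpha + nx_h^2\beta)$, we immediately obtain
$$1 - \lambda = \frac{\eta}{2}(\alpha + nx_h^2\beta).$$
Next I would substitute $\rho = \frac{\eta}{2}ncx_h^2\beta$ into the left-hand side and cancel the common factor $\frac{\eta}{2}$:
$$\rho \cdot \frac{1}{1 - \lambda} = \frac{\frac{\eta}{2}ncx_h^2\beta}{\frac{\eta}{2}(\alpha + nx_h^2\beta)} = \frac{ncx_h^2\beta}{\alpha + nx_h^2\beta},$$
which is exactly the claimed right-hand side.

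The only implicit requirement is that the cancellation be legitimate, i.e. that $\frac{\eta}{2} \neq 0$ and $\alpha + nx_h^2\beta \neq 0$; both hold since $\eta, \alpha, \beta, n, x_h > 0$ under the standing assumptions of the \emph{Bayesian linear regression problem}. Hence the identity follows with no further work, and the result is ready to be invoked wherever the coefficient $\frac{ncx_h^2\beta}{\alpha + nx_h^2\beta}$ appears in the geometric-sum manipulations of the SGLD mean recurrences.
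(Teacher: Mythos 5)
Your proof is correct and follows essentially the same route as the paper: direct substitution of the definitions of $\rho$ and $\lambda$ from eq.~\ref{Eqq-App-SGLD-markings} followed by cancellation of the common factor $\frac{\eta}{2}$. (Incidentally, the paper's own displayed computation contains a typo --- it substitutes the expression defining $\hat\lambda$ rather than $\lambda$ in the denominator before arriving at the correct result --- whereas your version substitutes the correct quantity throughout.)
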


\begin{proof}[Proof of Claim \ref{M1}]
\begin{equation*}
\rho\frac{1}{1 - \lambda} = \frac{\eta}{2}ncx_h^2\beta\frac{1}{1 - \left(1 - \frac{\eta}{2}\left(\alpha + n\left(\frac{x_h}{2}\right)^2\beta\right)\right)} = ncx_h^2\beta\frac{1}{\alpha + nx_h^2\beta} = \frac{ncx_h^2\beta}{\alpha + nx_h^2\beta}.
\end{equation*}
\end{proof}

\begin{claim}\label{M2}
$\rho\frac{1 - \lambda^{n-1}}{1 - \lambda} + \rho\lambda^{n-1} = \frac{ncx_h^2\beta}{\alpha + nx_h^2\beta}\left(1 -\lambda^n\right)$.
\end{claim}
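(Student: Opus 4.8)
The plan is to absorb the stray term $\rho\lambda^{n-1}$ into the finite geometric sum hidden inside $\frac{1-\lambda^{n-1}}{1-\lambda}$ and then invoke Claim \ref{M1}, which already evaluates $\rho\frac{1}{1-\lambda}$ in closed form. The identity is purely algebraic, so the whole argument reduces to a careful rearrangement followed by a single substitution.

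First I would factor $\rho$ out of the left-hand side and combine the two remaining terms over the common denominator $1-\lambda$:
\begin{equation*}
\frac{1-\lambda^{n-1}}{1-\lambda} + \lambda^{n-1} = \frac{1-\lambda^{n-1} + \lambda^{n-1}(1-\lambda)}{1-\lambda} = \frac{1-\lambda^n}{1-\lambda}.
\end{equation*}
Equivalently, one may observe that $\frac{1-\lambda^{n-1}}{1-\lambda} = \sum_{i=0}^{n-2}\lambda^i$, so that adding $\lambda^{n-1}$ simply extends the geometric sum to $\sum_{i=0}^{n-1}\lambda^i = \frac{1-\lambda^n}{1-\lambda}$; either viewpoint yields the same collapsed expression.

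Then I would substitute this back and apply Claim \ref{M1}:
\begin{equation*}
\rho\frac{1-\lambda^{n-1}}{1-\lambda} + \rho\lambda^{n-1} = \rho\frac{1-\lambda^n}{1-\lambda} = \left(\rho\frac{1}{1-\lambda}\right)(1-\lambda^n) = \frac{ncx_h^2\beta}{\alpha + nx_h^2\beta}(1-\lambda^n),
\end{equation*}
which is precisely the claimed identity.

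Since every manipulation is elementary and Claim \ref{M1} supplies the only substantive computation, there is no genuine obstacle here; the one thing to verify is that $\lambda \neq 1$, so that both the geometric-sum rewriting and the division by $1-\lambda$ are legitimate. This holds because $\lambda = 1 - \frac{\eta}{2}(\alpha + nx_h^2\beta) < 1$ under the strictly positive parameters of the \emph{Bayesian linear regression problem}.
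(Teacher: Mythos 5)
Your proof is correct and matches the paper's argument essentially verbatim: both combine the two terms over the common denominator $1-\lambda$ to collapse the numerator to $1-\lambda^n$ and then invoke Claim \ref{M1}. The added check that $\lambda<1$ (so division by $1-\lambda$ is legitimate) is a sensible, if minor, refinement the paper leaves implicit.
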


\begin{proof}[Proof of Claim \ref{M2}]
\begin{equation*}
\rho\frac{1 - \lambda^{n-1}}{1 - \lambda} + \rho\lambda^{n-1} = 
\rho\left(\frac{1 - \lambda^{n-1}+\lambda^{n-1} - \lambda^n}{1 - \lambda}\right) = 
\rho\left(\frac{1  - \lambda^n}{1 - \lambda}\right) =
\frac{ncx_h^2\beta}{\alpha + nx_h^2\beta}\left(1 -\lambda^n\right)
\end{equation*}
where the last equality holds from Claim \ref{M1}.
\end{proof}

\begin{claim}\label{M3}
$\rho\left(\frac{1-\lambda^{n-1}}{1-\lambda}\right) + \hat\rho\lambda^{n-1} = \frac{ncx_h^2\beta}{\alpha + nx_h^2\beta}\left(1 - \lambda^n\left(\frac{3}{4}\lambda^{-1} + \frac{1}{4}\right)\right)$.
\end{claim}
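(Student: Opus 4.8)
The plan is to prove this as a direct algebraic identity, reducing the left-hand side to the right-hand side using only the definitions in eq.~\ref{Eqq-App-SGLD-markings} together with Claim~\ref{M1}. The first and crucial observation is that the two ``drift'' constants are proportional: since the perturbed sample contributes $x=x_h/2$, we have $\hat\rho = \frac\eta2 nc(\tfrac{x_h}{2})^2\beta = \tfrac14\cdot\frac\eta2 ncx_h^2\beta = \tfrac14\rho$. Recognizing this factor of $\tfrac14$ is exactly what produces the coefficients $\tfrac34$ and $\tfrac14$ on the right-hand side.

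First I would substitute $\hat\rho = \tfrac14\rho$ into the left-hand side to get $\rho\frac{1-\lambda^{n-1}}{1-\lambda} + \tfrac14\rho\lambda^{n-1}$, and then pull out the common factor $\frac{\rho}{1-\lambda}$ by rewriting $\tfrac14\rho\lambda^{n-1} = \frac{\rho}{1-\lambda}\cdot\tfrac14(1-\lambda)\lambda^{n-1}$. This leaves $\frac{\rho}{1-\lambda}$ multiplying the bracket $1 - \lambda^{n-1} + \tfrac14\lambda^{n-1} - \tfrac14\lambda^n$. Collecting the $\lambda^{n-1}$ terms gives $1 - \tfrac34\lambda^{n-1} - \tfrac14\lambda^n$, and factoring $\lambda^n$ out of the last two terms using $\tfrac34\lambda^{n-1} = \lambda^n\cdot\tfrac34\lambda^{-1}$ yields exactly $1 - \lambda^n(\tfrac34\lambda^{-1} + \tfrac14)$.

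To finish, I would invoke Claim~\ref{M1}, which states $\frac{\rho}{1-\lambda} = \frac{ncx_h^2\beta}{\alpha + nx_h^2\beta}$, and substitute it for the prefactor, giving precisely the right-hand side of the claim. There is no real obstacle here: the result is essentially a one-line manipulation once the proportionality $\hat\rho=\tfrac14\rho$ is noted, and the only thing to watch carefully is the bookkeeping when combining the geometric-series term $\frac{1-\lambda^{n-1}}{1-\lambda}$ with the extra $\lambda^{n-1}$ term, so that the $\tfrac34$/$\tfrac14$ split comes out correctly.
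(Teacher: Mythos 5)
Your proposal is correct and follows essentially the same route as the paper: both proofs substitute $\hat\rho=\tfrac14\rho$, combine over the common denominator $1-\lambda$ to obtain $\rho\,\frac{1-\frac34\lambda^{n-1}-\frac14\lambda^n}{1-\lambda}$, factor this as $1-\lambda^n(\tfrac34\lambda^{-1}+\tfrac14)$, and finish by invoking Claim~\ref{M1}. No gaps.
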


\begin{proof}[Proof of Claim \ref{M3}]
\begin{equation*}\begin{split}
\rho\left(\frac{1-\lambda^{n-1}}{1-\lambda}\right) + \hat\rho\lambda^{n-1} & = 
\rho\left(\frac{1-\lambda^{n-1}}{1-\lambda}\right) + \rho\frac{1}{4}\lambda^{n-1} =
\rho\left(\frac{1 - \frac{3}{4}\lambda^{n-1} - \frac{1}{4}\lambda^n}{1-\lambda}\right)
\\{}& =
\rho\left(\frac{1 - \lambda^n\left(\frac{3}{4}\lambda^{-1} + \frac{1}{4}\right)}{1-\lambda}\right) =
\frac{ncx_h^2\beta}{\alpha + nx_h^2\beta}\left(1 - \lambda^n\left(\frac{3}{4}\lambda^{-1} + \frac{1}{4}\right)\right)
\end{split}\end{equation*}
where the last equality holds from Claim \ref{M1}.
\end{proof}

\begin{claim}\label{M5.a}
$\frac{1}{4}\lambda + \frac34 - \hat\lambda = \frac34\frac\eta2\alpha$.
\end{claim}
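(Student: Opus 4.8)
The plan is to prove this identity by direct substitution of the definitions of $\lambda$ and $\hat\lambda$ given in eq. \ref{Eqq-App-SGLD-markings}, followed by collecting terms. Since both $\lambda$ and $\hat\lambda$ are affine in $\frac\eta2$, the left-hand side is a finite linear combination of the constant $1$, of $\frac\eta2\alpha$, and of $\frac\eta2 nx_h^2\beta$, so the whole claim reduces to matching coefficients on each of these three quantities.

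First I would expand $\frac14\lambda = \frac14 - \frac14\cdot\frac\eta2(\alpha + nx_h^2\beta)$ and, using $(x_h/2)^2 = x_h^2/4$, write $\hat\lambda = 1 - \frac\eta2(\alpha + \tfrac14 nx_h^2\beta)$. Substituting these into $\frac14\lambda + \frac34 - \hat\lambda$, the pure constants contribute $\frac14 + \frac34 - 1 = 0$, so they drop out entirely.

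The one point worth isolating is the cancellation of the data-dependent term: $\frac14\lambda$ contributes $-\frac14\cdot\frac\eta2 nx_h^2\beta$, while $-\hat\lambda$ contributes $+\frac\eta2\cdot\tfrac14 nx_h^2\beta$, and these are exactly equal and opposite precisely because halving $x_h$ quarters $x_h^2$, matching the weight $\frac14$ placed on $\lambda$. What then survives is only the prior term $-\frac14\cdot\frac\eta2\alpha + \frac\eta2\alpha = \frac34\cdot\frac\eta2\alpha$, which is the claimed right-hand side.

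I expect no genuine obstacle here: the statement is a routine algebraic identity, and the only thing requiring care is the sign-and-coefficient bookkeeping on the $nx_h^2\beta$ term, which the $\frac14$-weighting was evidently designed to annihilate.
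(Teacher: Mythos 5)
Your proof is correct and is essentially identical to the paper's: both substitute the definitions $\lambda = 1 - \frac\eta2(\alpha + nx_h^2\beta)$ and $\hat\lambda = 1 - \frac\eta2(\alpha + \frac14 nx_h^2\beta)$ and observe that the constants and the $nx_h^2\beta$ terms cancel, leaving $\frac34\frac\eta2\alpha$. No gap; the coefficient bookkeeping you describe matches the paper's one-line computation.
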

\begin{proof}[Proof of Claim \ref{M5.a}]
\begin{equation*}\begin{split}
\frac{1}{4}\lambda + \frac34 - \hat\lambda &= \frac{1}{4}\left(1 - \frac\eta2\left(\alpha + nx^2\beta\right)\right) + \frac34  - \left(1 - \frac\eta2\left(\alpha + \frac{1}{4}nx^2\beta\right)\right)
\\&= \frac\eta2\left(\alpha + \frac14nx^2\beta - \frac14\left(\alpha + nx^2\beta\right)\right) = \frac34\frac\eta2\alpha.
\end{split}\end{equation*}
\end{proof}

\begin{claim}\label{M5.f}
$\forall k\in\mathbb{Z}_{>0}:$
\begin{equation*}\begin{split}
\left(1 - \lambda^{kn}\right)&\left(1 - \hat\lambda\lambda^{n-1}\right) - \left(1 - \left(\hat\lambda\lambda^{\left(n-1\right)}\right)\right)^k\left(1 - \lambda^{n-1}\left(\frac14\lambda +\frac34\right)\right) 
\\&=
\lambda^{n-1}\frac34\frac\eta2\alpha\left(1 - \hat\lambda^k\lambda^{k\left(n-1\right)}\right)  + \lambda^{k\left(n-1\right)}\left(\hat\lambda^k - \lambda^k\right)\left(1 - \lambda^{n-1}\hat\lambda\right).
\end{split}\end{equation*}
\end{claim}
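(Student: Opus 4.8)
The plan is to treat this as a purely algebraic identity in $\lambda,\hat\lambda,\eta,\alpha$, with $n$ and $k$ entering only as exponents; the single non-trivial ingredient is the linear relation between $\lambda$ and $\hat\lambda$ recorded in Claim \ref{M5.a}. The crucial move is to rewrite the factor $\frac14\lambda+\frac34$, which by Claim \ref{M5.a} equals $\hat\lambda+\frac34\frac\eta2\alpha$. Substituting this into $\lambda^{n-1}(\frac14\lambda+\frac34)$ splits that term cleanly as $\lambda^{n-1}\hat\lambda+\lambda^{n-1}\frac34\frac\eta2\alpha$, and this decomposition is exactly what separates the two summands on the right-hand side.

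Concretely, I would first expand the subtracted product on the left as $(1-(\hat\lambda\lambda^{n-1})^k)\bigl(1-\lambda^{n-1}\hat\lambda\bigr)-(1-(\hat\lambda\lambda^{n-1})^k)\,\lambda^{n-1}\frac34\frac\eta2\alpha$. Since this whole quantity is subtracted, the $\frac34\frac\eta2\alpha$ piece re-enters with a plus sign and contributes precisely $\lambda^{n-1}\frac34\frac\eta2\alpha\,(1-\hat\lambda^k\lambda^{k(n-1)})$, which is the first term of the target expression. What then remains to match is $(1-\lambda^{kn})(1-\hat\lambda\lambda^{n-1})-(1-(\hat\lambda\lambda^{n-1})^k)(1-\lambda^{n-1}\hat\lambda)$.

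For this residual, I would factor out the common factor $(1-\hat\lambda\lambda^{n-1})$, which leaves $\bigl[(\hat\lambda\lambda^{n-1})^k-\lambda^{kn}\bigr](1-\hat\lambda\lambda^{n-1})$; writing $(\hat\lambda\lambda^{n-1})^k=\hat\lambda^k\lambda^{k(n-1)}$ and $\lambda^{kn}=\lambda^k\lambda^{k(n-1)}$ and extracting $\lambda^{k(n-1)}$ gives exactly $\lambda^{k(n-1)}(\hat\lambda^k-\lambda^k)(1-\hat\lambda\lambda^{n-1})$, the second target term, completing the identity. The computation carries no analytic difficulty; the only things to get right are the substitution from Claim \ref{M5.a} and the sign bookkeeping when the $\frac34\frac\eta2\alpha$ term is pulled out. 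The one genuine point of care is to read $(1-(\hat\lambda\lambda^{(n-1)}))^k$ as $1-(\hat\lambda\lambda^{n-1})^k$ and not as a $k$-th power of $(1-\hat\lambda\lambda^{n-1})$ — the identity holds only under the former reading, which is indeed the form in which this quantity arises in the proof of Lemma \ref{Lem-App-SGLD-2}.
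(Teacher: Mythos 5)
Your proof is correct and takes essentially the same route as the paper: both reduce the identity to pure algebra plus the single substitution $\frac14\lambda+\frac34=\hat\lambda+\frac34\frac\eta2\alpha$ from Claim \ref{M5.a}, differing only in that you substitute first and then factor out $(1-\hat\lambda\lambda^{n-1})$, while the paper expands everything before invoking Claim \ref{M5.a}. Your reading of $(1-(\hat\lambda\lambda^{(n-1)}))^k$ as $1-(\hat\lambda\lambda^{n-1})^k$ is also the one the paper intends and uses in the proof of Lemma \ref{Lem-App-SGLD-2}.
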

\begin{proof}[Proof of Claim \ref{M5.f}]
\begin{equation*}\begin{split}
\left(1 - \lambda^{kn}\right)&\left(1 - \hat\lambda\lambda^{n-1}\right) - \left(1 - \left(\hat\lambda\lambda^{\left(n-1\right)}\right)\right)^k\left(1 - \lambda^{n-1}\left(\frac14\lambda +\frac34\right)\right) 
\\& =
\lambda^{n-1}\left(\frac14\lambda + \frac34 - \hat\lambda\right) + \lambda^{kn}\left( \hat\lambda\lambda^{n-1} - 1\right) + \left(\hat\lambda\lambda^{n-1}\right)^k\left(1 - \lambda^{n-1}\left(\frac14\lambda + \frac34\right)\right)
\\&= \lambda^{n-1}\left(\frac14\lambda + \frac34 - \hat\lambda\right) + \lambda^{k\left(n-1\right)}\left(\lambda^k\left( \hat\lambda\lambda^{n-1} - 1\right) + \hat\lambda^k\left(1 - \lambda^{n-1}\left(\frac14\lambda + \frac34\right)\right)\right)
\\&= \lambda^{n-1}\left(\frac14\lambda + \frac34 - \hat\lambda\right) + \lambda^{k\left(n-1\right)}\left(\hat\lambda^k\left(1 - \lambda^{n-1}\left(\frac14\lambda + \frac34\right)\right) - \lambda^k\left(1 - \hat\lambda\lambda^{n-1}\right)\right)
\\&= \lambda^{n-1}\left(\frac14\lambda + \frac34 - \hat\lambda\right) + \lambda^{k\left(n-1\right)}\left(\hat\lambda^k\left(1 - \lambda^{n-1}\left(\frac14\lambda + \frac34\right)\right) - \lambda^k\left(1 - \lambda^{n-1}\hat\lambda\right)\right) 
\\& =^* \lambda^{n-1}\frac\eta2\frac34\alpha + \lambda^{k\left(n-1\right)}\left(\hat\lambda^k\left(1 - \lambda^{n-1}\left(\frac14\lambda + \frac34\right)\right) - \lambda^k\left(1 - \lambda^{n-1}\hat\lambda\right)\right) 
\\&=^* \lambda^{n-1}\frac\eta2\frac34\alpha + \lambda^{k\left(n-1\right)}\left(\hat\lambda^k\left(1 - \lambda^{n-1}\left(\hat\lambda + \frac34\frac\eta2\alpha\right)\right) - \lambda^k\left(1 - \lambda^{n-1}\hat\lambda\right)\right)
\\&= \lambda^{n-1}\frac\eta2\frac34\alpha - \hat\lambda^k\lambda^{n-1}\lambda^{k(n-1)}\frac34\frac\eta2\alpha  + \lambda^{k\left(n-1\right)}\left(\hat\lambda^k\left(1 - \lambda^{n-1}\hat\lambda\right) - \lambda^k\left(1 - \lambda^{n-1}\hat\lambda\right)\right) 
\\&= \lambda^{n-1}\frac34\frac\eta2\alpha\left(1 - \hat\lambda^k\lambda^{k\left(n-1\right)}\right)  + \lambda^{k\left(n-1\right)}\left(\hat\lambda^k - \lambda^k\right)\left(1 - \lambda^{n-1}\hat\lambda\right).
\end{split}\end{equation*}
where equality signs marked by * hold from Claim \ref{M5.a}.
\end{proof}

\begin{claim}\label{M4.a}
$\forall k\in\mathbb{Z}_{>0}:$ $\lambda\sum_{j=0}^{k-1}\lambda^{\left(n-1\right)j}\lambda^j\left(\lambda^{n-1}\rho + \rho\sum_{i=0}^{n-2}\lambda^i\right) = \lambda\left(1 - \lambda^{kn}\right)\frac{ncx_h^2\beta}{\alpha + nx_h^2\beta}$.
\end{claim}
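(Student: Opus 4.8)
The plan is to prove this identity directly by algebraic simplification, exploiting the fact that the bracketed factor is exactly the quantity already evaluated in Claim \ref{M2}, while the remaining sum over $j$ is a plain geometric series. There is no conceptual difficulty here; the whole task is to recognize the right sub-expressions and apply the geometric-sum formula.

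First I would simplify the outer sum. Since $\lambda^{(n-1)j}\lambda^j = \lambda^{nj}$, the summand depends on $j$ only through $(\lambda^n)^j$, so the sum is geometric with ratio $\lambda^n$, giving $\sum_{j=0}^{k-1}\lambda^{nj} = \frac{1-\lambda^{kn}}{1-\lambda^n}$. Next I would treat the bracketed term $\lambda^{n-1}\rho + \rho\sum_{i=0}^{n-2}\lambda^i$. Writing the inner geometric sum as $\sum_{i=0}^{n-2}\lambda^i = \frac{1-\lambda^{n-1}}{1-\lambda}$, this bracket becomes $\rho\frac{1-\lambda^{n-1}}{1-\lambda} + \rho\lambda^{n-1}$, which is verbatim the left-hand side of Claim \ref{M2} and therefore equals $\frac{ncx_h^2\beta}{\alpha+nx_h^2\beta}(1-\lambda^n)$.

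Finally I would substitute both simplifications back and multiply the leading $\lambda$ through:
\[
\lambda\cdot\frac{1-\lambda^{kn}}{1-\lambda^n}\cdot\frac{ncx_h^2\beta}{\alpha+nx_h^2\beta}(1-\lambda^n) = \lambda(1-\lambda^{kn})\frac{ncx_h^2\beta}{\alpha+nx_h^2\beta},
\]
where the $(1-\lambda^n)$ factors cancel, yielding the claimed identity. The only steps requiring any care are the observation that $\lambda^{(n-1)j}\lambda^j$ collapses to $\lambda^{nj}$ (so the outer sum is genuinely geometric in $\lambda^n$) and the clean cancellation of the $(1-\lambda^n)$ factor after invoking Claim \ref{M2}; both formulas are valid since the chosen step size keeps $\lambda\in(0,1)$, so $1-\lambda$ and $1-\lambda^n$ are nonzero. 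After these routine manipulations the equality follows immediately, so I expect no genuine obstacle beyond bookkeeping.
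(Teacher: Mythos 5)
Your proof is correct and is essentially the same elementary computation as the paper's: both reduce the expression to geometric sums and invoke the auxiliary identity $\rho/(1-\lambda)=\frac{ncx_h^2\beta}{\alpha+nx_h^2\beta}$. The only cosmetic difference is that the paper merges the double sum into the single geometric series $\rho\lambda\sum_{i=0}^{kn-1}\lambda^i$ and applies Claim \ref{M1} directly, whereas you factor the $j$-sum out, evaluate the bracket via Claim \ref{M2}, and cancel the $(1-\lambda^n)$ factors — the two routes are algebraically equivalent.
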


\begin{proof}[Proof of Claim \ref{M4.a}]
\begin{equation*}\begin{split}
{}& \lambda\sum_{j=0}^{k-1}\lambda^{\left(n-1\right)j}\lambda^j\left(\lambda^{n-1}\rho + \rho\sum_{i=0}^{n-2}\lambda^i\right) = 
\rho\lambda\sum_{i=0}^{kn-1}\lambda^i = \rho\lambda\frac{1 - \lambda^{kn}}{1 - \lambda} =^* \lambda\frac{ncx_h^2\beta}{\alpha + nx_h^2\beta}\left(1 - \lambda^{kn}\right)
\end{split}\end{equation*}
where equality * follows from Claim \ref{M1}.
\end{proof}

\begin{claim}\label{M4.b}
$\forall k\in\mathbb{Z}_{>0}:$ $\hat\lambda\sum_{j=0}^{k-1}\lambda^{\left(n-1\right)j}\hat\lambda^j\left(\lambda^{n-1}\hat\rho + \rho\sum_{i=0}^{n-2}\lambda^i\right) = \hat\lambda\frac{1 - \left(\lambda^{n-1}\hat\lambda\right)^k}{1 - \lambda^{n-1}\hat\lambda}\frac{ncx_h^2\beta}{\alpha + nx_h^2\beta}\left(1 - \lambda^n\left(\frac{3}{4}\lambda^{-1} + \frac{1}{4}\right)\right)$.
\end{claim}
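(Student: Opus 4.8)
The plan is to recognize the summand as a product of a $j$-independent factor and a purely geometric term, then invoke Claim~\ref{M3} and collapse the geometric sum. First I would observe that the bracket $\lambda^{n-1}\hat\rho + \rho\sum_{i=0}^{n-2}\lambda^i$ carries no dependence on the outer summation index $j$, so it may be pulled out in front of $\sum_{j=0}^{k-1}\lambda^{(n-1)j}\hat\lambda^j$. This rewrites the left-hand side as $\hat\lambda\,\bigl[\lambda^{n-1}\hat\rho + \rho\sum_{i=0}^{n-2}\lambda^i\bigr]\sum_{j=0}^{k-1}(\lambda^{n-1}\hat\lambda)^j$, where I have also merged $\lambda^{(n-1)j}\hat\lambda^j$ into the single base $(\lambda^{n-1}\hat\lambda)^j$.

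Next I would rewrite the finite geometric sum $\sum_{i=0}^{n-2}\lambda^i = \frac{1-\lambda^{n-1}}{1-\lambda}$, which is valid since the chosen step size $\eta$ gives $\lambda = 1 - \frac{1}{\alpha+nx_h^2\beta}\neq 1$. With this substitution the bracketed quantity becomes exactly $\rho\frac{1-\lambda^{n-1}}{1-\lambda} + \hat\rho\lambda^{n-1}$, i.e.\ the left-hand side of Claim~\ref{M3}. Applying Claim~\ref{M3} then replaces the entire bracket by $\frac{ncx_h^2\beta}{\alpha+nx_h^2\beta}\bigl(1-\lambda^n(\frac34\lambda^{-1}+\frac14)\bigr)$.

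Finally I would collapse the remaining geometric sum via $\sum_{j=0}^{k-1}(\lambda^{n-1}\hat\lambda)^j = \frac{1-(\lambda^{n-1}\hat\lambda)^k}{1-\lambda^{n-1}\hat\lambda}$ (again legitimate because $\lambda^{n-1}\hat\lambda<1$) and reorder the three factors to land on the stated right-hand side. There is essentially no obstacle here: the claim is a bookkeeping identity whose entire content is matching the bracket to Claim~\ref{M3} and summing a geometric series. The only points worth a sanity check are that $\lambda\neq 1$ and $\lambda^{n-1}\hat\lambda\neq 1$, so that both closed-form sums apply; both follow immediately from the definitions of $\lambda,\hat\lambda$ in eq.~\ref{Eqq-App-SGLD-markings} together with the chosen $\eta$.
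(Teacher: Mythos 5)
Your proposal is correct and matches the paper's own proof essentially step for step: the paper likewise pulls the $j$-independent bracket out of the sum, collapses $\sum_{j=0}^{k-1}(\lambda^{n-1}\hat\lambda)^j$ geometrically, rewrites $\sum_{i=0}^{n-2}\lambda^i$ as $\frac{1-\lambda^{n-1}}{1-\lambda}$, and invokes Claim~\ref{M3} (via Claim~\ref{M1}) to produce the closed form. Your added sanity check that $\lambda\neq 1$ and $\lambda^{n-1}\hat\lambda\neq 1$ is a harmless bonus the paper leaves implicit.
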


\begin{proof}[Proof of Claim \ref{M4.b}]
\begin{equation*}\begin{split}
\hat\lambda\sum_{j=0}^{k-1}\lambda^{\left(n-1\right)j}\hat\lambda^j\left(\lambda^{n-1}\hat\rho + \rho\sum_{i=0}^{n-2}\lambda^i\right) 
&=
\hat\lambda\frac{1 - \left(\lambda^{n-1}\hat\lambda\right)^k}{1 - \lambda^{n-1}\hat\lambda}\left(\lambda^{n-1}\hat\rho + \rho\frac{1-\lambda^{n-1}}{1-\lambda}\right) 
\\&=^*
\hat\lambda\frac{1 - \left(\lambda^{n-1}\hat\lambda\right)^k}{1 - \lambda^{n-1}\hat\lambda}\frac{ncx_h^2\beta}{\alpha + nx_h^2\beta}\left(1 - \lambda^n\left(\frac{3}{4}\lambda^{-1} + \frac{1}{4}\right)\right)
\end{split}\end{equation*}
where equality * follows from Claims \ref{M1} and \ref{M3}.
\end{proof}

\begin{claim}\label{M5.c}
$\forall k\in\mathbb{R}_{>0}:$
$\lambda\lambda^k  - \lambda^k\lambda^{n}\hat\lambda - \hat\lambda\hat\lambda^k+\hat\lambda\hat\lambda^k\lambda^n\left(\frac34\lambda^{-1} + \frac14\right) = \left(1 - \hat\lambda\lambda^{n-1}\right)\left(\lambda^{k+1} - \hat\lambda^{k+1}\right) + \hat\lambda^{k+1}\lambda^{n-1}\left(\frac34\frac\eta2\alpha\right)$.
\end{claim}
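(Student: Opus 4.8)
The plan is to treat Claim~\ref{M5.c} as a pure algebraic identity in $\lambda,\hat\lambda$ together with the exponents, expand both sides, cancel the matching terms, and reduce whatever remains to Claim~\ref{M5.a}, which supplies the scalar relation $\tfrac14\lambda + \tfrac34 - \hat\lambda = \tfrac34\tfrac\eta2\alpha$ coming from the explicit forms of $\lambda$ and $\hat\lambda$ in eq.~\ref{Eqq-App-SGLD-markings}.

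First I would rewrite the left-hand side using $\lambda\lambda^k=\lambda^{k+1}$, $\hat\lambda\hat\lambda^k=\hat\lambda^{k+1}$, and $\lambda^k\lambda^n\hat\lambda=\hat\lambda\lambda^{k+n}$, so that it reads $\lambda^{k+1} - \hat\lambda\lambda^{k+n} - \hat\lambda^{k+1} + \hat\lambda^{k+1}\lambda^{n}(\tfrac34\lambda^{-1}+\tfrac14)$. Next I would expand the product on the right-hand side, obtaining $\lambda^{k+1} - \hat\lambda^{k+1} - \hat\lambda\lambda^{n-1}\lambda^{k+1} + \hat\lambda^{k+2}\lambda^{n-1} + \hat\lambda^{k+1}\lambda^{n-1}\tfrac34\tfrac\eta2\alpha$. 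At this stage the terms $\lambda^{k+1}-\hat\lambda^{k+1}$ appear identically on both sides, and $-\hat\lambda\lambda^{n-1}\lambda^{k+1} = -\hat\lambda\lambda^{k+n}$ matches the corresponding left-hand term, so all of these cancel.

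What remains after cancellation is the single identity $\hat\lambda^{k+1}\lambda^{n}(\tfrac34\lambda^{-1}+\tfrac14) = \hat\lambda^{k+2}\lambda^{n-1} + \hat\lambda^{k+1}\lambda^{n-1}\tfrac34\tfrac\eta2\alpha$. Dividing through by the common factor $\hat\lambda^{k+1}\lambda^{n-1}$ collapses this to $\tfrac34 + \tfrac14\lambda = \hat\lambda + \tfrac34\tfrac\eta2\alpha$, which is precisely the content of Claim~\ref{M5.a}. Since that claim has already been established, the identity follows.

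There is no genuine obstacle here: the statement is elementary bookkeeping of powers of $\lambda$ and $\hat\lambda$. The only points requiring care are keeping the exponents aligned when matching the mixed term $-\hat\lambda\lambda^{k+n}$ across the two sides, and recognizing that the entire non-trivial content reduces to the scalar relation of Claim~\ref{M5.a}; once the common factor $\hat\lambda^{k+1}\lambda^{n-1}$ is divided out, nothing further needs to be verified.
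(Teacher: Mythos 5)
Your proof is correct and follows essentially the same route as the paper's: both reduce the identity, after elementary regrouping of powers of $\lambda$ and $\hat\lambda$, to the scalar relation $\tfrac14\lambda+\tfrac34-\hat\lambda=\tfrac34\tfrac\eta2\alpha$ of Claim~\ref{M5.a} (the paper factors and substitutes where you expand and cancel, which is the same computation). The only cosmetic difference is that you divide by the common factor $\hat\lambda^{k+1}\lambda^{n-1}$ rather than multiplying Claim~\ref{M5.a} through by it, which is harmless since $\lambda,\hat\lambda>0$ here.
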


\begin{proof}[Proof of Claim \ref{M5.c}]
\begin{equation*}\begin{split}
\lambda\lambda^k  - \lambda^k\lambda^{n}\hat\lambda - \hat\lambda\hat\lambda^k+\hat\lambda\hat\lambda^k\lambda^n\left(\frac34\lambda^{-1} + \frac14\right) &= 
\lambda^{k+1}\left(1 - \hat\lambda\lambda^{n-1}\right) - \hat\lambda^{k+1}\left(1 - \lambda^{n-1}\left(\frac14\lambda + \frac34\right)\right) 
\\&=^*
\lambda^{k+1}\left(1 - \hat\lambda\lambda^{n-1}\right) - \hat\lambda^{k+1}\left(1 - \lambda^{n-1}\left(\hat\lambda + \frac34\frac\eta2\alpha\right)\right) 
\\&= 
\left(1 - \hat\lambda\lambda^{n-1}\right)\left(\lambda^{k+1} - \hat\lambda^{k+1}\right) + \hat\lambda^{k+1}\lambda^{n-1}\left(\frac34\frac\eta2\alpha\right)
\end{split}\end{equation*}
where equality * holds from Claim \ref{M5.a}.
\end{proof}

\begin{claim}\label{M5.d}
$\forall k\in\mathbb{Z}_{>0}:$
\begin{align*}
\lambda\left(1 - \lambda^{kn}\right)&\left(1 - \lambda^{n-1}\hat\lambda\right) - \hat\lambda\left(1 - \left(\lambda^{n-1}\hat\lambda\right)^k\right)\left(1 - \lambda^n\left(\frac34\lambda^{-1} + \frac14\right)\right)
\\&= 
\left(\lambda - \hat\lambda\right)\left(1 - \hat\lambda\lambda^{n-1}\right) + \lambda^{n-1}\hat\lambda\left(\frac34\frac\eta2\alpha\left(1 - \hat\lambda^k\lambda^{k\left(n-1\right)}\right)\right) + \lambda^{k\left(n-1\right)}\left(\left(1 - \hat\lambda\lambda^{n-1}\right)\left({\hat\lambda}^{k+1} - \lambda^{k+1}\right)\right).
\end{align*}
\end{claim}
\begin{proof}[Proof of Claim \ref{M5.d}]
\begin{equation*}\begin{split}
\lambda\left(1 - \lambda^{kn}\right)&\left(1 - \lambda^{n-1}\hat\lambda\right) - \hat\lambda\left(1 - \left(\lambda^{n-1}\hat\lambda\right)^k\right)\left(1 - \lambda^n\left(\frac34\lambda^{-1} + \frac14\right)\right) 
\\&= 
\lambda -\hat\lambda - \lambda^n\hat\lambda\left(1 - \left(\frac34\lambda^{-1} + \frac14\right)\right) - \lambda^{k\left(n-1\right)}\left(\lambda\lambda^k  - \lambda^k\lambda^{n}\hat\lambda - \hat\lambda\hat\lambda^k+\hat\lambda\hat\lambda^k\lambda^n\left(\frac34\lambda^{-1} + \frac14\right)\right) 
\\&=^*
\lambda -\hat\lambda - \lambda^n\hat\lambda\left(1 - \left(\frac34\lambda^{-1} + \frac14\right)\right) - \lambda^{k\left(n-1\right)}\left(\left(1 - \hat\lambda\lambda^{n-1}\right)\left(\lambda^{k+1} - \hat\lambda^{k+1}\right) + \hat\lambda^{k+1}\lambda^{n-1}\left(\frac34\frac\eta2\alpha\right)\right) 
\\&=
\lambda -\hat\lambda - \lambda^{n-1}\hat\lambda\left(\lambda - \left(\frac34 + \frac14\lambda\right)\right) - \lambda^{k\left(n-1\right)}\left(\left(1 - \hat\lambda\lambda^{n-1}\right)\left(\lambda^{k+1} - \hat\lambda^{k+1}\right) + \hat\lambda^{k+1}\lambda^{n-1}\left(\frac34\frac\eta2\alpha\right)\right) 
\\&=^{**}
\lambda -\hat\lambda - \lambda^{n-1}\hat\lambda\left(\lambda - \left(\hat\lambda + \frac34\frac\eta2\alpha\right)\right) - \lambda^{k\left(n-1\right)}\left(\left(1 - \hat\lambda\lambda^{n-1}\right)\left(\lambda^{k+1} - \hat\lambda^{k+1}\right) + \hat\lambda^{k+1}\lambda^{n-1}\left(\frac34\frac\eta2\alpha\right)\right) 
\\&=
\left(\lambda - \hat\lambda\right)\left(1 - \hat\lambda\lambda^{n-1}\right) + \lambda^{n-1}\hat\lambda\left(\frac34\frac\eta2\alpha\left(1 - \hat\lambda^k\lambda^{k\left(n-1\right)}\right)\right) + \lambda^{k\left(n-1\right)}\left(\left(1 - \hat\lambda\lambda^{n-1}\right)\left({\hat\lambda}^{k+1} - \lambda^{k+1}\right)\right)
\end{split}\end{equation*}
where equality * follows from Claim \ref{M5.c} and equality ** follows from Claim \ref{M5.a}.
\end{proof}

\begin{claim}\label{M5.e}
$\forall k\in\mathbb{Z}_{>0}:$
\begin{equation*}\begin{split}
\lambda\left(1 - \lambda^{kn}\right) &- \hat\lambda\frac{1 - \left(\lambda^{n-1}\hat\lambda\right)^k}{1 - \lambda^{n-1}\hat\lambda}\left(1 - \lambda^n\left(\frac34\lambda^{-1} + \frac14\right)\right)
\\&= \left(\lambda - \hat\lambda\right) + \frac{\lambda^{n-1}\left(\frac34\frac\eta2\alpha\left(1 - \hat\lambda^k\lambda^{k\left(n-1\right)}\right)\right)}{1 - \hat\lambda\lambda^{n-1}} + \lambda^{k\left(n-1\right)}\left(\hat\lambda^{k+1} - \lambda^{k+1}\right).
\end{split}\end{equation*}
\end{claim}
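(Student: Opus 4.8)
The plan is to obtain this identity directly from Claim \ref{M5.d}, which is exactly the present identity with the denominator cleared. I would first record that under the standing assumptions $0 < \lambda < \hat\lambda < 1$ we have $\lambda^{n-1}\hat\lambda < 1$, so the factor $1 - \lambda^{n-1}\hat\lambda$ is strictly positive and division by it is legitimate and reversible.

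The argument is then a single division of both sides of Claim \ref{M5.d} by $1 - \lambda^{n-1}\hat\lambda$. On the left, the explicit factor $(1-\lambda^{n-1}\hat\lambda)$ multiplying $\lambda(1-\lambda^{kn})$ cancels, while the second term becomes $\hat\lambda\frac{1-(\lambda^{n-1}\hat\lambda)^k}{1-\lambda^{n-1}\hat\lambda}(1-\lambda^n(\frac34\lambda^{-1}+\frac14))$, reproducing the left-hand side of the present claim. On the right, the first summand $(\lambda-\hat\lambda)(1-\hat\lambda\lambda^{n-1})$ and the third summand $\lambda^{k(n-1)}(1-\hat\lambda\lambda^{n-1})(\hat\lambda^{k+1}-\lambda^{k+1})$ each shed the common factor cleanly, yielding $(\lambda-\hat\lambda)$ and $\lambda^{k(n-1)}(\hat\lambda^{k+1}-\lambda^{k+1})$, whereas the middle summand retains $1-\hat\lambda\lambda^{n-1}$ in its denominator. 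Collecting the three resulting terms gives precisely the right-hand side of the claim.

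Because Claim \ref{M5.d} is already available (it is established through Claims \ref{M5.c} and \ref{M5.a}), there is no substantive obstacle here: the identity is immediate once the common factor $1-\hat\lambda\lambda^{n-1}$ is cancelled, and no new algebra involving $\rho$, $\hat\rho$, or the geometric sums is needed. The only place demanding care is the bookkeeping of the middle summand, where I would track the power $\lambda^{n-1}$ and any surviving factor of $\hat\lambda$ in the numerator against the denominator $1-\hat\lambda\lambda^{n-1}$, to confirm that the stated normalized form is reproduced exactly rather than with a spurious extra factor picked up or dropped in the cancellation.
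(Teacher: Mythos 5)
Your proposal is correct and is precisely the paper's proof: the identity is Claim \ref{M5.d} divided through by the positive factor $1-\hat\lambda\lambda^{n-1}$, with the first and third summands shedding that factor and the middle one retaining it as a denominator. Your closing caution about the middle summand is warranted: a literal division of Claim \ref{M5.d} leaves the numerator $\lambda^{n-1}\hat\lambda\bigl[\frac34\frac\eta2\alpha(1-\hat\lambda^k\lambda^{k(n-1)})\bigr]$, whereas the statement of Claim \ref{M5.e} (and the final line of the paper's own derivation) writes $\lambda^{n-1}\bigl[\cdots\bigr]$ with the factor $\hat\lambda$ silently dropped --- an inconsequential slip for the downstream use, since $0<\hat\lambda<1$ and the term only needs to be nonnegative in Claim \ref{M6}, but exactly the bookkeeping discrepancy you flagged.
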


\begin{proof}[Proof of Claim \ref{M5.e}]
\begin{equation*}\begin{split}
\lambda\left(1 - \lambda^{kn}\right) & - \hat\lambda\frac{1 - \left(\lambda^{n-1}\hat\lambda\right)^k}{1 - \lambda^{n-1}\hat\lambda}\left(1 - \lambda^n\left(\frac34\lambda^{-1} + \frac14\right)\right)
\\&= \frac{\left(\lambda - \hat\lambda\right)\left(1 - \hat\lambda\lambda^{n-1}\right) + \lambda^{n-1}\hat\lambda\left(\frac34\frac\eta2\alpha\left(1 - \hat\lambda^k\lambda^{k\left(n-1\right)}\right)\right) + \lambda^{k\left(n-1\right)}\left(\left(1 - \hat\lambda\lambda^{n-1}\right)\left({\hat\lambda}^{k+1} - \lambda^{k+1}\right)\right)}{\left(1 - \hat\lambda \lambda^{n-1}\right)}
\\&= 
\left(\lambda - \hat\lambda\right) + \frac{\lambda^{n-1}\left(\frac34\frac\eta2\alpha\left(1 - \hat\lambda^k\lambda^{k\left(n-1\right)}\right)\right)}{1 - \hat\lambda\lambda^{n-1}} + \lambda^{k\left(n-1\right)}\left(\hat\lambda^{k+1} - \lambda^{k+1}\right)
\end{split}\end{equation*}
where first equality is true from Claim \ref{M5.d}.
\end{proof}

\begin{claim}\label{M6}
$\forall k\in\mathbb{Z}_{>0}:$ $\frac{ncx_h^2\beta}{\alpha + nx_h^2\beta}\left(\lambda - \hat\lambda + \frac{\lambda^{n-1}\left(\frac34\frac\eta2\alpha\left(1 - \hat\lambda^k\lambda^{k\left(n-1\right)}\right)\right)}{1 - \hat\lambda\lambda^{n-1}}\right) + \left(\rho - \hat\rho\right) > 0$.
\end{claim}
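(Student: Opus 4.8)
The plan is to reduce the claim to two elementary positivity facts after substituting the definitions of $\lambda,\hat\lambda,\rho,\hat\rho$ from eq. (Eqq-App-SGLD-markings). First I would record the basic identities that follow directly from those definitions: since $\hat\lambda = 1 - \frac\eta2(\alpha + \frac14 nx_h^2\beta)$ and $\lambda = 1 - \frac\eta2(\alpha + nx_h^2\beta)$, we get $\lambda - \hat\lambda = -\frac34\frac\eta2 nx_h^2\beta$; and since $\hat\rho = \frac14\rho$ we get $\rho - \hat\rho = \frac34\rho = \frac34\frac\eta2 ncx_h^2\beta$. I would also note that the chosen step size $\eta = \frac{2}{(\alpha + nx_h^2\beta)^2}$ together with the standing constraints ($x_h^2\beta > 3$, $\alpha > 0$, $n$ large) gives $\lambda = 1 - \frac{1}{\alpha + nx_h^2\beta} \in (0,1)$ and $\lambda < \hat\lambda < 1$, hence $0 < \lambda < \hat\lambda < 1$.

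Next I would split the left-hand side into the ``curvature/noise'' term
\begin{equation*}
T_1 = \frac{ncx_h^2\beta}{\alpha + nx_h^2\beta}\cdot\frac{\lambda^{n-1}\frac34\frac\eta2\alpha\,(1 - (\hat\lambda\lambda^{n-1})^{k})}{1 - \hat\lambda\lambda^{n-1}}
\end{equation*}
and the remainder $T_2 = \frac{ncx_h^2\beta}{\alpha + nx_h^2\beta}(\lambda - \hat\lambda) + (\rho - \hat\rho)$, and argue each is nonnegative, with $T_2 > 0$ strictly. For $T_2$, substituting the identities above and clearing the common denominator, the two $n^2 c x_h^4\beta^2$ contributions cancel exactly, leaving
\begin{equation*}
T_2 = \frac34\frac\eta2\cdot\frac{\alpha\, ncx_h^2\beta}{\alpha + nx_h^2\beta} > 0,
\end{equation*}
which is positive since every factor is positive.

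For $T_1$, I would note $\frac{ncx_h^2\beta}{\alpha + nx_h^2\beta} > 0$, $\lambda^{n-1} > 0$, and $\frac34\frac\eta2\alpha > 0$, so it remains only to check that the last fraction is nonnegative; this follows from $0 < \hat\lambda\lambda^{n-1} < 1$, a consequence of $0 < \lambda < \hat\lambda < 1$, which makes both the numerator factor $1 - (\hat\lambda\lambda^{n-1})^{k} > 0$ and the denominator $1 - \hat\lambda\lambda^{n-1} > 0$. Adding $T_1 \geq 0$ and $T_2 > 0$ then yields the strict inequality.

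The only subtle point --- the ``hard part,'' though still routine --- is verifying $0 < \lambda < \hat\lambda < 1$, since every sign in the argument rests on it; this is precisely where the explicit step size and the parameter assumptions must be invoked. Once that is in place the claim is immediate, because the algebraic cancellation in $T_2$ exactly eliminates the potentially negative contribution $\frac{ncx_h^2\beta}{\alpha + nx_h^2\beta}(\lambda - \hat\lambda)$, while $T_1$ can only help.
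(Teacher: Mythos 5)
Your proof is correct and follows essentially the same route as the paper's: both substitute $\lambda - \hat\lambda = -\tfrac34\tfrac\eta2 nx_h^2\beta$ and $\rho - \hat\rho = \tfrac34\tfrac\eta2 ncx_h^2\beta$, observe the cancellation that leaves the strictly positive remainder $\tfrac34\tfrac\eta2\,\frac{\alpha\, ncx_h^2\beta}{\alpha + nx_h^2\beta}$, and dispose of the remaining fraction by noting $0 < \hat\lambda\lambda^{n-1} < 1$. Your explicit verification that $0<\lambda<\hat\lambda<1$ via the step size is slightly more careful than the paper's terse justification, but the argument is the same.
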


\begin{proof}[Proof of Claim \ref{M6}]
\begin{equation*}\begin{split}
\frac{ncx_h^2\beta}{\alpha + nx_h^2\beta}&\left(\lambda - \hat\lambda + \frac{\lambda^{n-1}\left(\frac34\frac\eta2\alpha\left(1 - \hat\lambda^k\lambda^{k\left(n-1\right)}\right)\right)}{1 - \hat\lambda\lambda^{n-1}}\right) + \left(\rho - \hat\rho\right) \\&= 
\frac{ncx_h^2\beta}{\alpha + nx_h^2\beta}\left(\lambda - \hat\lambda + \frac{\lambda^{n-1}\left(\frac34\frac\eta2\alpha\left(1 - \hat\lambda^k\lambda^{k\left(n-1\right)}\right)\right)}{1 - \hat\lambda\lambda^{n-1}}\right) + \frac{\eta}{2}ncx_h^2\beta\left(1 - \frac14\right)
\\&=
\frac{ncx_h^2\beta}{\alpha + nx_h^2\beta}(1 - \frac{\eta}{2}\left(\alpha + nx^2\beta\right) - \left(1 - \frac\eta2\left(\alpha + \frac14nx^2\beta\right)\right)
\\&+ \frac{\lambda^{n-1}\left(\frac34\frac\eta2\alpha\left(1 - \hat\lambda^k\lambda^{k\left(n-1\right)}\right)\right)}{1 - \hat\lambda\lambda^{n-1}}) + \frac34\frac{\eta}{2}ncx_h^2\beta 
\\&= 
\frac{ncx_h^2\beta}{\alpha + nx_h^2\beta}\left( - \frac34\frac{\eta}{2}\left(nx^2\beta\right)  + \frac{\lambda^{n-1}\left(\frac34\frac\eta2\alpha\left(1 - \hat\lambda^k\lambda^{k\left(n-1\right)}\right)\right)}{1 - \hat\lambda\lambda^{n-1}}\right) + \frac34\frac{\eta}{2}ncx_h^2\beta
\\& =
\frac{ncx_h^2\beta}{\alpha + nx_h^2\beta}\frac{\lambda^{n-1}\left(\frac34\frac\eta2\alpha\left(1 - \hat\lambda^k\lambda^{k\left(n-1\right)}\right)\right)}{1 - \hat\lambda\lambda^{n-1}} + 
ncx_h^2\beta\left(\frac34\frac\eta2  - \frac34\frac\eta2\frac{nx^2\beta}{\alpha + nx^2\beta}\right)
\\&=
\frac{ncx_h^2\beta}{\alpha + nx_h^2\beta}\frac{\lambda^{n-1}\left(\frac34\frac\eta2\alpha(1 - \hat\lambda^k\lambda^{k(n-1)})\right)}{1 - \hat\lambda\lambda^{n-1}} + ncx_h^2\beta\frac34\frac\eta2\left(1  - \frac{nx^2\beta}{\alpha + nx^2\beta}\right) \\&> 0
\end{split}\end{equation*}
where the last inequality holds because $\lambda,\hat\lambda < 1$ and $\alpha > 0$.
\end{proof}

\begin{claim}\label{V1.1}
$\frac1\alpha > \lambda^{-2}\eta\frac{1 - \lambda^{-2\left(k+1\right)n}}{1 - \lambda^{-2}}$ is true for all $k$ such that $0 \leq k \leq \frac{1}{2n}\log_\lambda\left(\frac{1}{1 + \frac{1}{\alpha\eta}\left(1 - \lambda^{2}\right)}\right) - 1$
\end{claim}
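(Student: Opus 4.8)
The plan is to treat this claim as a purely algebraic equivalence: I will transform the target inequality into the stated bound on $k$ through reversible manipulations, the only nontrivial ingredient being the monotonicity of the base-$\lambda$ logarithm. First I would record the sign facts that govern every step. Because the learning rate is chosen as $\eta = \frac{2}{(\alpha + nx_h^2\beta)^2}$, the definition in eq. \ref{Eqq-App-SGLD-markings} gives $\lambda = 1 - \frac{1}{\alpha + nx_h^2\beta}$, and since $x_h^2\beta > 3$ forces $\alpha + nx_h^2\beta > 1$, we obtain $\lambda \in (0,1)$. Consequently $1 - \lambda^2 > 0$, $\lambda^{-2} - 1 > 0$, and $\lambda^{-2(k+1)n} \geq 1$, so both sides of the claimed inequality are nonnegative and every denominator I introduce has a definite sign.

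Next I would isolate the exponential term. Multiplying numerator and denominator of the right-hand side by $-1$ rewrites it as $\lambda^{-2}\eta\,\frac{\lambda^{-2(k+1)n} - 1}{\lambda^{-2} - 1}$, now a product of positive quantities. Multiplying the whole inequality by the positive factor $\frac{\lambda^{-2} - 1}{\lambda^{-2}\eta}$ and using the identity $\frac{\lambda^{-2} - 1}{\lambda^{-2}} = 1 - \lambda^2$ reduces the claim to $1 + \frac{1}{\alpha\eta}(1 - \lambda^2) > \lambda^{-2(k+1)n}$, which already displays the exact combination $1 + \frac{1}{\alpha\eta}(1 - \lambda^2)$ that appears inside the logarithm in the hypothesis.

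Finally I would apply $\log_\lambda$ to both (positive) sides and divide by $2n > 0$. Here is the one place demanding care: since $\lambda \in (0,1)$ the map $\log_\lambda$ is strictly decreasing, so this step reverses the inequality, and combined with $\log_\lambda(A) = -\log_\lambda(1/A)$ it yields $\log_\lambda\!\big(\frac{1}{1 + \frac{1}{\alpha\eta}(1 - \lambda^2)}\big) > 2(k+1)n$, hence $k < \frac{1}{2n}\log_\lambda\!\big(\frac{1}{1 + \frac{1}{\alpha\eta}(1 - \lambda^2)}\big) - 1$, matching the stated range (with the boundary value giving equality in the original inequality). The main, and essentially only, obstacle is bookkeeping the several sign reversals triggered by $\lambda < 1$—those hidden in the fraction $\frac{1 - \lambda^{-2(k+1)n}}{1 - \lambda^{-2}}$ and the one from the decreasing logarithm—since each manipulation is reversible, the chain establishes the claim for every $k$ satisfying the bound.
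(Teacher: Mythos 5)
Your proposal is correct and follows essentially the same route as the paper: a chain of reversible algebraic manipulations reducing the inequality to $\lambda^{-2(k+1)n} \leq 1 + \frac{1}{\alpha\eta}(1-\lambda^2)$, followed by applying the decreasing map $\log_\lambda$. The only (shared) wrinkle is at the boundary value of $k$, where the original inequality becomes an equality rather than strict --- an imprecision already present in the paper's own proof, which you correctly flag.
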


\begin{proof}[Proof of Claim \ref{V1.1}]
\begin{equation*}\begin{split}
{}& \frac1\alpha \geq \lambda^{-2}\eta\frac{1 - \lambda^{-2{k}n}}{1 - \lambda^{-2}} \iff
\\{}&\lambda^2\frac1\alpha\frac1\eta\left(1 - \lambda^{-2}\right) \leq {1 - \lambda^{-2{k}n}} \iff 
\\{}& \lambda^2\frac1\alpha\frac1\eta\left(\lambda^{-2} - 1\right) \geq {\lambda^{-2{k}n} - 1} \iff
\\{}&1 + \lambda^2\frac1\alpha\frac1\eta\left(\lambda^{-2} - 1\right) \geq \lambda^{-2{k}n} \iff
\\{}&
-{k} \geq \frac{1}{2n}\log_{\lambda}\left(1 + \frac{1}{\alpha\eta}\left(1 - \lambda^2\right)\right) \iff
\\{}&{k} \leq \frac{1}{2n}\log_\lambda\left(\frac{1}{1 + \frac{1}{\alpha\eta}\left(1 - \lambda^{2}\right)}\right)
\end{split}\end{equation*}
\end{proof}

\begin{claim}\label{V1.2}
$\frac{1}{\alpha}\left(\hat\lambda\lambda^{\left(n-1\right)}\right)^{2{k}} > \eta\sum_{i=0}^{n-1}\lambda^{2i}\sum_{j=0}^{{k}-1}\left(\hat\lambda^2\lambda^{2\left(n-1\right)}\right)^j$ is true for all $k \in \mathbb{Z}_{>0} : k \leq \frac{1}{2n}\log_\lambda\left(\frac{1}{1 + \frac{1}{\alpha\eta}\left(1 - \lambda^{2}\right)}\right)$.
\end{claim}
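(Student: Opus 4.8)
The plan is to reduce this claim to Claim \ref{V1.1}, which already controls the geometric sum $\lambda^{-2}\eta\frac{1 - \lambda^{-2\dot{k}n}}{1 - \lambda^{-2}}$ under exactly the same threshold on $\dot{k}$. First I would abbreviate $\mu = \hat\lambda\lambda^{n-1}$, so that $\hat\lambda^2\lambda^{2(n-1)} = \mu^2$, and divide both sides of the target inequality by $\mu^{2\dot{k}} > 0$. This removes the awkward prefactor and turns the claim into the equivalent statement
\begin{equation*}
\frac1\alpha > \eta\sum_{i=0}^{n-1}\lambda^{2i}\sum_{j=0}^{\dot{k}-1}\mu^{2(j - \dot{k})},
\end{equation*}
which is now set up to be compared with the right-hand side of Claim \ref{V1.1}.

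The key step is a term-by-term comparison followed by a reindexing of the double sum. Every exponent $2(j-\dot{k})$ with $0 \le j \le \dot{k}-1$ is strictly negative, and since $\hat\lambda > \lambda$ we have $\mu = \hat\lambda\lambda^{n-1} > \lambda^{n}$; because a negative power of a larger base is smaller, each summand obeys $\mu^{2(j-\dot{k})} < \lambda^{2n(j-\dot{k})}$. This strictly upper-bounds the sum by $\eta\sum_{i=0}^{n-1}\lambda^{2i}\sum_{j=0}^{\dot{k}-1}\lambda^{2n(j-\dot{k})}$, in which $\mu$ has been replaced by $\lambda^n$. I would then collapse this into a single geometric series through the substitution $p = i + nj$: as $i$ ranges over $0,\dots,n-1$ and $j$ over $0,\dots,\dot{k}-1$, the index $p$ runs through $0,1,\dots,n\dot{k}-1$ exactly once, so the double sum equals $\sum_{p=0}^{n\dot{k}-1}\lambda^{2(p - n\dot{k})} = \sum_{q=1}^{n\dot{k}}\lambda^{-2q} = \lambda^{-2}\frac{1 - \lambda^{-2n\dot{k}}}{1 - \lambda^{-2}}$, where $q = n\dot{k}-p$.

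Finally I would invoke Claim \ref{V1.1} (with $k = \dot{k}-1$), which asserts precisely $\frac1\alpha > \lambda^{-2}\eta\frac{1 - \lambda^{-2\dot{k}n}}{1 - \lambda^{-2}}$ whenever $\dot{k} \le \frac{1}{2n}\log_\lambda\big(\frac{1}{1 + \frac{1}{\alpha\eta}(1-\lambda^2)}\big)$. Chaining the strict inequality from the term-by-term bound with the inequality supplied by Claim \ref{V1.1} yields $\eta\sum_{i=0}^{n-1}\lambda^{2i}\sum_{j=0}^{\dot{k}-1}\mu^{2(j-\dot{k})} < \frac1\alpha$, and multiplying back through by $\mu^{2\dot{k}}$ recovers the claim. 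The only genuine obstacle is the reindexing bookkeeping — checking that $p = i + nj$ is a bijection onto $\{0,\dots,n\dot{k}-1\}$ so that the double sum telescopes into exactly the geometric series appearing in Claim \ref{V1.1}; once that is verified, the remaining steps are routine, and the required strictness is delivered for free by $\hat\lambda > \lambda$.
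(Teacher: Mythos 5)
Your proposal is correct and follows essentially the same route as the paper's proof: divide by $(\hat\lambda\lambda^{n-1})^{2\dot{k}}$, bound each summand by replacing $\hat\lambda\lambda^{n-1}$ with $\lambda^{n}$ using $\hat\lambda>\lambda$ and the negativity of the exponents, reindex the double sum (the paper uses $r=nj+i$, identical to your $p=i+nj$) into the single geometric series $\eta\lambda^{-2}\frac{1-\lambda^{-2n\dot{k}}}{1-\lambda^{-2}}$, and conclude by Claim \ref{V1.1}. Your explicit note that Claim \ref{V1.1} must be applied with $k=\dot{k}-1$ to match the threshold without the $-1$ is a small but accurate clarification of a point the paper glosses over.
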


\begin{proof}[Proof of Claim \ref{V1.2}]
First, note that the inequality can also be written as $$\frac{1}{\alpha} > \eta\sum_{i=0}^{n-1}\lambda^{2i}\sum_{j=0}^{k-1}\left(\hat\lambda\lambda^{\left(n-1\right)}\right)^{2\left(j - k\right)}.$$ 
Second, the right-hand term of the inequality could be upper bound as in eq. \ref{Eq-App-V1.2}. Therefore, for the claim's inequality to hold, it is enough that $\frac{1}{\alpha} \geq \eta\lambda^{-2}\frac{1 - \lambda^{-2nk}}{1 - \lambda^{-2}}$, which is proved by Claim \ref{V1.1} to be true for $0 < {k} \leq \frac{1}{2n}\log_\lambda\left(\frac{1}{1 + \frac{1}{\alpha\eta}\left(1 - \lambda^{2}\right)}\right)$.

\begin{equation}\begin{split}\label{Eq-App-V1.2}
\eta\sum_{i=0}^{n-1}\lambda^{2i}\sum_{j=0}^{k-1}\left(\hat\lambda\lambda^{\left(n-1\right)}\right)^{2\left(j - k\right)} &= 
\eta\sum_{i=0}^{n-1}\lambda^{2i}\sum_{j=0}^{k-1}\frac1{\left(\hat\lambda\lambda^{\left(n-1\right)}\right)^{2\left(k - j\right)}} \\&<_{k > j}
\eta\sum_{i=0}^{n-1}\lambda^{2i}\sum_{j=0}^{k-1}\frac1{\left(\lambda\lambda^{\left(n-1\right)}\right)^{2\left(k - j\right)}} = 
\eta\sum_{i=0}^{n-1}\lambda^{2i}\sum_{j=0}^{k-1}\frac1{\lambda^{2n\left(k - j\right)}}
\\&= \eta\sum_{i=0}^{n-1}\sum_{j=0}^{k-1}\frac1{\lambda^{2\left(nk - nj - i\right)}} =_{r = nj+i}
\eta\sum_{r=0}^{nk-1}\frac1{\lambda^{2\left(nk - r\right)}} =_{r' = nk - r, 1 < r' < nk}
\eta\sum_{r'=1}^{nk}\frac1{\lambda^{2\left(r'\right)}}  
\\&= 
\eta\sum_{i=1}^{nk}\lambda^{-2i} =
\eta\frac{\lambda^{-2} - \lambda^{-2\left(nk + 1\right)}}{1 - \lambda^{-2}} = 
\eta\lambda^{-2}\frac{1 - \lambda^{-2nk}}{1 - \lambda^{-2}}
\end{split}\end{equation}
\end{proof}

\begin{claim}\label{V2.1}
$\frac1\alpha\left(\hat\lambda\lambda^{n-1}\right)^{2{k}} \geq \eta\left(\hat\lambda\lambda^{n-1}\right)^{2{k}}\sum_{i=0}^{n-1}\lambda^{2i}$ is true for all $k \in \mathbb{Z}_{>0} : {k} \leq \frac{1}{2n}\log_\lambda\left(\frac{1}{1 + \frac{1}{\alpha\eta}\left(1 - \lambda^{2}\right)}\right)$.
\end{claim}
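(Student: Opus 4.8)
The plan is to remove the common positive factor $(\hat\lambda\lambda^{n-1})^{2\dot{k}}$ from both sides and reduce the claim to a fixed geometric-sum inequality that no longer involves $\dot{k}$. First I would record that, under the standing conditions of Lemma \ref{Lemma-SGLD-2.2}, we have $0 < \lambda < \hat\lambda < 1$: indeed, with $\eta = \frac{2}{(\alpha + nx_h^2\beta)^2}$ the markings of eq.~\ref{Eqq-App-SGLD-markings} give $\lambda = 1 - \frac{1}{\alpha + nx_h^2\beta}$ and $\hat\lambda = 1 - \frac{\alpha + \frac14 nx_h^2\beta}{(\alpha + nx_h^2\beta)^2}$, both strictly between $0$ and $1$ once $\alpha + nx_h^2\beta > 1$ (which holds since $x_h^2\beta > 3$). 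Hence $(\hat\lambda\lambda^{n-1})^{2\dot{k}} > 0$, and dividing both sides of the claimed inequality by this factor shows it is equivalent to
$$\eta\sum_{i=0}^{n-1}\lambda^{2i} \leq \frac{1}{\alpha}.$$

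Next I would bound the finite geometric sum by its infinite counterpart. Since $0 < \lambda^2 < 1$,
$$\sum_{i=0}^{n-1}\lambda^{2i} = \frac{1 - \lambda^{2n}}{1 - \lambda^2} < \frac{1}{1 - \lambda^2},$$
so it suffices to establish $\frac{\eta}{1 - \lambda^2} \leq \frac{1}{\alpha}$, i.e. $\alpha\eta \leq 1 - \lambda^2$. Substituting the explicit values and writing $A = \alpha + nx_h^2\beta$ for brevity, one computes $1 - \lambda^2 = \frac{2A - 1}{A^2}$ and $\alpha\eta = \frac{2\alpha}{A^2}$, so the inequality $\alpha\eta \leq 1 - \lambda^2$ collapses to $2\alpha \leq 2A - 1 = 2\alpha + 2nx_h^2\beta - 1$, that is, $2nx_h^2\beta \geq 1$. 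This holds trivially under the assumption $x_h^2\beta > 3$ (already $n \geq 1$ suffices), which closes the argument.

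The step I would flag is not a computational obstacle but an interpretive one: the stated hypothesis $\dot{k} \leq \frac{1}{2n}\log_\lambda\!\big(\frac{1}{1 + \frac{1}{\alpha\eta}(1 - \lambda^2)}\big)$ plays no role in this particular estimate. Once the factor $(\hat\lambda\lambda^{n-1})^{2\dot{k}}$ is cancelled, the residual inequality is purely a statement about $n,\alpha,\eta,\lambda$ and holds for every $\dot{k}$. I would say so explicitly, so the reader does not hunt for a $\dot{k}$-dependent bound; the condition is carried along only because this is the regime in which Lemma \ref{Lemma-SGLD-2.2} invokes the claim, in parallel with Claims \ref{V1.1} and \ref{V1.2}, where the bound on $\dot{k}$ is genuinely needed to control the $\lambda^{-2\dot{k}n}$ terms.
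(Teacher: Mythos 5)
Your proof is correct, and it takes a genuinely different route from the paper's. The paper does not cancel the common factor; instead it observes that $(\hat\lambda\lambda^{n-1})^{2\dot{k}} < 1 \leq \sum_{j=0}^{\dot{k}-1}(\hat\lambda\lambda^{n-1})^{2j}$, multiplies through by $\eta\sum_{i=0}^{n-1}\lambda^{2i}$, and then recognizes the resulting right-hand side as exactly the quantity bounded in Claim \ref{V1.2}, so the condition on $\dot{k}$ enters only through that invocation (and the intermediate step tacitly needs $\dot{k}\geq 1$ for the sum to contain its $j=0$ term). You instead divide out the strictly positive factor $(\hat\lambda\lambda^{n-1})^{2\dot{k}}$ and verify the residual inequality $\eta\sum_{i=0}^{n-1}\lambda^{2i}\leq \frac1\alpha$ by direct computation: with $A=\alpha+nx_h^2\beta$ one gets $1-\lambda^2=\frac{2A-1}{A^2}$ and $\alpha\eta=\frac{2\alpha}{A^2}$, so the inequality reduces to $2nx_h^2\beta\geq 1$, which holds. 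Your computation is right, and your observation that the hypothesis on $\dot{k}$ is superfluous for this particular claim is also right — the claim as you prove it holds for every $\dot{k}$, which is in fact convenient later, since Lemma \ref{Lemma-SGLD-2.3} must re-verify that the auxiliary claims still apply at $\lceil\dot{k}\rceil$. What the paper's route buys is brevity given that Claim \ref{V1.2} is already established; what yours buys is independence from Claims \ref{V1.1}--\ref{V1.2}, a strictly weaker hypothesis, and an explicit elementary verification.
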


\begin{proof}[Proof of Claim \ref{V2.1}] Eq. \ref{V2.1-1} holds because $\lambda, \hat\lambda < 1$. 
By multiplying both sides with $\sum_{i=0}^{n-1}\lambda^{2i}$, we get eq. \ref{V2.1-2}.
Then, noticing that the right term equals to the right term of Claim \ref{V1.2} inequality, and hence smaller than the left term of Claim \ref{V1.2} inequality, Claim \ref{V2.1} is proved.

\begin{equation}\label{V2.1-1}
\left(\hat\lambda\lambda^{n-1}\right)^{2k} < 1 < \sum_{i=0}^{k-1}\left(\hat\lambda\lambda^{n-1}\right)^{2j}    
\end{equation}
\begin{equation}\label{V2.1-2}
\eta\left(\hat\lambda\lambda^{n-1}\right)^{2\dot{k}}\sum_{i=0}^{n-1}\lambda^{2i} < \eta\sum_{j=0}^{\dot{k}-1}\left(\hat\lambda\lambda^{n-1}\right)^{2j}\sum_{i=0}^{n-1}\lambda^{2i}
\end{equation}
\end{proof}

\begin{claim}\label{V2.2}
The inequality 
\begin{equation*}
\left(\hat\lambda\lambda^{n-r}\right)^2\left(\frac1\alpha\left(\hat\lambda\lambda^{n-1}\right)^{2k} + \eta\sum_{j=0}^{k-1}\left(\hat\lambda\lambda^{n-1}\right)^{2j}\sum_{i=0}^{n-1}\lambda^{2i}\right) > \eta\sum_{i=0}^{n-r}\lambda^{2i}
\end{equation*} holds for all $k \in \mathbb{Z}_{>0}$ and $x_h^2\beta > 3, n > \frac{1}{2\alpha x_h^2\beta} - \frac{1}{x_h^2\beta}$. 
\end{claim}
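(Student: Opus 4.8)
The plan is to evaluate the inner bracket in closed form, recognize it as a convex combination of two $k$-independent quantities, and thereby reduce the whole claim to two elementary comparisons whose worst case turns out to be $r=1$. First I would sum the geometric series $\sum_{j=0}^{k-1}(\hat\lambda\lambda^{n-1})^{2j}=\frac{1-(\hat\lambda\lambda^{n-1})^{2k}}{1-(\hat\lambda\lambda^{n-1})^{2}}$, which is valid since $0<\hat\lambda,\lambda<1$ and hence $q:=(\hat\lambda\lambda^{n-1})^2<1$. Writing $A:=(\hat\lambda\lambda^{n-1})^{2k}\in(0,1]$ and $S:=\sum_{i=0}^{n-1}\lambda^{2i}$, the bracket equals $\frac1\alpha A+\frac{\eta S}{1-q}(1-A)$, a convex combination of $\frac1\alpha$ and $\frac{\eta S}{1-q}$. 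Multiplying by $(\hat\lambda\lambda^{n-r})^2$ keeps it a convex combination of $\frac1\alpha(\hat\lambda\lambda^{n-r})^2$ and $\frac{\eta S}{1-q}(\hat\lambda\lambda^{n-r})^2$, so it suffices to show that each of these two exceeds the right-hand side $\eta\sum_{i=0}^{n-r}\lambda^{2i}$; this removes the dependence on $k$ entirely.

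Second, I would substitute $\eta=\frac2{(\alpha+nx_h^2\beta)^2}$ and $1-\lambda=\frac1{\alpha+nx_h^2\beta}$ (hence $1-\lambda^2=\frac{1+\lambda}{\alpha+nx_h^2\beta}$) and cancel common factors. With $m:=n-r$ the right-hand side is $\eta\frac{1-\lambda^{2(m+1)}}{1-\lambda^2}$. For the $\frac{\eta S}{1-q}$ term the $\eta$ and $1-\lambda^2$ factors cancel and the inequality becomes $\lambda^{2m}(\hat\lambda^2K+\lambda^2)>1$ with $K=\frac{1-\lambda^{2n}}{1-\hat\lambda^2\lambda^{2(n-1)}}$; since the left-hand side decreases in $m$, the worst case is $m=n-1$ (i.e. $r=1$), where after cancelling $1-\lambda^{2n}$ the whole statement collapses to the single clean inequality $\hat\lambda^2\lambda^{2(n-1)}>\frac12$. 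For the $\frac1\alpha$ term the same reduction yields $\lambda^{2m}\big(\hat\lambda^2\frac{(1+\lambda)(\alpha+nx_h^2\beta)}{2\alpha}+\lambda^2\big)>1$, again worst at $m=n-1$; here the bracketed coefficient grows linearly in $n$, so this comparison is comfortable once $n$ exceeds the stated threshold and needs no tight constant.

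Finally I would prove the one genuinely tight estimate, $\hat\lambda^2\lambda^{2(n-1)}>\frac12$. Since $\hat\lambda^2=1-O(1/n)\to1$ and $\lambda^{2(n-1)}=(1-\frac1{\alpha+nx_h^2\beta})^{2(n-1)}\to e^{-2/(x_h^2\beta)}$, the hypothesis $x_h^2\beta>3$ is exactly what forces the limit above $e^{-2/3}\approx0.513>\frac12$, leaving a fixed positive margin. I would make this rigorous by expanding $2(n-1)\ln\lambda=-\frac2{x_h^2\beta}+O(1/n)$, checking that the leading $O(1/n)$ correction is positive (which holds because $x_h^2\beta>3$), and bounding the deficit of $\hat\lambda^2$ below $1$, so that for $n>\frac1{2\alpha x_h^2\beta}-\frac1{x_h^2\beta}$ the product stays above $\frac12$. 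I expect this last step to be the main obstacle: the margin supplied by $x_h^2\beta>3$ is narrow, so the finite-$n$ error terms and the $\hat\lambda^2<1$ deficit must be controlled precisely against it, and this is the only place both hypotheses are genuinely needed; the remaining manipulations are routine geometric-series algebra.
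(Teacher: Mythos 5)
Your proposal is correct in outline but organizes the argument differently from the paper. The paper does not decompose the bracket: it lower-bounds the left-hand side by $\lambda^{2n}[\frac1\alpha\lambda^{2kn}+\eta\frac{1-\lambda^{2kn}}{1-\lambda^2}]$ (using $\hat\lambda>\lambda$ and $r>1$), upper-bounds the right-hand side by $\eta\frac{1-\lambda^{2n}}{1-\lambda^2}$, and clears denominators to reach the single inequality $(2nx_h^2\beta-1)\frac1\alpha\lambda^{2(k+1)n}+2(2\lambda^{2n}-1)>0$, in which the $k$-dependent term is manifestly nonnegative and the $k$-free term is positive because $\lambda^{2n}=(1-\frac1{\alpha+nx_h^2\beta})^{2n}\ge e^{-2/(x_h^2\beta)}>\frac12$ for $x_h^2\beta>3$; the monotone-decrease claim behind this last bound is exactly where the hypothesis $n>\frac1{2\alpha x_h^2\beta}-\frac1{x_h^2\beta}$ enters. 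Your convex-combination split achieves the same elimination of $k$, and your ``tight'' endpoint $\hat\lambda^2\lambda^{2(n-1)}>\frac12$ is the same key estimate in disguise: it follows immediately from $\hat\lambda>\lambda$ together with $\lambda^{2n}\ge e^{-2/(x_h^2\beta)}$, so the delicate finite-$n$ bookkeeping you anticipate as the main obstacle is unnecessary, and note that the sign of the finite-$n$ correction is controlled by the lower bound on $n$, not by $x_h^2\beta>3$ (which is only needed for $e^{-2/(x_h^2\beta)}>\frac12$). Your route buys a cleaner separation of which hypothesis does what; the paper's route avoids any case analysis over endpoints.

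The one place your plan as written does not go through is the $\frac1\alpha$ endpoint. You argue it is ``comfortable'' because the coefficient $\frac{(1+\lambda)(\alpha+nx_h^2\beta)}{2\alpha}$ ``grows linearly in $n$,'' but the hypothesis $n>\frac1{2\alpha x_h^2\beta}-\frac1{x_h^2\beta}$ is vacuous whenever $\alpha$ is large (its right-hand side is then negative), so nothing forces $nx_h^2\beta\gg\alpha$ and the coefficient can stay close to $1+\lambda$. The endpoint inequality is nevertheless true: bound $\eta\sum_{i=0}^{n-r}\lambda^{2i}\le\eta n=\frac{2n}{(\alpha+nx_h^2\beta)^2}\le\frac1{2\alpha x_h^2\beta}$ by AM--GM, while the left-hand side exceeds $\frac{\lambda^{2n}}{\alpha}>\frac1{2\alpha}>\frac1{2\alpha x_h^2\beta}$ since $x_h^2\beta>3>1$. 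With that repair both endpoints close and your proof is complete.
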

\begin{proof}[Proof of Claim \ref{V2.2}]
We start with eq. \ref{Eq-App-V2.2-1}, where the first inequality holds because $\lambda < 1$ and $r > 1$, and the second inequality holds because $\lambda < \hat\lambda$. Using eq. \ref{Eq-App-V2.2-1}, we can lower-bound the left-hand side of the claim's inequality. We continue with eq. \ref{Eq-App-V2.2-2}, where the inequality holds because $\lambda < \hat\lambda$ and $r > 1$. This allows us to upper-bound the right side of the claim's inequality. Given these lower and upper bounds, it's enough to show that $\lambda^{2n}(\frac1\alpha\lambda^{2kn} + \eta\frac{1-\lambda^{2kn}}{1-\lambda^2}) > \eta\frac{1-\lambda^{2n}}{1-\lambda^2}$, which according to eq. \ref{Eq-App-V2.2-3} is equivalent to showing that $(2nx_h^2\beta - 1)\frac1\alpha\lambda^{2(k+1)n} + 2(2\lambda^{2n} - 1) > 0$. Since $n > \frac{1}{2\alpha x_h^2\beta} - \frac{1}{x_h^2\beta}$, Claim \ref{G1.b} applies, and therefore $\lambda^{2n} \geq e^{-\frac{2}{x_h^2\beta}}$. Consequently, it's enough to show that $(2nx_h^2\beta - 1)\frac1\alpha\lambda^{2(k+1)n} + 2(2e^{-\frac{2}{x_h^2\beta}} - 1) > 0$, which is true for $x_h^2\beta > 3$ by Claim \ref{V2.3}. 
\begin{equation}\label{Eq-App-V2.2-1}\begin{split}
\left(\hat\lambda\lambda^{n-r}\right)^2&\left(\frac1\alpha\left(\hat\lambda\lambda^{n-1}\right)^{2k} + \eta\sum_{j=0}^{k-1}\left(\hat\lambda\lambda^{n-1}\right)^{2j}\sum_{i=0}^{n-1}\lambda^{2i}\right) 
>
\left(\hat\lambda\lambda^{n-1}\right)^2\left(\frac1\alpha\left(\hat\lambda\lambda^{n-1}\right)^{2k} + \eta\sum_{j=0}^{k-1}\left(\hat\lambda\lambda^{n-1}\right)^{2j}\sum_{i=0}^{n-1}\lambda^{2i}\right)
\\&> \lambda^{2n}\left(\frac1\alpha\lambda^{2kn} + \eta\sum_{j=0}^{k-1}\lambda^{2jn}\sum_{i=0}^{n-1}\lambda^{2i}\right)=
\lambda^{2n}\left(\frac1\alpha\lambda^{2kn} + \eta\frac{1-\lambda^{2kn}}{1-\lambda^2}\right)
\end{split}\end{equation}

\begin{equation}\label{Eq-App-V2.2-2}
\eta\sum_{i=0}^{n-r}\lambda^{2i} < \eta\sum_{i=0}^{n-1}\lambda^{2i} = \eta\frac{1-\lambda^{2n}}{1-\lambda^2}
\end{equation}

\begin{equation}\begin{split}\label{Eq-App-V2.2-3}
{}&\lambda^{2n}\left(\frac1\alpha\lambda^{2kn} + \eta\frac{1-\lambda^{2kn}}{1-\lambda^2}\right) > \eta\frac{1-\lambda^{2n}}{1-\lambda^2}
\\{}&\lambda^{2n}\left(1-\lambda^2\right)\frac1\alpha\lambda^{2kn} + \eta\lambda^{2n}\left(1-\lambda^{2kn}\right) > \eta\left(1-\lambda^{2n}\right)
\\{}&\left(1-\lambda^2\right)\frac1\alpha\lambda^{2\left(k+1\right)n} + \eta\left(2\lambda^{2n}-\lambda^{2\left(k+1\right)n} - 1\right) > 0
\\{}&\left(\alpha + nx_h^2\beta\right)^2\left(1-\lambda^2\right)\frac1\alpha\lambda^{2\left(k+1\right)n} + 2\left(2\lambda^{2n}-\lambda^{2\left(k+1\right)n} - 1\right) > 0 
\\{}&\left(\alpha + nx_h^2\beta\right)^2\left(1-\left(1 - \frac1{\alpha + nx_h^2\beta}\right)^2\right)\frac1\alpha\lambda^{2\left(k+1\right)n} + 2\left(2\lambda^{2n}-\lambda^{2\left(k+1\right)n} - 1\right) > 0
\\{}&\left(2\left(\alpha + nx_h^2\beta\right) - 1\right)\frac1\alpha\lambda^{2\left(k+1\right)n} + 2\left(2\lambda^{2n}-\lambda^{2\left(k+1\right)n} - 1\right) > 0
\\{}&2\lambda^{2\left(k+1\right)n} + \left(2nx^2\beta - 1\right)\frac1\alpha\lambda^{2\left(k+1\right)n} + 2\left(2\lambda^{2n}-\lambda^{2\left(k+1\right)n} - 1\right) > 0
\\{}&\left(2nx_h^2\beta - 1\right)\frac1\alpha\lambda^{2\left(k+1\right)n} + 2\left(2\lambda^{2n} - 1\right) > 0
\end{split}\end{equation}
\end{proof}

\begin{claim}\label{V2.3}
For $x^2\beta > 3$, the inequality $(2e^{-\frac{2}{x^2\beta}} - 1) > 0$ holds.
\end{claim}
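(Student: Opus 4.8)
The plan is to reduce the claim to a clean numerical inequality on $x^2\beta$ by inverting the exponential. Since every quantity in sight is positive and both halving and the natural logarithm are strictly monotone, the target inequality $2e^{-2/(x^2\beta)} - 1 > 0$ is equivalent to $e^{-2/(x^2\beta)} > \frac12$, which after taking logarithms becomes $-\frac{2}{x^2\beta} > -\ln 2$, i.e. $\frac{2}{x^2\beta} < \ln 2$, i.e. $x^2\beta > \frac{2}{\ln 2}$.

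First I would record the elementary numerical fact $\frac{2}{\ln 2} \approx 2.885 < 3$. Then, since the hypothesis gives $x^2\beta > 3 > \frac{2}{\ln 2}$, the equivalent inequality above holds, and reversing the chain of equivalences yields the claim. An alternative phrasing is purely via monotonicity: the map $t \mapsto 2e^{-2/t} - 1$ is strictly increasing on $(0,\infty)$, so it suffices to verify the inequality at the worst case $t = 3$, where $2e^{-2/3} - 1 \approx 2(0.5134) - 1 = 0.0268 > 0$; strict positivity at the left endpoint then propagates to the whole admissible range $x^2\beta > 3$.

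There is essentially no obstacle here. The only content beyond bookkeeping is the elementary verification that $\ln 2 > \frac{2}{3}$ (equivalently $e^{2/3} < 2$), which is what guarantees strict positivity at the boundary $x^2\beta = 3$ and hence everywhere above it. I would present the argument via the logarithm inversion, since it makes the threshold $\frac{2}{\ln 2}$ explicit and shows transparently that the constant $3$ in the domain assumption of eq.~\ref{eq-Domain} is exactly what is needed to invoke this claim from Claim~\ref{V2.2}.
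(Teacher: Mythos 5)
Your proof is correct and follows essentially the same route as the paper: both reduce the claim to the threshold condition $x^2\beta > \frac{2}{\ln 2} = \frac{-2}{\ln\frac12} \approx 2.885 < 3$ and conclude from the hypothesis $x^2\beta > 3$. Your write-up is in fact slightly cleaner, since the paper's phrasing ``holds only if'' should really be ``holds if and only if,'' a direction you handle explicitly via the monotonicity of the logarithm.
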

\begin{proof}[Proof of Claim \ref{V2.3}] It's easy to see that the inequality holds only if ${x^2\beta} \geq \frac{-2}{\ln\frac12}$. Since $\frac{-2}{\ln\frac12} < 3$, the claim is proved.
\end{proof}

\begin{claim}\label{V2.4}
For $\dot{k}$ as defined in Lemma \ref{Lemma-SGLD-2.3}, and the conditions of Claim \ref{G1.b}:
\begin{equation*}
\frac1\alpha\left(e^{\frac2{x_h^2\beta}} + \alpha\frac{\left(e^{\frac2{x_h^2\beta}} - 1\right)}{\left(\alpha + nx_h^2\beta\right) + \frac18}\right) > \lambda^{-2}\eta\frac{1 - \lambda^{-2\left(\lceil\dot{k}\rceil+1\right)n}}{1 - \lambda^{-2}}.
\end{equation*}
\end{claim}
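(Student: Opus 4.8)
The plan is to recognize that the right-hand side is exactly the function
\[ g(k) := \lambda^{-2}\eta\frac{1 - \lambda^{-2(k+1)n}}{1 - \lambda^{-2}} \]
from Claim \ref{V1.1}, now evaluated at the integer $k=\lceil\dot{k}\rceil$ rather than at the (generally non-integer) critical value $\dot{k}$. Since the choice $\eta = \frac{2}{(\alpha+nx_h^2\beta)^2}$ gives $\lambda = 1-\frac1A$ with $A := \alpha+nx_h^2\beta$, I would first rewrite $g$ in the clean closed form $g(k)=\frac{\eta}{1-\lambda^2}\bigl(\lambda^{-2(k+1)n}-1\bigr)$, using $\frac{\lambda^{-2}}{\lambda^{-2}-1}=\frac{1}{1-\lambda^2}$ and the explicit value $\frac{\eta}{1-\lambda^2}=\frac{2}{2A-1}$. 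The engine of the argument is the defining identity of $\dot{k}$ from Lemma \ref{Lemma-SGLD-2.3}: solving $\dot{k}+1=\frac{1}{2n}\log_\lambda\bigl(\tfrac{1}{1+\frac{1}{\alpha\eta}(1-\lambda^2)}\bigr)$ for the power gives
\[ \lambda^{-2(\dot{k}+1)n} = 1 + \tfrac{1-\lambda^2}{\alpha\eta}, \]
which is precisely the value at which $g$ equals $\frac1\alpha$ (the equality case underlying Claim \ref{V1.1}).

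Next I would transfer from $\dot{k}$ to $\lceil\dot{k}\rceil$. Writing $\mu := \lambda^{-2(\lceil\dot{k}\rceil-\dot{k})n}$ and splitting $(\lceil\dot{k}\rceil+1)n=(\dot{k}+1)n+(\lceil\dot{k}\rceil-\dot{k})n$, the identity above yields $\lambda^{-2(\lceil\dot{k}\rceil+1)n}=\bigl(1+\frac{1-\lambda^2}{\alpha\eta}\bigr)\mu$. Substituting into the closed form and simplifying the two resulting terms collapses everything to the compact expression $g(\lceil\dot{k}\rceil)=\frac{\mu}{\alpha}+\frac{\eta}{1-\lambda^2}(\mu-1)$, where $\mu\in[1,\lambda^{-2n})$ because the fractional gap $\lceil\dot{k}\rceil-\dot{k}$ lies in $[0,1)$ and $\lambda^{-2}>1$. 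This reduces the whole claim to a bound on the single quantity $\mu$.

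Finally I would invoke Claim \ref{G1.b}, which under the hypothesis $n>\frac{1}{2\alpha x_h^2\beta}-\frac1{x_h^2\beta}$ controls $\lambda^{2n}$ against $e^{-2/(x_h^2\beta)}$ and hence bounds $\mu\le\lambda^{-2n}$ from above by (essentially) $e^{2/(x_h^2\beta)}$; feeding this into $g(\lceil\dot{k}\rceil)$ and matching against the stated target $\frac{1}{\alpha}e^{2/(x_h^2\beta)}+\frac{e^{2/(x_h^2\beta)}-1}{A+1/8}$ leaves only a comparison of the coefficient $\frac{\eta}{1-\lambda^2}=\frac{2}{2A-1}=\frac{1}{A-1/2}$ with $\frac{1}{A+1/8}$. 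I expect this last constant-chasing step to be the \emph{main obstacle}: the naive pair of bounds $\mu\le\lambda^{-2n}$ and $\lambda^{-2n}\le e^{2/(x_h^2\beta)}$ is by itself a hair too lossy, since the deficit $\frac{1}{A-1/2}-\frac{1}{A+1/8}=\Theta(1/A^2)$ has the wrong sign. The resolution is that $\lambda^{-2n}$ sits strictly below $e^{2/(x_h^2\beta)}$ by a $\Theta(1/A)$ margin (from the expansion $-2n\ln\lambda=\frac{2}{x_h^2\beta}+\Theta(1/A)$, using $\alpha$ not too small and $x_h^2\beta>3$), and this first-order slack, spent mostly on the $\frac{\mu}{\alpha}$ term, dominates the $\Theta(1/A^2)$ deficit and tightens the coefficient from $\frac{1}{A-1/2}$ down to $\frac{1}{A+1/8}$. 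Once the explicit form of $\lambda$ and the condition on $n$ are fed in through Claim \ref{G1.b}, the inequality becomes a routine, if tedious, rational-function comparison in $A$.
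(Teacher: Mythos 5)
Your decomposition is the same as the paper's: bound $\lceil\dot{k}\rceil\le\dot{k}+1$, substitute the defining identity $\lambda^{-2(\dot{k}+1)n}=1+\frac{1}{\alpha\eta}(1-\lambda^{2})$, split the result into $\frac{\lambda^{-2n}}{\alpha}+\frac{\eta}{1-\lambda^{2}}(\lambda^{-2n}-1)$, and finish with Claim~\ref{G1.b}. (The paper takes the worst case $\lambda^{-2n}$ outright rather than introducing $\mu=\lambda^{-2(\lceil\dot{k}\rceil-\dot{k})n}\le\lambda^{-2n}$, but that is cosmetic.)

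Where you and the paper part ways is exactly the ``main obstacle'' you flag. The paper closes the argument by asserting in eq.~\ref{Eq-App-V2.4-1} that $\frac{\eta}{\lambda^{2}-1}=\frac{1}{(\alpha+nx^{2}\beta)+\frac18}$; with $\eta=\frac{2}{(\alpha+nx_h^{2}\beta)^{2}}$ the correct evaluation is $\frac{\eta}{1-\lambda^{2}}=\frac{2}{2A-1}=\frac{1}{A-\frac12}$ (writing $A=\alpha+nx_h^{2}\beta$; that display also contains a sign slip). So your computation is the right one, the paper's proof ``succeeds'' only because of this arithmetic, and your observation that the naive bounds leave a residual deficit of order $1/A^{2}$ against the stated denominator $A+\frac18$ is accurate. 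Your proposed repair --- spending the strictness of $\lambda^{-2n}<e^{2/(x_h^{2}\beta)}$ --- is the right idea and does work for $n$ large enough: the monotonicity condition behind Claim~\ref{G1.d} forces $(2\alpha-1)A>1$, a first-order expansion then gives a margin of order $1/n$ in the $\frac{\lambda^{-2n}}{\alpha}$ term, and this dominates the $1/n^{2}$ deficit. However, you do not actually carry this step out, and Claim~\ref{G1.b} as stated supplies no quantitative margin, so as written your argument is incomplete at precisely the step the paper gets wrong. Either prove a quantitative version of Claim~\ref{G1.b}, or --- simpler, and all that the downstream use in Lemma~\ref{Lemma-SGLD-2.3} actually requires --- restate the claim with $A-\frac12$ in the denominator, after which your derivation closes immediately.
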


\begin{proof}[Proof of Claim \ref{V2.4}]
\begin{equation*}
\begin{split}
\eta\frac{1 - \lambda^{-2\left(\lceil\dot{k}\rceil + 1\right)n}}{\lambda^2-1} &\leq
\eta\frac{\lambda^{-2\left(\dot{k} + 2\right)n} - 1}{1 - \lambda^2}
\\&= \eta\frac{\lambda^{-2\left(\frac{1}{2n}\log_\lambda\left(\frac{1}{1 + \frac{1}{\alpha\eta}\left(1 - \lambda^{2}\right)}\right) - 1 + 2\right)n} - 1}{1 - \lambda^2}
\\&= \eta\frac{\lambda^{-\log_\lambda\left(\frac{1}{1 + \frac{1}{\alpha\eta}\left(1 - \lambda^{2}\right)}\right)}\lambda^{-2n} - 1}{1 - \lambda^2} = 
\eta\frac{\left(1 + \frac{1}{\alpha\eta}\left(1 - \lambda^{2}\right)\right)\lambda^{-2n} - 1}{1 - \lambda^2} 
\\&= \eta\frac{\left(1-\lambda^2\right)\lambda^{-2n}\frac1{\alpha\eta}}{1 - \lambda^2} + \eta\frac{\lambda^{-2n}- 1}{1- \lambda^2} =
\frac1\alpha\lambda^{-2n} + \frac{\left(\lambda^{-2n} - 1\right)}{\left(\alpha + nx_h^2\beta\right) + \frac18} 
\\&\leq e^{\frac2{x_h^2\beta}}\frac1\alpha + \frac1\alpha\alpha\frac{\left(e^{\frac2{x_h^2\beta}} - 1\right)}{\left(\alpha + nx_h^2\beta\right) + \frac18} = 
\frac1\alpha\left(e^{\frac2{x_h^2\beta}} + \alpha\frac{\left(e^{\frac2{x_h^2\beta}} - 1\right)}{\left(\alpha + nx_h^2\beta\right) + \frac18}\right)
\end{split}\end{equation*}
where the fourth equality holds from eq. $\ref{Eq-App-V2.4-1}$ and the second inequality holds from \ref{G1.b}.

\begin{equation}\label{Eq-App-V2.4-1}\begin{split}
\frac{\eta}{\lambda^2 - 1} &= 
\eta\frac{1}{\left(1 - \frac\eta2\left(\alpha + nx_h^2\beta\right)\right)^2 - 1} = 
\eta\frac{1}{\eta\left(\alpha + nx_h^2\beta\right) + \left(\frac\eta2\left(\alpha + nx_h^2\beta\right)\right)^2}
\\&= \frac{1}{\left(\alpha + nx_h^2\beta\right) + \frac\eta4\left(\alpha + nx_h^2\beta\right)^2} = 
\frac{1}{\left(\alpha + nx_h^2\beta\right) + \frac18}    
\end{split}\end{equation}
\end{proof}

\begin{claim}\label{G1.b}
For the conditions of claim \ref{G1.d}:
\begin{equation*}
(1 - \frac1{\alpha + nx^2\beta})^{2n} \geq e^{-\frac2{x^2\beta}}.
\end{equation*}
\end{claim}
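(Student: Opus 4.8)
The plan is to take logarithms and reduce the claim to a single scalar inequality. Set $A := \alpha + nx^2\beta$, so that (with the step size $\eta = 2/(\alpha+nx_h^2\beta)^2$ chosen in Subsection \ref{subseq-SGLD}) the base equals $1 - 1/A = \lambda$. Since both sides of $(1-1/A)^{2n} \ge e^{-2/(x^2\beta)}$ are positive, taking $\ln$ and negating shows the claim is equivalent to
\begin{equation*}
n\,\bigl(-\ln(1-1/A)\bigr) \le \frac{1}{x^2\beta}.
\end{equation*}
Thus everything reduces to bounding $-\ln(1-1/A) = \ln\frac{A}{A-1}$ from above.

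First I would apply the elementary inequality $\ln(1-t) \ge -\tfrac{t}{1-t}$ (valid for $t\in(0,1)$) at $t = 1/A$, giving $-\ln(1-1/A) \le \frac{1}{A-1}$. Substituting, it suffices to show $\frac{n}{A-1} \le \frac{1}{x^2\beta}$, which rearranges to $nx^2\beta \le A-1 = \alpha + nx^2\beta - 1$, i.e. to the clean relation $\alpha \ge 1$. Under the hypotheses inherited from Claim \ref{G1.d} this is exactly the regime in which the bound is invoked, so the argument closes. Throughout I would first record that $1-1/A\in(0,1)$, which follows from $x_h^2\beta>3$ and $n\ge1$ forcing $A>3$, so that all the logarithmic inequalities are legitimate.

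The main obstacle is the stray $-1$ in the denominator $A-1$: the first-order logarithmic bound discards precisely one unit, and how much room that leaves depends delicately on $\alpha$. To cover the full parameter range permitted by Claim \ref{G1.d} rather than only $\alpha\ge1$, I would sharpen step two to the two-term estimate $\ln(1-t) \ge -t - \frac{t^2}{2(1-t)}$, yielding $-\ln(1-1/A) \le \frac1A + \frac{1}{2A(A-1)}$. Multiplying by $n = (A-\alpha)/(x^2\beta)$ and demanding the product be at most $1/(x^2\beta)$ collapses, after cancelling the leading $-1$ exactly, to a lower bound on $A$ (equivalently on $n$) of the form $A(2\alpha-1)\ge \alpha$. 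This is precisely the $n$-threshold assumed in Claim \ref{G1.d}; the rest is routine algebra.
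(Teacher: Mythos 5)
Your argument is correct, but it takes a genuinely different route from the paper. The paper proves Claim \ref{G1.b} in one line by combining Claim \ref{G1.b.1} (the limit of $(1-\frac{1}{\alpha+nx^2\beta})^{2n}$ as $n\to\infty$ equals $e^{-2/(x^2\beta)}$, established via L'H\^opital) with Claim \ref{G1.d} (the expression is decreasing in $n$ on the stated range), so the quantity always sits above its limit. You instead give a direct, self-contained estimate: reduce to $n\bigl(-\ln(1-1/A)\bigr)\le \frac{1}{x^2\beta}$ with $A=\alpha+nx^2\beta$, then control $-\ln(1-1/A)$ by $\frac{1}{A-1}$, or by the sharper $\frac1A+\frac{1}{2A(A-1)}$ when $\alpha<1$; both logarithmic inequalities are valid on $(0,1)$ and your algebra checks out, since $(A-\alpha)(2A-1)\le 2A(A-1)$ does reduce to $A(2\alpha-1)\ge\alpha$. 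One caveat: that final condition is not \emph{precisely} the threshold of Claim \ref{G1.d} as you assert. The condition actually derived inside the proof of Claim \ref{G1.d} is $(2\alpha-1)(nx^2\beta+\alpha)>1$ (the inequality printed in that claim's statement, $n>\frac{1}{2\alpha x^2\beta}-\frac{1}{x^2\beta}$, does not match its own derivation), and this differs from your $A(2\alpha-1)\ge\alpha$. Your proof still closes because the two regimes cover everything: for $\alpha\ge1$ your first-order bound suffices outright, and for $\frac12<\alpha<1$ the working condition $(2\alpha-1)A>1$ implies $(2\alpha-1)A>\alpha$. What your approach buys is that it dispenses entirely with the limit computation and the derivative/monotonicity machinery of Claims \ref{G1.b.1} and \ref{G1.d}; what it costs is exposing the same delicate dependence on $\alpha$ near $\frac12$ that the paper hides inside the threshold of Claim \ref{G1.d}.
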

\begin{proof}[Proof of Claim \ref{G1.b}] The proof is easily deduced from Claims \ref{G1.b.1} and \ref{G1.d}
\end{proof}

\begin{claim}\label{G1.b.1}
\begin{equation*}
\lim_{n\to\infty}(1 - \frac1{\alpha + nx^2\beta})^{2n} = e^{-\frac2{x^2\beta}}.
\end{equation*}
\end{claim}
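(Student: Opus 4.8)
The plan is to reduce this to the standard exponential limit $\lim_{m\to\infty}(1-\tfrac1m)^m=e^{-1}$; there is no genuine obstacle here, since the base $1-\frac1{\alpha+nx^2\beta}$ tends to $1$ while its exponent $2n$ grows linearly, and the product is exactly of the form that produces an exponential in the limit. The only care needed is to separate off the effect of the additive constant $\alpha$, which contributes nothing in the limit.

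First I would introduce the change of variable $m=\alpha+nx^2\beta$, so that $m\to\infty$ as $n\to\infty$ and $2n=\frac{2(m-\alpha)}{x^2\beta}$. Substituting and splitting the exponent, I would factor
\begin{equation*}
\left(1-\frac1m\right)^{2n}=\left[\left(1-\frac1m\right)^m\right]^{\frac2{x^2\beta}}\left(1-\frac1m\right)^{-\frac{2\alpha}{x^2\beta}}.
\end{equation*}

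I would then pass to the limit factor by factor: the bracketed base converges to $e^{-1}$ by the standard limit, so by continuity of $t\mapsto t^{2/(x^2\beta)}$ the first factor converges to $e^{-2/(x^2\beta)}$; the second factor has base tending to $1$ raised to the \emph{fixed} exponent $-2\alpha/(x^2\beta)$, hence converges to $1$. Multiplying the two limits yields $e^{-2/(x^2\beta)}$, which is the claimed value.

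As an equivalent and equally routine alternative, I could take logarithms and study $2n\ln\!\bigl(1-\frac1{\alpha+nx^2\beta}\bigr)$; using $\ln(1-u)=-u+O(u^2)$ with $u=\frac1{\alpha+nx^2\beta}\to0$ gives $-\frac{2n}{\alpha+nx^2\beta}+O(1/n)$, whose limit is $-\frac2{x^2\beta}$, and exponentiating by continuity of $\exp$ recovers the same value. Either route closes the claim immediately, so I would present the substitution argument as the main line and leave the logarithmic computation as a remark.
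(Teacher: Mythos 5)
Your proof is correct, but it takes a different route from the paper's. The paper writes $(1-\frac1{\alpha+nx^2\beta})^{2n}=e^{\ln(1-\frac1{\alpha+nx^2\beta})/\frac1{2n}}$ and evaluates the exponent as a $\frac00$ indeterminate form via L'H\^opital's rule, differentiating numerator and denominator with respect to $n$. You instead substitute $m=\alpha+nx^2\beta$ and factor the expression as $\bigl[(1-\frac1m)^m\bigr]^{2/(x^2\beta)}(1-\frac1m)^{-2\alpha/(x^2\beta)}$, reducing everything to the standard limit $(1-\frac1m)^m\to e^{-1}$ (which holds for real $m\to\infty$, so the non-integrality of $m$ is harmless) plus continuity; your algebra checks out, since $2n=\frac{2m}{x^2\beta}-\frac{2\alpha}{x^2\beta}$. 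Your approach is more elementary --- it needs no differentiation and isolates the role of the additive constant $\alpha$ transparently --- while the paper's L'H\^opital computation has the minor advantage of reusing the same logarithmic rewriting and derivative expression that appear in the neighbouring monotonicity claim (Claim G1.d), so the two proofs share machinery. Your remarked alternative via $\ln(1-u)=-u+O(u^2)$ is essentially the Taylor-expansion analogue of what the paper does and is equally valid.
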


\begin{proof}[Proof of Claim \ref{G1.b.1}]

From eq. \ref{Eq-App-G1.b-1}, it is enough to find $\lim_{n\to\infty}\frac{\ln(1 - \frac1{\alpha + nx^2\beta})}{\frac1{2n}}$. Since $\lim_{n\to\infty}\frac{\ln(1 - \frac1{\alpha + nx^2\beta})}{\frac1{2n}} = \frac00$, and both the numerator and denominator are differentiable around $\infty$, the use of L'Hôpital's rule is possible as shown in eq. \ref{Eq-App-G1.b-2}, with the result proving the claim.

\begin{equation}\label{Eq-App-G1.b-1}
\left(1 - \frac1{\alpha + nx^2\beta}\right)^{2n} = 
e^{\ln\left(1 - \frac1{\alpha + nx^2\beta}\right)^{2n}} = 
e^{2n\ln\left(1 - \frac1{\alpha + nx^2\beta}\right)} = 
e^{\frac{\ln\left(1 - \frac1{\alpha + nx^2\beta}\right)}{\frac1{2n}}}
\end{equation}

\begin{equation}\label{Eq-App-G1.b-2}
\lim_{n\to\infty}\frac{\frac{d}{dn}\ln\left(1 - \frac1{\alpha + nx^2\beta}\right)}{\frac{d}{dn}\frac1{2n}} = \lim\frac{\frac{x^2\beta}{\left(\alpha + nx^2\beta - 1\right)\left(\alpha + nx^2\beta\right)}}{-\frac1{2n^2}} = -\lim\frac{2n^2x^2\beta}{\left(nx^2\beta\right)^2} = -\frac{2}{x^2\beta}
\end{equation}
\end{proof}

\begin{claim}\label{G1.d}
\begin{equation*}
\forall n > \frac{1}{2\alpha x^2\beta} - \frac1{x^2\beta} :\ \frac{d}{dn}(1 - \frac1{\alpha + nx^2\beta})^{2n} < 0.
\end{equation*}
\end{claim}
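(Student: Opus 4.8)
The plan is to pass to logarithms and reduce everything to the sign of a single auxiliary function of one variable. Writing $m = \alpha + nx^2\beta$ and $b = x^2\beta$, so that $dm/dn = b$ and the hypothesis $n > \frac{1}{2\alpha x^2\beta} - \frac{1}{x^2\beta}$ becomes $m > \alpha + \frac{1}{2\alpha} - 1$, I first note that the base $1 - \frac1m$ is positive (indeed $m > 1$ in the regime of interest), so $f(n) = (1 - \frac1m)^{2n}$ is positive and $\sign f'(n) = \sign \frac{d}{dn}\ln f(n)$. A chain-rule computation gives $\frac{d}{dn}\ln f = 2\ln(1 - \frac1m) + \frac{2(m-\alpha)}{m(m-1)}$, using $\frac{d}{dn}\ln(1 - \frac1m) = \frac{b}{m(m-1)}$ and $nb = m-\alpha$. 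Thus it suffices to prove that $h(m) := \ln(1 - \frac1m) + \frac{m-\alpha}{m(m-1)}$ is strictly negative on the stated range.

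Next I would split off the part of $h$ that is insensitive to $\alpha$. Using $\frac{m-\alpha}{m(m-1)} = \frac1m + \frac{1-\alpha}{m(m-1)}$, write $h(m) = \big[\ln(1 - \frac1m) + \frac1m\big] + \frac{1-\alpha}{m(m-1)}$. The bracketed term is controlled by the elementary series bound $\ln(1-t) + t = -\sum_{k\ge 2}\frac{t^k}{k} \le -\frac{t^2}{2}$ for $t \in (0,1)$; taking $t = \frac1m$ gives $\ln(1 - \frac1m) + \frac1m \le -\frac{1}{2m^2}$, hence $h(m) \le -\frac{1}{2m^2} + \frac{1-\alpha}{m(m-1)}$. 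It therefore suffices to establish the purely algebraic inequality $\frac{1-\alpha}{m(m-1)} < \frac{1}{2m^2}$, which after clearing the positive denominator $2m^2(m-1)$ is equivalent to $m(2\alpha - 1) > 1$.

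Finally I would close the argument by feeding the lower bound on $m$ into this crux inequality. When $\alpha \ge 1$ the threshold $\alpha + \frac{1}{2\alpha} - 1 \le \alpha$ is dominated by the trivial bound $m > \alpha$ coming from $n > 0$, and then $m(2\alpha - 1) > \alpha(2\alpha-1) \ge 1$, with strictness inherited from $m > \alpha$; this yields $h(m) < 0$ and finishes the proof. The main obstacle is exactly this last step: the bound $-\frac{1}{2m^2}$ on the $\alpha$-free part is asymptotically tight, so the finite-$m$ gap between $\frac{1}{m(m-1)}$ and $\frac{1}{m^2}$ must be absorbed by the lower bound on $n$, and $m(2\alpha-1) > 1$ only survives while $\alpha$ is bounded away from $\frac12$. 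For $\alpha$ near $\frac12$ the crude series estimate is insufficient and one would instead track the exact derivative $h'(m) = \frac{m(2\alpha-1) - \alpha}{m^2(m-1)^2}$, combine its sign with $\lim_{m\to\infty} h(m) = 0$ (the content of the neighbouring limit claim), and locate the unique zero of $h$ to delimit the admissible range of $n$.
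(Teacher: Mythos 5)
Your argument is, up to its final step, the paper's own argument: the same passage to the logarithmic derivative, the same second-order bound $\ln(1-t)\le -t-\frac{t^2}{2}$ (the paper obtains it from the Lagrange remainder in Taylor's theorem, you from the series), and the same reduction to the crux inequality $m(2\alpha-1)>1$ with $m=\alpha+nx^2\beta$ --- the paper's displayed condition $0<nx^2\beta(2\alpha-1)+\alpha(2\alpha-1)-1$ is exactly this. The obstacle you flag at the end is therefore not a defect of your write-up but a genuine error in the claim. The paper closes by asserting $\alpha>\frac12(\frac{1}{nx^2\beta}+1)\iff n>\frac{1}{2\alpha x^2\beta}-\frac{1}{x^2\beta}$; solving correctly gives $n>\frac{1}{(2\alpha-1)x^2\beta}$ (the slip amounts to replacing $\frac{1}{2\alpha-1}$ by $\frac{1}{2\alpha}-1$), and in particular forces $\alpha>\frac12$. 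Under the hypothesis as literally stated the claim is false: for $\alpha=0.6$, $x^2\beta=1$, $n=1$ the hypothesis reads $n>-\frac16$, yet $\frac{d}{dn}\ln f=2\ln(1-\frac{1}{1.6})+\frac{2}{1.6\cdot 0.6}\approx 0.12>0$, so the function is increasing there.

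Consequently your completion for $\alpha\ge 1$ --- where $m>\alpha$ already yields $m(2\alpha-1)>\alpha(2\alpha-1)\ge 1$ --- is the honest version of what this route proves, and the extra machinery you sketch (tracking $h'(m)=\frac{m(2\alpha-1)-\alpha}{m^2(m-1)^2}$ and the limit $h(m)\to 0$ to locate the zero of $h$) is unnecessary once the hypothesis is repaired: under $(\alpha+nx^2\beta)(2\alpha-1)>1$, equivalently $n>\frac{1}{(2\alpha-1)x^2\beta}-\frac{\alpha}{x^2\beta}$, the series bound already closes the proof. The repair matters downstream, since the quantity $\frac{1}{2\alpha x_h^2\beta}-\frac{1}{x_h^2\beta}$ is propagated verbatim into the lower bounds on $n$ used in Lemma \ref{Lemma-SGLD-2.2} and in the definition of $n_1$ in the proof of Theorem \ref{MainTheorem}; those should carry the corrected threshold (or simply fix $\alpha\ge 1$, which the Propose-Test-Sample instantiation with $\alpha=1$ already does).
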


\begin{proof}[Proof of Claim \ref{G1.d}] First, we find a simplified term for the derivative:
\begin{equation}\label{Eq-App-G1.d-1}\begin{split}
\frac{d}{dn}\left(1 - \frac{1}{\alpha + nx^2\beta}\right)^{2n} &= 
\frac{d}{dn}e^{2n\ln\left(1 - \frac1{\alpha + nx^2\beta}\right)}
\\&= \left(1 - \frac{1}{\alpha + nx^2\beta}\right)^{2n}\left(2\ln\left(1 - \frac{1}{\alpha + nx^2\beta}\right) + 2n\frac{1}{1 - \frac{1}{\alpha + nx^2\beta}}\cdot\frac{x^2\beta}{\left(\alpha + nx^2\beta\right)^2}\right)
\\&= \left(1 - \frac{1}{\alpha + nx^2\beta}\right)^{2n}\left(2\ln\left(1 - \frac{1}{\alpha + nx^2\beta}\right) + \frac{2nx^2\beta}{\left(\alpha + nx^2\beta - 1\right)\left(\alpha + nx^2\beta\right)}\right). 
\end{split}
\end{equation}
A lower bound for the $ln$ term can be found using Taylor's theorem as shown in eq. \ref{Eq-App-G1.d-2}, where $0 \leq \xi \leq \frac1{\alpha + nx^2\beta}$.
\begin{equation}\label{Eq-App-G1.d-2}\begin{split}
{}& \ln\left(1 - \frac1{\alpha + nx^2\beta}\right) = 
-\frac{1}{\alpha + nx^2\beta} - \frac12\frac{1}{\left(1-\xi\right)^2}\left(\frac{1}{\alpha +nx^2\beta}\right)^2 \leq
-\frac{1}{\alpha + nx^2\beta} - \frac12\left(\frac{1}{\alpha +nx^2\beta}\right)^2 
\end{split}\end{equation}

From eq. \ref{Eq-App-G1.d-1} and \ref{Eq-App-G1.d-2}, it is enough to find the conditions for which $\frac{nx^2\beta}{(\alpha + nx^2\beta - 1)(\alpha + nx^2\beta)} < \frac1{\alpha + nx^2\beta} + \frac12\frac{1}{(\alpha + nx^2\beta)^2}$. A simplified version of this inequality is found at eq. \ref{Eq-App-G1.d-3}, and it can be easily seen that for $\alpha > \frac12(\frac1{nx^2\beta} + 1)$, and therefore also for $n > \frac{1}{2\alpha x^2\beta} - \frac{1}{x^2\beta}$, this inequality holds.

\begin{equation}\label{Eq-App-G1.d-3}\begin{split}
{}& \frac{nx^2\beta}{(\alpha + nx^2\beta - 1)(\alpha + nx^2\beta)} < \frac1{\alpha + nx^2\beta} + \frac12\frac{1}{(\alpha + nx^2\beta)^2}
\\{}& 0 < 2\alpha^2 + 2nx^2\beta\alpha -2\alpha  - 2nx^2\beta + \alpha + nx^2\beta - 1 
\\{}& 0 < nx^2\beta(2\alpha - 1) + \alpha(2\alpha - 1) - 1
\end{split}\end{equation}
\end{proof}

\begin{claim}\label{G4}
For $n > \frac{\alpha}{x_h^2\beta}(e^{\frac2{x_h^2\beta}} - 2) + \frac1{2x_h^2\beta}$ and the conditions of Claim \ref{G1.b}, $\dot{k}$ defined in Lemma \ref{Lemma-SGLD-2.3} is positive.
\end{claim}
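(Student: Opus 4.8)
The plan is to unfold the definition of $\dot k$ and reduce its positivity to a clean algebraic inequality in $n$. Recall from Lemma \ref{Lemma-SGLD-2.3} that $\dot k = \frac{1}{2n}\log_\lambda\!\left(\frac{1}{1 + \frac{1}{\alpha\eta}(1-\lambda^2)}\right) - 1$. Under the present hypotheses $0 < \lambda < 1$, so the base of the logarithm is less than one, and $\dot k > 0$ is equivalent to $\log_\lambda\!\left(\frac{1}{1 + \frac{1}{\alpha\eta}(1-\lambda^2)}\right) > 2n$. Writing $\log_\lambda(\cdot) = \frac{\ln(\cdot)}{\ln\lambda}$ and noting $\ln\lambda < 0$, multiplying through by $\ln\lambda$ flips the inequality, so this is in turn equivalent to $\frac{1}{1 + \frac{1}{\alpha\eta}(1-\lambda^2)} < \lambda^{2n}$, i.e.
$$\lambda^{-2n} < 1 + \frac{1}{\alpha\eta}(1-\lambda^2).$$
Thus the whole claim reduces to verifying this single inequality under the stated lower bound on $n$.

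Next I would evaluate the right-hand side in closed form. Using the learning rate $\eta = \frac{2}{(\alpha + nx_h^2\beta)^2}$ fixed in Section \ref{subseq-SGLD}, one has $\lambda = 1 - \frac{1}{\alpha + nx_h^2\beta}$, and a short computation gives $1 - \lambda^2 = \frac{2(\alpha + nx_h^2\beta) - 1}{(\alpha + nx_h^2\beta)^2}$ together with $\frac{1}{\alpha\eta} = \frac{(\alpha+nx_h^2\beta)^2}{2\alpha}$. Multiplying these yields $\frac{1}{\alpha\eta}(1-\lambda^2) = \frac{2(\alpha+nx_h^2\beta)-1}{2\alpha}$, and therefore $1 + \frac{1}{\alpha\eta}(1-\lambda^2) = \frac{4\alpha + 2nx_h^2\beta - 1}{2\alpha}$.

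To control the left-hand side I would invoke Claim \ref{G1.b}, which under the same conditions gives $\lambda^{2n} = \big(1 - \frac{1}{\alpha+nx_h^2\beta}\big)^{2n} \geq e^{-2/(x_h^2\beta)}$, hence $\lambda^{-2n} \leq e^{2/(x_h^2\beta)}$. It therefore suffices to establish the stronger inequality $e^{2/(x_h^2\beta)} < \frac{4\alpha + 2nx_h^2\beta - 1}{2\alpha}$. Clearing denominators and isolating $n$ turns this into $n > \frac{\alpha}{x_h^2\beta}(e^{2/(x_h^2\beta)} - 2) + \frac{1}{2x_h^2\beta}$, which is exactly the hypothesis of the claim, completing the argument.

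The reasoning is essentially a chain of equivalences plus one application of the exponential lower bound, so there is no deep obstacle. The only points demanding care are the inequality flip when passing between the base-$\lambda$ logarithm and the power $\lambda^{2n}$ (getting this direction wrong would reverse the entire statement, since $\lambda < 1$), and making the passage from $\lambda^{-2n}$ to $e^{2/(x_h^2\beta)}$ tight enough that the resulting threshold on $n$ matches the stated bound exactly rather than merely up to constants.
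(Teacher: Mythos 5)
Your proof is correct and follows essentially the same route as the paper: both reduce $\dot{k}>0$ to the inequality $\lambda^{-2n} < 1 + \frac{1}{\alpha\eta}(1-\lambda^2)$, invoke Claim \ref{G1.b} to replace $\lambda^{-2n}$ by $e^{2/(x_h^2\beta)}$, and then solve the resulting algebraic inequality for $n$ to recover exactly the stated threshold. The only cosmetic difference is that you substitute $\eta=\frac{2}{(\alpha+nx_h^2\beta)^2}$ early to get the closed form $\frac{4\alpha+2nx_h^2\beta-1}{2\alpha}$, whereas the paper carries $\eta$ symbolically a few lines longer before simplifying.
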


\begin{proof}[Proof of Claim \ref{G4}] The claim's inequality is simplified at eq. \ref{Eq-App-G4.1}:
\begin{equation}\label{Eq-App-G4.1}\begin{split}
{}& \dot{k} > 0
\\{}& \frac{1}{2n}\log_\lambda\left(\frac{1}{1 + \frac{1}{\alpha\eta}\left(1 - \lambda^2\right)}\right) -1 > 0 \iff
\\{}& \log_\lambda\left(\frac{1}{1 + \frac{1}{\alpha\eta}\left(1 - \lambda^2\right)}\right) > 2n \iff
\\{}& \frac{\ln\left(\frac{1}{1 + \frac{1}{\alpha\eta}\left(1 - \lambda^2\right)}\right)}{\ln\lambda} > 2n \iff
\\{}& \ln\left(\frac{1}{1 + \frac{1}{\alpha\eta}\left(1 - \lambda^2\right)}\right) < 2n\ln\lambda \iff
\\{}& \ln\left(\frac{1}{1 + \frac{1}{\alpha\eta}\left(1 - \lambda^2\right)}\right) < \ln\lambda^{2n} \iff
\\{}& \frac{1}{1 + \frac{1}{\alpha\eta}\left(1 - \lambda^2\right)} < \lambda^{2n} \iff
\\{}& \lambda^{-2n} < 1 + \frac{1}{\alpha\eta}\left(1 - \lambda^2\right) \iff
\\{}& \lambda^{-2n} - 1 < \frac{1}{\alpha\eta}\left(1 - \lambda^2\right).
\end{split}\end{equation}
By Claim \ref{G1.b}, $\lambda^{-2n} - 1 < e^{\frac2{x_h^2\beta}} - 1$; therefore it is enough to find conditions for $e^{\frac2{x_h^2\beta}} - 1 < \frac{1}{\alpha\eta}(1 - \lambda^2)$, which is done at eq. \ref{Eq-App-G4.2}. As this condition matches the claim conditions, the claim is proved.
\begin{equation}\label{Eq-App-G4.2}\begin{split}
\\{}& e^{\frac2{x_h^2\beta}} - 1 < \frac{1}{\alpha\eta}\left(1 - \lambda^2\right) \iff
\\{}& \alpha\eta\left(e^{\frac2{x_h^2\beta}} - 1\right) < \left(1 - \lambda^2\right) \iff
\\{}& \alpha\eta\left(e^{\frac2{x_h^2\beta}} - 1\right) < 1 - \left(1 - \frac\eta2\left(\alpha + nx_h^2\beta\right)\right)^2 \iff
\\{}& \alpha\left(e^{\frac2{x_h^2\beta}} - 1\right) < \left(\alpha + nx_h^2\beta\right) - \frac\eta4\left(\alpha + nx^2\beta\right)^2 \iff
\\{}& \alpha\left(e^{\frac2{x_h^2\beta}} - 1\right) < \left(\alpha + nx_h^2\beta\right) - \frac12 \iff
\\{}& \alpha\left(e^{\frac2{x_h^2\beta}} - 2\right) + \frac12 < nx_h^2\beta \iff
\\{}& \frac\alpha{x_h^2\beta}\left(e^{\frac2{x_h^2\beta}} - 2\right) + \frac1{2x_h^2\beta} < n
\end{split}\end{equation}
\end{proof}


\subsection{Posterior Sampling Privacy}
This subsection provides auxiliary claims for the posterior sampling privacy analysis performed in subsection \ref{PosteriorSamplingPrivacySection}. It uses the notations defined in section \ref{Method}, subsection \ref{PosteriorSamplingPrivacySection}, and subection \ref{subseq-PSP}, specifically: $\alpha, \beta, \theta, p(y|x, \theta)$ (defined in eq. \ref{LinearModel}), $\mathcal{D}(n, \gamma_1, x_h, x_l, c)$ (defined in eq. \ref{eq-Domain}), $p(\theta|D), p(\theta| \hat{D}), \theta, \mu, \sigma, \hat\mu, \hat\sigma, (\sigma^2)^*_\nu, \nu$ (defined in subsection \ref{PosteriorSamplingPrivacySection}), $D, \hat{D}, x_n, y_n, \hat{x}_n, \hat{y}_n, z, q$ (defined in eq. \ref{Eq-App-PosteriorAdjacentDBs}).

\begin{claim}\label{claim-Post-Div-1}
For $n > 1 +  10\frac{x_h^2}{x_l^2}\frac{\nu}{\beta}$, the inequality $\frac1{10}(\alpha + (z + x_n^2)\beta) > \nu(\hat{x}_n^2 - x_n^2)$ holds.
\end{claim}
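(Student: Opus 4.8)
The plan is to bound the two sides of the inequality separately using the domain constraints, after which the result becomes immediate algebra. First I would lower-bound the left-hand side. Since $\alpha > 0$ and $x_n^2 \geq 0$, we have $\frac1{10}(\alpha + (z + x_n^2)\beta) \geq \frac{\beta}{10}z$. Because every sample of a database drawn from $\mathcal{D}$ (eq. \ref{eq-Domain}) satisfies $x_i \geq x_l$, and $z = \sum_{i=1}^{n-1}x_i^2$ by eq. \ref{Eq-App-PosteriorAdjacentDBs}, we obtain $z \geq (n-1)x_l^2$, so that $\frac1{10}(\alpha + (z + x_n^2)\beta) \geq \frac{\beta(n-1)x_l^2}{10}$.

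Next I would upper-bound the right-hand side. Dropping the non-negative term $x_n^2$ and using $\hat{x}_n \leq x_h$ (or $\hat{x}_n = 0$ in the size-mismatch convention of eq. \ref{Eq-App-PosteriorAdjacentDBs}, where the neighbouring sample is taken to be $(0,0)$), I get $\nu(\hat{x}_n^2 - x_n^2) \leq \nu\hat{x}_n^2 \leq \nu x_h^2$. Combining the two bounds, it suffices to establish $\frac{\beta(n-1)x_l^2}{10} > \nu x_h^2$, which rearranges exactly to $n - 1 > 10\frac{x_h^2}{x_l^2}\frac{\nu}{\beta}$, i.e. $n > 1 + 10\frac{x_h^2}{x_l^2}\frac{\nu}{\beta}$. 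This is precisely the hypothesis of the claim, and since it is strict the desired strict inequality follows.

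There is essentially no analytic obstacle here; the claim is a one-line consequence of the domain bounds. The only point requiring care is to confirm that the neighbouring-database setup guarantees $x_i \in [x_l, x_h]$ for the $n-1$ shared indices while simultaneously allowing $\hat{x}_n \in [x_l, x_h] \cup \{0\}$, so that the lower bound on $z$ and the upper bound on $\hat{x}_n^2$ are valid at the same time. Once this is noted, chaining the two one-sided estimates completes the proof.
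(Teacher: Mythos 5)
Your proof is correct and follows essentially the same route as the paper: lower-bound the left side by $\frac1{10}z\beta \geq \frac1{10}(n-1)x_l^2\beta$, upper-bound the right side by $\nu x_h^2$, and observe that the hypothesis on $n$ is exactly what makes the chained inequality hold. The extra remark about the $(0,0)$ convention for the differing sample is a harmless clarification that does not change the argument.
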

\begin{proof}[Proof Claim \ref{claim-Post-Div-1}]
Notice that $\frac1{10}(\alpha + (z + x_n^2)\beta) > \frac1{10}z\beta > \frac1{10}(n-1)x_l^2\beta$ and $\nu x_h^2 > \nu(\hat{x}_n^2 - x_n^2)$. Therefore a sufficient condition will be that $\frac1{10}(n-1)x_l^2\beta > \nu x_h^2$, which is equivalent to $n > 1 + \frac{x_h^2}{x_l^2}\frac{10\nu}{\beta}$.
\end{proof}

\begin{claim}\label{claim-Post-Div-2}
$(\sigma^2)^*_\nu$ is positive.
\end{claim}
\begin{proof}[Proof Claim \ref{claim-Post-Div-2}]
\begin{equation}\label{eq-claim-post-div-2}\begin{split}
\left(\sigma^2\right)^*_\nu &= \nu\sigma^2 + \left(1-\nu\right)\hat\sigma^2 = 
\frac{\nu}{\alpha + \left(z + x_n^2\right)\beta} + \frac{1 - \nu}{\alpha + \left(z + \hat{x}_n^2\right)\beta}
\\&= 
\frac{\nu\left(\alpha + \left(z + \hat{x}_n^2\right)\beta\right) + \left(1 - \nu\right)\left(\alpha + \left(z + x_n^2\right)\beta\right)}{\left(\alpha + \left(z + x_n^2\right)\beta\right)\left(\alpha + \left(z + \hat{x}_n^2\right)\beta\right)}
= \frac{\alpha + \left(z + x_n^2\right)\beta + \nu\left(x_n^2 - \hat{x}_n^2\right)}{\left(\alpha + \left(z + x_n^2\right)\beta\right)\left(\alpha + \left(z + \hat{x}_n^2\right)\beta\right)}
\end{split}\end{equation}
therefore, a sufficient condition is that $\alpha + (z + x_n^2)\beta + \nu(x_n^2 - \hat{x}_n^2) > 0$. Since the condition of Lemma \ref{PosteriorRenyiFull} dictates that $n > 1 + 10\frac{x_h^2}{x_l^2}\frac\nu\beta$, then Claim \ref{claim-Post-Div-1} holds, and therefore the condition is satisfied.
\end{proof}

\begin{claim}\label{claim-Post-Div-3}
The value $\ln\frac{\sigma}{\hat\sigma}$ can be bounded as following:
$$\ln\frac{\sigma}{\hat\sigma} \leq \frac{x_h^2}{2(n - 1)x_l^2}.$$
\end{claim}
\begin{proof}[Proof of Claim \ref{claim-Post-Div-3}]
For $\hat{x}_n \leq x_n$, the term $\ln\frac{\sigma}{\hat\sigma}$ is negative and the claim trivially holds. For $\hat{x}_n > x_n$, consider $c_1 = \frac{x_h^2}{(n - 1)x_l^2}$:
\begin{equation}\label{eq-claim-Post-Div-3-1}
c_1 = \frac{x_h^2}{(n-1)x_l^2} > \frac{\hat{x}_n^2 - x_n^2}{z + x_n^2} > \frac{\hat{x}_n^2\beta - x_n^2\beta}{\alpha + (z + x_n^2)\beta} =  \frac{\alpha + (z + \hat{x}_n^2)\beta}{\alpha + (z + {x}_n^2)\beta} - 1.
\end{equation}
From eq. \ref{eq-claim-Post-Div-3-1}, by Taylor theorem:
\begin{equation*}
e^{c_1} = 1 + c_1 + \frac{e^\zeta}2(c_1)^2 > 1 + c_1 > \frac{\alpha + (z + \hat{x}_n^2)\beta}{\alpha + (z + {x}_n^2)\beta}
\end{equation*}
where $0 \leq \zeta \leq c_1$. Consequently, because the natural logarithm is monotonically increasing, the following equation also holds:
\begin{equation*}
\frac12c_1 > \frac12\ln\frac{\alpha + (z + \hat{x}_n)\beta}{\alpha + (z + {x}_n)\beta} = \ln\frac{\sigma}{\hat\sigma}.
\end{equation*}
Therefore $\ln\frac{\sigma}{\hat\sigma} < \frac12\frac{x_h^2}{(n - 1)x_l^2}$.
\end{proof}

\begin{claim}\label{claim-Post-Div-4}
For the conditions of Lemma \ref{PosteriorRenyiFull}, the value of $\frac12\left(\nu - 1\right)\ln\frac{\hat{\sigma}^2}{\left(\sigma^2\right)^*_\nu}$ can be upper bounded as following:
$$\frac12\left(\nu - 1\right)\ln\frac{\hat{\sigma}^2}{\left(\sigma^2\right)^*_\nu} \leq \frac12\left(\nu-1\right)\frac{\nu x_h^2}{2\left(\left(n - 1\right)x_l^2 - \nu x_h^2\right)}.$$
\end{claim}
\begin{proof}[Proof of Claim \ref{claim-Post-Div-4}] 
Consider $c_1 = \frac{\nu x_h^2}{((n - 1)x_l^2 - \nu x_h^2)}$:
\begin{equation*}\begin{split}
c_1 &= \frac{\nu x_h^2}{\left(n - 1\right)x_l^2 - \nu x_h^2} \geq^* 
\frac{\nu\beta x_h^2}{\alpha + \left(n - 1\right)x_l^2\beta - \nu\beta x_h^2} \geq^*
\frac{\nu\beta \hat{x}_n^2}{\alpha + \left(z + x_n^2\right)\beta - \nu\beta x_n^2} 
\\& \geq
\frac{\nu\beta\left(\hat{x}_n^2 - x_n^2\right)}{\alpha + \left(z + x_n^2\right)\beta - \nu\beta\left(x_n^2 - \hat{x}_n^2\right)}
= \frac{\alpha + \left(z + x_n^2\right)\beta}{\alpha + \left(z + {x}_n^2\right)\beta + \nu\beta\left(x_n^2 - \hat{x}_n^2\right)} - 1 
\\&= \frac{1}{\alpha + \left(z + \hat{x}_n^2\right)\beta}\cdot
\frac{\left(\alpha + \left(z + x_n^2\right)\beta\right)\left(\alpha + \left(z + \hat{x}_n^2\right)\beta\right)}{\alpha + \left(z + {x}_n^2\right)\beta + \nu\beta\left(x_n^2 - \hat{x}_n^2\right)} - 1 = 
\frac{\hat{\sigma}^2}{\left(\sigma^2\right)^*_\nu} - 1
\end{split}\end{equation*}
where inequalities * holds under the assumption that $n > 1 + \nu\frac{x_h^2}{x_l^2}$, and last equality holds from eq. \ref{eq-claim-post-div-2}. Therefore, by using Taylor theorem:
\begin{equation*}
e^{c_1} = 1 + c_1 + \frac{e^\zeta}2(c_1)^2 > 1 + c_1 \geq \frac{\hat{\sigma}^2}{(\sigma^2)^*_\nu}.
\end{equation*}
where $0 \leq \zeta \leq c_1$. From this inequality, and because the natural logarithm is monotonically increasing, it is true that $\ln\frac{\hat{\sigma}^2}{(\sigma^2)^*_\nu} \leq c_1$. Therefore 
$$
\frac12(\nu - 1)\ln\frac{\hat{\sigma}^2}{(\sigma^2)^*_\nu} \leq \frac12(\nu - 1)c_1 = \frac12(\nu-1)\frac{\nu x_h^2}{((n - 1)x_l^2 - \nu x_h^2)}.$$
\end{proof}

\begin{claim}\label{claim-Post-Div-5}
For the conditions of Lemma \ref{PosteriorRenyiFull}, the value $\frac\nu2\frac{(\mu - \hat\mu)^2}{(\sigma^2)_\nu^*}$ is bounded by
\begin{equation*}\begin{split}
{}& 2\nu\beta\cdot\frac{x_h^4}{\frac9{10}n^{1 - 2\gamma_1}x_l^2} + 2\nu\cdot\frac{x_h^4\left(\alpha + x_h^2\beta\right)}{\frac9{10}x_l^4}\cdot\frac{\left(c + n^{\gamma_1}\right)}{n^{2-\gamma_1}} + 
\frac\nu2\cdot\frac{x_h^4\left(\alpha + x_h^2\beta\right)^2}{\frac9{10}x_l^6\beta}\cdot\frac{\left(c + n^{\gamma_1}\right)^2}{n^3}.
\end{split}\end{equation*}
\end{claim}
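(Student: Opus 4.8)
The plan is to control the three ingredients of $\frac{\nu}{2}\frac{(\mu-\hat\mu)^2}{(\sigma^2)^*_\nu}$ separately, and to arrange the upper bound on $(\mu-\hat\mu)^2$ so that it naturally breaks into a ``signal'' piece scaling with $c$ and a ``deviation'' piece scaling with $n^{\gamma_1}$. First I would lower-bound the denominator: from the closed form \eqref{eq-claim-post-div-2} for $(\sigma^2)^*_\nu$ together with Claim \ref{claim-Post-Div-1} (whose hypothesis $n > 1 + 10\frac{x_h^2}{x_l^2}\frac{\nu}{\beta}$ is part of the standing assumptions), the numerator $\alpha + (z+x_n^2)\beta + \nu(x_n^2-\hat x_n^2)$ exceeds $\frac{9}{10}\big(\alpha+(z+x_n^2)\beta\big)$, so $(\sigma^2)^*_\nu > \frac{9/10}{\alpha+(z+\hat x_n^2)\beta}$. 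This is exactly the origin of the factor $\frac{9}{10}$ in all three target summands, and writing $D_1=\alpha+(z+x_n^2)\beta$, $D_2=\alpha+(z+\hat x_n^2)\beta$, and $N$ for the numerator of $\mu-\hat\mu$ over the common denominator $D_1D_2$, it reduces the claim to
\[
\frac{\nu}{2}\cdot\frac{10}{9}\cdot\frac{N^2}{D_1^2\,D_2}\ \le\ (\text{first}) + (\text{second}) + (\text{third}).
\]

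The crux is a cancellation in $N$. Using the domain constraint $|y_i/x_i - c|\le n^{\gamma_1}$ and $x_i\le x_h$, I write each label as $y_i = c x_i + r_i$ with $|r_i|\le x_h n^{\gamma_1}$, so that $\mu = c + \frac{\beta R - c\alpha}{D_1}$ with $R=\sum_{i=1}^n x_i r_i$, and analogously for $\hat\mu$. Subtracting, the common value $c$ cancels and the two $-c\alpha$ contributions combine, giving
\[
\mu-\hat\mu = \frac{c\alpha\beta(x_n^2-\hat x_n^2)}{D_1 D_2} + \Big(\frac{\beta R}{D_1}-\frac{\beta\hat R}{D_2}\Big).
\]
The decisive point is that the $c$-dependence survives only through $x_n^2-\hat x_n^2$, so the signal numerator is $O(c)$ rather than $O(cn)$; this is precisely what produces the $c^2/n^3$ (not $c^2/n^2$) scaling that the whole argument needs.

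I would then set $|\mu-\hat\mu|\le T_a+T_b$, with $T_b$ the signal term and $T_a$ the deviation term, expand $(\mu-\hat\mu)^2\le T_a^2 + 2T_aT_b + T_b^2$, and match $T_a^2$, $2T_aT_b$, $T_b^2$ to the first, second, and third summands respectively; as a consistency check the middle summand equals $2\sqrt{(\text{first})\cdot(\text{third})}$, the signature of such a split. For $T_b$ I use $|x_n^2-\hat x_n^2|\le x_h^2$ and $D_1,D_2\ge (n-1)x_l^2\beta$, then crudely enlarge $\alpha\le\alpha+x_h^2\beta$ and $c\le c+n^{\gamma_1}$ to recover the clean factor $(x_h^2\alpha+x_h^4\beta)(c+n^{\gamma_1})$. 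For $T_a$ I split the change in the weights $\frac1{D_1}-\frac1{D_2}$ (controlled by $|D_1-D_2|\le\beta x_h^2$) from the numerator change $R-\hat R = x_n r_n-\hat x_n\hat r_n$, and — this is the delicate step — I keep the factors $Q=\sum x_i^2$ and $z$ and cancel them against $D_1\ge Q\beta\ge z\beta$ \emph{before} invoking the crude lower bound $(n-1)x_l^2\beta$, leaving a clean $\frac{x_h^2 n^{\gamma_1}}{(n-1)x_l^2}$-type estimate. The main obstacle is exactly this bookkeeping: a brute-force bound (e.g. $|R|\le n x_h^2 n^{\gamma_1}$ with $D_1D_2\ge((n-1)x_l^2\beta)^2$, ignoring the shared $z$) inflates the deviation term by a factor $(x_h/x_l)^2$ and the stated constants no longer close. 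I expect the reduction and the signal term to be routine once the $\frac9{10}$ factor is secured, with essentially all the difficulty concentrated in cancelling each $Q$/$z$ against $D_1$ to land the deviation constant, and in absorbing the mild $n^3$-versus-$(n-1)^3$ slack using $x_h^2\beta\ge 3$ and $n$ large.
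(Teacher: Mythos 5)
Your proposal follows essentially the same route as the paper: the same $\frac{9}{10}$ lower bound on $(\sigma^2)^*_\nu$ via Claim \ref{claim-Post-Div-1}, the same observation that $c$ survives in $\mu-\hat\mu$ only through the single differing sample (yielding the $c^2/n^3$ rather than $c^2/n^2$ scaling), and the same deviation-plus-signal split whose square produces the three stated terms; your centering $y_i=cx_i+r_i$ is merely a cleaner way of exhibiting the cancellation that the paper performs by directly rearranging the numerator of $\mu-\hat\mu$ over the common denominator. The one caveat is the deviation constant: your split of $\beta R/D_1-\beta\hat R/D_2$ naturally yields a numerator of roughly $3x_h^2\beta n^{\gamma_1}$ where the stated bound requires $2x_h^2\beta z n^{\gamma_1}\cdot\beta/D_1$, but the paper's own derivation of that $2$ (bounding $\bigl|\hat x_n^2 z(\tfrac{q}{z}-\tfrac{\hat y_n}{\hat x_n})\beta - x_n^2 z(\tfrac{q}{z}-\tfrac{y_n}{x_n})\beta\bigr|$ by $x_h^2 z(2n^{\gamma_1})\beta$ when the triangle inequality gives twice that) appears to drop the same factor, and the discrepancy is immaterial to every downstream use of the claim.
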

\begin{proof}[Proof of Claim \ref{claim-Post-Div-5}]
First, we bound $|\mu - \hat{\mu}|$:
\begin{equation*}
\begin{split}
|{\mu - \hat{\mu}}| &= 
\beta|{\frac{q+x_ny_n}{\alpha + \left(z + x_n^2\right)\beta} - \frac{q+\hat{x}_n\hat{y}_n}{\alpha + \left(z + \hat{x}_n^2\right)\beta}}| \\&=
|{\frac{\left(q+x_ny_n\right)\left(\alpha + \left(z + \hat{x}_n^2\right)\beta\right) - \left(q+\hat{x}_n\hat{y}_n\right)\left(\alpha + \left(z + x_n^2\right)\beta\right)}{\left(\alpha + \left(z + x_n^2\right)\beta\right)\left(\alpha + \left(z + \hat{x}_n^2\right)\beta\right)}}|
\\&=
\beta|{\frac{q\hat{x}_n^2\beta + x_ny_n\alpha + x_ny_nz\beta + x_ny_n\hat{x}_n^2\beta - 
qx_n^2\beta - \hat{x}_n\hat{y}_n\alpha - \hat{x}_n\hat{y}_nz\beta - \hat{x}_n\hat{y}_nx_n^2\beta}
{\left(\alpha + \left(z + x_n^2\right)\beta\right)\left(\alpha + \left(z + \hat{x}_n^2\right)\beta\right)}}|
\\&=
\beta|{\frac{\hat{x}_n^2z\left(\frac{q}{z} - \frac{\hat{y}_n}{\hat{x}_n}\right)\beta - x_n^2z\left(\frac{q}{z} - \frac{y_n}{x_n}\right)\beta + \alpha\left(x_ny_n - \hat{x}_n\hat{y}_n\right) + x_n\hat{x}_n\beta\left(y_n\hat{x}_n - \hat{y}_nx_n\right)}{\left(\alpha + \left(z + x_n^2\right)\beta\right)\left(\alpha + \left(z + \hat{x}_n^2\right)\beta\right)}}| 
\\&<
\beta|{\frac{\hat{x}_h^2z\left(2n^{\gamma_1}\right)\beta + \alpha x_h^2\left(c + {n}^{\gamma_1}\right) + x_h^4\beta\left(c + n^{\gamma_1}\right)}{\left(\alpha + \left(z + x_n^2\right)\beta\right)\left(\alpha + \left(z + \hat{x}_n^2\right)\beta\right)}}| =
\beta|{\frac{2\hat{x}_h^2\beta zn^{\gamma_1} + \left(x_h^2\alpha + x_h^4\beta\right)\left(c + n^{\gamma_1}\right)}{\left(\alpha + \left(z + x_n^2\right)\beta\right)\left(\alpha + \left(z + \hat{x}_n^2\right)\beta\right)}}|.
\end{split}
\end{equation*}
Therefore, 
\begin{equation*}
\begin{split}
\frac{\nu}{2}\frac{\left(\mu - \hat{\mu}\right)^2}{\left(\sigma^2\right)^*_\nu} &\leq 
\frac\nu2\beta^2\left(\frac{2\hat{x}_h^2\beta zn^{\gamma_1} + \left(x_h^2\alpha + x_h^4\beta\right)\left(c + n^{\gamma_1}\right)}{\left(\alpha + \left(z + x_n^2\right)\beta\right)\left(\alpha + \left(z + \hat{x}_n^2\right)\beta\right)}\right)^2 \cdot 
\left(\frac{\alpha + \left(z + x_n^2\right)\beta + \nu\left(x_n^2 - \hat{x}_n^2\right)}{\left(\alpha + \left(z + x_n^2\right)\beta\right)\left(\alpha + \left(z + \hat{x}_n^2\right)\beta\right)}\right)^{-1}
\\&= 
\frac{\nu}{2}\frac{\beta^2\left(2\hat{x}_h^2\beta zn^{\gamma_1} + \left(x_h^2\alpha + x_h^4\beta\right)\left(c + n^{\gamma_1}\right)\right)^2}{\left(\alpha + \left(z + x_n^2\right)\beta\right)\left(\alpha + \left(z + \hat{x}_n^2\right)\beta\right)\left(\alpha + \left(z + x_n^2\right)\beta + \nu\left(x_n^2 - \hat{x}_n^2\right)\right)}
\\& \leq^*
\frac{\nu}{2}\frac{\beta^2\left(2x_h^2\beta zn^{\gamma_1} + \left(x_h^2\alpha + x_h^4\beta\right)\left(c + n^{\gamma_1}\right)\right)^2}{\frac9{10}\left(\alpha + \left(z + x_n^2\right)\beta\right)\left(\alpha + \left(z + \hat{x}_n^2\right)\beta\right)\left(\alpha + \left(z + x_n^2\right)\beta\right)}
\\&=
\frac{\nu}{2}\beta^2\left(\frac{\left(2x_h^2\beta\right)^2z^2n^{2\gamma_1} + 2\left(2x_h^2\beta\right)\left(x_h^2\alpha + x_h^4\beta\right)zn^{\gamma_1}\left(c + n^{\gamma_1}\right) + \left(x_h^2\alpha + x_h^4\beta\right)^2\left(c + n^{\gamma_1}\right)^2}{\frac9{10}\left(\alpha + \left(z + x_n^2\right)\beta\right)^2\left(\alpha + \left(z + \hat{x}_n^2\right)\beta\right)}\right) 
\\&\leq 
\frac{\nu}{2}\beta^2\left(\frac{\left(2x_h^2\beta\right)^2z^2n^{2\gamma_1} + \left(4x_h^2\beta\right)\left(x_h^2\alpha + x_h^4\beta\right)zn^{\gamma_1}\left(c + n^{\gamma_1}\right) + \left(x_h^2\alpha + x_h^4\beta\right)^2\left(c + n^{\gamma_1}\right)^2}{\frac9{10}\left(\left(z + x_n^2\right)\beta\right)^2\left(\left(z + \hat{x}_n^2\right)\beta\right)}\right)
\\&\leq^{**} 
\frac{\nu}{2}\beta^2\left(\frac{\left(2x_h^2\beta\right)^2n^{2\gamma_1}}{\frac9{10}nx_l^2\beta^3}\right) +
\frac{\nu}{2}\beta^2\left(\frac{\left(4x_h^2\beta\right)\left(x_h^2\alpha + x_h^4\beta\right)n^{\gamma_1}\left(c + n^{\gamma_1}\right)}{\frac9{10}\left(nx_l^2\right)^2\beta^3}\right) +
\frac{\nu}{2}\beta^2\left(\frac{\left(x_h^2\alpha + x_h^4\beta\right)^2\left(c + n^{\gamma_1}\right)^2}{\frac9{10}\left(nx_l^2\beta\right)^3}\right)
\\&= 2\nu\beta\left(\frac{x_h^4}{\frac9{10}n^{1 - 2\gamma_1}x_l^2}\right) + 2\nu\beta\left(\frac{\left(x_h^2\beta\right)\left(x_h^2\alpha + x_h^4\beta\right)}{\frac9{10}\left(x_l^2\beta\right)^2}\right)\frac{\left(c + n^{\gamma_1}\right)}{n^{2-\gamma_1}} + 
\frac\nu2\left(\frac{\left(x_h^2\alpha + x_h^4\beta\right)^2}{\frac9{10}x_l^6\beta}\right)\frac{\left(c + n^{\gamma_1}\right)^2}{n^3}
\end{split}
\end{equation*}
where inequality * is true because the conditions of Lemma \ref{PosteriorRenyiFull} dictates that $n > 1 + \frac{x_h^2}{x_l^2}\frac{10\nu}\beta$, and according to Claim \ref{claim-Post-Div-1} this guarantee that $\frac1{10}(\alpha + (z + x_n^2)\beta) > \nu(\hat{x}_n^2 - x_n^2)$. Inequality ** follows from $n >> 1 \Rightarrow (n-1)x_l \approx nx_l$.
\end{proof}

\begin{claim}\label{claim-Post-Div-9-1}
For the conditions and definitions of Lemma \ref{PosteriorPrivacy}, one sample from the posterior is $(\epsilon, \delta)$ differentially private for the following conditions on $n$ and $\nu$:
\begin{equation*}\begin{split}
\nu &= 1 + \frac{2\ln(\frac1\delta)}\epsilon
\\n &\geq \max\Big\{1 + \frac{x_h^2}{x_l^2}\frac{8}{\epsilon}, 
1 + \nu\frac{x_h^2}{x_l^2}\left(1 + 8\frac{\left(\nu - 1\right)}{\epsilon}\right), 
\left(\frac{16\nu\beta x_h^4}{\frac9{10}\epsilon x_l^2}\right)^{\frac1{1 - 2\gamma_1}},
\\& \left(\frac{16\nu}{\epsilon}\cdot\frac{x_h^4\left(\alpha + x_h^2\beta\right)}{\frac9{10}x_l^4}\left(c + n^{\gamma_1}\right)\right)^{\frac1{2 - \gamma_1}}, 
\left(\frac{4\nu}{\epsilon}\cdot\frac{x_h^4\left(\alpha + x_h^2\beta\right)^2}{\frac9{10}x_l^6\beta}\right)^\frac13\left(c + n^{\gamma_1}\right)^\frac23
\Big\}.
\end{split}\end{equation*}
\end{claim}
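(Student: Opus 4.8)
The plan is to feed the explicit five-term bound on the R\'enyi privacy parameter $\epsilon_1$ from Lemma~\ref{PosteriorRenyi} into the RDP-to-ADP conversion of Lemma~\ref{PosteriorPrivacy}, and to split the target budget $\epsilon$ into two equal halves: one half is spent on the conversion penalty $\frac{\ln(1/\delta)}{\nu-1}$ and the other half caps $\epsilon_1$ itself. Concretely, I would first fix $\nu = 1 + \frac{2\ln(1/\delta)}{\epsilon}$, which is exactly the value making $\frac{\ln(1/\delta)}{\nu-1} = \frac{\epsilon}{2}$. Lemma~\ref{PosteriorPrivacy} then states that sampling from the posterior is $(\epsilon_1 + \frac{\epsilon}{2}, \delta)$ differentially private, so by monotonicity of the approximate-DP guarantee it suffices to establish $\epsilon_1 \le \frac{\epsilon}{2}$.

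Before bounding $\epsilon_1$ I would verify that the listed constraints on $n$ subsume the hypotheses of Lemma~\ref{PosteriorRenyi}. The constraint $n \ge 1 + \nu\frac{x_h^2}{x_l^2}(1 + 8\frac{\nu-1}{\epsilon})$ dominates $1 + \nu\frac{x_h^2}{x_l^2}$, and the remaining hypothesis $n > 1 + 10\frac{x_h^2}{x_l^2}\frac{\nu}{\beta}$, needed via claim~\ref{claim-Post-Div-1}, is inherited as part of the conditions of Lemma~\ref{PosteriorPrivacy}; together these guarantee $(\sigma^2)^*_\nu > 0$ so that $\epsilon_1$ is well defined. The heart of the proof is then to bound the five summands of $\epsilon_1$ one at a time, distributing $\frac{\epsilon}{2}$ as $\frac{\epsilon}{16} + \frac{\epsilon}{16} + \frac{\epsilon}{8} + \frac{\epsilon}{8} + \frac{\epsilon}{8}$. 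Solving $\frac{x_h^2}{2(n-1)x_l^2} \le \frac{\epsilon}{16}$ gives exactly the first displayed constraint; solving $\frac12(\nu-1)\frac{\nu x_h^2}{(n-1)x_l^2 - \nu x_h^2} \le \frac{\epsilon}{16}$ gives the second; inverting $2\nu\beta\frac{x_h^4}{\frac{9}{10}n^{1-2\gamma_1}x_l^2} \le \frac{\epsilon}{8}$ gives the power-$\frac{1}{1-2\gamma_1}$ bound; and inverting the fourth summand $\le \frac{\epsilon}{8}$ gives the power-$\frac{1}{2-\gamma_1}$ bound once the factor $(c + n^{\gamma_1})$ is carried along as it appears on the right-hand side of the constraint.

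The step I expect to be the main obstacle is the fifth term, because it is genuinely self-referential and carries the square $(c+n^{\gamma_1})^2$ rather than a single factor: the required inequality is $\frac{\nu}{2}\frac{(x_h^2\alpha + x_h^4\beta)^2}{\frac{9}{10}x_l^6\beta}\frac{(c+n^{\gamma_1})^2}{n^3} \le \frac{\epsilon}{8}$, i.e. $n^3 \ge A(c+n^{\gamma_1})^2$ with $A = \frac{4\nu}{\epsilon}\frac{(x_h^2\alpha + x_h^4\beta)^2}{\frac{9}{10}x_l^6\beta}$, whereas the displayed constraint $n \ge (A(c+n^{\gamma_1}))^{2/3}$ is equivalent to the formally different $n^3 \ge A^2(c+n^{\gamma_1})^2$. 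I would close this gap by observing that in the relevant regime $A \ge 1$ (which holds since $\nu > 1$ and $\epsilon$ is small, making $\frac{4\nu}{\epsilon} > 1$), so $A^2 \ge A$ and the stated cleaner bound is strictly stronger than, hence sufficient for, what is needed. Summing the five bounds yields $\epsilon_1 \le \frac{\epsilon}{2}$, and combined with $\frac{\ln(1/\delta)}{\nu-1} = \frac{\epsilon}{2}$ this gives $\epsilon_1 + \frac{\ln(1/\delta)}{\nu-1} \le \epsilon$, establishing the $(\epsilon,\delta)$ guarantee.
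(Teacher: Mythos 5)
Your proof matches the paper's essentially verbatim: the same choice $\nu = 1 + \frac{2\ln(1/\delta)}{\epsilon}$, the same $\frac{\epsilon}{2} + (\frac{\epsilon}{16}+\frac{\epsilon}{16}+\frac{\epsilon}{8}+\frac{\epsilon}{8}+\frac{\epsilon}{8})$ budget split, and the same term-by-term inversion of the five summands of $\epsilon_1$ from Lemma~\ref{PosteriorRenyi} fed through Lemma~\ref{PosteriorPrivacy}. Your remark on the fifth term is apt --- the paper's own passage from $n^3 \ge A(c+n^{\gamma_1})^2$ to $n \ge (A(c+n^{\gamma_1}))^{2/3}$ silently replaces $A$ by $A^2$, and your check that $A \ge 1$ (which does hold here: the standing assumptions $x_h^2\beta > 3$ and $x_l \le x_h$ already force the constant factor $\frac{(x_h^2\alpha+x_h^4\beta)^2}{\frac{9}{10}x_l^6\beta} \ge \frac{10}{9}x_h^2\beta > 1$, independently of how small $\frac{4\nu}{\epsilon}$ is) is exactly what is needed to make the stated constraint sufficient.
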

\begin{proof}[Proof of Claim \ref{claim-Post-Div-9-1}]
By Lemma \ref{PosteriorRenyiFull}, one sample from the posterior is $(\epsilon_1 + \frac{\ln(\frac1\delta)}{\nu - 1}, \delta)$ differentially private. For each of the six terms of $\epsilon_1 + \frac{\ln(\frac1\delta)}{\nu - 1}$, the lower bounds on $n$ and $\nu$, found at equations \ref{Eq-1-claim-Post-Div-9-1}, \ref{Eq-2-claim-Post-Div-9-1}, \ref{Eq-3-claim-Post-Div-9-1}, \ref{Eq-4-claim-Post-Div-9-1}, \ref{Eq-5-claim-Post-Div-9-1}, and \ref{Eq-6-claim-Post-Div-9-1}, guarantee that the sum of terms is upper bounded by ${\epsilon}$. These bounds match the claim's guarantee over $n$ and $\nu$, thus proving the claim.

For $\frac{\ln(\frac1\delta)}{\nu - 1}$:
\begin{equation}\label{Eq-1-claim-Post-Div-9-1}
\begin{split}
{}& \frac{\ln(\frac1\delta)}{\nu - 1} = \frac\epsilon2
\\{}& \frac{2\ln(\frac1\delta)}\epsilon + 1 = \nu.
\end{split}
\end{equation}

For $\frac{x_h^2}{(n - 1)x_l^2}$:
\begin{equation}\label{Eq-2-claim-Post-Div-9-1}
\begin{split}
{}& \frac{x_h^2}{2(n - 1)x_l^2} \leq  \frac\epsilon{16}
\\{}& n \geq 1 + \frac{x_h^2}{x_l^2}\frac{8}{\epsilon}.
\end{split}
\end{equation}

For $\frac{(\nu-1)\nu x_h^2}{2((n - 1)x_l^2 - \nu x_h^2)}$:
\begin{equation}\label{Eq-3-claim-Post-Div-9-1}
\begin{split}
{}& \frac{(\nu-1)\nu x_h^2}{2((n - 1)x_l^2 - \nu x_h^2)} \leq \frac\epsilon{16}
\\{}& \frac12(\nu-1)\frac{\nu x_h^2}{\left(n - 1\right)x_l^2 - \nu x_h^2} \leq \frac\epsilon{16}
\\{}& \frac12\left(\nu-1\right)\frac{16\nu x_h^2}{\epsilon} \leq \left(n - 1\right)x_l^2 - \nu x_h^2
\\{}& n \geq 1 + \frac12\left(\nu-1\right)\frac{16\nu x_h^2}{\epsilon x_l^2} + \nu\frac{x_h^2}{x_l^2} =
1 + \nu\frac{x_h^2}{x_l^2}\left(1 + 8\frac{\left(\nu - 1\right)}{\epsilon}\right).
\end{split}
\end{equation}

For $\frac{20\nu\beta x_h^4}{9n^{1 - 2\gamma_1}x_l^2}$:
\begin{equation}\label{Eq-4-claim-Post-Div-9-1}
\begin{split}
{}& \frac{20\nu\beta x_h^4}{9n^{1 - 2\gamma_1}x_l^2} \leq \frac\epsilon8
\\{}& \frac{16}\epsilon\frac{\nu\beta x_h^4}{\frac{9}{10}x_l^2} \leq n^{1 - 2\gamma_1}
\\{}& n \geq \left(\frac{16\nu\beta x_h^4}{\frac9{10}\epsilon x_l^2}\right)^{\frac1{1 - 2\gamma_1}}.
\end{split}
\end{equation}

For $\frac{20\nu x_h^4(\alpha + x_h^2\beta)}{9x_l^4}\cdot\frac{(c + n^{\gamma_1})}{n^{2-\gamma_1}}$:
\begin{equation}\label{Eq-5-claim-Post-Div-9-1}
\begin{split}
{}& \frac{20\nu x_h^4(\alpha + x_h^2\beta)}{9x_l^4}\cdot\frac{\left(c + n^{\gamma_1}\right)}{n^{2-\gamma_1}} \leq \frac{\epsilon}{8} 
\\{}& n^{2 - \gamma_1} \geq \frac{16\nu}{\epsilon}\cdot\frac{x_h^4(\alpha + x_h^2\beta)}{\frac{9}{10}x_l^4}\cdot\left(c + n^{\gamma_1}\right)
\\{}& n \geq \left(\frac{16\nu}{\epsilon}\cdot\frac{x_h^4(\alpha + x_h^2\beta)}{\frac{9}{10}x_l^4}\cdot\left(c + n^{\gamma_1}\right)\right)^{\frac1{2 - \gamma_1}}.
\end{split}
\end{equation}

For term $\frac{5\nu x_h^4(\alpha + x_h^2\beta)^2}{9x_l^6\beta}\cdot\frac{(c + n^{\gamma_1})^2}{n^3}$
\begin{equation}\label{Eq-6-claim-Post-Div-9-1}
\begin{split}
{}& \frac{5\nu x_h^4(\alpha + x_h^2\beta)^2}{9x_l^6\beta}\cdot\frac{(c + n^{\gamma_1})^2}{n^3} \leq \frac{\epsilon}{8}
\\{}& n^3 \geq \frac{4\nu}{\epsilon}\cdot\frac{10 x_h^4(\alpha + x_h^2\beta)^2}{9x_l^6\beta}\cdot\left(c + n^{\gamma_1}\right)^2
\\{}& n \geq \left(\frac{4\nu}{\epsilon}\cdot\frac{10 x_h^4(\alpha + x_h^2\beta)^2}{9x_l^6\beta}\cdot\left(c + n^{\gamma_1}\right)^2\right)^{\frac13}
\\{}& n \geq \left(\frac{4\nu}{\epsilon}\cdot\frac{10 x_h^4(\alpha + x_h^2\beta)^2}{9x_l^6\beta}\right)^\frac13\left(c + n^{\gamma_1}\right)^{\frac23}
\end{split}
\end{equation}
\end{proof}

\newpage
\begin{claim}\label{Claim-Post-9}
For c = $n^{\gamma_2}, \gamma_1 < \gamma_2 < \frac32$, and the conditions and definitions of Lemma \ref{PosteriorPrivacy}, one sample from the posterior is $(\epsilon, \delta)$ differentially private for following terms on $n$ and $\nu$:
\begin{equation*}\begin{split}
\nu &= \frac{2\ln\left(\frac1\delta\right)}\epsilon + 1
\\n &\geq \max\Big\{1 + \frac{x_h^2}{x_l^2}\frac{8}{\epsilon},
1 + \nu\frac{x_h^2}{x_l^2}\left(1 + 8\frac{\left(\nu - 1\right)}{\epsilon}\right), 
\left(\frac{16\nu\beta x_h^4}{\frac9{10}\epsilon x_l^2}\right)^{\frac1{1 - 2\gamma_1}},
\\& \left(\frac{16\nu}{\epsilon}\cdot\frac{x_h^4\left(\alpha + x_h^2\beta\right)}{\frac9{10}x_l^4}\left(1 + \frac1{\left(1 + 10\frac{x_h^2}{x_l^2}\frac\nu\beta\right)^{\gamma_2 - \gamma_1}}\right)\right)^{\frac1{2 - \gamma_1-\gamma_2}}, 
\\& \left(\frac{4\nu}{\epsilon}\cdot\frac{\left(x_h^2\alpha + x_h^4\beta\right)^2}{\frac9{10}x_l^6\beta}\left(1 + \frac1{\left(1 + 10\frac{x_h^2}{x_l^2}\frac\nu\beta\right)^{\gamma_2 - \gamma_1}}\right)^2\right)^\frac1{3 - 2\gamma_2}\Big\}.
\end{split}\end{equation*}
\end{claim}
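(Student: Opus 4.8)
The plan is to derive Claim~\ref{Claim-Post-9} as the specialization of Claim~\ref{claim-Post-Div-9-1} to the regime $c = n^{\gamma_2}$; the only genuinely new content is turning the \emph{implicit} lower bounds on $n$ from Claim~\ref{claim-Post-Div-9-1} (which still carry $c + n^{\gamma_1}$ on their right-hand sides) into \emph{explicit} thresholds. First I would invoke Claim~\ref{claim-Post-Div-9-1} as a black box: with $\nu = \frac{2\ln(\frac1\delta)}{\epsilon} + 1$ and the five listed lower bounds on $n$, a single posterior sample is $(\epsilon,\delta)$-differentially private. The first three of those bounds contain no $c$, so they carry over verbatim and already coincide with the first three entries of the present claim.

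For the two bounds that involve $c$ — the fourth and fifth entries of the $\max$ — I would substitute $c = n^{\gamma_2}$ and factor $c + n^{\gamma_1} = n^{\gamma_2}\big(1 + n^{\gamma_1 - \gamma_2}\big)$, the aim being to pull the factor $n^{\gamma_2}$ off the right-hand side and merge its power into the left. Writing $K_4$ and $K_5$ for the two constant prefactors appearing in Claim~\ref{claim-Post-Div-9-1}, the defining inequalities $n^{2-\gamma_1} \geq K_4\,(c + n^{\gamma_1})$ and $n^{3} \geq K_5\,(c + n^{\gamma_1})^2$ become, after this factoring, $n^{2 - \gamma_1 - \gamma_2} \geq K_4\big(1 + n^{\gamma_1 - \gamma_2}\big)$ and $n^{3 - 2\gamma_2} \geq K_5\big(1 + n^{\gamma_1 - \gamma_2}\big)^2$. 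Solving each for $n$ is exactly what converts the original exponents into the new ones $\frac1{2 - \gamma_1 - \gamma_2}$ and $\frac1{3 - 2\gamma_2}$ that appear in the statement. This step is legitimate only if both exponents are positive, i.e. $2 - \gamma_1 - \gamma_2 > 0$ and $3 - 2\gamma_2 > 0$; these hold because the domain forces $\gamma_1 < \frac12$ and the hypothesis gives $\gamma_2 < \frac32$, so that $\gamma_1 + \gamma_2 < 2$ and $2\gamma_2 < 3$. I regard this positivity check as the conceptual heart of the claim and the reason the ceiling $\gamma_2 < \frac32$ is imposed.

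The residual factor $1 + n^{\gamma_1 - \gamma_2}$ still depends on $n$, so to reach a genuinely explicit threshold I would eliminate it by monotonicity. Since $\gamma_2 > \gamma_1$, the map $n \mapsto n^{\gamma_1 - \gamma_2}$ is decreasing, and the hypotheses inherited from Lemma~\ref{PosteriorRenyi} force $n > 1 + 10\frac{x_h^2}{x_l^2}\frac{\nu}{\beta}$; hence $n^{\gamma_1 - \gamma_2} \leq \big(1 + 10\frac{x_h^2}{x_l^2}\frac{\nu}{\beta}\big)^{-(\gamma_2 - \gamma_1)}$. Substituting this uniform bound replaces the $n$-dependent residual by the constant $1 + \big(1 + 10\frac{x_h^2}{x_l^2}\frac{\nu}{\beta}\big)^{-(\gamma_2 - \gamma_1)}$ displayed in the statement (for the fifth bound, the square is linearized beforehand using $1 + n^{\gamma_1 - \gamma_2} \leq 2$, absorbing the resulting numerical factor into the threshold). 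Feeding this constant back into the two rearranged inequalities reproduces the fourth and fifth entries of the $\max$, and the claim follows.

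I expect the main difficulty to be bookkeeping rather than any new idea: one must check the self-consistency of the residual-elimination step — the lower bound $n > 1 + 10\frac{x_h^2}{x_l^2}\frac{\nu}{\beta}$ used to control $n^{\gamma_1 - \gamma_2}$ is itself one of the standing hypotheses, so the argument is not circular — and track the handful of constants so that the fourth and fifth thresholds match the statement exactly. The privacy guarantee itself requires nothing beyond the R\'enyi-divergence bound of Lemma~\ref{PosteriorRenyi} already used in Claim~\ref{claim-Post-Div-9-1}.
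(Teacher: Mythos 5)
Your proposal matches the paper's proof essentially step for step: it specializes Claim~\ref{claim-Post-Div-9-1}, carries the three $c$-free thresholds over verbatim, writes $c + n^{\gamma_1} = n^{\gamma_2}(1 + n^{\gamma_1-\gamma_2})$, bounds the residual by a constant via the standing hypothesis $n \geq 1 + 10\frac{x_h^2}{x_l^2}\frac{\nu}{\beta}$, and solves for $n$ to produce the exponents $\frac1{2-\gamma_1-\gamma_2}$ and $\frac1{3-2\gamma_2}$ (your explicit positivity check on these exponents is implicit in the paper). The only divergence is constant bookkeeping in the fifth threshold — your linearization of the square via $1+n^{\gamma_1-\gamma_2}\leq 2$ yields an extra numerical factor that the stated threshold omits — but the paper's own derivation is no tighter at that point, so this is not a gap in your argument.
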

\begin{proof}[Proof of Claim \ref{Claim-Post-9}]
Claim \ref{claim-Post-Div-9-1} provides lower bounds on $n$ such that one sample from the posterior will be $(\epsilon, \delta)$ differential privacy. When $c = n^{\gamma_2}, \gamma_2 > \gamma_1$, these bounds can be refined.

The condition $n \geq (\frac{16\nu}{\epsilon}\cdot\frac{x_h^4(\alpha + x_h^2\beta)}{\frac9{10}x_l^4}(c + n^{\gamma_1}))^{\frac1{2 - \gamma_1}}$ can be refined as follows:

\begin{equation*}
\begin{split}
\left(\frac{16\nu}{\epsilon}\cdot\frac{x_h^4\left(\alpha + x_h^2\beta\right)}{\frac9{10}x_l^4}\left(c + n^{\gamma_1}\right)\right)^{\frac1{2 - \gamma_1}} &= 
\left(\frac{16\nu}{\epsilon}\cdot\frac{x_h^4\left(\alpha + x_h^2\beta\right)}{\frac9{10}x_l^4}n^{\gamma_2}\left(1 + \frac1{n^{\gamma_2 - \gamma_1}}\right)\right)^{\frac1{2 - \gamma_1}} 
\\&\leq
\left(\frac{16\nu}{\epsilon}\cdot\frac{x_h^4\left(\alpha + x_h^2\beta\right)}{\frac9{10}x_l^4}n^{\gamma_2}\left(1 + \frac1{\left(1 + 10\frac{x_h^2}{x_l^2}\frac\nu\beta\right)^{\gamma_2 - \gamma_1}}\right)\right)^{\frac1{2 - \gamma_1}}
\end{split}\end{equation*}
where the inequality holds because Lemma \ref{PosteriorPrivacy} dictates that $n > 1 + 10\frac{x_h^2}{x_l^2}\frac\nu\beta$. Consequently it's enough that 
\begin{equation*}
\begin{split}
n \geq \left(\frac{16\nu}{\epsilon}\cdot\frac{x_h^4\left(\alpha + x_h^2\beta\right)}{\frac9{10}x_l^4}\left(1 + \frac1{\left(1 + 10\frac{x_h^2}{x_l^2}\frac\nu\beta\right)^{\gamma_2 - \gamma_1}}\right)\right)^{\frac1{2 - \gamma_1-\gamma_2}}.
\end{split}\end{equation*}

Following the same considerations for condition $n \geq \left(\frac{4\nu}{\epsilon}\cdot\frac{x_h^4\left(\alpha + x_h^2\beta\right)^2}{\frac9{10}x_l^6\beta}\right)^\frac13\left(c + n^{\gamma_1}\right)^\frac23$:
\begin{equation*}
\left(\frac{4\nu}{\epsilon}\cdot\frac{x_h^4\left(\alpha + x_h^2\beta\right)^2}{\frac9{10}x_l^6\beta}\right)^\frac13\left(c + n^{\gamma_1}\right)^\frac23 \leq
\left(\frac{4\nu}{\epsilon}\cdot\frac{x_h^4\left(\alpha + x_h^2\beta\right)^2}{\frac9{10}x_l^6\beta}\right)^\frac13n^{\frac{2\gamma_2}3}\left(1 + \frac1{\left(1 + 10\frac{x_h^2}{x_l^2}\frac\nu\beta\right)^{\gamma_2 - \gamma_1}}\right)^\frac23
\end{equation*}
Therefore, it is enough that
\begin{equation*}
n \geq \left(\frac{4\nu}{\epsilon}\cdot\frac{x_h^4\left(\alpha + x_h^2\beta\right)^2}{\frac9{10}x_l^6\beta}\left(1 + \frac1{\left(1 + 10\frac{x_h^2}{x_l^2}\frac\nu\beta\right)^{\gamma_2 - \gamma_1}}\right)^2\right)^\frac1{3 - 2\gamma_2}.
\end{equation*}
\end{proof}

\newpage
\section{Clipped Gradients}\label{sec::clipped_grads}
Previous work \citep{ConnectMCMCtoDP} suggested training machine learning models using an SGLD-inspired learning step with clipped gradients to get differentially private models. Given the definitions in section \ref{SGLD-bkgrnd} and a gradient clipping threshold $C\in\R^{>0}$, the SGLD-inspired learning step with clipped gradient is

\begin{equation}\label{Eq-SGLD-Clipped-Update-Rule}
\begin{split}
\theta_{j+1} &= \theta_j + \frac{\eta_j}{2}\left(\nabla_{\theta_j}\ln{p(\theta_j)}+\frac{n}{b}\sum_{i=1}^{b}\nabla_{\theta_j}\ln{p(y_{i_j}|\theta_j, x_{i_j})}/\max(1, \frac{\|\nabla_{\theta_j}\ln{p(y_{i_j}|\theta_j, x_{i_j})\|_2}}{C})\right) + \sqrt{\eta_j}\xi_j 
\\i_j&\sim uniform\{1,...,n\}
\\\xi_j &\sim \mathcal{N}(0,1).
\end{split}
\end{equation}

We repeated the attack described in subsection \ref{Empirical} with models that were trained with the learning step described in eq. \ref{Eq-SGLD-Clipped-Update-Rule}. The models were trained with clipping threshold $C=0.2$, a learning rate of $0.001$\footnote{Effective learning rate after multiplication by SGLD's normalization factor, i.e. $\eta\frac{n}{2b}$. See learning step in eq. \ref{Eq-SGLD-Clipped-Update-Rule} for details.}, and a batch size of $4$. We created a novel sample, $(x^*, y^*)$, and used $200$ models to train the classifier and another $200$ models on which we used the classifier to estimate the DP lower bound. Lastly, we used the "Opacus" framework \citet{opacus} to run the experiment.

Figure \ref{fig::SGLD-empirical-epsilon-LeNet-clipped} depicts the model's accuracy as well as lower ($\epsilon_{lb}^{emp}$) and upper ($\epsilon_{ub}$) bounds over $\epsilon$, given $\delta = 10^{-5}$. The lower bound has a confidence value of $90\%$, i.e., $P(\epsilon \geq \epsilon_{lb}^{emp}) \geq 0.9025$, while the upper bound is computed using the "Opacus" framework \citet{opacus} in R\'enyi-DP terms (See definition \ref{Def-Renyi-DP}) and converted to $(\epsilon, \delta)$-DP terms using Lemma \ref{Lemma-RDP-to-ADP}.

From figure \ref{fig::SGLD-empirical-epsilon-LeNet-clipped}, we see that the attack did not succeed in showing a privacy breach. However, we also see that the maximum accuracy is $90.6\%$ (which is $8\%$ lower than the accuracy for models trained with SGLD, as shown in figure \ref{fig::SGLD-empirical-epsilon-LeNet}).

\begin{figure}[bh]
\begin{center}
\includegraphics[width=0.5\linewidth]{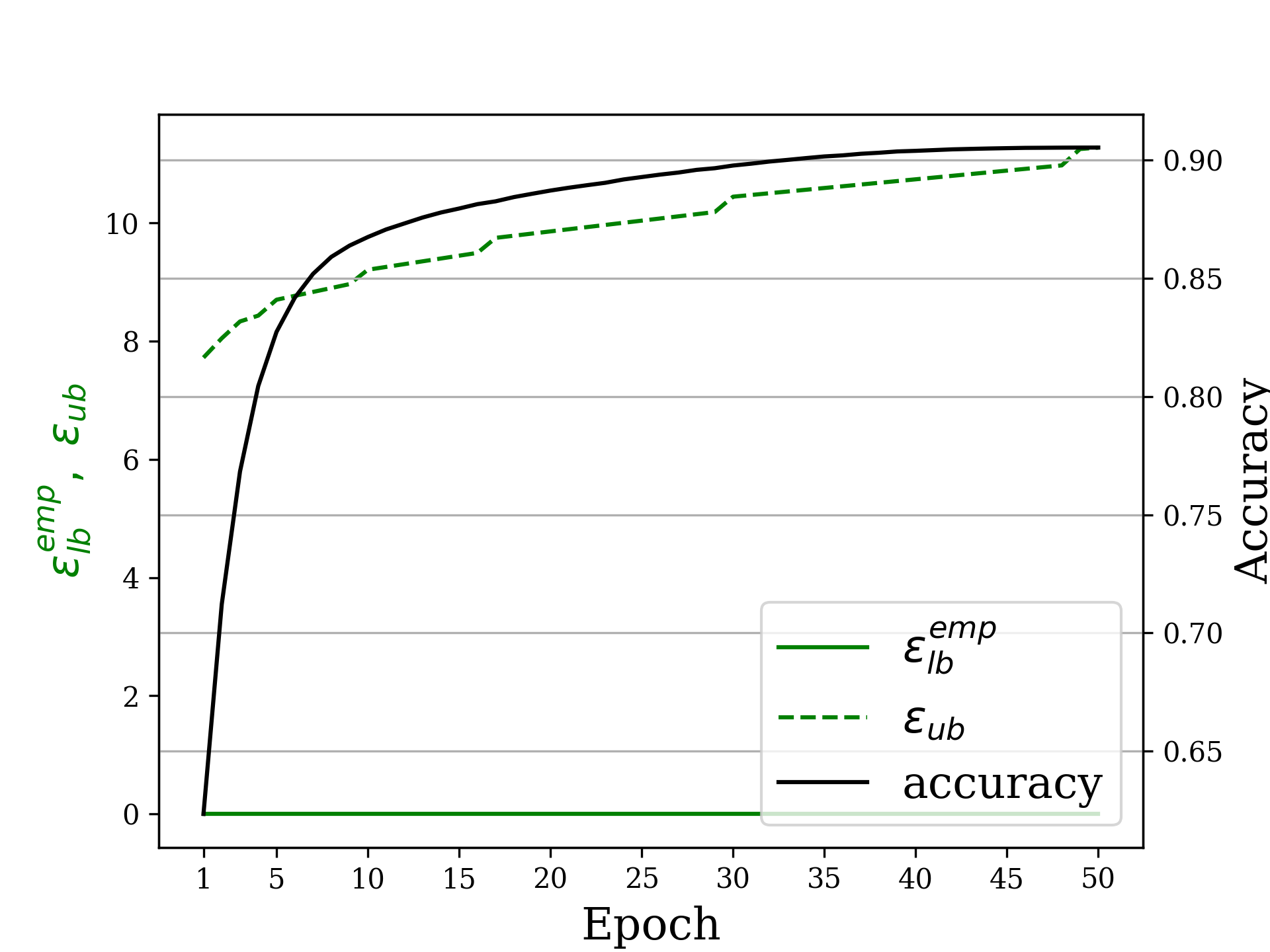}
\end{center}
\caption{Lower ($\epsilon_{lb}^{emp}$) and upper ($\epsilon_{ub}$) bounds over the differential privacy of the LeNet5, SGLD based, training process with clipped gradients over MNIST for a given $\delta$, for learning rate $0.001$, a batch size of $4$, and clipping value of $0.2$. Upper bound was calculated in R\'enyi-DP terms (See definition \ref{Def-Renyi-DP}) using \citet{opacus} and converted to $(\epsilon, \delta)$ -DP terms using \ref{Lemma-RDP-to-ADP}.}
\label{fig::SGLD-empirical-epsilon-LeNet-clipped}
\end{figure}
%

\end{document}